\icmltitlerunning{Offline Inverse RL: New Solution Concepts and Provably Efficient Algorithms}
\begin{document}

\twocolumn[
\icmltitle{Offline Inverse RL:\\
New Solution Concepts and Provably Efficient Algorithms}




\begin{icmlauthorlist}
\icmlauthor{Filippo Lazzati}{yyy}
\icmlauthor{Mirco Mutti}{comp}
\icmlauthor{Alberto Maria Metelli}{yyy}
\end{icmlauthorlist}

\icmlaffiliation{yyy}{Politecnico di Milano, Milan, Italy}
\icmlaffiliation{comp}{Technion, Haifa, Israel}

\icmlcorrespondingauthor{Filippo Lazzati}{filippo.lazzati@polimi.it}

\icmlkeywords{Inverse Reinforcement Learning, Offline, Sample Complexity}

\vskip 0.3in
]



\printAffiliationsAndNotice{}  

\allowdisplaybreaks[4]

\thinmuskip=2mu
\medmuskip=2mu
\thickmuskip=2mu

\begin{abstract}
    \emph{Inverse reinforcement learning} (IRL) aims to recover the reward function
    of an \emph{expert} agent from demonstrations of behavior.
    It is well-known
    that the IRL problem is fundamentally ill-posed, i.e., many reward functions
    can explain the demonstrations.
    For this reason, IRL has been recently reframed in terms of
    estimating the \emph{feasible reward set}
    \cite{metelli2021provably}, thus, postponing the selection of a single reward.
    However, so far, the available formulations and algorithmic solutions have been proposed and analyzed mainly for the \emph{online} setting, where the learner can interact with the environment and query the expert at will. This is clearly unrealistic in most practical applications, where the availability of an \emph{offline} dataset is a much more common scenario. In this paper, we introduce a novel notion of feasible reward set capturing the opportunities and limitations of the {offline} setting and we analyze the complexity of its estimation. This requires the introduction of an original learning framework that copes with the intrinsic difficulty of the setting, for which the data coverage is not under control.
   Then, we propose two computationally and statistically
    efficient algorithms, \irlo and \pirlo, for addressing the problem.
    In particular, the latter adopts a specific form of \textit{pessimism} to enforce the novel, desirable property of \textit{inclusion monotonicity} of the delivered feasible set.
    With this work, we aim to provide a panorama of the challenges of the offline IRL problem and how they can be fruitfully addressed.
\end{abstract}

\section{Introduction}\label{sec:intro}
\emph{Inverse reinforcement learning} (IRL), also called inverse optimal control,
consists of recovering a reward function from expert's demonstrations
\cite{russell1998learning}. Specifically, the reward is required to be \textit{compatible} with the expert's behavior, i.e.,
it shall make the expert's policy optimal.
As pointed out in \citet{arora2018survey},
IRL allows mitigating the challenging task of the manual specification of the reward function, thanks to the presence of demonstrations, and provides an effective method for \emph{imitation learning} \cite{osa2018algorithmic}. In opposition to mere \emph{behavioral cloning}, IRL allows focusing on the expert \emph{intent} (instead of \emph{behavior}), and, for this reason, it has the potential to reveal the underlying objectives that drive the expert's choices.
In this sense, IRL enables \emph{interpretability}, improving
the interaction with the expert by explaining and predicting its behavior, and \emph{transferability}, as the reward (more than a policy) can be employed under environment shifts~\citep{adams2022survey}.

One of the main concerns of IRL is that
the problem is inherently \emph{ill-posed} or \emph{ambiguous} \cite{ng2000algorithms}, i.e.,
there exists a variety of reward functions {compatible} with expert's demonstrations.
In the literature, many criteria for the selection of a single reward among the compatible ones were proposed~\citep[e.g.,][]{ng2000algorithms,ratliff2006maximum,ziebart2008maximum,boularias2011relative}.
Nevertheless, the ambiguity issue has limited the theoretical understanding of the IRL problem for a long time.

Recently, IRL has been reframed by \citet{metelli2021provably} into the problem of computing the \textit{set} of all rewards {compatible} with expert's demonstrations, named \textit{feasible reward set} (or just \emph{feasible set}). By postponing the choice of a specific reward within the {feasible set}, this formulation has opened the doors to a new perspective that has enabled a deeper theoretical understanding of the IRL problem. The majority of previous works on the reconstruction of the feasible set have focused mostly on the \textit{online} setting \citep[e.g.,][]{metelli2021provably,lindner2022active,zhao2023inverse,metelli2023towards}, in which the learner is allowed to actively interact with the environment {and} with the expert to collect samples. 

Although these works succeeded in obtaining sample efficient algorithms and represent a fundamental step ahead in the understanding of the challenges of the IRL problem (e.g., providing sample complexity lower bounds), the underlying basic assumption that the learner is allowed to govern the exploration and query the expert wherever is far from being realistic. Indeed, the most common IRL applications are naturally framed in an \emph{offline} scenario, in which the learner is given in advance a dataset of trajectories of the expert (and, possibly, an additional dataset collected with a \emph{behavioral} policy, e.g., \citealt{boularias2011relative}). Typically, no further interaction with the environment and with the expert is allowed~\citep{LikmetaMRTGR21}. The offline setting has been widely studied in (forward) \emph{reinforcement learning} \citep[RL,][]{sutton2018reinforcement},
and a surge of works have analyzed the problem from theoretical
and practical perspectives \citep[e.g.,][]{munos2007performance, levine2020offline,buckman2020importance,yu2020mopo,Jin2021IsPP}. In this context, a powerful technique is represented by \emph{pessimism}, which discourages the learner from assigning credit to options that have not been sufficiently explored in the available dataset, allowing for sample efficiency guarantees~\citep{buckman2020importance}.

The IRL offline setting has been investigated for the problem of recovering the feasible set in the recent preprint~\citep{zhao2023inverse}. The authors consider the same feasible set definition employed for the online case, which enforces the optimality of the expert's policy \emph{in every state}~\citep{metelli2021provably,lindner2022active}. However, in the offline setting, this learning target is unrealistic unless the dataset covers the full space. 
This implies that the produced rewards can be safely used in forward RL when the behavioral policy covers the whole reachable portion of the state-action space \emph{only}. For this reason, \citet{zhao2023inverse} apply a form of \emph{pessimism} which allows delivering rewards that make the expert's policy $\epsilon$-optimal even in the presence of partial covering of the behavioral policy but only when the latter is sufficiently close to the expert's.
These demanding requirements, however, collide with the intuition that, regardless of the sampling policy, if we observe the expert's actions, we can deliver \emph{at least one} reward, making the expert optimal.\footnote{For instance, simply assign $0$ when playing the expert actions and $-1$ otherwise.}


\textbf{Desired Properties}~~In this paper, we seek to develop novel appropriate  \emph{solution concepts}  for the feasible reward set and new effective actionable \emph{algorithms} for recovering them in the offline IRL setting. Specifically, we aim at fulfilling the following three \emph{key properties}:
\begin{enumerate}[noitemsep, leftmargin=*, topsep=-2pt, label=($\roman*$)]
    \item (\emph{Sample Efficiency}) We should output, with high probability, an estimated feasible set using a number of samples polynomial w.r.t. the desired accuracy, error probability, and relevant sizes of the problem.
    \item (\emph{Computational Efficiency}) We should be able to check the \emph{membership} of a candidate reward in the feasible set in polynomial time w.r.t. the relevant sizes of the problem.
    \item (\emph{Inclusion Monotonicity}) We should output one estimated feasible set that \emph{includes} and one that \emph{is included} in the true feasible set with high probability.
\end{enumerate}
While properties ($i$) and ($ii$) are commonly requested, ($iii$) deserves some comments. \emph{Inclusion monotonicity}, intuitively, guarantees that we produce a set that \emph{does not exclude} any reward function that can be feasible and a set that \emph{includes only} reward functions that are surely feasible, given the current samples (Figure~\ref{fig: explanation sanity checker v2 online setting}). This, remarkably, allows delivering (with high probability) reward functions that make the expert's policy optimal (not just $\epsilon$-optimal) regardless of the accuracy with which the feasible set is recovered.

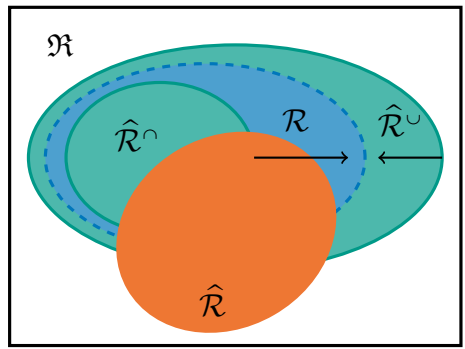
\begin{figure}[t]
\centering
\begin{tikzpicture}
    \filldraw[black,fill=white,very thick] (-3,-2.5) rectangle (3,2);
    \filldraw[vibrantTeal,fill=vibrantTeal!70, very thick] (0,0) ellipse (2.75 and 1.5);
    \filldraw[vibrantBlue,fill=vibrantBlue!70, dashed, very thick] (-0.4,0) ellipse (2.125 and 1.25);
    \filldraw[vibrantTeal,fill=vibrantTeal!70, very thick] (-1,0)  ellipse (1.25 and 1);
     \filldraw[vibrantOrange, fill=vibrantOrange, very thick, rotate=30, fill opacity=0.5] (-0.6,-0.8) ellipse (1.5 and 1.25);
    \node[text width=2cm] at (-1.5,1.5) {\large$\mathfrak{R}$};
    \node at (+2.2,0.5) {\large$\widehat{\R}^\cup$};
    \node at (+0.8,0.5) {\large$\R$};
    \node at (-0.3,-1.9) {\large$\widehat{\R}$};
    \node at (-1.3,0.3) {\large$\widehat{\R}^\cap$};
    \draw[->,  thick] (0.25,0) -- (1.5,0);
    \draw[->,  thick] (2.75,0) -- (1.9,0);
\end{tikzpicture}
\caption{
$\mathfrak{R}$ = set of all rewards,
$\mathcal{R}$ = true feasible set, $\widehat{\mathcal{R}}^{\cap}$ and $\widehat{\mathcal{R}}^{\cup}$ = examples of \emph{inclusion monotonic} estimated feasible set (i.e., $\widehat{\mathcal{R}}^{\cap} \subseteq \mathcal{R} \subseteq \widehat{\mathcal{R}}^{\cup}$), $\widehat{\mathcal{R}}$ = example of \emph{inclusion non-monotonic} estimated feasible set (i.e., $\widehat{\mathcal{R}}\not\subseteq \mathcal{R}$ and $ \mathcal{R}\not\subseteq \widehat{\mathcal{R}}$).
}\label{fig: explanation sanity checker v2 online setting}
\end{figure}




\textbf{Contributions}~~The contributions of this paper are summarized as follows:
\begin{itemize}[noitemsep, leftmargin=*, topsep=-2pt]
    \item We propose a novel definition of \textit{feasible set} that takes into
    account the intrinsic challenges of the \textit{offline} setting (i.e., partial covering).
    Moreover, we introduce appropriate \emph{solution concepts}, which are \emph{learnable} based on the coverage of the given dataset (Section \ref{section: framework}).
    \item We adapt the \emph{probably approximately correct} (PAC) framework
    from \citet{metelli2023towards} to our {offline} setting by proposing novel \emph{semimetrics} which, differently from previous works, allow us to naturally deal with \textit{unbounded} rewards (Section \ref{section: pac framework}).
    \item We present a novel algorithm, named \irlo (\irloExt), for solving {offline} IRL. We show that it satisfies the requirements of ($i$) sample and ($ii$) computational efficiency (Section \ref{section: irlo}). 
    \item After having formally defined the notion of \textit{inclusion monotonicity}, we propose a \textit{pessimism}-based algorithm, named \pirlo (\pirloExt), that achieves ($iii$) inclusion monotonicity preserving sample and computational efficiency, at the price of a larger sample complexity (Section \ref{section: pirlo}).
    \item We discuss a specific application of our algorithm \pirlo for \emph{reward sanity check} (Section~\ref{section: application}).
    \item  We present a negative result for
    \textit{offline} IRL when only data from a deterministic expert are available (Section \ref{section: bitter lesson}).
\end{itemize}
Additional related works are reported in Appendix~\ref{subsection: related works}.
The proofs of all the results are reported in the Appendix~\ref{section: old fs}-\ref{apx:tech}.

\section{Preliminaries}\label{sec: preliminaries}


\textbf{Notation}~~Given a finite set $\X$, we denote by $|\X|$ its cardinality and
 by $\Delta^\X\coloneqq\{q\in [0,1]^{|\X|}\,|\,\sum_{x\in\X}q(x)=1\}$
the  simplex on $\X$. Given two sets $\X$ and $\Y$, we denote the
set of conditional distributions as
$\Delta_\Y^\X\coloneqq\{q:\Y\rightarrow\Delta^\X\}$. Given $N \in \mathbb{N}$, we denote $\dsb{N}\coloneqq\{1,\dots,N\}$.
Given an equivalence relation $\equiv\subseteq\X\times\X$,
and an item $x\in\X$, we denote
by $[x]_{\equiv}$ the equivalence class of $x$.

\textbf{Markov Decision Processes (MDPs) without Reward}~~A finite-horizon \emph{Markov decision process} \citep[MDP,][]{puterman1994markov}
without reward is defined as
$\M\coloneqq\langle\S,\A,\mu_0,p, H\rangle$,
where $\S$ is the finite state space
($S\coloneqq|\S|$), $\A$ is the finite action space
($A\coloneqq|\A|$),
$\mu_0\in\Delta^\S$ is
the initial-state distribution, $p=\{p_h\}_{h\in\dsb{H}}$ where
$p_h\in\Delta^\S_{\SA}$ for every $h\in\dsb{H}$ is the transition model, and $H \in \mathbb{N}$ is the horizon.
A policy is defined as $\pi =\{\pi_h\}_{h \in \dsb{H}}$ where $\pi_h \in \Delta_{\S}^{\A}$ for every $h \in \dsb{H}$.
 $\mathbb{P}_{p,\pi}$ denotes the trajectory distribution 
induced by $\pi$ and $\E_{p,\pi}$ the expectation w.r.t. $\mathbb{P}_{p,\pi}$ (we omit $\mu_0$ in the notation).
The state-action visitation distribution induced by $p$ and $\pi$ is defined as $\rho^{p,\pi}_h(s,a)\coloneqq\mathbb{P}_{p,\pi}(s_h=s,a_h=a)$ and the state visitation distribution as $\rho^{p,\pi}_h(s)\coloneqq\sum_{a\in\A}\rho^{p,\pi}_h(s,a)$, so that $\sum_{s\in\S}\rho^{p,\pi}_h(s) = 1$ for every $h \in \dsb{H}$.

\textbf{Additional Definitions}~~The sets of transition models, policies, and rewards
are denoted as $\P\coloneqq\Delta^\S_{\SAH}$, $\Pi\coloneqq\Delta^\A_{\SH}$,
and $\mathfrak{R}\coloneqq\{r:\SAH\rightarrow\mathbb{R}\}$, respectively.\footnote{
    We remark that we consider \textit{real-valued} rewards without requiring boundedness.
}
For every $h \in \dsb{H}$, we define the set of states and state-action pairs
reachable by $\pi$ at stage $h\in\dsb{H}$
as $\S_h^{p,\pi}\coloneqq\{s\in\S\,|\,\rho^{p,\pi}_h(s)>0\}$
and $\Z_h^{p,\pi}\coloneqq\{(s,a)\in\SA\,|\,\rho^{p,\pi}_h(s,a)>0\}$, respectively.
Moreover, we define $\S^{p,\pi}\coloneqq \{(s,h) : h \in \dsb{H},\, s\in \S_h^{p,\pi}\}$
and $\Z^{p,\pi}\coloneqq \{(s,a,h) : h \in \dsb{H},\, (s,a)\in \Z_h^{p,\pi}\}$, with cardinality $S^{p,\pi}\le SH$ and $Z^{p,\pi}\le SAH$, respectively. 
We refer to these sets as the ``support'' of $\rho^{p,\pi}$.
We denote the cardinality of the largest set $\S^{p,\pi}_h$ varying $h \in \dsb{H}$, as
$S^{p,\pi}_{\text{max}}\coloneqq\max_{h \in \dsb{H}} |\S^{p,\pi}_{{h}}| \le S$.
Finally, we denote the minimum of the state-action distribution on set $\mathcal{Y}\subseteq\SAH$ as
$\rho^{\pi,\mathcal{Y}}_{\text{min}}\coloneqq\min_{(s,a,h)\in \mathcal{Y}}\rho_h^{p,\pi}(s,a)$.

\textbf{Value Functions and Optimality}~~The \emph{Q-function} of policy $\pi$ with transition model $p$ and reward function $r$ is defined as $Q^\pi_h(s,a;p,r)\coloneqq\E_{p,\pi}[\sum_{t=h}^H r_t(s_t,a_t)|s_h=s, a_h=a]$
and the optimal Q-function as
$Q^*_h(s,a;p,r)\coloneqq\max_{\pi\in\Pi}Q^\pi_h(s,a;p,r)$.
The \emph{utility} (i.e., expected return) of policy $\pi$ under the initial-state distribution $\mu_0$ is given by
$J(\pi;\mu_0,p,r)\coloneqq\E_{s\sim\mu_0, a \sim \pi(\cdot|s)}[Q^\pi_1(s,a;p,r)]$
and the optimal utility by $J^*(\mu_0,p,r)\coloneqq\max_{\pi\in\Pi}J(\pi;\mu_0,p,r)$.
An \emph{optimal policy} $\pi^*$ is a policy that maximizes the  utility
$\pi^*\in \argmax_{\pi\in\Pi}J(\pi;\mu_0,p,r)$.
The existence of a deterministic optimal policy is guaranteed~\citep{puterman1994markov}.

\textbf{Equivalence Relations}~~
We introduce two \textit{equivalence} relations:
$\equiv_{\overline{\S}}$ (over policies) and $\equiv_{\overline{\Z}}$ (over transition models), defined for arbitrary $\overline{\S}\subseteq\SH$ and $\overline{\Z}\subseteq\SAH$.
Specifically, let $\pi,\pi' \in \Pi$ be two policies, we have:
\begin{align}
    \pi\equiv_{\overline{\S}}\pi' \quad \text{iff} \quad \forall (s,h)\in \overline{\S}:\; \pi_h(\cdot|s)=\pi'_h(\cdot|s).
\end{align}
Similarly, let $p,p'\in\mathcal{P}$, be two transition models, we have:
\begin{align}
    p\equiv_{\overline{\Z}}p' \quad \text{iff} \quad\forall (s,a,h)\in \overline{\Z}:\; p_h(\cdot|s,a)=p_h(\cdot|s,a).
\end{align}
We will often use $\overline{\S}=\S^{p,\pi}$ and $\overline{\Z}=\Z^{p,\pi}$
for some $p\in\P$ and $\pi\in\Pi$.
Intuitively, the {equivalence} relation $\equiv_{\S^{p,\pi}}$ (resp. $\equiv_{\Z^{p,\pi}}$) group policies (resp. transition models) indistinguishable given the support $\S^{p,\pi}$ (resp. $\Z^{p,\pi}$) of $\rho^{p,\pi}$.

\textbf{Offline Setting}~~We assume the availability of two datasets 
$\D^b=\{\langle s^{b,i}_1,a_1^{b,i},\dots,s^{b,i}_{H-1},a_{H-1}^{b,i},s^{b,i}_H\rangle\}_{i\in\dsb{\tau^b}}$ and $\D^E=\{\langle s^{E,i}_1,a_1^{E,i},\dots,s^{E,i}_{H-1},a_{H-1}^{E,i},s^{E,i}_H\rangle\}_{i\in\dsb{\tau^E}}$
of $\tau^b$ and $\tau^E$ independent trajectories collected by playing a \emph{behavioral policy} $\pi^b$ and the \emph{expert's policy} $\pi^E$, respectively. Furthermore, we enforce the following assumption.

\begin{ass}[Expert's covering]\label{assumption: coverage of behavioral policy}
    The behavioral policy $\pi^b$
    plays with non-zero probability the actions prescribed by the expert's policy $\pi^E$
    in its support $\suppspie$:
    {
    \begin{align*}
       \forall (s,h)\in\suppspie: \qquad \pi^b_h(\pi^E_h(s)|s)>0.
    \end{align*}}%
\end{ass}

Assumption~\ref{assumption: coverage of behavioral policy} holds when $\pi^b=\pi^E$ and generalizes that setting when the behavioral policy $\pi^b$ is ``more explorative'', possibly playing actions other than expert's ones.\footnote{We elaborate on the limits of learning with just a dataset collected with the expert's policy $\pi^E$ in Section~\ref{section: bitter lesson}. Moreover, we discuss how we can use a single dataset collected with $\pi^b$,
at the price of a slightly larger sample complexity in Appendix~\ref{remark:1Dataset}.} It should be remarked that Assumption \ref{assumption: coverage of behavioral policy} is useful but not strictly necessary. As we will explain later on, it is possible to avoid it by using all samples $\D^E\cup\D^b$ to compute the various estimates that will be needed. Even though this seems reasonable, from a practical viewpoint, it complicates the theoretical analysis of the algorithms. Thus, we will enforce Assumption \ref{assumption: coverage of behavioral policy} in the following for simplicity.

\section{Solution Concepts for Offline IRL}\label{section: framework}
In this section, we introduce a novel definition of \textit{feasible reward set}, discuss its learnability properties, and propose suitable solution concepts to be targeted for the \textit{offline} IRL.


\textbf{A New Definition of Feasible Set}~~
Let us start by recalling the original definition of \textit{feasible set}
presented in the literature and discussing its limitations for offline IRL.
\begin{defi}[``Old'' Feasible Set $\oldfs$, \citealt{metelli2021provably}]\label{def: old fs}
    Let $\M$ be an MDP without reward and let
    $\pi^E$ be the deterministic
    expert's policy. The \emph{``old'' feasible set} $\oldfs$ of rewards compatible with $\pi^E$
    in $\M$ is defined as:\footnote{
    Actually, \citet{metelli2021provably} consider rewards bounded in $[0,1]$, while we consider
    all real-valued rewards in $\mathfrak{R}$.
}
{
\begin{align}
    \oldfs\coloneqq\{&r\in\mathfrak{R}\,|\,\forall(s,h)\in\SH,\, \forall{a\in\A}:\notag\\
    &Q^{\pi^E}_h(s,\pi^E_h(s);p,r) \ge  Q^{\pi^E}_h(s,a;p,r)\}.\label{eq:eqmag}
\end{align}}%
\end{defi}
In words, $\oldfs$ contains all the reward functions that make the expert's
policy optimal \emph{in every} state-stage pair $(s,h)\in\SH$. However, forcing the optimality of $\pi^E$ in states that are never reached from the initial-state distribution $\mu_0$ is unnecessary (and even impossible) if our ultimate goal is to use the learned reward function $r$ to train a policy $\pi^*$ that achieves the maximum utility, i.e., $\pi^* \in \argmax_{\pi \in \Pi} J(\pi;\mu_0,p,r)$. This suggests an alternative definition of {feasible set}.
\begin{defi}[Feasible Set $\fs$]\label{def: new fs}
    Let $\M$ be an MDP without reward and let
    $\pi^E$ be the deterministic
    expert's policy. The \emph{feasible set} $\fs$ of rewards compatible with $\pi^E$
    in $\M$ is defined as:
    {
    \begin{equation*}
        \fs\coloneqq\{
            r\in\mathfrak{R}\,|\,J(\pi^E;\mu_0,p,r)=J^*(\mu_0,p,r)
        \}.
    \end{equation*}}%
\end{defi}
In words, $\fs$ contains all the reward functions that make the expert's
policy $\pi^E$ a utility maximizer. Clearly, since Definition~\ref{def: old fs} enforces optimality \emph{uniformly} over $\S \times \dsb{H}$, we have the inclusion $\oldfs \subseteq \fs$, where the equality holds when $\suppspie=\SH$, i.e., $\eqclasspie{\pi^E}=\{\pi^E\}$.
The following result formalizes the intuition that for $\fs$, differently from $\oldfs$, the expert's policy $\pi^E$ has to be optimal (as in Equation~\ref{eq:eqmag}) in a subset of $\SH$ only.

\begin{restatable}{thr}{relationfs}\label{theorem: alternative representation new fs}
    In the setting of Definition \ref{def: new fs}, the feasible reward set $\fs$ satisfies:
    {
    \begin{align}
     \fs=\{&r\in\mathfrak{R}\,|\,\forall \overline{\pi}\in\eqclasspie{\pi^E},\forall(s,h)\in \suppspie,\,\forall a\in\A:\notag\\
        &Q^{\overline{\pi}}_h(s,\pi^E_h(s);p,r)\ge Q^{\overline{\pi}}_h(s,a;p,r)
        \}.\label{eq:12345}
    \end{align}}%
\end{restatable}
Theorem \ref{theorem: alternative representation new fs}
shows that the optimal action induced by a reward $r\in \fs$
outside $\suppspie$, i.e., outside the support of $\rho^{p,\pi^E}$ induced by the expert's policy $\pi^E$, is not relevant. The optimality condition of Equation~\eqref{eq:12345} is requested for all the policies $\overline{\pi}$ that play the expert's action within its support. Intuitively, those policies cover the same portion of state space as $\pi^E$, i.e.,  $\S^{p,\overline{\pi}}=\S^{p,\pi^E}$ and,
since they all prescribe the same action in there,\footnote{It is worth noting that, since $(s,h)\in \suppspie$, the following identity hold: $Q^{\overline{\pi}}_h(s,\pi^E_h(s);p,r) = Q^{\pi^E}_h(s,\pi^E_h(s);p,r)$.} they all achieve the same utility, i.e., $J(\overline{\pi};\mu_0,p,r)=J(\pi^E;\mu_0,p,r)=J^*(\mu_0,p,r)$. 
Thus, if we train an RL agent with a reward function $\widehat{r} \in \fs\setminus\oldfs$, among the optimal policies we obtain a policy $\widehat{\pi} \in\eqclasspie{\pi^E}$, i.e., a policy that plays optimal (expert) actions inside $\S^{p,\pi^E}$. Clearly, $\widehat{\pi} $ will prescribe different actions than $\pi^E$ outside $\S^{p,\pi^E}$, but this is irrelevant since those states will never be reached by $\widehat{\pi}$.
This has important consequences from the offline IRL perspective. Indeed, we can recover this new notion $\fs$ (Definition~\ref{def: new fs}) without the knowledge of $\pi^E$ in the states outside $\suppspie$.
Instead, to learn the old notion $\oldfs$ (Definition~\ref{def: old fs}), we would need to enforce that the policy used to collect samples (either $\pi^E$ or $\pi^b$) covers the full space $\S\times \dsb{H}$.\footnote{A formal definition of \emph{learnability} and the proofs that $\oldfs$ and $\fs$ are not learnable under partial cover (i.e., $\mathcal{S}^{p,\pi^E} \neq \S \times \dsb{H}$ and $\Z^{p,\pi^b} \neq \SAH$) are reported in Appendix~\ref{section: learnability fs}\label{refnote}.}


\textbf{Solution Concepts and Learnability}~~To compute the feasible set $\fs$, we need to learn the expert's policy $\pi^E_h(s)$ in every $(s,h)\in\suppspie$
and
the transition model $p_h(\cdot|s,a)$ in every $(s,a,h)\in\SAH$, so that we are able to compare the $Q$-functions.
In the \textit{online} setting~\citep[e.g.,][]{metelli2021provably}, this is a reasonable requirement because the learner can explore the environment and,
thus, collect samples over the whole $\SAH$ space.\footnote{
This is true for the \textit{generative} model case. In a \textit{forward} model, in which we are allowed to interact through trajectories, we just need to learn the transition model in all state-action pairs $(s,a,h)$ reachable from $\mu_0$ with \emph{any policy}, i.e., $(s,a,h) \in \bigcup_{\pi \in \Pi} \Z^{p,\pi}$.
}
However, in our \textit{offline} setting, even in the limit of infinite samples,
triples $(s,a,h)\not\in\suppsapib$, i.e., outside the support of $\rho^{p,\pi^b}$ are never sampled.
Thus, we can identify the transition model $p$ up to its equivalence class $\eqclassp{p}$ only. Intuitively, this means that, unless
$\Z^{p,\pi^b}=\SAH$, i.e., $\pi^b$ covers the entire space, since $\fs$ depends on the value of the transition model in the whole $\SAH$, the problem of estimating the feasible set $\fs$ 
{offline} is not \textit{learnable}.\footref{refnote}
Thus, instead of learning $\fs$ directly, we propose to target as solution concepts ($i$) the \emph{largest learnable} set of rewards \textit{contained} into $\fs$, and ($ii$) the \emph{smallest learnable} set of rewards that \textit{contains} $\fs$, 
defined as follows.
\begin{defi}[Sub- and Super-Feasible Sets]\label{def: subset superset fs}
    Let $\M$ be an MDP without reward and let
    $\pi^E$ be the deterministic
    expert's policy.
    We define the \emph{sub-feasible set} $\sub$ and
    the \emph{super-feasible set} $\super$ as:
    {\thinmuskip=1mu
\medmuskip=1mu
\thickmuskip=1mu
    \begin{align*}\resizebox{.98\linewidth}{!}{$\displaystyle
        \sub\coloneqq
        \bigcap\limits_{p'\in\eqclassp{p}}
        \R_{p',\pi^E}, \quad \super\coloneqq
        \bigcup\limits_{p'\in\eqclassp{p}}
        \R_{p',\pi^E}.$}
    \end{align*}}
\end{defi}
Since $p\in \eqclassp{p}$, we ``squeeze'' the {feasible set} $\fs$ between these two learnable solution, i.e., $\sub\subseteq\fs\subseteq
\super$. A more explicit representation is given as follows:
\begin{equation*}\resizebox{\linewidth}{!}{$\displaystyle
    \begin{aligned}
     & \sub = \{r \in \mathfrak{R}|\textcolor{vibrantTeal}{\forall  p'\in\eqclassp{p} },\forall \overline{\pi}  \in \eqclasspie{\pi^E},\\ 
    &\quad \forall (s,h) \in \mathcal{S}^{p,\pi^E}\!\!, \forall a \in \mathcal{A}\,:\,
    Q^{\overline{\pi}}_h(s,\pi^E_h(s);p',r) \ge Q^{\overline{\pi}}_h(s,a;p',r)
    \},\\
\end{aligned}$}
\end{equation*}
\begin{equation*}\resizebox{\linewidth}{!}{$\displaystyle
    \begin{aligned}
    & \super  = \{r \in \mathfrak{R}|\textcolor{vibrantTeal}{\exists  p'\in\eqclassp{p}}, \forall \overline{\pi}  \in \eqclasspie{\pi^E},\\ 
    &\quad \forall (s,h) \in \mathcal{S}^{p,\pi^E}\!\!, \forall a \in \mathcal{A}\,:\,
    Q^{\overline{\pi}}_h(s,\pi^E_h(s);p',r) \ge Q^{\overline{\pi}}_h(s,a;p',r)
    \}.
\end{aligned}$}
\end{equation*}
Intuitively, to be robust against the missing knowledge of the transition model outside $\suppsapib$, we have to account for all the possible $p' \in \eqclassp{p}$ and retain the rewards compatible with \textit{all} of them (for the sub-feasible set $\sub$) and with \textit{at least one} of them (for super-feasible set $\super$), as apparent from the quantifiers.
Moreover, when $\suppsapib=\SAH$, i.e., $\eqclassp{p}=\{p\}$, we have the equality:
$\sub=\fs=
\super$.
We now show that the $\fs^\cap$ and $\fs^\cup$ are indeed the \emph{tightest learnable} subset and superset of $\fs$ (formal statement and proof in Appendix \ref{section: old fs}).
\begin{thr}\label{infthr: tightest learnable}
    \textnormal{(\textbf{Informal})}~Let $\M$ be an MDP without reward, let $\pi^E$ and $\pi^b$ be the deterministic expert's policy and the behavioral policy, respectively. Then, $\sub$ and $\super$
    are the \emph{tightest subset and superset of
    $\fs$ learnable} from data collected in  $\mathcal{M}$ by executing $\pi^b$ and $\pi^E$.
\end{thr}

\section{PAC Framework}\label{section: pac framework}
We now propose a PAC
framework for learning $\sub$ and $\super$ from datasets
 $\D^E$ and $\D^b$, collected with $\pi^E$ and $\pi^b$. We first present the functions to evaluate the dissimilarity between feasible sets and then define the PAC requirement.

\textbf{Dissimilarity Functions}~~Being $\sub$ and $\super$ sets of rewards, we need ($i$) a function to assess the dissimilarity between items (i.e., reward functions), and ($ii$) a way of converting it into a dissimilarity function between sets (i.e., the sub- and super-feasible sets)~\citep{metelli2021provably}. For ($i$), we propose the following two semimetrics.
\begin{defi}[Semimetrics $d$ and $d_\infty$ between rewards]\label{def: metrics d dinf}
    Let $\M$ be an MDP without reward and
    let $\pi^E$ be the expert's policy.
    Let $\pi^b$ be the behavioral policy and let $\{\suppsahpib\}_h$
    be its support.
    Given two reward
    functions $r,\widehat{r}\in\mathfrak{R}$, we define
    $d:\mathfrak{R}\times\mathfrak{R}\rightarrow\mathbb{R}$
    and $d_\infty:\mathfrak{R}\times\mathfrak{R}\rightarrow\mathbb{R}$ as:
{
\begin{align*}
    &d(r,\widehat{r})\coloneqq
    \frac{1}{M(r,\widehat{r})}\sum\limits_{h\in\dsb{H}}\Big(
                \E\limits_{(s,a)\sim\rho^{p,\pi^b}_h}
                    \bigl|
                        r_h(s,a)-\widehat{r}_h(s,a)
                    \bigr|\\
                &\qquad\qquad\qquad+\max\limits_{(s,a)\notin \suppsahpib}\bigl|
                    r_h(s,a)-\widehat{r}_h(s,a)
                \bigr|
            \Big),\\
    &d_\infty(r,\widehat{r})\coloneqq
    \frac{1}{M(r,\widehat{r})}\sum\limits_{h\in\dsb{H}}\|r_h-\widehat{r}_h\|_\infty,
\end{align*}}%
where $M(r,\widehat{r})\coloneqq
    \max\bigl\{
        \|r\|_\infty,\|\widehat{r}\|_\infty
    \bigr\}$.
    Moreover, we conventionally set both $d$ and $d_\infty$ to $0$ when $M(r,\widehat{r})=0$.
\end{defi}
First, $d_\infty$ corresponds to the $\ell_\infty$-norm between reward functions, while $d$
combines the $\ell_1$-norm between rewards in $\suppsapib$ weighted by the visitation distribution of the behavioral policy $\rho^{p,\pi^b}$ and the $\ell_\infty$-norm outside $\suppsapib$. The intuition is that, inside $\suppsapib$, we weigh the error based on the number of samples, which are collected by $\pi^b$. Instead, outside $\suppsapib$, we can afford the $\ell_\infty$-norm because we adopt as solution concepts  $\sub$ and $\super$ that intrinsically manage the lack of samples so that we can confidently achieve zero error in that region. Second, it is easy to verify that both $d$ and $d_\infty$ are \emph{semimetrics}.\footnote{A \emph{semimetric} fulfills all the properties of a \emph{metric} except for the triangular inequality. We show in Appendix~\ref{section: semimetric} that our semimetrics fulfill a ``relaxed'' form of triangular inequality.}
Third, the two semimetrics are related by the following double inequality, where $\rho^{\pi^b,\suppsapib}_{\text{min}} > 0$ by definition:
\begin{restatable}{prop}{relationmetrics}\label{prop: relation metrics}
    For any $r,r'\in\mathfrak{R}$,  it holds that:
    \begin{align*}
        d(r,r') \le 2d_\infty(r,r') \le \frac{2}{\rho^{\pi^b,\suppsapib}_{\text{min}}} d(r,r').
    \end{align*}
\end{restatable}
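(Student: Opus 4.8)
The plan is to exploit that the common normalization factor $M(r,r')=\max\{\|r\|_\infty,\|r'\|_\infty\}$ appears in both $d$ and $d_\infty$. After dispatching the degenerate case $M(r,r')=0$, where both semimetrics are $0$ by convention and all three expressions vanish, I would cancel the factor $1/M(r,r')$ throughout, reducing the claim to a pair of inequalities between the unnormalized stagewise sums. Concretely, for each $h\in\dsb{H}$ I would set $A_h:=\E_{(s,a)\sim\rho^{p,\pi^b}_h}|r_h(s,a)-r'_h(s,a)|$, $B_h:=\max_{(s,a)\notin\suppsahpib}|r_h(s,a)-r'_h(s,a)|$, and $C_h:=\|r_h-r'_h\|_\infty$, so that $d$ corresponds to $\sum_h(A_h+B_h)$ and $d_\infty$ to $\sum_h C_h$. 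It then suffices to prove the two bounds stagewise and sum over $h$.

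For the left inequality I would note that $A_h\le C_h$, since the expectation of $|r_h-r'_h|$ under any distribution is dominated by its maximum $C_h$, and that $B_h\le C_h$, since $B_h$ is a maximum of $|r_h-r'_h|$ over the subset $(s,a)\notin\suppsahpib$ of the global maximum. Adding these gives $A_h+B_h\le 2C_h$, and summing over $h$ yields $d(r,r')\le 2 d_\infty(r,r')$.

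For the right inequality I would split on where the $\ell_\infty$ maximum $C_h$ is attained. If it is attained at some $(s^*,a^*)\in\suppsahpib$, then this pair lies in the support, so $\rho^{p,\pi^b}_h(s^*,a^*)\ge\rho^{\pi^b,\suppsapib}_{\text{min}}$ and hence $A_h\ge\rho^{p,\pi^b}_h(s^*,a^*)\,|r_h(s^*,a^*)-r'_h(s^*,a^*)|\ge\rho^{\pi^b,\suppsapib}_{\text{min}}\,C_h$, giving $C_h\le (\rho^{\pi^b,\suppsapib}_{\text{min}})^{-1}(A_h+B_h)$. If instead it is attained outside the support, then $C_h=B_h$, and since $\rho^{\pi^b,\suppsapib}_{\text{min}}\le 1$ I again obtain $C_h=B_h\le(\rho^{\pi^b,\suppsapib}_{\text{min}})^{-1}(A_h+B_h)$. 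Summing this stagewise bound and multiplying by $2$ delivers $2 d_\infty(r,r')\le 2(\rho^{\pi^b,\suppsapib}_{\text{min}})^{-1}d(r,r')$.

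I expect the only genuinely delicate points to be bookkeeping ones: fixing the convention for the $\max$ over the empty set when $\suppsahpib=\SA$ (so that $B_h$ is consistently $0$), and confirming $\rho^{\pi^b,\suppsapib}_{\text{min}}>0$, which holds by definition since the minimum is taken over the support, where visitation probabilities are strictly positive. The core estimates are elementary; the crux is the lower bound $A_h\ge\rho^{\pi^b,\suppsapib}_{\text{min}}\,C_h$ in the in-support case, which is precisely what forces the minimum-visitation factor to appear in the right-hand side.
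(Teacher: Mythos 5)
Your proposal is correct and follows essentially the same route as the paper: the first inequality via $\E|\cdot|\le\max|\cdot|$ and the restriction of the max, and the second via the key estimate that the in-support $\ell_\infty$ maximum, scaled by $\rho^{\pi^b,\suppsapib}_{\text{min}}$, is dominated by the expectation under $\rho^{p,\pi^b}_h$, with $\rho^{\pi^b,\suppsapib}_{\text{min}}\le 1$ handling the out-of-support term. Your case split on where the stagewise maximum is attained is just a rephrasing of the paper's chain of $\max\{a,b\}\le a+b$ bounds, so the two arguments coincide in substance.
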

Moreover, the normalization term $1/M(r,\widehat{r})$ enforces that $d(r,r')$ and $d_\infty(r,r')$ lie in 
$[0,2H]$ for every $r,r'\in\mathfrak{R}$. Differently from previous works~\citep[e.g.,][]{metelli2021provably,
lindner2022active}, this term allows to deal with \emph{(unbounded) real-valued rewards} more naturally and effectively, at the price of accepting a relaxed triangular inequality.
We stress that we have chosen distances $d$ and $d_\infty$ since they enforce non-zero weight to the absolute difference between rewards at all $(s,a,h)\in\mathcal{S}\times\mathcal{A}\times \dsb{H}$. This property allows us to control the distance between the optimal value function and the value function of the policy $\hat{\pi}^*$, i.e., the optimal policy under the recovered reward $\widehat{r}$. This can be obtained with an analogous reasoning as that contained in Section 4.3 of \citet{metelli2023towards}. More specifically, let
\begin{align*}
    & d^G_{V^*}(r,\hat{r}) \\ & \quad \coloneqq \frac{1}{M(r,\hat{r})} \sup_{\hat{\pi}^*\in\Pi^*(\hat{r})}\max_{(s,h)\in\mathcal{S}\times\dsb{H}}|V^*_h(s;r)-V^{\hat{\pi}^*}_h(s;r)|,
\end{align*}
be the adaptation of the dissimilarity index defined in \citet{metelli2023towards}, which measures the distance between the optimal value function $V^*(\cdot;r)$ (under a ground-truth reward $r$) and the value function $V^{\hat{\pi}^*}(\cdot;r)$ (under the same ground-truth reward $r$) of the policy $\hat{\pi}^*$ that is learned using the recovered reward $\widehat{r}$. Then, it can be shown that:
\begin{restatable}{prop}{relationdwithdg}
For any $r,r'\in\mathfrak{R}$,  it holds that:
    \begin{align*}
        d^G_{V^*}(r,\widehat{r})\le 2 d_{\infty}(r,\widehat{r})\le \frac{2d(r,\widehat{r})}{\rho_{\min}^{\pi^b,\mathcal{Z}^{p,\pi^b}}}.
    \end{align*}
\end{restatable}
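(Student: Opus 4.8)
The plan is to treat the two inequalities separately. The right-hand inequality is nothing new: the bound $2d_\infty(r,\widehat{r}) \le \frac{2}{\rho_{\min}^{\pi^b,\Z^{p,\pi^b}}} d(r,\widehat{r})$ is exactly the right-hand inequality of the chain already established in Proposition~\ref{prop: relation metrics}, so I would simply invoke it. All the work lies in the left-hand inequality $d^G_{V^*}(r,\widehat{r}) \le 2 d_\infty(r,\widehat{r})$.

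After cancelling the common normalization $1/M(r,\widehat{r})$ (and disposing of the degenerate case $M(r,\widehat{r})=0$, where both sides are $0$ by convention), it suffices to prove, for every $\widehat{\pi}^* \in \Pi^*(\widehat{r})$ and every $(s,h) \in \S\times\dsb{H}$, the uniform bound
\[
    |V_h^*(s;r) - V_h^{\widehat{\pi}^*}(s;r)| \le 2 \sum_{t=1}^H \|r_t - \widehat{r}_t\|_\infty ,
\]
after which taking $\sup_{\widehat{\pi}^*}$ and $\max_{(s,h)}$ on the left leaves the right-hand side unchanged, recovering $2 d_\infty$ once divided by $M(r,\widehat{r})$.

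The key ingredient I would first record is the elementary reward-perturbation bound: for any policy $\pi$ and any $(s,h)$, writing both value functions as expectations of cumulative rewards under $\mathbb{P}_{p,\pi}$ and using linearity together with the pointwise estimate $|r_t - \widehat{r}_t| \le \|r_t - \widehat{r}_t\|_\infty$ gives $|V_h^\pi(s;r) - V_h^\pi(s;\widehat{r})| \le \sum_{t=h}^H \|r_t - \widehat{r}_t\|_\infty$. Then, since $V_h^*(s;r) \ge V_h^{\widehat{\pi}^*}(s;r)$, the absolute value equals the (nonnegative) suboptimality gap, and fixing any $\pi^* \in \Pi^*(r)$ so that $V_h^*(s;r) = V_h^{\pi^*}(s;r)$, I would apply the standard add-and-subtract decomposition under the swapped reward:
\[
\begin{aligned}
V_h^{\pi^*}(s;r) - V_h^{\widehat{\pi}^*}(s;r) = {} & [V_h^{\pi^*}(s;r) - V_h^{\pi^*}(s;\widehat{r})] \\
& + [V_h^{\pi^*}(s;\widehat{r}) - V_h^{\widehat{\pi}^*}(s;\widehat{r})] \\
& + [V_h^{\widehat{\pi}^*}(s;\widehat{r}) - V_h^{\widehat{\pi}^*}(s;r)] .
\end{aligned}
\]
The first and third terms are each bounded in absolute value by $\sum_{t=h}^H \|r_t - \widehat{r}_t\|_\infty \le \sum_{t=1}^H \|r_t - \widehat{r}_t\|_\infty$ via the perturbation bound, while the second term is nonpositive because $\widehat{\pi}^*$ is optimal under $\widehat{r}$, i.e., $V_h^{\widehat{\pi}^*}(s;\widehat{r}) = V_h^*(s;\widehat{r}) \ge V_h^{\pi^*}(s;\widehat{r})$. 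Summing the three estimates yields the factor $2$.

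This is a routine suboptimality-gap argument (mirroring Section~4.3 of \citet{metelli2023towards}), so I do not anticipate a genuine obstacle; the only points needing care are the uniformity of the bound in $\widehat{\pi}^*$ and $(s,h)$ — which is precisely what legitimizes pulling the $\sup$ and $\max$ through — and the bookkeeping of the normalization $M(r,\widehat{r})$, which cancels identically and whose vanishing case is covered by the convention in Definition~\ref{def: metrics d dinf}.
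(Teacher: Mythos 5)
Your proposal is correct and follows essentially the same route as the paper's proof: the paper performs the same add-and-subtract of $V^{\widehat{\pi}^*}_h(s;\widehat{r})$, uses the optimality of $\widehat{\pi}^*$ under $\widehat{r}$ to replace it with $V^{\pi^*}_h(s;\widehat{r})$ (which is exactly your dropping of the nonpositive middle term), bounds the two remaining single-policy reward perturbations by $\sum_{l=h}^H\|r_l-\widehat{r}_l\|_\infty$ each, divides by $M(r,\widehat{r})$, and invokes Proposition~\ref{prop: relation metrics} for the second inequality. No gaps.
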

Clearly, a small value of $d$ entails a small value of $d^G_{V^*}$. Thus, controlling distances $d$ and $d_\infty$, by enforcing non-zero weight to the absolute difference between rewards at all $(s,a,h)\in\mathcal{S}\times\mathcal{A}\times \dsb{H}$, we can control the distance between value functions.
Finally, as mentioned above, notice that we can get rid of Assumption \ref{assumption: coverage of behavioral policy} by replacing the expectation w.r.t. $\rho^{p,\pi^b}$ in the definition of $d$ with some mixture between $\pi^E$ and $\pi^b$. However, for the sake of simplicity, we continue with the current definition.

Next, to obtain a dissimilarity function between reward sets ($ii$), we make use of the Hausdorff distance.
\begin{defi}[Hausdorff distance, \citealt{rockafellar1998variational}]\label{def: Hausdorff distance}
    Let $\R,\widehat{\R} \subseteq \mathfrak{R}$ be two sets of reward functions,
    and let $c \in \{d, d_\infty\}$.
    The \emph{Hausdorff distance} between $\R$ and $\widehat{\R}$ with inner
    distance $c : \mathfrak{R}\times \mathfrak{R} \rightarrow \mathbb{R}$ is defined as:
    {\thinmuskip=2mu
\medmuskip=2mu
\thickmuskip=2mu
    \begin{align}\label{eq:haus}
        \mathcal{H}_c(\R, \widehat{\R})\coloneqq\max\Big\{\sup\limits_{r\in \R}\inf\limits_{\widehat{r}\in \widehat{\R}}c(r,\widehat{r}),
        \sup\limits_{\widehat{r}\in \widehat{\R}}\inf\limits_{r\in \R}c(r,\widehat{r})\Big\}.
    \end{align}}%
    Moreover, we abbreviate $\mathcal{H}_{d_\infty}$ with $\mathcal{H}_\infty$.
\end{defi}
Since the feasible sets are closed (see Appendix \ref{section: semimetric}), using $d$ or $d_\infty$, the Hausdorff distance is a semimetric and satisfies a relaxed triangle inequality as well. Thus, $\mathcal{H}_c(\R, \widehat{\R})=0$ if and only if the two sets coincide, i.e., $\R=\widehat{\R}$.  





\textbf{$(\epsilon,\delta)$-PAC Requirement}~~We now formally define the \emph{sample efficiency} requirement.
To distinguish between the two semimetrics $d$ and $d_\infty$,
we denote by $c$-IRL the problem of estimating $\sub$
and $\super$
under $\mathcal{H}_c$, where $c\in\{d,d_\infty\}$.
\begin{defi}[$(\epsilon,\delta)$-PAC Algorithm]\label{defi:pac}
    Let $\epsilon \in [0,2H]$ and $\delta \in (0,1)$. An algorithm $\mathfrak{A}$ outputting the estimated sub- and super-feasible sets $\widehat{\R}^\cap$ and $\widehat{\R}^\cup$ is $(\epsilon,\delta)$\emph{-PAC}
    for $c$-IRL if:
    \begin{align*}
        \mathop{\mathbb{P}}_{(p,\pi^E,\pi^b)}\bigl(&
            \bigl\{\mathcal{H}_c(\sub,\widehat{\R}^\cap)\le \epsilon\bigr\}\cap\\
            &\bigl\{\mathcal{H}_c(\super,\widehat{\R}^\cup)\le \epsilon\bigr\}
        \bigr)\ge 1-\delta,
    \end{align*}
    where $\mathbb{P}_{(p,\pi^E,\pi^b)}$ denotes the probability
    measure induced by $\pi^E$ and $\pi^b$ in
    $\M$. The \emph{sample complexity} is  the number of trajectories $\tau^E$ and $\tau^b$ in $\D^E$ and $\D^b$, respectively.
\end{defi}

\section{\irloExt (\irlo)}\label{section: irlo}
Our goal is to devise an algorithm
that is ($i$) statistically efficient, ($ii$) computationally efficient, and that provides ($iii$) guarantees about the inclusion monotonicity property. As a warm-up, in this section, we present \irlo (\irloExt), fulfilling ($i$) and ($ii$), but not ($iii$). 


\textbf{Algorithm}~~The pseudo-code of \irlo is reported in Algorithm \ref{alg:irlo} (\textcolor{vibrantOrange}{\irlo} box).
It receives two datasets $\D^E$ and $\D^b$ of trajectories collected
by policies $\pi^E$ and $\pi^b$, respectively, and it outputs the \emph{estimated sub- and super-feasible sets} $\widehat{\R}^\cap$ and $\widehat{\R}^\cup$
as estimates of $\sub$ and $\super$, respectively. \irlo leverages $\D^E$ to compute the {empirical} estimates of the expert's support
$\suppspie$ and policy $\pi^E$, denoted by $\estsuppspie$ and $\widehat{\pi}^E$ (lines~\ref{line:1}-\ref{line:2}), and it uses $\D^b$ to compute the {empirical} estimates of the behavioral policy support $\suppsapib$, and of the transition model $p$, denoted by $\estsuppsapib$ and $\widehat{p}$ (lines~\ref{line:3}-\ref{line:4}).
Finally, it returns the sub- and super-feasible sets computed with the estimated supports, expert's policy, and transition model: $\widehat{\R}^\cap=\R_{\widehat{p},\widehat{\pi}^E}^\cap$
and $\widehat{\R}^\cup=\R_{\widehat{p},\widehat{\pi}^E}^\cup$ (line~\ref{line:5}).

\RestyleAlgo{ruled}
\SetInd{0.5em}{0.5em}
\LinesNumbered
 \begin{algorithm}[t]
    \caption{\irlo and \pirlo.}\label{alg:irlo}
    \small
    \SetKwInOut{Input}{Input}
    \SetKwInOut{Output}{Output}
     \Input{Datasets $\D^E=\{\langle s_h^{E,i},a_h^{E,i} \rangle_{h}\}_{i}$, $\D^b=\{\langle s_h^{b,i},a_h^{b,i} \rangle_{h}\}_{i}$}
    
    \Output{Estimated sub- and super-feasible sets $\widehat{\R}^\cap,\widehat{\R}^\cup$}

     Estimate the expert's support:\\
     $\estsuppspie\gets \{(s,h)\in\SH\,|\, \exists i\in\dsb{\tau^E}:\,s_h^{E,i}=s\}$\label{line:1}
     
    Estimate the expert's policy:\\\For{$(s,h)\in \estsuppspie$}{
     $\widehat{\pi}^E_h(s)\gets a^{E,i}_h$ for some $i\in\dsb{\tau^E}$ s.t. $s_h^i=s$ \label{line:2}
    }

{\thinmuskip=1mu
\medmuskip=1mu
\thickmuskip=1mu    Estimate the state-action behavioral policy support:\\
     $\estsuppsapib\gets \{(s,a,h)\in\SAH\,|\, \exists i\in\dsb{\tau^b}:\,(s_h^{b,i},a_h^{b,i})=(s,a)\}$\label{line:3}}

Compute the counts for every $(s,a,h) \in \estsuppsapib$ and $s'\in\S$:\\ $N_h^b(s,a,s') \leftarrow \sum_{i\in\dsb{\tau^b}}\mathbbm{1}\{(s^{b,i}_h,a^{b,i}_h,s^{b,i}_{h+1})=(s,a,s')\}$ $   N_h^b(s,a)\leftarrow \sum_{s'\in\S}N_h^b(s,a,s')$
     
    Estimate the transition model:\\\For{$(s,a,h)\in \estsuppsapib$}{
    \For{$s'\in\S$}{
     $\widehat{p}_h(s'|s,a)\gets\frac{N_h^b(s,a,s')}{\max\{1,N_h^b(s,a)\}}$\label{line:4}
    }
    }\label{line:endFor}

\begin{tcolorbox}[enhanced, attach boxed title to top right={yshift=-4.5mm,yshifttext=-1mm},colframe=vibrantOrange,colbacktitle=vibrantOrange,colback=white,
  title=\texttt{IRLO},fonttitle=\bfseries,
  boxed title style={size=small, sharp corners}, sharp corners ,boxsep=-1.5mm]\small
     Compute $\R_{\widehat{p},\widehat{\pi}^E}^\cap$ and $\R_{\widehat{p},\widehat{\pi}^E}^\cup$ with Definition~\ref{def: subset superset fs} \\\phantom{A}using
    $\widehat{p}$, $\widehat{\pi}^E$, $\estsuppsapib$, and $\estsuppspie$
    
     \textbf{return} $(\R_{\widehat{p},\widehat{\pi}^E}^\cap,\R_{\widehat{p},\widehat{\pi}^E}^\cup)$
\end{tcolorbox}\label{line:5}
\vspace{-.35cm}
\begin{tcolorbox}[enhanced, attach boxed title to top right={yshift=-4.5mm,yshifttext=-1mm},colframe=vibrantTeal,colbacktitle=vibrantTeal,colback=white,
  title=\texttt{PIRLO},fonttitle=\bfseries,
  boxed title style={size=small, sharp corners}, sharp corners ,boxsep=-1.5mm, ]\small
    Compute the confidence set $\mathcal{C}(\widehat{p},b)$ via Eq.~\eqref{eq:C2}
    
    Compute $\subrelax_{\widehat{p},\widehat{\pi}^E}$ 
    and $\superrelax_{\widehat{p},\widehat{\pi}^E}$ with Eq. \eqref{def: relaxation superset} \\ \phantom{A}using
    $\widehat{p}$, $\widehat{\pi}^E$, $\estsuppsapib$, and $\estsuppspie$
    
     \textbf{return} $(\subrelax_{\widehat{p},\widehat{\pi}^E},\superrelax_{\widehat{p},\widehat{\pi}^E})$
\end{tcolorbox}\label{line:6}
\vspace{-.2cm}
 \end{algorithm}


\textbf{Computationally Efficient Implementation}~~
In Algorithm~\ref{alg:irlo}, \irlo outputs the estimated feasible sets $\widehat{\mathcal{R}}^\cup$ and $\widehat{\mathcal{R}}^\cap$ obtained by computing the intersection and the union of a continuous set of transition models (Definition~\ref{def: subset superset fs}). 
To show the computational efficiency of \irlo, we provide in Appendix~\ref{section: implementation appendix} (Algorithm~\ref{alg: CheckMembirlo}, \irlo box) a \emph{polynomial-time membership checker} that tests whether a candidate reward function  $r\in\mathfrak{R}$ belongs to $\widehat{\mathcal{R}}^\cup$ and/or $\widehat{\mathcal{R}}^\cap$. 
We apply \emph{extended value iteration}~ \citep[EVI,][]{auer2008nearoptimal} to compute an upper bound $Q^{+}$ and a lower bound $Q^{-}$ of the Q-function induced by the candidate reward $r$ and varying the transition model in a set $\mathcal{C}$. For the \irlo algorithm, $\mathcal{C}$ corresponds to the equivalence class of the empirical estimate $\widehat{p}$ induced by the empirical support $\estsuppsapib$, i.e., $[\widehat{p}]_{\equiv_{\estsuppsapib}}$:
\begin{equation}\label{eq:C1}\resizebox{.91\linewidth}{!}{$\displaystyle
    \mathcal{C} \coloneqq  \left\{p' \in \mathcal{P}\,|\, \forall (s,a,h) \in \estsuppsapib: \, p'_h(\cdot|s,a) = \widehat{p}_h(\cdot|s,a) \right\}.$}
\end{equation}
The algorithm has a time complexity of order $\mathcal{O}(HS^2A)$.
 
\textbf{Sample Complexity Analysis}~~We now show that the \irlo algorithm is statistically efficient. The following theorem
provides a \emph{polynomial} upper bound to its sample complexity.

\begin{restatable}{thr}{upperboundirlo}\label{theorem: upper bound d irlo}
    Let $\M$ be an MDP without reward and let
    $\pi^E$ be the expert's policy. Let
    $\D^E$ and $\D^b$ be two datasets of
    $\tau^E$ and $\tau^b$ trajectories collected
    with policies $\pi^E$ and $\pi^b$ in $\M$, respectively.
    Under Assumption~\ref{assumption: coverage of behavioral policy}, \irlo is $(\epsilon,\delta)$-PAC for $d$-IRL
    with a sample complexity at most:
    {
    \begin{align*}
        & \tau^b\le \widetilde{\mathcal{O}}\Bigg(
            \frac{H^3Z^{p,\pi^b}\ln\frac{1}
        {\delta}}{\epsilon^2}\biggl(
            \ln\frac{1}
        {\delta}+S^{p,\pi^b}_{\max}
        \biggr)+\frac{\ln\frac{1}{\delta}}{\ln\frac{1}{1-\rhominpib}}
        \Bigg), \\
        & \tau^E \le \widetilde{\mathcal{O}} \Bigg( \frac{\ln\frac{1}{\delta}}{\ln\frac{1}{1-\rhominpie}} \Bigg).
    \end{align*}}%
\end{restatable}
Some comments are in order. First, we observe that the sample complexity for the expert's dataset $\tau^E$ is constant and depends on the minimum non-zero value of the visitation 
 distribution  $\rhominpie>0$, but it does not depend on the desired accuracy $\epsilon$. This accounts for the minimum number of samples to have $\estsuppspie=\suppspie$, with high probability. Second, the sample complexity for the behavioral policy dataset $\tau^b$ displays a tight dependence on the desired accuracy $\epsilon$ and a dependence of order $H^4$ on the horizon since in the worst case, $Z^{p,\pi^b}\le SAH$.  Moreover, we notice the two-regime behavior represented by $\ln (1/\delta) + S^{p,\pi^b}_{\max}$ (i.e., small and large $\delta$)
as in previous works \cite{kaufmann2021adaptive, metelli2023towards}. 
This term is multiplied by an additional $\ln (1/\delta)$ term, which always appears in {offline} (forward) RL \citep{xie2021bridging} and it is needed to control the minimum number of samples collected from every reachable state-action pair. Finally, we observe a dependence analogous to that of $\tau^E$ on the  minimum non-zero value of the visitation 
 distribution  $\rhominpib>0$, to ensure that $\estsuppsapib=\suppsapib$.
Note that when $\pi^b=\pi^E$, Assumption~\ref{assumption: coverage of behavioral policy} is fulfilled, and the sample complexity reduces to:
{
\begin{align*}\resizebox{\linewidth}{!}{$\displaystyle
    \tau^E\le \widetilde{\mathcal{O}}\Bigg(
            \frac{H^3S^{p,\pi^E}\ln\frac{1}
        {\delta}}{\epsilon^2}\biggl(
            \ln\frac{1}
        {\delta}+S^{p,\pi^E}_{\max}
        \biggr)+\frac{\ln\frac{1}{\delta}}{\ln\frac{1}{1-\rhominpie}}
        \Bigg).$}
\end{align*}}%
Since $S^{p,\pi^E} \le SH$, the dependence on the number of actions is no longer present. An analogous result holds for
$d_\infty$.
\begin{restatable}{thr}{upperbounddinftyirlo}\label{theorem: upper bound d infty irlo}
    Under the conditions of Theorem \ref{theorem: upper bound d irlo},
    \irlo is $(\epsilon,\delta)$-PAC for $d_\infty$-IRL
    with a sample complexity at most:
    {
    \begin{align*}
        \tau^b\le \widetilde{\mathcal{O}}\Bigg(
            \frac{H^4\ln\frac{1}
        {\delta}}{\rhominpib\epsilon^2}\biggl(
            \ln\frac{1}
        {\delta}+S^{p,\pi^b}_{\max}
        \biggr)+\frac{\ln\frac{1}{\delta}}{\ln\frac{1}{1-\rhominpib}}
        \Bigg),
    \end{align*}}%
    and $\tau^E$ is bounded as in Theorem~\ref{theorem: upper bound d irlo}.
\end{restatable}
We note that, since $1/\rhominpib\ge \suppsapib$, Theorem \ref{theorem: upper bound d infty irlo} delivers a larger sample complexity w.r.t. Theorem \ref{theorem: upper bound d irlo}. This is expected because of the relation $d(r,r') \le 2d_\infty(r,r')$ between the two semimetrics (see Proposition~\ref{prop: relation metrics}).

\section{\pirloExt (\pirlo)}\label{section: pirlo}
In this section, we present our main algorithm, \pirlo (\pirloExt). Beyond statistical and computational efficiency,
\pirlo provides guarantees on the \textit{inclusion monotonicity} of the proposed feasible sets by embedding a form of
\textit{pessimism}.\footnote{
We remark on the substantial difference between our use of pessimism and that of \citet{zhao2023inverse}. Indeed, we apply pessimism to \emph{feasible sets} to ensure that the estimated set fulfills the \emph{inclusion monotonicity} property, while \citet{zhao2023inverse} apply pessimism to ensure the \emph{entry-wise monotonicity} of the reward function, i.e., $\widehat{r}(s,a) \preceq r(s,a)$, for all $\widehat{r} \in \widehat{\mathcal{R}}$ and $r \in \mathcal{R}$.}

Before presenting the algorithm, we formally introduce the notion of inclusion monotonicity and intuitively justify it. Thanks to the PAC property (Theorem~\ref{theorem: upper bound d irlo}), in the limit of \emph{infinite samples} $\tau^b,\tau^E \rightarrow + \infty$, \irlo recovers exactly the sub-
$\widehat{\R}^\cap \rightarrow \sub$
and the super- $\widehat{\R}^\cup \rightarrow \super$ feasible sets, and, consequently, the property
$\widehat{\R}^\cap\subseteq\fs
\subseteq\widehat{\R}^\cup$ holds. Because of the meaning of these sets, i.e., the \textit{tightest learnable} subset $\sub$ and superset $\super$ of the \textit{feasible set} $\fs$, it is desirable to ensure the property 
$\widehat{\R}^\cap\subseteq\fs
\subseteq\widehat{\R}^\cup$ (in high probability) in the \emph{finite samples} regime $\tau^b,\tau^E \le +\infty$  too. 
The following definition formalizes the property.

\begin{defi}[Inclusion Monotonic Algorithm]\label{defi:inclusionMon}
    Let $\delta \in (0,1)$. An algorithm $\mathfrak{A}$ outputting the estimated sub- and super-feasible sets $\widehat{\R}^\cap$ and $\widehat{\R}^\cup$ is $\delta$-\emph{inclusion monotonic} if:
    \begin{align*}
\mathop{\mathbb{P}}_{(p,\pi^E,\pi^b)}\left(\widehat{\R}^\cap\subseteq\fs
\subseteq\widehat{\R}^\cup
        \right)\ge 1-\delta.
    \end{align*}
\end{defi}

Clearly, one can always choose $\widehat{\R}^\cap=\{\}$ and $\widehat{\R}^\cup= \mathfrak{R}$ to satisfy Definition~\ref{defi:inclusionMon}. Thus, the inclusion monotonicity property will always be employed in combination with the PAC requirement (Definition~\ref{defi:pac}).
The importance of {monotonicity} will arise from a practical viewpoint in Section~\ref{section: application}.

\textbf{Algorithm}~~
The pseudocode of \pirlo is shown in Algorithm \ref{alg:irlo} (\pirlo box). The first part (lines~\ref{line:1}-\ref{line:endFor}) is analogous to \irlo and the main difference lies in the presence of the confidence set $\mathcal{C}(\widehat{p},b)\subseteq\P$ (line~\ref{line:6}),
containing the transition models in $\P$ close in $\ell_1$-norm to the {empirical estimate} $\widehat{p}$, except the ones that are not compatible with expert's actions. Formally, $\mathcal{C}(\widehat{p},b)$ is defined as:\footnote{
Actually, this definition does not take into account a corner case. See Appendix \ref{subsec: annoying corner case} for details and a more precise definition.
}
\begin{equation}\label{eq:C2}\resizebox{.95\linewidth}{!}{$\displaystyle
    \begin{aligned}
    \mathcal{C}&(\widehat{p},b)\coloneqq\Bigl\{
    p'\in\P|\\
    & \forall (s,h)\in\estsuppspie,\, s' \not\in \estsuppspie_{h+1}:\, p'_h(s'|s,\widehat{\pi}_h^E(s)) = 0 \\
    & \forall (s,a,h) \in \estsuppsapib:\, \|p'_h(\cdot|s,a) - \widehat{p}_h(\cdot|s,a) \|_1 \le \widehat{b}_h(s,a)
    \Bigr\}, \end{aligned}$}\hspace{-.5cm}
\end{equation}
where $\widehat{b}_h(s,a)$ is defined in Equation~\eqref{eq:bhsa}.
The intuition is that, with high probability, the true transition model $p$, and its equivalence class $\eqclassp{p}$, will belong to $\mathcal{C}(\widehat{p},b)$.

Drawing inspiration from \textit{pessimism} in RL, 
\pirlo ``penalizes'' the estimates of the feasible set
by removing from $\widehat{\R}^\cap$ the rewards for which we are \textit{not
confident enough} of their membership to $\sub$,
and by adding to $\widehat{\R}^\cup$ the rewards for which we are \textit{not
confident enough} of their non-membership to $\super$, based on the confidence set $\mathcal{C}(\widehat{p},b)$ on the transition model. This translates into the following expressions:
{
\begin{align}\label{def: sub and super no relaxation pessimism}
    \widehat{\R}^\cap&=\bigcap\limits_{p'\in \mathcal{C}(\widehat{p},b)}
    \R_{p',\widehat{\pi}^E}^\cap, \qquad 
    \widehat{\R}^\cup=\bigcup\limits_{p'\in \mathcal{C}(\widehat{p},b)}\R_{p',\widehat{\pi}^E}^\cup.
\end{align}}%
This way, if $p \in \mathcal{C}(\widehat{p},b)$ and $\widehat{\pi}^E = \pi^E$ with high probability, we have that, simultaneously, $\widehat{\mathcal{R}}^\cap \subseteq \mathcal{R}^{\cap}_{p,\pi^E}$ and $ \mathcal{R}^{\cup}_{p,\pi^E} \subseteq \widehat{\mathcal{R}}^\cup $. This entails the inclusion monotonicity property (Definition~\ref{defi:inclusionMon}) thanks to Definition~\ref{def: subset superset fs}.

\textbf{Computationally Efficient Implementation}~~Differently from \irlo, computing the set operations of Equation~\eqref{def: sub and super no relaxation pessimism} cannot be directly carried out by EVI.\footnote{Membership testing can be here implemented with a \emph{bilinear program}, which is, in general, a difficult problem (Appendix \ref{section: implementation appendix}).} For this reason, we propose a \emph{relaxation} which achieves the double objective of: ($i$) enabling a computationally efficient implementation of \pirlo (Algorithm~\ref{alg: CheckMembirlo}, \pirlo box); and ($ii$) allowing for a simpler statistical analysis, preserving both the PAC and the inclusion monotonicity properties (details in Appendix~\ref{section: implementation appendix}):
\begin{equation}\resizebox{\linewidth}{!}{$\displaystyle
    {\thinmuskip=1mu
\medmuskip=1mu
\thickmuskip=1mu
\small
\begin{aligned}
        \subrelax &\coloneqq\{
    r\in\mathfrak{R}\,|\,
    \forall \overline{\pi} \in [\widehat{\pi}^E]_{\equiv_{\widehat{\S}^{p,\pi^E}}}, \,\forall (s,h)\in \widehat{\S}^{p,\pi^E},\forall a\in\A: \\
    &\textcolor{vibrantTeal}{\min\limits_{p'\in \mathcal{C}(\widehat{p},b)}}Q^{\widehat{\pi}^E}_h(s,\widehat{\pi}_h^E(s);p',r)\ge \textcolor{vibrantTeal}{\max\limits_{p''\in \mathcal{C}(\widehat{p},b)}}Q^{\overline{\pi}}_h(s,a;p'',r)
    \},\notag\\
    \superrelax &\coloneqq\{
    r\in\mathfrak{R}\,|\, \forall \overline{\pi} \in [\widehat{\pi}^E]_{\equiv_{\widehat{\S}^{p,\pi^E}}},\,
    \forall (s,h)\in \widehat{\S}^{p,\pi^E} ,\forall a\in\A: \label{def: relaxation superset}\\
    &\textcolor{vibrantTeal}{\max\limits_{p'\in \mathcal{C}(\widehat{p},b)}}Q^{\widehat{\pi}^E}_h(s,\widehat{\pi}_h^E(s);p',r)\ge \textcolor{vibrantTeal}{\min\limits_{p''\in \mathcal{C}(\widehat{p},b)}}Q^{\overline{\pi}}_h(s,a;p'',r)
    \},\notag
\end{aligned}}$}%
\end{equation}
 where the universal/existential quantification over the transition model of Definition~\ref{def: subset superset fs} has been relaxed by the two $\max-\min$. In other words, we allow a choice of different transition models for the two Q-functions appearing in the two members of the inequality.
Thus, $\widetilde{\mathcal{R}}^\cap \subseteq \widehat{\mathcal{R}}^\cap   $ and $ \widehat{\mathcal{R}}^\cup \subseteq  \widetilde{\mathcal{R}}^\cup$, preserving the inclusion monotonicity. For the membership checking of a candidate reward $r \in \mathfrak{R}$, similarly to the \irlo case, we compute upper and lower bounds $Q^+$ and $Q^-$ to the Q-function by using EVI varying the transition model in the confidence set $\mathcal{C}(\widehat{p},b)$ defined in Equation~\eqref{eq:C2}. Now, the confidence set is made of \emph{$\ell_1$ constraints} and the corresponding $\max$ and $\min$ programs can be solved by using the approach of~\citep[][Figure 2]{auer2008nearoptimal}.
The overall time complexity is of order $\mathcal{O}(HS^2A\log S)$.

\textbf{Sample Efficiency and Inclusion Monotonicity}~~
We now show that \pirlo is statistically efficient, with the additional guarantee (w.r.t. \irlo) of the inclusion {monotonicity}.
\begin{restatable}{thr}{upperboundpirlo}\label{theorem: upper bound d pirlo}
    Let $\M$ be an MDP without reward and let
    $\pi^E$ be the expert's policy. Let
    $\D^E$ and $\D^b$ be two datasets of
    $\tau^E$ and $\tau^b$ trajectories collected
    by executing policies $\pi^E$ and $\pi^b$ in $\M$.
    Under Assumption~\ref{assumption: coverage of behavioral policy}, \pirlo is $(\epsilon,\delta)$-PAC for $d$-IRL
    with a sample complexity at most:
    {
    \begin{align*}
        \tau^b&\le \widetilde{\mathcal{O}}\Biggl(
            \frac{H^3Z^{p,\pi^b}\ln\frac{1}
        {\delta}}{\epsilon^2}\biggl(
            \ln\frac{1}
        {\delta}+S_{\max}^{p,\pi^b}
        \biggr)\\
        &+\frac{H^6\ln\frac{1}
        {\delta}}{\rho_{\min}^{\pi^b,\suppsapie} \epsilon^2}\biggl(
            \ln\frac{1}
        {\delta}+S_{\max}^{p,\pi^b}
        \biggr)+\frac{\ln\frac{1}{\delta}}{\ln\frac{1}{1-\rhominpib}}
        \Biggr),
    \end{align*}}%
    and $\tau^E$ is bounded as in Theorem~\ref{theorem: upper bound d irlo}. Furthermore, \pirlo is inclusion monotonic.
\end{restatable}
The price for the inclusion {monotonicity} is the additional term in the sample complexity which grows with $H^6$ and with $1/\rho_{\min}^{\pi^b,\suppsapie}$. The latter represents the minimum non-zero visitation probability with which policy $\pi^b$ covers $\suppsapie$, i.e., the support of $\rho^{p,\pi^E}$.
Intuitively, since the expert's policy is optimal, this additional term is due to a mismatch between optimal Q-functions under the different transition models of $\mathcal{C}(\widehat{p},b)$.
Notice that, under Assumption~\ref{assumption: coverage of behavioral policy}, $\suppsapie \subseteq \suppsapib$, consequently, $\rho_{\min}^{\pi^b,\suppsapie} \ge \rho_{\min}^{\pi^b,\suppsapib}$.
We can provide an analogous result for $d_\infty$.
\begin{restatable}{thr}{upperbounddinftypirlo}\label{theorem: upper bound d infty pirlo}
    Under the conditions of Theorem \ref{theorem: upper bound d pirlo},
    \pirlo is $(\epsilon,\delta)$-PAC for $d_\infty$-IRL
    with a sample complexity at most:
    {
    \begin{equation*}
    \begin{aligned}
        \tau^b&\le  \widetilde{\mathcal{O}}\Bigg(\frac{H^4\ln\frac{1}
        {\delta}}{\rhominpib \epsilon^2}\biggl(
            \ln\frac{1}
        {\delta}+S^{p,\pi^b}_{\max}
        \biggr) \\
        & 
           + \frac{H^6\ln\frac{1}
        {\delta}}{\rho_{\min}^{\pi^b,\suppsapie }\epsilon^2}\biggl(
            \ln\frac{1}
        {\delta}+S^{p,\pi^b}_{\max}
        \biggr) + \frac{\ln\frac{1}{\delta}}{\ln\frac{1}{1-\rhominpib}}\Bigg),
    \end{aligned}
    \end{equation*}}%
   and $\tau^E$ is bounded as in Theorem~\ref{theorem: upper bound d irlo}.
   Furthermore, \pirlo is inclusion monotonic.
\end{restatable}
Notice that both bounds in Theorem \ref{theorem: upper bound d pirlo} and Theorem \ref{theorem: upper bound d infty pirlo} also hold for the objectives defined in Equation \eqref{def: sub and super no relaxation pessimism}.\footnote{In Appendix \ref{subsec: superset no relaxation bound}, we provide a tighter bound for the superset $\widehat{\R}^\cup$ without using the relaxation. Moreover, in Appendix \ref{subsec: pirlo r = r hat}, we prove a larger sample complexity upper bound, when including an additional useful requirement.}

\section{{Reward Sanity Check with} \pirlo}\label{section: application}
In the literature, IRL algorithms \cite{ratliff2006maximum,ziebart2008maximum} provide \emph{criteria} to select a specific reward function from the {feasible set}.
Our algorithm, \pirlo, thanks to the {inclusion monotonicity} property, provides a partition of the space of rewards $\mathfrak{R}$ in three sets: ($i$) rewards contained in the sub-feasible set $\widehat{\R}^\cap$ (i.e., feasible w.h.p.), ($ii$)
rewards \textit{not} contained in the super-feasible set $\mathfrak{R}\setminus\widehat{\R}^\cup$ (i.e., not feasible w.h.p.), and ($iii$) rewards that we cannot discriminate with the given confidence ($\widehat{\R}^\cup\setminus\widehat{\R}^\cap$).
The situation is illustrated in Figure~\ref{fig: explanation sanity checker v2 online setting}. Thus, \pirlo can be used both as a \emph{sanity checker} on the rewards outputted by a specific IRL algorithm and for defining the set of rewards from which selecting one.
To exemplify this application, we have run \pirlo using highway driving data from \citet{LikmetaMRTGR21} and some human-interpretable reward.
We provide the experimental details and the results in Appendix \ref{section: appendix experiments}.


\section{A Bitter Lesson}\label{section: bitter lesson}
Up to now, we assumed to have two datasets
$\D^E$ and $\D^b$ of trajectories collected by policies $\pi^E$ and $\pi^b$, respectively.
As already noted, this setting generalizes the most common IRL scenario where the only dataset $\D^E$ is collected by the deterministic expert's policy $\pi^E$ and there is no possibility of collecting further data.
A natural question arises: 
\emph{Why not directly considering
the setting with $\D^E$ only?}
The reason lies in the following \emph{negative result} showing that the reward functions that
can be learned from a single expert's dataset $\D^E$ are not completely satisfactory. 
\begin{restatable}{prop}{greedyrewards}\label{theorem: greedy rewards}
    Let $\M$ be the usual MDP without reward with $A\ge 2$ and
    let $\pi^E$ be the deterministic
    expert's policy. Let $\D^E$ be a dataset of trajectories
    collected by following $\pi^E$ in $\M$.
    Then, for any reward in $r \in \sub$ it holds that:
    {
    \begin{align}
        \forall (s,h) \in \suppspie, \, \forall a \in \mathcal{A}: \quad r_h(s,\pi^E_h(s))\ge r_h(s,a).
    \end{align}}%
\end{restatable}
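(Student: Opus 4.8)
The plan is to exploit the large freedom we have in choosing the transition model outside the expert's support, which, for a deterministic expert observed through $\D^E$ alone, is very wide. In this single-dataset scenario the behavioral policy coincides with the expert, so $\suppsapib=\Z^{p,\pi^E}$, and since $\pi^E$ is deterministic this support contains, for each $(s,h)\in\suppspie$, only the pair $(s,\pi^E_h(s),h)$. Consequently the equivalence class $\eqclassp{p}$ pins down $p'_h(\cdot|s,\pi^E_h(s))$ to equal $p_h(\cdot|s,\pi^E_h(s))$ on the support, but leaves $p'_h(\cdot|s,a)$ completely free for every $a\neq\pi^E_h(s)$.

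First I would invoke the explicit characterization of $\sub$ following Definition~\ref{def: subset superset fs} (equivalently, Theorem~\ref{theorem: alternative representation new fs} applied to each $p'$): a reward $r\in\sub$ must satisfy $Q^{\overline\pi}_h(s,\pi^E_h(s);p',r)\ge Q^{\overline\pi}_h(s,a;p',r)$ for \emph{every} $p'\in\eqclassp{p}$, every $\overline\pi\in\eqclasspie{\pi^E}$, every $(s,h)\in\suppspie$ and every $a\in\A$. The key step is then to instantiate this universally quantified statement at a carefully engineered adversarial transition model. Fix $(s,h)\in\suppspie$ and $a\in\A$; if $a=\pi^E_h(s)$ the claim is immediate, so assume $a\neq\pi^E_h(s)$. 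Because $(s,a,h)\notin\Z^{p,\pi^E}$, I am free to define $p'\in\eqclassp{p}$ that agrees with $p$ everywhere except that it reroutes the off-expert action $a$ onto the expert's next-state distribution, i.e. $p'_h(\cdot|s,a)\coloneqq p_h(\cdot|s,\pi^E_h(s))$. This $p'$ is a legitimate member of $\eqclassp{p}$ precisely because it is left unchanged on $\Z^{p,\pi^E}$.

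Next I would expand the two Q-functions via the Bellman equation, choosing $\overline\pi=\pi^E$. Both $Q^{\pi^E}_h(s,\pi^E_h(s);p',r)$ and $Q^{\pi^E}_h(s,a;p',r)$ equal their immediate reward plus the \emph{same} continuation term $\sum_{s'\in\S}p_h(s'|s,\pi^E_h(s))\,Q^{\pi^E}_{h+1}(s',\pi^E_{h+1}(s');p',r)$, since the chosen $p'$ makes the two next-state distributions identical and the continuation value (obtained by following the deterministic $\pi^E$ from stage $h+1$ under the same $p'$) is common to both expressions; the last stage $h=H$ is the degenerate case with empty continuation. Subtracting, the optimality inequality collapses exactly to $r_h(s,\pi^E_h(s))\ge r_h(s,a)$, which is the claim. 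As $(s,h)$ and $a$ were arbitrary, this concludes the argument.

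The main obstacle is not any calculation but the construction itself: one must recognize that intersecting over the \emph{entire} equivalence class $\eqclassp{p}$ lets an adversary pick, per off-support action, the transition that maximally tightens each optimality constraint, and that the sharpest such choice is the one equalizing continuation values with the expert action, thereby erasing all dependence on the horizon and forcing a purely myopic (greedy) comparison of immediate rewards. A minor verification is that this construction genuinely stays inside $\eqclassp{p}$, which is exactly where determinism of $\pi^E$ (and $A\ge 2$, ensuring there exist off-expert actions making the statement non-vacuous) enters.
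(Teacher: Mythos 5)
Your proof is correct and rests on the same key idea as the paper's: since $r\in\sub$ must satisfy the optimality constraint for \emph{every} $p'\in\eqclassp{p}$, and the off-expert triple $(s,a,h)$ lies outside $\suppsapie$ (here $\pi^b=\pi^E$ is deterministic), one may instantiate the constraint at an adversarially chosen $p'_h(\cdot|s,a)$. The two arguments differ only in the instantiation: the paper works with the $Q^*$-representation of the feasible set and sends the off-expert action to a point mass on $\argmax_{s'}V^*_{h+1}(s';p',r)$, so that the continuation gap is bounded above by zero; you work with $Q^{\pi^E}$ and set $p'_h(\cdot|s,a)=p_h(\cdot|s,\pi^E_h(s))$, so that the two continuation terms are literally identical and cancel. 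Your variant is slightly more self-contained, since it avoids reasoning about the optimal value function under the perturbed model and reduces the constraint to an exact algebraic identity; the legitimacy of your construction holds for precisely the reason you state, namely that determinism of $\pi^E$ makes every off-expert triple unobserved and hence unconstrained within the equivalence class.
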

Thus, if we have no information about the transition model in non-expert's actions (as when we have $\mathcal{D}^E$ only), there exists no reward function $r$ that simultaneously: ($i$) surely belongs to the sub-feasible set ($r \in \sub$) and ($ii$) assigns to a non-expert's action a reward value greater than that assigned to
the expert's action in the same $(s,h)$ pair. This is clearly a property that is undesirable as it significantly limits the expressive power of the reward function, making IRL closer to behavioral cloning and, consequently, inheriting its limitations. As mentioned above, this issue can be overcome with a
behavioral policy $\pi^b$ that explores enough.
\begin{restatable}{prop}{nongreedyrewardsbehavioral}\label{theorem: non greedy rewards behavioral}
    Under the conditions of Proposition \ref{theorem: greedy rewards},
    assume that $p_h(\cdot|s,a)$ is known,
    where $a \in \A$ is a non-expert's action in $(s,h)\in \suppspie$.
    Then, if $p_h(\cdot|s,a)\neq p_h(\cdot|s,\pi^E_h(s))$,
    there exists a reward $r\in\sub$ such that:
    {
    \begin{align*}
        r_h(s,\pi_h^E(s))< r_h(s,a).
    \end{align*}}%
\end{restatable}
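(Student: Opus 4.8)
The plan is to produce $r$ by an explicit construction that exploits the \emph{extra} knowledge of $p_h(\cdot|s,a)$. The point is that, once $p_h(\cdot|s,a)$ is known, the triple $(s,a,h)$ enters the support on which the transition model is pinned down, so every $p'\in\eqclassp{p}$ shares the same $p'_h(\cdot|s,a)=p_h(\cdot|s,a)$. Hence the adversarial move underlying Proposition~\ref{theorem: greedy rewards} -- setting $p'_h(\cdot|s,a)=p_h(\cdot|s,\pi^E_h(s))$ to cancel the two transition terms -- is no longer available. By the characterization of $\sub$, it suffices to build $r\in\mathfrak{R}$ that (a) satisfies $Q^{\overline{\pi}}_h(s',\pi^E_{h'}(s');p',r)\ge Q^{\overline{\pi}}_{h'}(s',a';p',r)$ for every $(s',h')\in\suppspie$, every $a'\in\A$, every $p'\in\eqclassp{p}$ and every $\overline{\pi}\in\eqclasspie{\pi^E}$, while (b) $r_h(s,\pi^E_h(s))<r_h(s,a)$. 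Note that the hypothesis $p_h(\cdot|s,a)\neq p_h(\cdot|s,\pi^E_h(s))$ forces $h<H$, so a genuine stage $h+1$ exists to be exploited.

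First I would neutralize the dependence of the future value on both the policy $\overline{\pi}$ and the unknown part of the transition model. I set $r_t\equiv 0$ for all $t\ge h+2$ and, at stage $h+1$, a purely state-dependent reward $r_{h+1}(s',\cdot)\equiv g(s')$ with $g(s'):=p_h(s'|s,\pi^E_h(s))-p_h(s'|s,a)$. Since the tail rewards vanish, one checks directly that $Q^{\overline{\pi}}_{h+1}(s',a';p',r)=g(s')$ for every action $a'$, every $p'$ and every $\overline{\pi}$; thus $V^{\overline{\pi}}_{h+1}(s';p',r)=g(s')$ uniformly, and the optimality constraints at stage $h+1$ hold with equality.

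Next I would handle stage $h$ and then the earlier stages. Writing $w(s'):=p_h(s'|s,a)-p_h(s'|s,\pi^E_h(s))=-g(s')$, a nonzero zero-sum vector, the expert-vs-$a$ inequality at $(s,h)$ reduces (both transitions being fixed across $\eqclassp{p}$) to $r_h(s,a)-r_h(s,\pi^E_h(s))\le -\sum_{s'}w(s')g(s')=\sum_{s'}g(s')^2=:\gamma>0$. I would therefore set $r_h(s,\pi^E_h(s))=0$ and $r_h(s,a)=\gamma/2$, which both respects this constraint and gives (b). For every other action $a'\neq a,\pi^E_h(s)$ at $(s,h)$ -- whose transition is free -- and for every other in-support state at stage $h$, I assign reward $0$ to the expert action and a sufficiently negative value $-K$ to all non-expert actions; as $g$ is bounded, the adversarial $p'_h(\cdot|s,a')$ can raise the corresponding $Q$ by at most $\max_{s'}g(s')$, so a finite $K$ restores optimality. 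Finally I extend to $t<h$ by backward induction, assigning $0$ to the expert action and $-K_t$ to the others at each in-support pair: all rewards being bounded, the value functions $V^{\overline{\pi}}_{t+1}(\cdot;p',r)$ are bounded uniformly over $p'\in\eqclassp{p}$ and $\overline{\pi}\in\eqclasspie{\pi^E}$, so $K_t$ exceeding their range enforces expert optimality at stage $t$. This certifies $r\in\sub$ while $r_h(s,\pi^E_h(s))=0<\gamma/2=r_h(s,a)$.

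The main obstacle is conceptual rather than computational: the optimality inequality at $(s,h)$ must survive the \emph{universal} quantification over $\eqclassp{p}$ and $\eqclasspie{\pi^E}$, so the future value feeding into it must be made independent of both $p'$ and $\overline{\pi}$. The crux is thus the stage-$(h+1)$ state-only reward with vanishing tail, which collapses $V^{\overline{\pi}}_{h+1}$ to the fixed vector $g$ and makes the constraint a pure comparison of immediate rewards against a transition mismatch; the residual freedom of non-expert transitions is then absorbed -- exactly as in the paper's footnote -- by the unboundedness of $\mathfrak{R}$.
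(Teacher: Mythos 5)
Your proof is correct and follows essentially the same route as the paper's: once $p_h(\cdot|s,a)$ is pinned down across the equivalence class, the binding constraint becomes $r_h(s,a)-r_h(s,\pi^E_h(s))\le \E_{s'\sim p_h(\cdot|s,\pi^E_h(s))}V_{h+1}-\E_{s'\sim p_h(\cdot|s,a)}V_{h+1}$, and one chooses a reward making this gap strictly positive. Your construction (state-only stage-$(h+1)$ reward $g(s')=p_h(s'|s,\pi^E_h(s))-p_h(s'|s,a)$ with zero tail, yielding gap $\sum_{s'}g(s')^2>0$, plus large penalties to certify membership in $\sub$ at all other constraints) is simply an explicit witness for the existential step the paper leaves implicit.
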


\section{Conclusion}
In this paper, we have introduced a novel notion of \textit{feasible set} and
an innovative \textit{learning framework} for managing the intrinsic difficulties of the \textit{offline} IRL setting. Furthermore, we have motivated the importance of \textit{inclusion monotonicity}, and we have devised
an original form of \textit{pessimism} to achieve it.
Then, we have presented two provably efficient algorithms, \irlo and \pirlo. We have shown that the latter provides guarantees
of inclusion monotonicity and that it can be employed as a \textit{reward sanity checker}. Finally, we have highlighted an \textit{intrinsic limitation}
of the offline IRL setting when samples from the experts are the only available.


\textbf{Limitations and Future Works}~~
To understand whether our algorithms are \emph{minimax optimal}, future works should focus on the derivation of sample complexity lower bounds for offline IRL.
Moreover, it would be appealing to extend our framework to more challenging (non-tabular) environments.


\section*{Impact Statement}
This paper presents work whose goal is to advance the field of Machine Learning. There are many potential societal consequences of our work, none which we feel must be specifically highlighted here.

\section*{Acknowledgements}
Funded by the European Union – Next Generation EU within the project NRPP M4C2, Investment 1.3 DD. 341 -  15 march 2022 – FAIR – Future Artificial Intelligence Research – Spoke 4 - PE00000013 - D53C22002380006.

\bibliography{refs.bib}
\bibliographystyle{icml2024}

\newpage
\appendix
\onecolumn

\appendix

\begin{landscape}
   \begin{table}[t]
    \centering
    \scalebox{0.8}{
    \begin{tabular}{c|c|c|c|c|c}
          & \cite{metelli2021provably} & \cite{metelli2023towards} & \cite{lindner2022active} & \cite{zhao2023inverse} & \textbf{Ours} \\
          Setting & online generative model & online generative model & online forward model & offline with behavioral policy & offline with behavioral policy\\
          Solution concepts & $\popgreen{\oldfs}$ &  $\popgreen{\oldfs}$ &  $\popgreen{\oldfs}$ & $\popred{\mathscr{R}\approx \oldfs}$ & $\popgreen{\fs}$\\
          Monotonicity & \popred{No} & \popred{No} & \popred{No} & $\popyellow{\widehat{r}\preccurlyeq r}$ & $\popgreen{\sub\subseteq\fs\subseteq\super}$\\
         Reward distance $r,\widehat{r}$ & $\|Q^*(p,r)-Q^*(p,\widehat{r})\|_\infty$ & $\|r-\widehat{r}\|_\infty$ & $\sup_{\widehat{\pi}^*}\|Q^*(p,r)-Q^{\widehat{\pi}^*}(p,r)\|_\infty$ & $\sup_{h\in\dsb{H}}\mathbb{E}_{p,\overline{\pi}}|V^{\overline{\pi}}(p,r)-V^{\overline{\pi}}(p,\widehat{r})|$ & $\sum_{h\in\dsb{H}}\mathbb{E}_{p,\pi^b}|r-\widehat{r}|$\\
         Feasible set distance & $\mathcal{H}$ &
         $\mathcal{H}$ & $\mathcal{H}$ & 
         $\sup_{V,A}$ & $\mathcal{H}$\\
         Lower bound? & \popred{No} & \popgreen{Yes} & \popred{No} & \popyellow{Yes} & \popred{No}\\
         Matching upper bound? & \popyellow{?} & \popgreen{Yes} & \popyellow{?} & \popyellow{?} & \popyellow{?}\\
         Trick & $\popred{\|Q\|_\infty \le H}$ & $\popyellow{r=r'/(1+\epsilon)}$ & $\popred{\|Q\|_\infty \le H}$ & $\popred{r=r(V,A)}$ & $\popgreen{d(r,\widehat{r})=d'(r,\widehat{r})/\max\{\|r\|_\infty,\|\widehat{r}\|_\infty\}}$\\
         Recover exact objective? & \popgreen{Yes} & \popgreen{Yes} & \popgreen{Yes} & \popred{No} & \popgreen{Yes}\\
    \end{tabular}
    }
    \caption{Comparison of our paper with the main related works. \emph{Setting} refers to whether the work analyses the online generative model, the online forward model, or the offline setting. To be precise, \citet{lindner2022active} provides some insights also for the online generative model, and \citet{zhao2023inverse} analyses also the online forward model in their setting. However, to keep the table clear and concise, we avoid inserting them. \emph{Solution concepts} indicates the learning objective. By $\mathscr{R}$ we denote the concept of reward mapping, i.e., parametrization of the reward function using $V,A$, introduced by \citet{zhao2023inverse}; in practice, it is equivalent to the ``old'' notion of feasible set $\oldfs$, which is more suitable to the online setting. \emph{Monotonicity} specifies whether the work adopts a notion of monotonicity according to some partial order. By $\preccurlyeq$ we mean the entrywise order among vectors. We stress that we are the first to devise the concept of \emph{inclusion monotonicity}, which permits applications such as the sanity checker. \emph{Reward distance} refers to the distance adopted in the space of rewards $\mathfrak{R}$. We denote $\|\cdot\|_\infty\coloneqq\max_{s,a,h}|\cdot|$ and we neglect the dependence on $s,a,h$ for simplicity. While the works about the online setting can afford to control the error with an $\ell_\infty$ distance, for the offline setting an $\ell_1$ objective is more suitable. It should be remarked that, while \citet{zhao2023inverse} compare the induced V-functions for a single fixed policy $\overline{\pi}$, and so they are not able to recover the exact objective, we compare directly the distance between rewards, and we do not suffer from this issue. \emph{Feasible set distance} indicates the distance among sets. All the works adopt the Hausdorff distance $\mathcal{H}$ except for \citet{zhao2023inverse}, which considers the worst parameters $V,A$; from a technical perspective, this is equivalent to $\mathcal{H}$. \citet{metelli2023towards} provide a matching \emph{lower bound}, while \citet{zhao2023inverse} computes a lower bound that does not depend on the confidence $\delta$, and so, it is not tight. By \emph{trick}, we mean the solution adopted to cope with the intrinsic unboundedness of the space of rewards.
    Observe that the constraint on the Q-function $\|Q\|_\infty \le H$ imposes strange non-box constraints on the rewards; the normalization by $1+\epsilon$ proposed by \citet{metelli2023towards} (see their Lemma B.1) is ``hard-coded'' for their $\ell_\infty$ objective and does not generalize to other notions of distance. Parametrizing the rewards by the bounded pair V-function, advantage function $V,A$, thus defining a reward mapping $\mathscr{R}$ as in \cite{zhao2023inverse}, is, in practice, equivalent to the constraint on the Q-function $\|Q\|_\infty \le H$ and, thus, it does not solve the problem. Instead, notice that our normalization trick is an effective way to cope with this issue, and it can be applied to the settings of all other works, simplifying the sample complexity analysis.
    Finally, by \emph{recover exact objective} we mean whether the setting analyzed is compatible with the solution concept adopted, i.e., if the solution concept can be retrieved exactly in that setting. Notice that this is not true for \citet{zhao2023inverse}, which cannot improve beyond an $\epsilon$-estimate due to the partial coverage of the data and the monotonicity notion that they adopt. Instead, our novel definition of feasible set $\fs$ and our notion of inclusion monotonicity overcome this issue.
    }
    \label{tab: comparison related works}
\end{table} 
\end{landscape}
 \clearpage

\section{Related Works}\label{subsection: related works}
Related works can be
distinguished in theoretical IRL works and works about Reinforcement Learning (RL)
in the \textit{offline} setting. Since the former group of papers
is more closely connected to the subjects of this work, we focus on it here,
and we refer to Appendix \ref{section: additional related works} for a presentation
of the remaining literature.

The notion of \textit{feasible set} has been introduced implicitly by \citet{ng2000algorithms}.
More recently, \citet{metelli2021provably} build upon previous works to
define the \textit{feasible set} explicitly. They
propose two algorithms for the estimation of \textit{feasible set}
in the \textit{online} setting with generative model.
Moreover, the authors analyze the sample complexity of the algorithms and
prove the first upper bound to the number of samples required for the estimation
of the \textit{feasible set} in the discounted infinite-horizon setting.
Such bound is in the order of
$\widetilde{\mathcal{O}}\bigl(\frac{SA}{(1-\gamma)^4\epsilon^2}\bigr)$,
where $S$ and $A$ denote, respectively, the cardinality of the state and action spaces,
$\gamma$ is the discount factor and $\epsilon$ is the accuracy, measured as distance
in max norm between the induced Q-functions.
Later, \citet{lindner2022active} proposes an algorithm, named AceIRL,
to estimate the \textit{feasible set} in the \textit{online} setting with
forward model. The result is an upper bound of
$\widetilde{\mathcal{O}}\bigl(\frac{H^5SA}{\epsilon^2}\bigr)$ to the number of trajectories
required in the episodic finite-horizon setting.
The first lower bound to the sample complexity of IRL has been devised by \citet{Komanduru2021ALB}.
However, their setting concerns state-only rewards in tabular Markov Decision Processes (MDPs)
with only two actions, resulting in a lower bound in the order of $\Omega(S\ln S)$.
\citet{metelli2023towards} analyzes the \textit{online} setting with generative
model by measuring the accuracy using the max norm directly between rewards.
By adopting two different constructions for the hard instances, it proves
a lower bound, along with a matching upper bound, for the sample complexity
of estimating the \textit{feasible set}. The number of samples
is in the order of $\widetilde{\Theta}\bigl(\frac{H^3SA}{\epsilon^2}(\ln\frac{1}{\delta}+S)\bigr)$,
where $\delta$ is the confidence. 
It is worth mentioning the work of \citet{zhao2023inverse},
which analyze the sample complexity of estimating the \textit{reward mapping},
a concept analogous to that of \emph{feasible set}, in the context of offline IRL.
They propose algorithms for solving the problem in
both the \textit{offline} and \textit{online} settings, and analyze the sample
complexity, obtaining an upper bound of
$\widetilde{\mathcal{O}}\bigl(\frac{H^4S^2 C^*}{\epsilon^2}\bigr)$ for the
\textit{offline} setting, where $C^*$ is a
concentrability coefficient.
However, \citet{zhao2023inverse} adopts a solution concept which is intrinsically connected to the
coverage of the state-action-stage space, which is a strong requirement in the \textit{offline} context. As a consequence, the entrywise reward-based pessimistic approach proposed by the authors is not able to recover the solution concept exactly. We also mention \cite{yue2023clare} and \cite{siliang2024when} as two additional IRL works that adopt pessimism but with different settings than ours.

For a clear comparison of these works, see Table \ref{tab: comparison related works}.


\paragraph{Additional Related Works}\label{section: additional related works}
It is worth mentioning also the works that focus on (forward) RL in the
\textit{offline} setting, because they share with our topic some important
concepts and technical tools. We provide a brief overview in this section.

The principle of \textit{optimism} in the face of uncertainty is a
well-established tool for favoring exploration in the context
of \textit{online} bandits \cite{lattimore2020bandit} and \textit{online} (forward)
RL \cite{Kearns2002NearOptimalRL,brafman2003rmax,azar2017minimax,dann2017unifying}.
However, in the \textit{offline} setting, the learning agent is given
a batch dataset and is not allowed to interact with the environment, thus
the adoption of \textit{optimism} does not improve the performances.
Moreover, one of the biggest challenges of the \textit{offline} RL setting
is that the given dataset might suffer from an insufficient coverage
of the space \cite{levine2020offline}.
To improve the performances of algorithms for solving the
\textit{offline} policy optimization problem, the commonly
adopted mechanism is the \textit{pessimism} principle:
``\textit{Behave as though the world was plausibly worse than you observed it to be}''
\cite{buckman2020importance}. As opposed to the principle of \textit{optimism}
in the face of uncertainty which favors exploration, the \textit{pessimism} principle
favors exploitation.
This tool has been adopted in a variety of works for devising
algorithms to solve the \textit{offline} policy optimization problem
\cite{yu2020mopo,Kumar2020ConservativeQF,liu2020provably}.
From a theoretical perspective, \citet{buckman2020importance}
proposes a unified framework for the study of this kind of algorithms,
revealing the reasons why the \textit{pessimism} principle
can demonstrate good performance even when the dataset is
not informative of every policy. Moreover, \citet{Jin2021IsPP} proposes a
\textit{pessimistic} variant of the value iteration algorithm,
named PEVI, and it shows that the \textit{pessimism} principle is not only provably efficient,
but also minimax optimal. 
Another line of research more closely related to the \textit{offline} IRL setting is that
introduced by \citet{Rashidinejad2021bridging}. They analyse the \textit{offline} policy optimization problem
in the novel setting in which the composition of the batch dataset can be located at any point in the range
between \textit{expert data} and \textit{uniform coverage data}. \textit{Expert data} means that the dataset
has been collected by an expert, and the problem reduces to the imitation learning problem
\cite{osa2018algorithmic}, while
\textit{uniform coverage data} refers to a dataset that guarantees a uniform coverage
of the space, which is a common setting in which it is usually required the existence of a
uniformly bounded concentrability coefficient \cite{munos2007performance}. Specifically,
\citet{Rashidinejad2021bridging} proposes a new framework that smoothly interpolates between the two extremes of
data composition, and analyses the information-theoretic limits of LCB, the algorithm they propose,
in three different settings.
%
Finally, \citet{xie2021bridging} builds upon \citet{Rashidinejad2021bridging} and devises
\textit{policy finetuning}, a framework that interpolates between
online and offline RL. Remarkably, \citet{xie2021bridging} designs a novel algorithm,
PEVI-Adv, which achieves a sample complexity of at most
$\widetilde{\mathcal{O}}\bigl(\frac{H^3SC^*}{\epsilon^2}\bigr)$
episodes in the finite-horizon setting, where $C^*$ is the single-policy concentrability
coefficient. Also notice that the authors prove a matching lower bound.

\section{A Framework for the ``old'' Feasible Set}\label{section: old fs}
In this section, we apply the framework we presented in Section \ref{section: framework}
to the ``old'' notion of \textit{feasible set} (Definition \ref{def: old fs}).
In addition, we present a rather negative result on the kind of reward functions contained
in the subset of $\oldfs$.

Let us begin by adapting Definition \ref{def: subset superset fs} to $\oldfs$.
Recall that we use $\D^E$ to estimate $\pi^E$, and $\D^b$ to estimate $p$.
\begin{defi}[Subset and Superset of $\oldfs$]
    Let $\M=\langle\S,\A,p,\mu_0,H\rangle$ be an MDP without reward and let
    $\pi^E$ be the deterministic
    expert's policy and $\pi^b$ the behavioral policy.
    Then, we define the \emph{subset} $\oldfs^\cap$ and
    the \emph{superset} $\oldfs^\cup$ of the \textit{feasible set} $\oldfs$ as:
    \begin{align*}
        \oldfs^\cap&\coloneqq
        \bigcap\limits_{p'\in \eqclassp{p}}
        \bigcap\limits_{\pi'\in \eqclasspie{\pi^E}}
        \overline{\R}_{p',\pi'},\\
        \oldfs^\cup&\coloneqq
        \bigcup\limits_{p'\in \eqclassp{p}}
        \bigcup\limits_{\pi'\in \eqclasspie{\pi^E}}
        \overline{\R}_{p',\pi'}.
    \end{align*}
\end{defi}
Clearly, since $p\in \eqclassp{p}$ and $\pi^E\in \eqclasspie{\pi^E}$, it holds that
$\oldfs^\cap\subseteq\oldfs\subseteq
\oldfs^\cup$. Also notice that when $\suppsapib=\SAH$ (and so, because of Assumption \ref{assumption: coverage of behavioral policy}, $\suppspie=\SH$), then
$\oldfs^\cap=\oldfs=\oldfs^\cup$.
The intuition underlying the definition is analogous to that for $\sub$ and $\super$.

The following theorem shows that $\oldfs^\cap$ and $\oldfs^\cup$ are
``well-defined''.
\begin{infthr}
    Let $\M=\langle\S,\A,p,\mu_0,H\rangle$ be an MDP without reward and
    let $\D^E$ and $\D^b$ be two datasets of trajectories collected by policies $\pi^E$ and $\pi^b$. Then, subset $\oldfs^\cap$ and superset $\oldfs^\cup$
    are the \emph{tightest learnable} subset and superset of
    $\oldfs$ from $\D^E$ and $\D^b$.
\end{infthr}
In Appendix \ref{section: learnability fs}, we enunciate this result formally and we provide a proof.

The following theorem shows a negative result on the kind of reward functions
contained in $\oldfs^\cap$ under reasonable conditions. 
The intuition is that, since $\oldfs$ requires the knowledge of the optimal (expert's) action
in the entire $\SH$, then if there are pairs $(s,h)\in\SH$ in which we cannot have this information, then we are forced to make all actions optimal there (this is exactly what the intersection over policies makes in $\oldfs^\cap$). This imposes many constraints on the structure of the rewards, resulting in the following theorem.
It should be remarked that, if we had only dataset $\D^b$ and
if Assumption \ref{assumption: coverage of behavioral policy} was violated, then
we might not know the expert's action in some $(s,h)\in\suppspie$, and a similar result would also hold for $\fs$;
however, that would be a non-realistic setting for IRL.
\begin{restatable}{thr}{issueoffline}\label{theorem: issue fs offline setting}
    Let $\M$ be an MDP without reward and let $\pi^E$ be the expert's policy and $\pi^b$ be the behavioral policy.
    If for any stage $h\in\dsb{H}$ there is at least a state, say $s_h$, for which
    $(s_h,h)\notin \suppshpib$,
    then $\oldfs^\cap$ is made of ``almost-constant''
    rewards. Formally:
    $r\in\oldfs^\cap$ if and only if
    there exists a sequence $\{k_h\}_h$ of $H$ real numbers
    and a set $\{\bar{r}_s\}_s$ with as many real numbers as the cardinality of
    the support of $\mu_0$,
    such that, for any $(s,a,h)\in\SAH$:
    \[
        \begin{cases}
            r_h(s,a)=x(h,s)\quad\text{if }(s,h)\in \suppspib_h\wedge a=\pi^E_h(s)\\
            r_h(s,a)\le x(h,s)\quad\text{if }(s,h)\in \suppspib_h\wedge a\neq\pi^E_h(s)\\
            r_h(s,a)=k_h\quad\text{if }(s,h)\notin \suppspib_h\\
        \end{cases},
    \]
    where $x(h,s)\coloneqq\bar{r}_s$ if $h=1$, and $x(h,s)\coloneqq k_h$ otherwise.
\end{restatable}
This theorem states that, under reasonable conditions, i.e., if at every stage $h$
there is at least one state not reached by the behavioral policy $\pi^b$,
then all the rewards of the subset $\oldfs^\cap$
have a trivial form, which, i.a.,
does not depend on the specific
value of the transition model, but only on its ``support'' $\suppspib$.
In practice, this means that $\oldfs^\cap$ does not represent
an interesting target for learning.
\begin{proof}[Proof of Theorem \ref{theorem: issue fs offline setting}]
    The theorem expresses a necessary and sufficient condition, thus two proofs are required.
    In the following, recall that $\suppspib\supseteq\suppspie$
    because of Assumption \ref{assumption: coverage of behavioral policy}, and so
    the existence of $(s_h,h)\notin \suppshpib$ for all $h\in\dsb{H}$ entails
    the existence of $(s_h,h)\notin \suppshpie$ for all $h\in\dsb{H}$.
    Moreover, we consider the representation of $\oldfs$ as provided in Eq. \ref{eq: oldfs with Q star}.
    We begin with the proof of the sufficiency. 

    The proof of the sufficiency proceeds in four steps. First, we show that, outside $\suppspie$,
    the definition of $\oldfs^\cap$ enforces all actions to be optimal.
    Then, we use this condition to show that, for any $h\ge 2$, irrespective of
    the transition model, all states take on the
    same value function value $\{\bar{V}_h\}_h$, and that all the actions in $(s,h)$ outside $\suppspie$ have
    the same reward value $\{k_h\}_h$. The subsequent steps build upon these findings
    to show that expert's actions take on constant reward value $\{k_h\}_h$ for $h\ge 2$,
    and that non-expert's actions are smaller than the corresponding expert's action.

    Let us begin by showing that, outside $\suppspie$, all actions shall be optimal.
    By definition of $\oldfs^\cap$, we have:
    \begin{align*}
        \oldfs^\cap&\coloneqq\bigcap\limits_{p'\in \eqclassp{p}}
        \bigcap\limits_{\pi'\in \eqclasspie{\pi^E}}
        \overline{\R}_{p',\pi'}\\
        &=\{r\in\mathfrak{R}\,|\,
        \forall p'\in\eqclassp{p}\,,
        \forall \pi'\in \eqclasspie{\pi^E}\,,\forall(s,h)\in\SH:\;
        \E\limits_{a\sim \pi_h'(\cdot|s)}Q^*_h(s,a;p',r)=\max\limits_{a'\in\A}Q^*_h(s,a';p',r)
        \}\\
        &=\{r\in\mathfrak{R}\,|\,
        \forall p'\in\eqclassp{p}\,,
        \forall(s,h)\in\SH\,,\forall \pi'\in \eqclasspie{\pi^E}:\;
        \E\limits_{a\sim \pi_h'(\cdot|s)}Q^*_h(s,a;p',r)=\max\limits_{a'\in\A}Q^*_h(s,a';p',r)
        \}\\
        &\markref{(1)}{=}\{r\in\mathfrak{R}\,|\,
        \forall p'\in\eqclassp{p}:\bigl(
        \forall(s,h)\in \suppspie: Q^*_h(s,\pi_h^E(s);p',r)=\max\limits_{a\in\A}Q^*_h(s,a;p',r)\wedge\\
        &\quad\quad\quad\quad\quad\quad\quad\quad
        \forall(s,h)\notin \suppspie:\,\forall \pi_h'(\cdot|s)\in\Delta^\A:\,
        \E\limits_{a\sim \pi_h'(\cdot|s)}Q^*_h(s,a;p',r)=\max\limits_{a'\in\A}Q^*_h(s,a';p',r)\bigr)\}\\
        &=\{r\in\mathfrak{R}\,|\,
        \forall p'\in\eqclassp{p}:\bigl(
        \forall(s,h)\in \suppspie: Q^*_h(s,\pi_h^E(s);p',r)=\max\limits_{a\in\A}Q^*_h(s,a;p',r)\wedge\\
        &\quad\quad\quad\quad\quad\quad\quad\quad
        \forall(s,h)\notin \suppspie:\,\forall a\in\A:\,
        Q^*_h(s,a;p',r)=\max\limits_{a'\in\A}Q^*_h(s,a';p',r)\bigr)\},\\
    \end{align*}
    where, at (1), we have partitioned $\SH$ using $\suppspie$ and we have applied the definition of
    $\eqpie$. This shows that, outside $\suppspie$, all actions must be optimal.

    Now, we show that, for any reward $r\in\oldfs^\cap$,
    there exists a sequence of value functions\footnote{The value function of policy $\pi$
    at $(s,h)$ under transition model $p$ and reward $r$ is defined as:
    $V^{\pi}_h(s;p,r)\coloneqq \sum_{a\sim\pi_h(\cdot|s)}Q^{\pi}_h(s,a;p,r)$. The optimal value function is defined as: $V^{*}_h(s;p,r)\coloneqq \max_{a\in\A}Q^{*}_h(s,a;p,r)$.
    } $\{\bar{V}_h\}_{h\in\dsb{2,H}}$
    induced by $r$ which does not depend neither on the transition model nor
    on the state considered: $V^*_h(s;p',r)=\bar{V}_h$
    for any $s\in\S,h\in\dsb{2,H}$, and for any $p'$ of $\eqclassp{p}$.
    So, let $r$ be a reward of $\oldfs^\cap$ and
    let $(s,h)$ be a pair not in $\suppspib\supseteq\suppspie$, for $h\in\dsb{H-1}$.
    Notice that the existence of pair $(s,h)$ is guaranteed by hypothesis.
    For what we have seen at the previous step, for any pair
    of actions $a_1,a_2$, for any $p'\in\eqclassp{p}$, it holds that
    $Q_{h}^*(s,a_1;p',r)=Q_h^*(s,a_2;p',r)$.
    Through the Bellman's equation, we can write:
    \begin{equation}\label{eq: for proof fs}
        r_h(s,a_1)+\E\limits_{s'\sim p'_h(\cdot|s,a_1)}[V_{h+1}^*(s';p',r)]=
        r_h(s,a_2)+\E\limits_{s'\sim p'_h(\cdot|s,a_2)}[V_{h+1}^*(s';p',r)].
    \end{equation}
    Let $s_1,s_2\in\S$. By definition of $\eqp$,
    this condition must hold for any $p'_{h}(\cdot|s,a_1)\in\Delta^\S$
    and $p'_{h}(\cdot|s,a_2)\in\Delta^\S$.
    In particular, let us take two transition models $p^1,p^2\in \eqclassp{p}$
    such that
    :
    \begin{align*}
        p^{1}&=\begin{cases}
            p^{1}_{h}(s_1|s,a_1)=1\\
            p^{1}_{h}(s_1|s,a_2)=1\\
        \end{cases},\\
        p^{2}&=\begin{cases}
            p^{2}_{h}(s_1|s,a_1)=1\\
            p^{2}_{h}(s_1|s,a_2)=0\\
        \end{cases}.
    \end{align*}
    Inserting $p^1$ into Eq. \ref{eq: for proof fs}, we get:
    \[
        r_{h}(s,a_1)+V_{h+1}^*(s_1;p^1,r)=r_{h}(s,a_2)+V_{h+1}^*(s_1,p^1,r)\implies
        r_{h}(s,a_1)=r_{h}(s,a_2).
    \]
    Because $a_1$ and $a_2$ are arbitrary, this holds for any action $a\in\A$ in $(s,h)\notin \suppspib$.
    Therefore, we have shown that (the condition at $H$ is trivial since $V_{H+1}^*(s;p',r)=0$):
    \[
        r\in\oldfs^\cap\implies \exists \{k_h\}_{h\in\dsb{H}}:
        \;\forall (s,h)\notin \suppspib,\,\forall a\in\A:\;r_h(s,a)=k_h.
    \]
    Inserting $p^2$ into Eq. \ref{eq: for proof fs}, we obtain:
    \[
        r_{h}(s,a_1)+V_{h+1}^*(s_1;p^2,r)=r_{h}(s,a_2)+V_{h+1}^*(s_2;p^2,r)\implies
        V_{h+1}^*(s_1;p^2,r)=V_{h+1}^*(s_2;p^2,r),
    \]
    since $r_{h}(s,a_1)=r_{h}(s,a_2)$. Because $p^2$ (and so the next state $s_1$) can be chosen arbitrarily
    in $h+1\ge 2$, then we have proved that:
    \[
        r\in\oldfs^\cap\implies
        \exists \{\bar{V}_h\}_{h\in\dsb{2,H}}:\;
        \forall p'\in \eqclassp{p},\,
        \forall (s,h)\in\SH:\;
        V^*_h(s;p',r)=\bar{V}_h.
    \]
    
    In a similar manner, we can prove the same result also for $(s,h)\in\suppspib\setminus\suppspie$.

    The next step of the proof consists in showing that, for any $h\in\dsb{2,H}$,
    for any $s\in \suppshpie$, the reward value assigned to expert's action coincides with
    $k_h$.
    Let $(s,h)\in \suppspie$ with $h\ge 2$. For any $p'\in\eqclassp{p}$, it holds:
    \begin{align}\label{eq: vbar1}
        \begin{split}
            \bar{V}_h=V^*_h(s;p',r)=Q_h^*(s,a^E;p',r)&=r_h(s,a^E)+\E\limits_{s'\sim p'_h(\cdot|s,a^E)}[V^*_{h+1}(s';p',r)]\\
            &=r_h(s,a^E)+\E\limits_{s'\sim p'_h(\cdot|s,a^E)}[\bar{V}_{h+1}]\\
            &=r_h(s,a^E)+\bar{V}_{h+1}.\\
        \end{split}
    \end{align}
    By hypothesis, there exists $s'\notin \suppspib_h$ such that, for any $a\in\A$:
    \begin{align}\label{eq: vbar2}
        \begin{split}
            \bar{V}_h=V^*_h(s';p',r)=Q_h^*(s',a;p',r)&=r_h(s',a)+\E\limits_{s''\sim p'_h(\cdot|s',a)}[V^*_{h+1}(s'';p',r)]\\
            &=k_h+\E\limits_{s'\sim p'_h(\cdot|s',a)}[\bar{V}_{h+1}]\\
            &=k_h+\bar{V}_{h+1}.\\
        \end{split}
    \end{align}
    Comparing Eq. \ref{eq: vbar1} and Eq. \ref{eq: vbar2}, we
    infer that $r_h(s,a)=k_h$.

    With a similar reasoning, we can prove that the reward value of non-expert's action
    shall be at most $k_h$.
    For simplicity, set $\bar{V}_{H+1}=0$. Let $(s,h)$ be any pair in $\suppspie$
    and let $a$ be any non-expert's action. Then, for any $p'\in\eqclassp{p}$, we have:
    \begin{align*}
        Q^*_h(s,a;p',r)&=r_h(s,a)+\E\limits_{s'\sim p'_h(\cdot|s,a)}[V^*_{h+1}(s';p',r)]\\
        &=r_h(s,a)+\bar{V}_{h+1}\\
        &\le V^*_h(s;p',r)\\
        &=r_h(s,\pi^E_h(s))+\bar{V}_{h+1}\\
        &=k_h+\bar{V}_{h+1}.
    \end{align*}
    From which it follows that $r_h(s,a)\le k_h$.
    This concludes the proof of the sufficiency.

    With regards to the necessity, we have to show that any reward $r$ that can be expressed
    as in the statement of the theorem belongs to $\oldfs^\cap$.
    It is easy to notice that, irrespective of the transition model $p'$,
    the optimal value function of any state $s\in\S$ at stage $h\ge 2$ is
    $V^*_h(s;p',r)=\sum_{i=h}^H k_i$, and that it is achieved by playing
    the expert's policy. At step $h=1$, since all next states take on the same optimal
    value function, the result is immediate.
\end{proof}

\section{On the Learnability of the Feasible Set}\label{section: learnability fs}
In this section, we formalize the notion of (PAC-)\textit{learnability} and we analyze the learnability
properties of the various objects that we introduced in Section \ref{section: framework} (and in Appendix \ref{section: old fs}).
Specifically, we show that both the definitions of \textit{feasible set}
$\oldfs$ and $\fs$ are not \textit{learnable} in the setting of Section \ref{section: framework}
unless the behavioral policy covers the entire $\SAH$ space. Next, we demonstrate that
the framework we have introduced cannot be improved; simply put, we show that
$\oldfs^\cap$ and $\oldfs^\cup$ are the \textit{tightest learnable}
bounds of $\oldfs$ according to partial order $\subseteq$, and that
$\fs^\cap$ and $\fs^\cup$ are the \textit{tightest learnable}
bounds of $\fs$.

We give a definition of \textit{learnability} in the context of the Probably Approximately
Correct (PAC) framework since our main focus is on PAC bounds to the sample complexity in this work.
Let $\phi\in\Phi$ be our target of learning, i.e., the quantity that we aim to estimate,
and let $\mu$ be a certain distribution that provides us with $N$ independent samples
$X_1,X_2,\dotsc,X_N\sim\mu$. Intuitively,
$\phi$ can be \textit{learned} from $\mu$ if there exists a procedure able
to use the samples $X_1,X_2,\dotsc,X_N$ of $\mu$ to create a ``good-enough''
estimate of $\phi$, where the ``goodness'' is measured by a meaningful notion
of distance. We formalize the intuition in the following definition.
\begin{defi}[PAC-learnability]\label{def: pac-learnability}
    A quantity $\phi\in\Phi$ is \textit{PAC-learnable} from a distribution $\mu$
    if there exists a semimetric $d$ in $\Phi$, that satisfies a $\rho$-relaxed triangle inequality with finite $\rho$, and an algorithm $\mathfrak{A}$
    such that, for any $\epsilon,\delta\in(0,1)$,
    there exists a finite $N\in\mathbb{N}$ for which:
    \begin{align*}
        \mathbb{P}_\mu \bigl(
            d(\phi,\widehat{\phi}\le\epsilon)\bigr)\ge 1-\delta,
    \end{align*}
    where $\widehat{\phi}\in\Phi$ is the estimate of $\phi$
    computed by $\mathfrak{A}$ using at least $N$ samples, and $\mathbb{P}_\mu$ is the
    probability measure induced by $\mu$.
\end{defi}
Simply put, $\phi$ is \textit{PAC-learnable} if the samples from $\mu$ leak ``enough'' information
about $\phi$.
Notice that any metric satisfies a $\rho$-relaxed triangle inequality with $\rho=1$. See Appendix \ref{section: semimetric} for
an in-depth analysis of the semimetrics used in this work.

In the context of Section \ref{section: framework}, we identify as quantity of
interest $\phi$ the \textit{feasible set} $\oldfs$ ($\fs$), and as distribution
generating samples\footnote{
Observe that, to avoid mentioning the creation of a mixture of distributions $\mathbb{P}_{p,\pi^b}$ and $\mathbb{P}_{p,\pi^E}$
(because of the two datasets), we assume that $\pi^E$ and $\suppspie$ are given and need not to be learned.
While this simplifies the learning problem, notice that even the estimation of the transition model alone
can end up in non-learnability issues.
} $\mu = \mathbb{P}_{p,\pi^b}$.
We have the following result of \textit{non-learnability} of the \textit{feasible set}
in the \textit{offline} setting.
\begin{thr}\label{theorem: fs not learnable}
    Let $\M=\langle\S,\A,p,\mu_0,H\rangle$ be an MDP without reward and let $\pi^E$ be the deterministic expert policy.
    Assume to know $\pi^E$ in all $(s,h)\in\SH$ and also to know $\suppspie$, i.e., there is no need to learn them.
    Let $\suppsapib$ denote the portion of space covered by a behavioral distribution $\pi^b$ in $\M$. 
    If $\suppsapib \neq \SAH$, then $\oldfs$ is not \textit{PAC-learnable} from $\mathbb{P}_{p,\pi^b}$.
    Moreover, if $\suppsapib \neq \SAH$ and for at least one $(s,h)\notin \suppspib$ there
    exists a policy $\pi\in\Pi$ such that $\mathbb{P}_{p,\pi}(s_h=s)>0$,
    then not even $\fs$ is \textit{PAC-learnable} from $\mathbb{P}_{p,\pi^b}$.
\end{thr}
\begin{proof}
    Let us start with $\oldfs$.
    The idea is to construct two problem instances whose feasible sets
    lie at a fixed non-zero distance and such that
    samples do not allow to discriminate between them.

    We start with the construction of the two instances.
    Let $\pi^E$ be the expert's policy and let $\M_1=\langle\S,\A,p^1,\mu_0,H\rangle$ be an MDP without reward in which policy $\pi^b$
    induces the distribution over trajectories $\mathbb{P}_{p^1,\pi^b}$.
    By hypothesis, there exists triple $(s,a,h)\in\SAH$ not in $\suppsapib$, i.e., such that
    $\mathbb{P}_{p^1,\pi^b}(s_h=s,a_h=a)=0$.
    Let us construct another problem instance $\M_2=\langle\S,\A,p^2,\mu_0,H\rangle$
    such that $p^2 \eqp p^1$. This is possible by simply setting, at triple $(s,a,h)$,
    the condition $p^1_h(\cdot|s,a)\neq p^2_h(\cdot|s,a)$, and equality elsewhere. Observe that, because of this choice,
    $\mathbb{P}_{p^1,\pi^b}=\mathbb{P}_{p^2,\pi^b}$. Let $\overline{\R}_{p^1,\pi^E}$ and $\overline{\R}_{p^2,\pi^E}$ denote, respectively,
    the feasible sets of instances $\M_1$ and $\M_2$ with expert's policy $\pi^E$.

    Now we show that $\overline{\R}_{p^1,\pi^E}\neq\overline{\R}_{p^2,\pi^E}$. To do so, we claim the existence a reward $r\in \overline{\R}_{p^1,\pi^E}$
    such that $r\notin \overline{\R}_{p^2,\pi^E}$. W.l.o.g., assume $a$ be a non-expert's action\footnote{
        Otherwise, we can make the same construction with any non-expert's action $a'\in\A$,
        and we can show that the constraints of the feasible sets of $\M_1$ and $\M_2$
        have the same lower bounds $Q^{\pi^E}_h(s,a';\popblue{p^1},r)= Q^{\pi^E}_h(s,a';\popblue{p^2},r)$,
        but different upper bounds $Q^{\pi^E}_h(s,\pi^E_h(s);\popblue{p^1},r)\neq Q^{\pi^E}_h(s,\pi^E_h(s);\popblue{p^2},r)$.
    }. By definition of $\overline{\R}_{p^1,\pi^E}$,
    at triple $(s,a,h)$, it holds that $Q^{\pi^E}_h(s,a;p^1,r)\le Q^{\pi^E}_h(s,\pi^E_h(s);p^1,r)$. Since $p^2$ coincides with
    $p^1$ everywhere except for triple $(s,a,h)$, the constraint is equivalent to $Q^{\pi^E}_h(s,a;p^1,r)\le Q^{\pi^E}_h(s,\pi^E_h(s);\popblue{p^2},r)$.
    Similarly, we have that $r\in \overline{\R}_{p^2,\pi^E}$ if $Q^{\pi^E}_h(s,a;p^2,r)\le Q^{\pi^E}_h(s,\pi^E_h(s);p^2,r)$.
    Clearly, both the constraints have the same upper bound $Q^{\pi^E}_h(s,\pi^E_h(s);p^2,r)$, and since
    $Q^{\pi^E}_{h+1}(s',\pi^E_{h+1}(s');\popblue{p^1},r)=Q^{\pi^E}_{h+1}(s',\pi^E_{h+1}(s');\popblue{p^2},r)$ for any $s'\in\S$, then $p^1_h(\cdot|s,a)\neq p^2_h(\cdot|s,a)$
    entails $Q^{\pi^E}_h(s,a;\popblue{p^1},r)\neq Q^{\pi^E}_h(s,a;\popblue{p^2},r)$. Therefore, we can find $r\in \overline{\R}_{p^1,\pi^E}$ such that $r\notin \overline{\R}_{p^2,\pi^E}$,
    thus $\overline{\R}_{p^1,\pi^E}\neq\overline{\R}_{p^2,\pi^E}$.
    
    We proceed by contradiction. Let us assume that the feasible set $\oldfs$ is PAC-learnable
    in both $\M_1$ and $\M_2$. By definition of learnability, there exists a semi-metric $d$ and an algorithm $\mathfrak{A}$
    with certain properties. By definition of semi-metric, since $\overline{\R}_{p^1,\pi^E}\neq\overline{\R}_{p^2,\pi^E}$, then
    there exists a certain $c>0$ such that $d(\overline{\R}_{p^1,\pi^E},\overline{\R}_{p^2,\pi^E})=c$. Moreover, by
    $\rho$-relaxed triangle inequality,
    we know that a set of rewards $\widetilde{\R}$ such that
    $d(\widetilde{\R},\overline{\R}_{p^1,\pi^E})<c/(2\rho)$ and $d(\widetilde{\R},\overline{\R}_{p^2,\pi^E})<c/(2\rho)$ at the same time does not exist,
    thus the two events $\bigl\{d(\widetilde{\R},\overline{\R}_{p^1,\pi^E})<c/(2\rho)\bigr\}$ and $\bigl\{d(\widetilde{\R},\overline{\R}_{p^2,\pi^E})<c/(2\rho)\bigr\}$
    are disjoint.
    By the choices $\epsilon< c/(2\rho)$ and $\delta <1/2$, algorithm $\mathfrak{A}$ must satisfy 
    \begin{align*}
        \mathop{\mathbb{P}}\limits_{p^1,\pi^b}\Bigl(d(\widehat{\R}^\mathfrak{A},\overline{\R}_{p^1,\pi^E}) < \frac{c}{2\rho}\Bigr)>\frac{1}{2}
        \quad\quad\wedge\quad\quad
        \mathop{\mathbb{P}}\limits_{p^2,\pi^b}\Bigl(d(\widehat{\R}^\mathfrak{A},\overline{\R}_{p^2,\pi^E}) < \frac{c}{2\rho}\Bigr)>\frac{1}{2}.
    \end{align*}
    By construction, we have $\mathbb{P}_{p^1,\pi^b}=\mathbb{P}_{p^2,\pi^b}$. In other words, samples do not allow
    to discriminate between instances $\M_1$ and $\M_2$, and so between $\overline{\R}_{p^1,\pi^E}$
    and $\overline{\R}_{p^2,\pi^E}$. Therefore, when faced with $\M_1$, independently on the number $N$ of samples,
    algorithm $\mathfrak{A}$ outputs
    $\widehat{\R}^\mathfrak{A}$ such that:
    \begin{align*}
        \mathop{\mathbb{P}}\limits_{p^1,\pi^b}\biggl(&\bigl\{d(\widehat{\R}^\mathfrak{A},\overline{\R}_{p^1,\pi^E}) < \frac{c}{2\rho}\bigr\}\cup
        \bigl\{d(\widehat{\R}^\mathfrak{A},\overline{\R}_{p^2,\pi^E}) < \frac{c}{2\rho}\bigr\}\biggr)\\
        &=\underbrace{\mathop{\mathbb{P}}\limits_{p^1,\pi^b}\biggl(d(\widehat{\R}^\mathfrak{A},\overline{\R}_{p^1,\pi^E}) < \frac{c}{2\rho}\biggr)}_{>1/2}
        + \underbrace{\mathop{\mathbb{P}}\limits_{p^1,\pi^b}\biggl(d(\widehat{\R}^\mathfrak{A},\overline{\R}_{p^2,\pi^E}) < \frac{c}{2\rho}\biggr)}_{>1/2}
        >1,
    \end{align*}
    where we have used that the two events are disjoint.
    This is clearly a contradiction, thus the statement of the theorem holds for the notion of feasible set
    in Definition \ref{def: old fs}.

    With regards to the novel notion of feasible set $\fs$, the proof is analogous. The only difference is in how to show that
    $\R_{p^1,\pi^E}\neq\R_{p^2,\pi^E}$ when the triple $(s,a,h)\notin \suppsapib$ is such that
    $(s,h)\notin \suppspib$. Indeed, by Definition \ref{def: new fs}, in such $(s,h)$ there is no constraint
    on which action shall be optimal. However, by the hypothesis contained in the statement of the theorem,
    there exists a policy that brings to $(s,h)$, so since $\pi^b$ does not reach $(s,h)$, then there exists
    another triple $(s',a',h')$, with $h'<h$, such that $(s',a',h')\notin \suppsapib$ and $(s',h')\in \suppspib$.
    Therefore, the same passages adopted to show that $\R_{p^1,\pi^E}\neq\R_{p^2,\pi^E}$
    can be used to show also that $\R_{p^1,\pi^E}\neq\R_{p^2,\pi^E}$.
    It should be remarked that the hypothesis $\suppsapib \neq \SAH$ alone
    is not sufficient\footnote{
        Consider for instance the MDP without reward $\M$ in which $\SH\setminus \suppspib=\{(\bar{s},1)\}$ and
        $\SAH\setminus \suppsapib=\{(\bar{s},a_1,1),\dotsc,(\bar{s},a_A,1)\}$, i.e.,
        that $\pi^b$ covers the entire space except for a state $\bar{s}$ at stage $h=1$ ($\mu_0(\bar{s})=0$).
        Clearly, such state does not appear in the constraints defining the feasible set, and the feasible set $\fs$ is learnable by $\mathbb{P}_{p,\pi^b}$!
    } for the non-learnability of $\fs$.    
    This concludes the proof.
\end{proof}
The following theorem demonstrates that the solution concepts (subset and superset) that we propose in our
framework are the \emph{tightest learnable}. Observe that Theorem \ref{theorem: tightness of bounds new fs} entails Theorem \ref{theorem: fs not learnable}. However,
since in the proof of Theorem \ref{theorem: tightness of bounds new fs} we make use of the construction
introduced in the proof of Theorem \ref{theorem: fs not learnable}, we prefer to keep the two theorems separated.
\begin{thr}\label{theorem: tightness of bounds new fs}
    Let $\M=\langle\S,\A,p,\mu_0,H\rangle$ be an MDP without reward and let $\pi^E$ be the deterministic expert policy.
    Assume to know $\pi^E$ in all $(s,h)\in\SH$ and also to know $\suppspie$, i.e., there is no need to learn them.
    Let $\suppspib,\suppsapib$ denote the portion of space covered by a behavioral distribution $\pi^b$ in $\M$. 
    Then, $\oldfs^\cap$ and $\oldfs^\cup$ are, respectively, the \emph{tightest}
    subset and superset of $\oldfs$ that can be \emph{learned} from $\mathbb{P}_{p,\pi^b}$. Moreover,
    $\fs^\cap$ and $\fs^\cup$ are, respectively, the \emph{tightest}
    subset and superset of $\fs$ that can be \emph{learned} from $\mathbb{P}_{p,\pi^b}$.
\end{thr}
\begin{proof}
    The theorem states that the considered quantities are the tightest learnable. Thus, we split the proof in two parts.
    First, we prove that such quantities are PAC-learnable, then we show that
    there is no other object that is at the same time learnable and tighter.

    Let us begin with $\fs^\cap$ and $\fs^\cup$. By Definition \ref{def: pac-learnability},
    these quantities are PAC-learnable if we can find a semi-metric $d$ between sets of rewards and an algorithm $\mathfrak{A}$
    such that, for any arbitrarily small choice of the accuracy $\epsilon$ and confidence $\delta$, we can always find a
    \emph{finite} number of samples that algorithm $\mathfrak{A}$ can use to compute a set of rewards
    $\epsilon$-close, according to semi-metric $d$, to $\fs^\cap$ (or to $\fs^\cup$) w.h.p..
    This is exactly what, for instance, Theorem \ref{theorem: upper bound d irlo} states: Algorithm \ref{alg:irlo}
    requires a finite number of samples to compute an $\epsilon$-correct estimate of $\fs^\cap$
    (or of $\fs^\cup$) w.h.p. according to any of the semi-metrics presented in Definition \ref{def: metrics d dinf} (for which we prove in Appendix \ref{section: semimetric} that a $\rho$-relaxed triangle inequality holds).
    We are not going to show that an analogous of Theorem \ref{theorem: upper bound d irlo} holds also for
    $\oldfs^\cap$ and $\oldfs^\cup$.

    Now we show that these quantities are the tightest learnable.
    Let us start with $\oldfs^\cap$, and then we will move to $\oldfs^\cup$, $\sub$, and $\super$.

    The idea is to construct by contradiction another concept $\overline{R}_{p,\pi^E}^\cap$ (non calligraphic) of subset of $\oldfs$
    which is tighter than $\oldfs^\cap$, and then show that we can construct a problem instance in which
    the newly defined concept $\overline{R}_{p,\pi^E}^\cap$ fails at being a subset of $\oldfs$. Thus, by contradiction, let us assume that there exists
    a problem instance $\M=\langle\S,\A,p,\mu_0,H\rangle$ with expert's policy $\pi^E$
    and distribution generating samples $\mathbb{P}_{p,\pi^b}$, in which there exists
    a PAC-learnable set $\overline{R}_{p,\pi^E}^\cap$ from $\mathbb{P}_{p,\pi^b}$
    such that $\oldfs^\cap\subset \overline{R}_{p,\pi^E}^\cap\subseteq\oldfs$.
    If $\suppsapib=\SAH$, then $\oldfs^\cap=\oldfs$, so we consider the case in which $\suppsapib\subset \SAH$.
    Let $\bar{r}$ be a reward of $\overline{R}_{p,\pi^E}^\cap$ which is not present in $\oldfs^\cap$.
    By definition of $\oldfs^\cap$, we have that
    $\bar{r}\notin \oldfs^\cap$ if and only if there exists $p'\in\eqclassp{p}$
    and $\pi'\in\eqclasspi{\pi^E}$ such that $\bar{r}\notin \R_{p',\pi'}$.
    Therefore, similarly to the proof of Theorem \ref{theorem: fs not learnable}, we can construct a new
    problem instance $\M'=\langle\S,\A,p',\mu_0,H\rangle\cup\{\pi'\}$ such that, since
    $\suppsapib\subset \SAH$, $\oldfs\neq \R_{p',\pi'}$.
    By definition of $p'$, we know that $\mathbb{P}_{p',\pi^b}=\mathbb{P}_{p,\pi^b}$, thus
    any algorithm $\mathfrak{A}$ estimating the new concept of subset $\overline{R}_{p,\pi^E}^\cap$
    fails to distinguish instances $\M$ and $\M'$.
    This means that we can choose $\epsilon,\delta$ so that if $\mathfrak{A}$
    returns in $\M$ a set containing $\bar{r}$, then with high probability it will return it
    also in $\M'$. However, $\bar{r}\notin \R_{p',\pi'}$, so we get a contradiction.

    The proof for $\oldfs^\cup$ is analogous.
    By contradiction, we claim the existence of a set $\overline{R}_{p,\pi^E}^\cup$ such that
    $\oldfs\subseteq \overline{R}_{p,\pi^E}^\cup\subset \oldfs^\cup$. The contradiction will be
    shown by considering a reward $\bar{r}$ of $\fs^\cup$ which is not in $\overline{R}_{p,\pi^E}^\cup$,
    and then constructing the problem
    instance in which the feasible set contains exactly that reward, but set $\overline{R}_{p,\pi^E}^\cup$ does not (w.h.p.).

    As far as $\fs^\cap$ and $\fs^\cup$ are concerned, the proofs are analogous
    to those presented above. However, there is a detail that has to be explained.
    Specifically, we have seen in the proof of Theorem \ref{theorem: fs not learnable} that the condition
    $\suppsapib\subset \SAH$ is not a sufficient condition for $\R_{p^1,\pi^E}\neq\R_{p^2,\pi^E}$.
    Therefore, in principle, the proof for $\oldfs^\cap$ ($\oldfs^\cup$)
    cannot be adapted directly to $\fs^\cap$ ($\fs^\cup$).
    However, we observe that $\fs^\cap\subset\fs$
    (respectively, $\fs\subset\fs^\cup$) holds strictly
    if $\suppsapib \neq \SAH$ and for at least one $(s,h)\notin\suppspib$ there
    exists a policy $\pi\in\Pi$ such that $\mathbb{P}_{p,\pi}(s_h=s)>0$.
    Otherwise, it holds that $\fs^\cap=\fs=\fs^\cup$, as explained in Section \ref{section: A remark about non reachable states}.
    This is exactly the condition required in Theorem \ref{theorem: fs not learnable}
    for proving $\R_{p^1,\pi^E}\neq\R_{p^2,\pi^E}$. Therefore, by using this
    observation, we can prove the statement of the theorem also for $\fs^\cap$ and $\fs^\cup$.
\end{proof}

\section{Further considerations}\label{section: further considerations appendix}
In this appendix, we collect a variety of considerations and remarks about the learning framework introduced, about the need of two datasets, alternative representations of the feasible set, and some others.

\subsection{About the need of two datasets}\label{remark:1Dataset}
     We presented \irlo (and, subsequently, \pirlo) in the case two datasets $\mathcal{D}^b$ and $\mathcal{D}^E$ collected with $\pi^b$ and $\pi^E$, respectively, are available. This scenario is common in previous IRL works~\citep{boularias2011relative} but, although convenient for our analysis, it is not strictly necessary to achieve a meaningful sample complexity. Indeed, we remark that the expert's dataset is employed for estimating the expert's support $\suppspie$ and policy $\pi^E$. This task can be anyway achieved under Assumption~\ref{assumption: coverage of behavioral policy} using just one dataset $\D=\{\langle s^{b,i}_1,a_1^{b,i}, a_1^{E,i},\dots,s^{b,i}_{H-1},a_{H-1}^{b,i},a_{H-1}^{E,i},s^{b,i}_H\rangle\}_{i\in\dsb{\tau}}$ playing the behavioral policy $\pi^b$ and keeping track of the expert's actions too. In such a case, we must require that every transition $(s,\pi_h^E(s),s')$ is exercised at least once in $\mathcal{D}$. This leads to a sample complexity bound which is larger in the last constant term in which $\rhominpie$ is replaced with:
    \begin{equation*}
        \min \Bigg\{ \rhominpie,   \min_{\substack{(s,h) \in \mathcal{S}^{p,\pi^E}, \\ s' \in \mathcal{S} : p_h(s'|s,\pi^E_h(s))>0}}  \rho_h^{p,\pi^b}(s,a) p_h(s'|s,\pi^E_h(s))
 \Bigg\}.
    \end{equation*}

\subsection{About the dependence on $\rho_{\min}$}
    The majority of the results presented for the $d_\infty$ semimetric in this paper are characterized by a dependence on the minimum non-zero visitation probability $\rhominpib$ of the behavioral policy $\pi^b$. This is expected since we are targeting as solution concept the \emph{tightest learnable} subset and supersets of the feasible set. Clearly, one can further relax this requirement, accepting to target \emph{non-tightest learnable} sets with a benefit in the sample complexity. Consider a minimum-visitation threshold $\overline{\rho}$, we define $\mathcal{Z}^{p,\pi^b}_{\overline{\rho}} = \{(s,a,h) :\, \rho_h^{p,\pi^b}(s,a) > \overline{\rho} \}$ as the set of triples $(s,a,h)$ that are visited by at least $\overline{\rho}$ probability (notice that $\mathcal{Z}^{p,\pi^b}=\mathcal{Z}^{p,\pi^b}_{0}$).
    We can use this set to employ suitable equivalence $\equiv_{\mathcal{Z}^{p,\pi^b}_{\overline{\rho}}}$ relations over transition models to group together those that differ in triples $(s,a,h)$ visited with probability smaller than $\overline{\rho}$. This allows to redefine the sub- and super-feasible sets as follows:
    \begin{align*}
        \mathcal{R}^{\cap}_{p,\pi^E,\overline{\rho}}\coloneqq
        \bigcap\limits_{p'\in[p]_{\equiv_{\mathcal{Z}^{p,\pi^b}_{\overline{\rho}}}}}
        \R_{p',\pi^E}, \quad \mathcal{R}^{\cup}_{p,\pi^E,\overline{\rho}}\coloneqq
        \bigcup\limits_{p'\in[p]_{\equiv_{\mathcal{Z}^{p,\pi^b}_{\overline{\rho}}}}}
        \R_{p',\pi^E}.
    \end{align*}
    Obviously, by the definition of the equivalence relation, we have that $\mathcal{R}^{\cap}_{p,\pi^E,\overline{\rho}} \subseteq\mathcal{R}^{\cap}_{p,\pi^E}$ and $\mathcal{R}^{\cup}_{p,\pi^E} \subseteq \mathcal{R}^{\cup}_{p,\pi^E,\overline{\rho}}$. Under the assumption that $\overline{\rho} \le \rho_h^{p,\pi^b}(s,\pi_h^E(s))$ for every $(s,h)$, these sets are clearly \emph{learnable}, but lose the property of being the \emph{tightest} ones. The advantage of targeting these feasible sets is that we can reproduce the same proofs done for the original $\mathcal{R}^{\cap}_{p,\pi^E}$ and $\mathcal{R}^{\cup}_{p,\pi^E}$ obtaining a smaller sample complexity that scales with $\overline{\rho}$ instead of $\rho_{\min}^{p,\pi^b}$.

\subsection{Equivalent definitions of the feasible sets}
In both Definition \ref{def: old fs} and Theorem \ref{theorem: alternative representation new fs},
we represent the set of constraints defining the (old) feasible set using the Q-function of policy
$\pi^E$ or of some policy $\overline{\pi}\in\eqclasspie{\pi^E}$. However, it is possible to provide an alternative equivalent representation based on the optimal Q-function $Q^*$. It is easy to notice that the old feasible set $\oldfs$ can be rewritten as:
\begin{align}\label{eq: oldfs with Q star}
    \oldfs=\{r\in\mathfrak{R}\,|\,\forall(s,h)\in\popblue{\SH},\forall a\in\A:\;
    Q^*_h(s,\pi^E_h(s);p,r)\ge Q^*_h(s,a;p,r)\}.
\end{align}
Moreover, thanks to Lemma \ref{lemma: alternative representation new fs Q star}, the new feasible set $\fs$ can be rewritten as:
\begin{align*}
        \fs=\{r\in\mathfrak{R}\,|\,\forall(s,h)\in \popblue{\suppspie},\forall a\in\A:\;
        Q^*_h(s,\pi^E_h(s);p,r)\ge Q^*_h(s,a;p,r)\}.
\end{align*}
We prefer to work with the representations presented in the main paper because
the relaxations (see Section \ref{section: pirlo}) of those representations are ``better'' (See Appendix \ref{section: implementation appendix})
than the relaxations of the representations just introduced.

As a direct consequence of Theorem \ref{theorem: alternative representation new fs}, we have the following corollary (see Appendix \ref{section: appendix proofs section framework} for the proof).
\begin{restatable}{coroll}{relationfscoroll}\label{corollary: relation feasible sets}
    In the setting of Definition \ref{def: new fs} the feasible reward set $\fs$ satisfies:
    \begin{align*}
        \fs=\bigcup\limits_{\pi'\in \eqclasspie{\pi^E}}\overline{\R}_{p,\pi'}.
    \end{align*}
\end{restatable}
This corollary provides the explicit relationship between the old $\oldfs$ and new $\fs$ definitions
of feasible set. Clearly $\oldfs\subseteq\fs$.
By using Corollary \ref{corollary: relation feasible sets},
we can rewrite $\sub$ as $\sub=\bigcap_{p'\in\eqclassp{p}}\bigcup_{\pi'\in\eqclasspie{\pi^E}}\overline{\R}_{p',\pi'}$.
Observe that in general $\sub\neq
\bigcup_{\pi'\in\eqclasspie{\pi^E}}
\bigcap_{p'\in\eqclassp{p}}\overline{\R}_{p',\pi'}$,
because the union of the intersection is different
from the intersection of the union.
Furthermore, because of the different definitions of $\oldfs$
and $\fs$, we have that, in general,
$\sub \nsubseteq\oldfs$
i.e.,  the subset for the new notion of \textit{feasible set}
is not a subset of $\oldfs$.
Differently, with regard to the superset, it holds that
$\super\supseteq\oldfs$.

It should be remarked that Corollary \ref{corollary: relation feasible sets} and Theorem \ref{theorem: alternative representation new fs} are not in contradiction. Indeed, looking at the union over policies in Corollary \ref{corollary: relation feasible sets}, one might expect an existential quantifier inside Theorem \ref{theorem: alternative representation new fs}, but we find a universal quantifier.
By using Corollary \ref{corollary: relation feasible sets} and Eq. \ref{eq: oldfs with Q star}, we can ``transform'' the union into an existential quantifier to obtain:
\begin{align*}
        \fs=\{r\in\mathfrak{R}\,|\,\exists \overline{\pi}\in\eqclasspie{\pi^E}:\forall(s,h)\in \SH,\forall a\in\A:\;
        Q^*_h(s,\overline{\pi}_h(s);p,r)\ge Q^*_h(s,a;p,r)\},
\end{align*}
i.e., we are representing the feasible set $\fs$ as the set of all the rewards that induce an optimal policy in $\eqclasspie{\pi^E}$. From Theorem \ref{theorem: alternative representation new fs}, we have:
\begin{align*}
     \fs=\{r\in\mathfrak{R}\,|\,\forall \overline{\pi}\in\eqclasspie{\pi^E},\forall(s,h)\in \suppspie,\,\forall a\in\A:\;Q^{\overline{\pi}}_h(s,\pi^E_h(s);p,r)\ge Q^{\overline{\pi}}_h(s,a;p,r)
        \},
\end{align*}
i.e., we are representing the feasible set $\fs$ as the set of all the rewards for which playing the expert's action in $\suppspie$ is the optimal strategy irrespective of the optimal action outside $\suppspie$. To put it simple, Corollary \ref{corollary: relation feasible sets} uses the existential quantifier because it says that a certain policy in $\eqclasspie{\pi^E}$ is optimal,
while Theorem \ref{theorem: alternative representation new fs} uses the universal quantifier because it does not care about which policy in $\eqclasspie{\pi^E}$ is optimal, but only that the expert's action is played in $\suppspie$. The $\exists$ gives the optimal policy, while the $\forall$ says that one of the policies in $\eqclasspie{\pi^E}$ is optimal, without telling which.
Clearly, there is no contradiction.

\subsection{A remark about non reachable states}\label{section: A remark about non reachable states}
The strict condition $\suppsapib\subset\SAH$ alone is not a sufficient condition
to have $\sub\neq\fs\neq\super$. Indeed,
if the portion of $\SAH$ not contained into $\suppsapib$ is made only of 
$(s,h)\in\SH$ for which there is no $\pi\in\Pi$ such that
$(s,h)\in \S^{p,\pi}$ for the given $p\in\P$, then
neither the policy nor the transition model in $(s,h)$ appears in the constraints of $\sub,\fs,$ or $\super$
(when viewed using Theorem \ref{theorem: alternative representation new fs}).
In practice, the values of the rewards $r$ of
$\sub$ (and $\fs$, and $\super$) in such $(s,a,h)\notin\suppsapib$
can be chosen arbitrarily, irrespective of the reward in any other $(s',a',h')\in\SAH$,
and therefore we have $\sub=\fs=\super$.


\subsection{An annoying corner case}\label{subsec: annoying corner case}
To cope with the bitter lesson of Section \ref{section: bitter lesson}, we work with two datasets. As aforementioned, we use $\D^E$ to estimate $\suppspie$ and $\pi^E$, and we use $\D^b$ to estimate $p$. However, it might happen the following situation. Let $(s,h)\in\estsuppspie$ (where $\estsuppspie$ is the estimate of $\suppspie$ computed from $\D^E$), and let $\estsuppspie_{h+1}=\{\overline{s}\}$, i.e., dataset $\D^E$ tells us that the support of the expert's policy at $h+1$ is made of state $\overline{s}$ only. Let $a^E\coloneqq\widehat{\pi}^E_h(s)$. By using dataset $\D^b$, we might come up with the estimate of the transition model at $(s,a^E,h)$:
\begin{align*}
    \begin{cases}
        \widehat{p}_h(\overline{s}|s,a^E)>0\\
        \widehat{p}_h(s'|s,a^E)>0
    \end{cases},
\end{align*}
where $s'\in\S$ is some other state not in $\estsuppspie_{h+1}$. Clearly, this means that $s'\in\suppspie_{h+1}$; however, due to finite data, dataset $\D^E$ does not provide us with this information. This fact provides a contradiction between $\widehat{p}$ and $\estsuppspie$. To avoid issues in the implementation of \pirlo, we define the confidence set $\mathcal{C}(\widehat{p},b)$ (see Eq. \ref{eq:C2}) by allowing the support of the transition model of expert's actions to be compatible with the estimate provided by $\D^b$, i.e., we set:
\begin{align*}
    \mathcal{C}(\widehat{p},b)&\coloneqq\Bigl\{
    p'\in\P\,|\,\forall (s,a,h) \in \estsuppsapib:\, \|p'_h(\cdot|s,a) - \widehat{p}_h(\cdot|s,a) \|_1 \le b_h(s,a)\,\wedge\\
    &\qquad\forall (s,h)\in\estsuppspie,\forall s' \not\in \bigl(\estsuppspie_{h+1}\,\popblue{\cup\, \text{supp }\widehat{p}_h(\cdot|s,\widehat{\pi}^E_h(s)}\bigr):\, p'_h(s'|s,\widehat{\pi}^E_h(s)) = 0
    \Bigr\}.
\end{align*}
Observe that the union over the support of $\widehat{p}_h(\cdot|s,a^E)$ solves the potential issue created by the corner case described in this section. It should be remarked that, under good event $\mathcal{E}$ (see Appendix \ref{section: sample complexity}), it holds that $\estsuppspie=\suppspie$, and therefore $\estsuppspie_{h+1}\,\cup\, \text{supp }\widehat{p}_h(\cdot|s,\widehat{\pi}^E_h(s))=\estsuppspie_{h+1}=\suppspie_{h+1}$.

\subsection{Distances $d$ and $d_\infty$ control the distance between value functions}
We provide the proof of the proposition reported in Section \ref{section: pac framework}.
\relationdwithdg*
\begin{proof}
    The proof is similar to that of Theorem 4.1 of \cite{metelli2023towards}. For any $s,h$ and policy $\widehat{\pi}^*$ optimal in some $\widehat{r}$, we can write:
    \begin{align*}
        V_h^*(s;r)-V^{\widehat{\pi}^*}_h(s;r)&=
        V_h^*(s;r)-V^{\widehat{\pi}^*}_h(s;r)\pm V^{\widehat{\pi}^*}_h(s;\widehat{r})\\
        &= \Big(V_h^*(s;r)-V^{\widehat{\pi}^*}_h(s;\widehat{r})\Big)+\Big(V^{\widehat{\pi}^*}_h(s;\widehat{r})-V^{\widehat{\pi}^*}_h(s;r) \Big)\\
        &\markref{(1)} {\le}\Big(V_h^*(s;r)-V^{\popblue{\pi^*}}_h(s;\widehat{r})\Big)+\Big(V^{\widehat{\pi}^*}_h(s;\widehat{r})-V^{\widehat{\pi}^*}_h(s;r) \Big)\\
        &=\sum\limits_{l=h}^H\sum\limits_{(s',a')\in\SA}\mathbb{P}_{p,\pi^*}(s_l=s',a_l=a'|s_h=s)(r_l(s',a')-\widehat{r}_l(s',a'))\\
&\qquad+\sum\limits_{l=h}^H\sum\limits_{(s',a')\in\SA}\mathbb{P}_{p,\widehat{\pi}^*}(s_l=s',a_l=a'|s_h=s)(r_l(s',a')-\widehat{r}_l(s',a'))\\
&\le 2\sum\limits_{l=h}^H\|r_l-\widehat{r}_l\|_\infty,
    \end{align*}
    where at (1) we have used that $\widehat{\pi}^*$ is optimal under $\widehat{r}$.

    Multiplying both sides by $1/M(r,\widehat{r})$ concludes the proof, and noticing that $\rho^{p,\pi^b}$ permits to bound $d_\infty$ by $d$, we get the result.
\end{proof}

\section{Proofs of Section \ref{section: framework} and Section \ref{section: pac framework}}\label{section: appendix proofs section framework}
In this section, we provide the missing proofs of Section \ref{section: framework} and Section \ref{section: pac framework}.

To prove Theorem \ref{theorem: alternative representation new fs}, it is useful to introduce the following lemma.
\begin{lemma}\label{lemma: alternative representation new fs Q star}
In the setting of Definition \ref{def: new fs}, the feasible reward set $\fs$ satisfies:
    \begin{align*}
        \fs=\{&r\in\mathfrak{R}\,|\,\forall(s,h)\in \suppspie,\forall a\in\A:\;Q^*_h(s,\pi^E_h(s);p,r)\ge Q^*_h(s,a;p,r)\}.
    \end{align*}
\end{lemma}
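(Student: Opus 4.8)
The plan is to prove the two set inclusions by relating membership in $\fs$ to the pointwise optimality of the expert's value function on the reachable set $\suppspie$. The starting observation is that, since $\pi^E$ is deterministic, $J(\pi^E;\mu_0,p,r)=\E_{s\sim\mu_0}[V^{\pi^E}_1(s;p,r)]$ and $J^*(\mu_0,p,r)=\E_{s\sim\mu_0}[V^*_1(s;p,r)]$, while $V^{\pi^E}_h(s;p,r)\le V^*_h(s;p,r)$ holds at every $(s,h)$ by optimality of $V^*$. Hence $r\in\fs$, i.e.\ $J(\pi^E)=J^*$, is equivalent to $V^{\pi^E}_1(s;p,r)=V^*_1(s;p,r)$ for every $s$ with $\mu_0(s)>0$, because a nonnegative combination of nonnegative value gaps vanishes iff each gap weighted with positive probability does. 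This reduces the whole statement to controlling where the value gap $V^*_h-V^{\pi^E}_h$ is zero.

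The key structural fact I would isolate first is a reachability-propagation lemma: if $(s,h)\in\suppspie$ then, since $\pi^E$ is deterministic, $\rho^{p,\pi^E}_{h+1}(s')=\sum_{\tilde s}\rho^{p,\pi^E}_h(\tilde s)\,p_h(s'|\tilde s,\pi^E_h(\tilde s))$, so every $s'$ with $p_h(s'|s,\pi^E_h(s))>0$ satisfies $(s',h+1)\in\suppspie$; conversely, every $(s',h+1)\in\suppspie$ arises from some $(s,h)\in\suppspie$ with $p_h(s'|s,\pi^E_h(s))>0$. This is exactly what forces the support $\suppspie$ (rather than all of $\S\times\dsb{H}$) to appear, and I expect the careful handling of this visitation recursion to be the main obstacle, since both inductions below pivot on it.

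For the inclusion $\supseteq$ (the $Q^*$-condition implies $r\in\fs$), I would show by backward induction on $h$ that $V^{\pi^E}_h(s;p,r)=V^*_h(s;p,r)$ for all $(s,h)\in\suppspie$. At stage $H$ the assumed inequality $Q^*_H(s,\pi^E_H(s);p,r)\ge Q^*_H(s,a;p,r)$ gives $r_H(s,\pi^E_H(s))=\max_a r_H(s,a)=V^*_H(s)=V^{\pi^E}_H(s)$. In the inductive step, for $(s,h)\in\suppspie$ the reachability lemma places every successor $s'$ of the expert action in $\suppspie$ at $h+1$, so the induction hypothesis turns the Bellman expansion of $V^{\pi^E}_h(s)$ into $Q^*_h(s,\pi^E_h(s);p,r)$, which equals $V^*_h(s)$ by the assumed optimality of the expert action. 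Evaluating at $h=1$ and taking the $\mu_0$-expectation then yields $J(\pi^E)=J^*$.

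For the inclusion $\subseteq$ (if $r\in\fs$ then the $Q^*$-condition holds), I would run the dual, forward induction: the reduction above gives $V^{\pi^E}_1=V^*_1$ on the support of $\mu_0$, and assuming $V^{\pi^E}_h=V^*_h$ on $\suppspie$ I would compare the Bellman equation for $V^{\pi^E}_h(s)$ with the Bellman inequality $V^*_h(s)\ge r_h(s,\pi^E_h(s))+\sum_{s'}p_h(s'|s,\pi^E_h(s))V^*_{h+1}(s')$; since the two sides agree and $V^{\pi^E}_{h+1}\le V^*_{h+1}$ pointwise, every successor weighted with positive probability must satisfy $V^{\pi^E}_{h+1}(s')=V^*_{h+1}(s')$, and by the converse half of the reachability lemma this is precisely equality on $\suppspie$ at stage $h+1$, advancing the induction. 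Finally, at any $(s,h)\in\suppspie$ this termwise equality makes $Q^*_h(s,\pi^E_h(s);p,r)=V^{\pi^E}_h(s)=V^*_h(s)=\max_a Q^*_h(s,a;p,r)$, which is the claimed inequality. The only routine checks left are the base cases and the convention $V_{H+1}\equiv 0$.
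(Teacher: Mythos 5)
Your proof is correct, but it takes a genuinely different route from the paper's. The paper proves this lemma by invoking the performance difference lemma of \citet{kakade2002approximately}: writing $J(\pi';\mu_0,p,r)-J(\pi^*;\mu_0,p,r)=\sum_{h}\E_{(s,a)\sim\rho^{p,\pi'}_h}[A^{\pi^*}_h(s,a;p,r)]$, it deduces in one direction (by contradiction) that a utility maximizer must incur zero advantage, hence play $Q^*$-greedy actions, throughout its own visitation support, and in the other direction that zero advantage on the support forces $J(\pi')=J^*$. Your argument instead reduces $J(\pi^E;\mu_0,p,r)=J^*(\mu_0,p,r)$ to the pointwise identity $V^{\pi^E}_1=V^*_1$ on $\mathrm{supp}(\mu_0)$ and then runs two explicit inductions (backward for $\supseteq$, forward for $\subseteq$) hinged on a reachability-propagation lemma for $\suppspie$. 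Both halves of your induction check out: the backward step correctly replaces $V^{\pi^E}_{h+1}$ by $V^*_{h+1}$ on the expert's successors, and the forward step correctly squeezes $V^*_h(s)=V^{\pi^E}_h(s)\le Q^*_h(s,\pi^E_h(s);p,r)\le V^*_h(s)$ to force equality of the value gaps at every successor reached with positive probability, which by the converse reachability claim covers all of $\suppspie_{h+1}$. What the paper's route buys is brevity and modularity: the visitation weights $\rho^{p,\pi'}_h(s,a)$ in the performance difference lemma already encode reachability, so no separate support-propagation argument is needed. What your route buys is self-containedness (no external lemma) and a strictly stronger intermediate conclusion, namely $V^{\pi^E}_h=V^*_h$ on all of $\suppspie$, which is essentially the fact the paper re-derives later when relating $\fs$ to the equivalence class $\eqclasspie{\pi^E}$ in Theorem~\ref{theorem: alternative representation new fs} and Corollary~\ref{corollary: relation feasible sets}.
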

\begin{proof}
    The statement of the theorem is equivalent to the necessary and sufficient condition:
    \begin{align*}
        J(\pi^E;\mu_0,p,r)=\max\limits_{\pi\in\Pi}J(\pi;\mu_0,p,r)
        \iff \forall (s,h)\in \suppspie:\, Q^*_h(s,\pi^E_h(s);p,r)=
        \max\limits_{a\in\A} Q^*_h(s,a;p,r).
    \end{align*}
    We split the proof in two parts. First we show the sufficiency, then the necessity.

    Let us start with the sufficiency.
    Let $r$ be any reward in $\mathfrak{R}$
    and $p$ any transition model in $\P$.
    By contradiction, suppose that there exists a policy $\pi'\in \argmax_{\pi\in\Pi}J(\pi;\mu_0,p,r)$
    for which there exists a $(s',h')$ in the union of the supports of the $h\in\dsb{H}$ distributions $\rho_h^{p,\pi'}(\cdot)$
    in which $Q^*_{h'}(s',\pi_{h'}'(s');p,r)<\max_{a'\in\A} Q^*_{h'}(s',a';p,r)$ (the notation refers to a deterministic $\pi'$
    but it can be taken stochastic by computing the expected value).
    Let $\pi^*\in \argmax_{\pi\in\Pi}V^{\pi}_h(s;p,r)\;\forall (s,h)\in \SH$ be an auxiliary optimal policy whose existence is a widely-known result in RL \citep[see][]{puterman1994markov}.
    By hypothesis, it holds that:
    \begin{align*}
        J(\pi';\mu_0,p,r)=\max\limits_{\pi\in\Pi}J(\pi;\mu_0,p,r)=
        J(\pi^*;\mu_0,p,r).
    \end{align*}
    From the performance difference lemma \cite{kakade2002approximately}, by denoting the advantage function by $A_h^\pi(s,a;p,r)\coloneqq Q_h^\pi (s,a;p,r)-V_h^\pi (s;p,r)$, we can write:
    \begin{align*}
        J(\pi';\mu_0,p,r)-J(\pi^*;\mu_0,p,r)&=\sum\limits_{h\in\dsb{H}}\E\limits_{(s,a)\sim \rho_h^{p,\pi'}(\cdot,\cdot)}\bigl[
        A_h^{\pi^*}(s,a;p,r)
        \bigr]\\
        &\markref{(1)}{=}
        \rho_{h'}^{p,\pi'}(s')
        A_{h'}^{\pi^*}(s',\pi_{h'}'(s');p,r)\\
        &\markref{(2)}{<}0,
    \end{align*}
    where at (1) we have used that in all $(s,h)\in \SH\setminus\{(s',h')\}$
    the policy $\pi'$ prescribes the action greedy w.r.t. $Q^*$, and thus the advantage is 0,
    and (2) holds by (contradiction) hypothesis.
    However, by hypothesis, we know that $J(\pi^*;\mu_0,p,r)-J(\pi';\mu_0,p,r)=0$,
    thus we have obtained a contradiction, so the sufficiency holds.

    As far as the necessity is concerned,
    let us consider again an auxiliary optimal policy $\pi^*$
    and a policy $\pi'$ such that $Q^*_{h'}(s',\pi_{h'}'(s');p,r)=\max_{a'\in\A} Q^*_{h'}(s',a';p,r)$
    in the support of the $h\in\dsb{H}$ distributions $\rho_h^{p,\pi'}(\cdot)$. By applying the performance difference lemma, we can write:
    \begin{align*}
        J(\pi';\mu_0,p,r)-J(\pi^*;\mu_0,p,r)&=\sum\limits_{h\in\dsb{H}}\E\limits_{s\sim \rho_h^{p,\pi'}(\cdot)}\bigl[
        A_h^{\pi^*}(s,a;p,r)
        \bigr]\\
        &=\sum\limits_{h\in\dsb{H}}\sum\limits_{(s,a)\in\SA:\rho_h^{p,\pi'}(s,a)>0} \rho_h^{p,\pi'}(s,a)
        \underbrace{A_h^{\pi^*}(s,a;p,r)}_{=0}\\
        &=0,
    \end{align*}
    where we have simply used the hypothesis.

    By setting $\pi^E\equiv \pi'$, we get the result.
\end{proof}
Now, we are ready to prove Theorem \ref{theorem: alternative representation new fs}:
\relationfs*
\begin{proof}
    Thanks to Lemma \ref{lemma: alternative representation new fs Q star}, to prove the statement
    of the theorem we have to show the equivalence of the constraints:
    \begin{align}
        &\forall(s,h)\in \suppspie,\forall a\in\A:\quad
        Q^*_h(s,\pi^E_h(s);p,r)\ge Q^*_h(s,a;p,r)\label{eq: constraint Q star}\\
        &\qquad\qquad\qquad\qquad\qquad\qquad\qquad\iff\notag\\
        &\forall(s,h)\in \suppspie,\forall a\in\A,
        \forall \overline{\pi}\in\eqclasspie{\pi^E}:\quad
        Q^{\overline{\pi}}_h(s,\pi^E_h(s);p,r)\ge Q^{\overline{\pi}}_h(s,a;p,r),\label{eq: constraint Q piE}
    \end{align}
    where we have exchanged the order of the quantifiers (because they all are of the same type).
    Observe that Eq. \ref{eq: constraint Q star} can be rewritten as:
    \begin{align*}
        \forall(s,h)\in \suppspie,\forall a\in\A:\quad
        Q^{\pi^*}_h(s,\pi^E_h(s);p,r)\ge Q^{\pi^*}_h(s,a;p,r),
    \end{align*}
    because of the existence of some optimal policy $\pi^*$ \citep[see][]{puterman1994markov}.
    Now, by induction over $h\in\dsb{H}$, it is easy to show that Eq. \ref{eq: constraint Q star} entails the existence of an optimal policy $\pi^*\in \eqclasspie{\pi^E}$.
    Therefore, we can rewrite the constraint as:
    \begin{align*}
        \forall(s,h)\in \suppspie,\forall a\in\A:\quad
        Q^{\popblue{\pi^E}}_h(s,\pi^E_h(s);p,r)\ge Q^{\pi^*}_h(s,a;p,r),
    \end{align*}
    since playing $\pi^*$ from $\suppspie$ brings again into $\suppspie$. By definition of $\pi^*$,
    we have:
    \begin{align*}
        Q^{\pi^*}_h(s,a;p,r)&=Q^*_h(s,a;p,r)\coloneqq\max\limits_{\pi\in\Pi}Q^{\pi}_h(s,a;p,r)\ge Q^{\overline{\pi}}_h(s,a;p,r),
    \end{align*}
    for all $\overline{\pi}\in\eqclasspie{\pi^E}$. Since $\pi^*\in \eqclasspie{\pi^E}$, then
    we have shown that the two conditions in Eq. \ref{eq: constraint Q star} and Eq. \ref{eq: constraint Q piE} are equivalent.
\end{proof}
As a direct consequence of Theorem \ref{theorem: alternative representation new fs}, we have the following
corollary.
\relationfscoroll*
\begin{proof}
    Let $(s,a,h)\in\SAH$ and let $p$ be a transition model in $\P$.
    Define:
    \begin{align*}
        \mathfrak{R}_a\coloneqq\{
            r\in\mathfrak{R}\,|\,Q_h^*(s,a;p,r)= \max\limits_{a'\in\A}Q^*_h(s,a';p,r)
        \},
    \end{align*}
    i.e., the set of rewards satisfying the constraint on the optimality of action $a$
    in a single $(s,h)$ pair. It is well known \citep[see][]{puterman1994markov} that,
    given a reward function and a transition model, there always exists an optimal policy
    whose Q-function coincides, by definition, with the optimal Q-function. In other words,
    for any $p\in\P$ and any $r\in\mathfrak{R}$, the optimal Q-function is ``well-defined''.
    Therefore, it holds that:
    \begin{align*}
        \bigcup\limits_{a\in\A}\mathfrak{R}_a=\mathfrak{R},
    \end{align*}
    because \textit{we are making the union of the rewards that induce action $a$ to be optimal
    in $(s,h)$ for any $a\in\A$.}
    To put it simple, if we add the constraint that at pair $(s,h)\in\SH$
    there \emph{exists} an optimal action, we are not actually adding a constraint.
    Notice that we can do the same with policies $\pi\in\Pi$ instead of
    actions $a\in\A$.
    Thanks to Lemma \ref{lemma: alternative representation new fs Q star} and the property
    just highlighted, we can write:
    \begin{align*}
        \fs&=\{
            r\in\mathfrak{R}\,|\,\forall (s,h)\in \suppspie:
            \,Q_h^*(s,\pi_h^E(s);p,r)= \max\limits_{a\in\A}Q^*_h(s,a;p,r)
        \}\\
        &=\{
            r\in\mathfrak{R}\,|\,\exists \pi'\in \eqclasspie{\pi^E}:\,
            \forall (s,h)\in \SH:
            \,Q_h^*(s,\pi_h'(s);p,r)= \max\limits_{a\in\A}Q^*_h(s,a;p,r)
        \}\\
        &=\bigcup\limits_{\pi'\in\eqclasspie{\pi^E}}\overline{\R}_{p,\pi'}.
    \end{align*}
    In this way, the constraints are defined only for $(s,h)\in \suppspie$.
\end{proof}
With regards to Section \ref{section: pac framework}, we provide the following proposition.
\relationmetrics*
\begin{proof}
The first inequality is straightforward. For the second, observe that:
    \begin{align*}
        d_\infty(r,\widehat{r})&\coloneqq\frac{1}{M}\sum\limits_{h\in[H]}\max\limits_{(s,a)\in\SA}|r_h(s,a)-\widehat{r}_h(s,a)|\\
        &=\frac{1}{M}\sum\limits_{h\in[H]}\max\bigl\{
            \max\limits_{(s,a)\in \suppsahpib}|r_h(s,a)-\widehat{r}_h(s,a)|,
            \max\limits_{(s,a)\notin \suppsahpib}|r_h(s,a)-\widehat{r}_h(s,a)|
        \bigr\}\\
        &=
        \frac{1}{\rhominpib}\frac{1}{M}\sum\limits_{h\in[H]}\max\bigl\{
            \max\limits_{(s,a)\in \suppsahpib}\rhominpib|r_h(s,a)-\widehat{r}_h(s,a)|,
            \rhominpib\max\limits_{(s,a)\notin \suppsahpib}|r_h(s,a)-\widehat{r}_h(s,a)|
        \bigr\}\\
        &\markref{(1)}{\le}
        \frac{1}{\rhominpib}\frac{1}{M}\sum\limits_{h\in[H]}\max\bigl\{
            \max\limits_{(s,a)\in \suppsahpib}\rho_h^{p,\pi^b}(s,a)|r_h(s,a)-\widehat{r}_h(s,a)|,
            \max\limits_{(s,a)\notin \suppsahpib}|r_h(s,a)-\widehat{r}_h(s,a)|
        \bigr\}\\
        &\markref{(2)}{\le}
        \frac{1}{\rhominpib}\frac{1}{M}\sum\limits_{h\in[H]}\max\bigl\{
            \sum\limits_{(s,a)\in \suppsahpib}\rho_h^{p,\pi^b}(s,a)|r_h(s,a)-\widehat{r}_h(s,a)|,
            \max\limits_{(s,a)\notin \suppsahpib}|r_h(s,a)-\widehat{r}_h(s,a)|
        \bigr\}\\
        &=\frac{1}{\rhominpib}\frac{1}{M}\sum\limits_{h\in[H]}\max\bigl\{
            \E\limits_{(s,a)\sim \rho_h^{p,\pi^b}(\cdot,\cdot)}|r_h(s,a)-\widehat{r}_h(s,a)|,
            \max\limits_{(s,a)\notin \suppsahpib}|r_h(s,a)-\widehat{r}_h(s,a)|
        \bigr\}\\
        &\markref{(3)}{\le}
        \frac{1}{\rhominpib}\frac{1}{M}\sum\limits_{h\in[H]}\biggl(
            \E\limits_{(s,a)\sim \rho_h^{p,\pi^b}(\cdot,\cdot)}|r_h(s,a)-\widehat{r}_h(s,a)|+
            \max\limits_{(s,a)\notin \suppsahpib}|r_h(s,a)-\widehat{r}_h(s,a)|
        \biggr)\\
        &=: \frac{1}{\rhominpib} d(r,\widehat{r}),
    \end{align*}
    where at (1) we have upper bounded 
    $\rhominpib\le \rho_h^{p,\pi^b}(s,a)$ for $(s,a)\in \suppsahpib$,
    and $\rhominpib\le 1$, and
    at (2) and (3) we have used that $\max\{a,b\}\le a+b$ for $a,b\ge 0$. 
\end{proof}

\section{Sample Complexity Analysis}\label{section: sample complexity}
In this section, we present our results on the sample complexity of \irlo (Algorithm \ref{alg:irlo} - \irlo box)
and \pirlo (Algorithm \ref{alg:irlo} - \pirlo box) with both distances $d$ and $d_\infty$. 

The section is organized into various subsections. We begin with Section \ref{subsection: concentration lemmas}, in which we present the concentration results that will be used in all the sample complexity proofs. Section \ref{subsec: irlo d} contains the proofs of sample complexity of \irlo w.r.t.
distances $d$ and $d_\infty$. Analogously, Section \ref{subsec: pirlo d} contains the proofs of sample complexity of \pirlo w.r.t. distances $d$ and $d_\infty$.
In Section \ref{subsec: pirlo r = r hat}, we present additional sample complexity results for \pirlo w.r.t. $d,d_\infty$ under additional requirements.
We conclude with Section \ref{subsec: superset no relaxation bound}, in which we present a result of sample complexity on the estimation problem of the superset only, as defined in Equation \eqref{def: sub and super no relaxation pessimism}.

\subsection{Concentration Lemmas}\label{subsection: concentration lemmas}
We define \textit{good event} $\mathcal{E}$ as the intersection of four events $\mathcal{E}_1,\mathcal{E}_2,\mathcal{E}_3,\mathcal{E}_4$. Events $\mathcal{E}_1$ and $\mathcal{E}_2$
allow to obtain exact estimates of $\suppsapib$ and $\suppspie$ w.h.p., while events $\mathcal{E}_3$ and $\mathcal{E}_4$ allow to concentrate the estimates of the transition models around their means.
\begin{lemma}[Coverage Events]\label{lemma: concentration support datasets}
    Let $\M$ be an MDP without reward and let
    $\pi^E$ be the expert's policy. Let
    $\D^b=\{\langle s_h^{b,i},a_h^{b,i} \rangle_{h\in\dsb{H}}\}_{i\in\dsb{\tau^b}}$
    and $\D^E=\{\langle s_h^{E,j},a_h^{E,j} \rangle_{h\in\dsb{H}}\}_{j\in\dsb{\tau}^E}$
    be datasets of
    $\tau^b$ and $\tau^E$ trajectories collected
    by executing policies $\pi^b$ and $\pi^E$ in $\M$.
    Denote with $N_h^b(s,a)$
    the visitation count of triple $(s,a,h)\in \suppsapib$
    computed using $\D^b$, and by $N_h^E(s,a)$ the analogous for $\D^E$.
    For any $\delta\in (0,1)$, define events $\mathcal{E}_1, \mathcal{E}_2$ as:
    \begin{align*}
        \mathcal{E}_1&\coloneqq\Biggl\{N_h^b(s,a)\ge 1, \;\forall (s,a,h)\in \suppsapib
                \quad\text{when }\tau^b\ge c_1\frac{\ln\frac{|\suppsapib|}{\delta}}
                {\ln\frac{1}{1-\rhominpib}}\Biggr\},\\
        \mathcal{E}_2&\coloneqq\Biggl\{N^E_h(s,a)\ge 1, \;\forall (s,a,h)\in \suppsapie
        \quad\text{when }\tau^E\ge c_2\frac{\ln\frac{|\suppspie|}{\delta}}
        {\ln\frac{1}{1-\rhominpie}}\Biggr\},
    \end{align*}
    where $c_1$ and $c_2$ are universal constants. Then, event $\mathcal{E}_1\cap\mathcal{E}_2$ holds with
    probability at least $1-\delta/2$.
\end{lemma}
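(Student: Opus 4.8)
The plan is to treat each target triple separately and reduce the claim to a coupon-collector-style coverage argument, then close with a union bound. The key observation is that a single trajectory drawn by $\pi^b$ visits a fixed triple $(s,a,h)\in\suppsapib$ with probability exactly $\rho_h^{p,\pi^b}(s,a)$, and that the $\tau^b$ trajectories in $\D^b$ are independent. The analogous statement holds for $\D^E$ and $\pi^E$.

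First, I would fix a triple $(s,a,h)\in\suppsapib$ and bound the probability that it is never visited, i.e., that $N_h^b(s,a)=0$. By independence of the trajectories and the definition of the state-action visitation distribution, this probability equals $(1-\rho_h^{p,\pi^b}(s,a))^{\tau^b}$. Since $(s,a,h)$ lies in the support, we have $\rho_h^{p,\pi^b}(s,a)\ge\rhominpib>0$, so the miss probability is at most $(1-\rhominpib)^{\tau^b}$. Taking a union bound over the (at most $|\suppsapib|$) triples of the support, the probability that $\mathcal{E}_1$ fails is at most $|\suppsapib|(1-\rhominpib)^{\tau^b}$. Requiring this to be below $\delta/4$ and solving the resulting inequality for $\tau^b$ gives exactly the threshold $\tau^b\ge c_1\ln(|\suppsapib|/\delta)/\ln\frac{1}{1-\rhominpib}$, with the universal constant $c_1$ absorbing the factor coming from the target level $\delta/4$.

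The argument for $\mathcal{E}_2$ is identical, applied to the expert dataset $\D^E$ and the support $\suppsapie$: a single expert trajectory visits $(s,a,h)\in\suppsapie$ with probability $\rho_h^{p,\pi^E}(s,a)\ge\rhominpie>0$, so $\mathbb{P}(\overline{\mathcal{E}_2})\le|\suppsapie|(1-\rhominpie)^{\tau^E}$. Here I would use that, since $\pi^E$ is deterministic, each reachable state carries a single played action, hence $|\suppsapie|=|\suppspie|$; this is why the threshold can be stated in terms of $\ln(|\suppspie|/\delta)$. The same computation then yields $\mathbb{P}(\overline{\mathcal{E}_2})\le\delta/4$ under the stated bound on $\tau^E$.

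Finally, I would combine the two bounds via a union bound: $\mathbb{P}(\mathcal{E}_1\cap\mathcal{E}_2)\ge 1-\mathbb{P}(\overline{\mathcal{E}_1})-\mathbb{P}(\overline{\mathcal{E}_2})\ge 1-\delta/4-\delta/4=1-\delta/2$, as claimed. I do not expect a genuine obstacle here, since the entire argument is a per-triple coverage estimate plus two union bounds; the only points requiring mild care are the bookkeeping of the constants $c_1,c_2$ so that each failure probability sits below $\delta/4$, and the fact that the lower bound $\rho_h^{p,\pi^b}(s,a)\ge\rhominpib$ is invoked only on the support, where it is strictly positive by definition.
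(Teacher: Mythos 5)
Your proposal is correct and follows essentially the same route as the paper's proof: a per-triple miss probability $(1-\rho_h^{p,\pi^b}(s,a))^{\tau^b}$ bounded via $\rhominpib$, a union bound over the support giving $|\suppsapib|(1-\rhominpib)^{\tau^b}\le\delta/4$ (and analogously for $\D^E$ using $|\suppsapie|=|\suppspie|$ from determinism of $\pi^E$), and a final union bound. No gaps.
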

\begin{proof}
    Let us begin with event $\mathcal{E}_1$. We observe that
$N_h^b(s,a)\sim \text{Bin}(\tau^b, \rho_h^{p,\pi^b}(s,a))$.
In an analogous manner as Lemma E.5 of \citet{metelli2023towards},
we can write:
\begin{align*}
    \mathop{\mathbb{P}}\limits_{p,\pi^b}(\mathcal{E}_1^\complement)&=
    \mathop{\mathbb{P}}\limits_{p,\pi^b}(\exists (s,a,h)\in \suppsapib:\; N_h^b(s,a)=0)\\
    &\markref{(1)}{\le}\sum\limits_{(s,a,h)\in \suppsapib}\mathop{\mathbb{P}}\limits_{p,\pi^b}(N_h^b(s,a)=0)\\
    &=\sum\limits_{(s,a,h)\in \suppsapib}(1-\rho_h^{p,\pi^b}(s,a))^{\tau^b}\\
    &\markref{(2)}{\le}\sum\limits_{(s,a,h)\in \suppsapib}(1-\rhominpib)^{\tau^b}\\
    &=|\suppsapib|(1-\rhominpib)^{\tau^b}\le \frac{\delta}{4},
\end{align*}
where at (1) we used a union bound, and at (2) the definition of
$\rhominpib\coloneqq\min_{(s,a,h)\in \suppsapib}\rho_h^{p,\pi^b}(s,a)$.
Solving w.r.t. $\tau^b$ we get:
\begin{align*}
    \tau^b\ge \frac{\ln\frac{4|\suppsapib|}{\delta}}{\ln\frac{1}{1-\rhominpib}},
\end{align*}
from which the bound for event $\mathcal{E}_1$ holds for some uninteresting constant $c_1$.

Observe that, by following a similar reasoning, we can prove the bound for event $\mathcal{E}_2$,
by recalling that, by hypothesis, the expert's policy is deterministic, so $|\suppsapie|=|\suppspie|$.
The statement of the theorem follows by an application of the union bound.
\end{proof}
Before presenting the next lemma, we introduce the symbols:
\begin{align}
    {b}_h(s,a)\coloneqq \sqrt{\frac{2{\beta}(N_h^b(s,a),\delta)}{\max\{N^b_h(s,a), 1\}}}, 
\end{align}
and 
\begin{align}\label{eq: definition beta}
    {\beta}(n,\delta)\coloneqq\ln(4{Z}^{p,\pi^b}/{\delta})+
    ({S}^{p,\pi^b}_{\max}-1)\ln(
        e(1+n/{({S}^{p,\pi^b}_{\max}-1)})
    ),
\end{align}
with {${Z}^{p,\pi^b} \coloneqq |{\mathcal{Z}}^{p,\pi^b}|$} and {${S}^{p,\pi^b}_{\max} \coloneqq \max_{h \in \dsb{H}}|{\mathcal{S}}_h^{p,\pi^b}|$}. The corresponding counterparts with the estimated quantities are given as follows:
\begin{align}\label{eq:bhsa}
    \widehat{b}_h(s,a)\coloneqq \sqrt{\frac{2\widehat{\beta}(N_h^b(s,a),\delta)}{\max\{N^b_h(s,a), 1\}}}, 
\end{align}
and 
\begin{align}
    \widehat{\beta}(n,\delta)\coloneqq\ln(4\widehat{Z}^{p,\pi^b}/{\delta})+
    (\widehat{S}^{p,\pi^b}_{\max}-1)\ln(
        e(1+n/{(\widehat{S}^{p,\pi^b}_{\max}-1)})
    ),
\end{align}
with {$\widehat{Z}^{p,\pi^b} \coloneqq |\widehat{\mathcal{Z}}^{p,\pi^b}|$} and {$\widehat{S}^{p,\pi^b}_{\max} \coloneqq \max_{h \in \dsb{H}}|\widehat{\mathcal{S}}_h^{p,\pi^b}|$}. Clearly, under the good event $\mathcal{E}_1\cap\mathcal{E}_2$, the two versions coincide.
\begin{lemma}[Concentration]\label{lemma: concentration}
    Let $\M$ be an MDP without reward and let
    $\pi^E$ be the expert's policy. Let
    $\D^b=\{\langle s_h^{b,i},a_h^{b,i} \rangle_{h\in\dsb{H}}\}_{i\in\dsb{\tau^b}}$
    and $\D^E=\{\langle s_h^{E,j},a_h^{E,j} \rangle_{h\in\dsb{H}}\}_{j\in\dsb{\tau}^E}$
    be datasets of
    $\tau^b$ and $\tau^E$ trajectories collected
    by executing policies $\pi^b$ and $\pi^E$ in $\M$.
    Denote with $\widehat{p}_h(\cdot|s,a)$ the empirical transition model of triple $(s,a,h)\in \suppsapib$ computed using $\D^b$.
    For any $\delta\in (0,1)$, define events $\mathcal{E}_3, \mathcal{E}_4$ as:
    \begin{align*}
        \mathcal{E}_3&\coloneqq\Biggl\{
            N_h^b(s,a) KL(\widehat{p}_h(\cdot|s,a)\|p_h(\cdot|s,a))
                \le \beta(N_h^b(s,a),\delta),
                \quad\forall \tau^b\in\mathbb{N},\,\forall (s,a,h)\in \suppsapib\Biggr\},\\
        \mathcal{E}_4&\coloneqq\Biggl\{
            \frac{1}{N_h^b(s,a)\vee 1}\le
                c_4 \frac{\ln\frac{|\suppsapib|}{\delta}}{\tau^b \rho_h^{p,\pi^b}(s,a)},
                \quad\forall (s,a,h)\in \suppsapib\Biggr\},
    \end{align*}
    where $c_4$ is  a universal constant.
    Then, event $\mathcal{E}_3\cap \mathcal{E}_4$ holds with probability at least $1-\delta/2$.
\end{lemma}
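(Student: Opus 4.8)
The plan is to prove the two events separately, arranging each to fail with probability at most $\delta/4$, and then to combine them by a union bound. The common underlying observation is that, for a \emph{fixed} triple $(s,a,h)\in\suppsapib$, the next-state samples recorded each time a trajectory of $\D^b$ visits $(s,a)$ at stage $h$ are i.i.d.\ draws from $p_h(\cdot|s,a)$, while the visitation count $N_h^b(s,a)$ is itself a random quantity. Visits to different triples along a trajectory are coupled, so I would \emph{not} assume independence across triples; crucially, the union bound used to aggregate the per-triple guarantees does not require it. This mirrors the structure of the analogous concentration step in \citet{metelli2023towards}.

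For $\mathcal{E}_3$ I would invoke a \emph{time-uniform} (anytime) deviation inequality for the empirical distribution of a categorical random variable, in the form used by \citet{kaufmann2021adaptive} and \citet{metelli2023towards}. For each fixed $(s,a,h)$, this inequality guarantees that, simultaneously for every value of the random count $N_h^b(s,a)$, one has $N_h^b(s,a)\,KL(\widehat{p}_h(\cdot|s,a)\|p_h(\cdot|s,a)) \le \ln(1/\delta') + (S^{p,\pi^b}_{\max}-1)\ln\bigl(e(1+N_h^b(s,a)/(S^{p,\pi^b}_{\max}-1))\bigr)$ with probability at least $1-\delta'$, where $S^{p,\pi^b}_{\max}$ upper bounds the support size of the relevant next-state distribution. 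Choosing $\delta'=\delta/(4Z^{p,\pi^b})$ makes the right-hand side exactly $\beta(N_h^b(s,a),\delta)$ as defined in Eq.~\eqref{eq: definition beta}, and a union bound over the at most $Z^{p,\pi^b}$ triples in $\suppsapib$ yields $\mathbb{P}(\mathcal{E}_3^\complement)\le\delta/4$. The delicate ingredient here is precisely the time-uniformity: because $N_h^b(s,a)$ is random, a fixed-$n$ Chernoff bound does not suffice, and one must rely on the method-of-mixtures / Laplace argument that underpins the categorical deviation inequality. I expect this to be the main obstacle, but it is a known black-box result and needs only to be applied.

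For $\mathcal{E}_4$ I would lower bound the counts via a multiplicative Chernoff estimate. Since $N_h^b(s,a)\sim\mathrm{Bin}(\tau^b,\rho_h^{p,\pi^b}(s,a))$, the standard bound gives $N_h^b(s,a)\ge \tfrac{1}{2}\tau^b\rho_h^{p,\pi^b}(s,a)$ with probability at least $1-\exp(-\tau^b\rho_h^{p,\pi^b}(s,a)/8)$; combining this with the trivial estimate $1/(N_h^b(s,a)\vee 1)\le 1$ in the small-mean regime, and taking a union bound over the at most $|\suppsapib|$ triples, produces the inequality $\tfrac{1}{N_h^b(s,a)\vee 1}\le c_4\tfrac{\ln(|\suppsapib|/\delta)}{\tau^b\rho_h^{p,\pi^b}(s,a)}$ simultaneously for all triples, with failure probability at most $\delta/4$, the universal constant $c_4$ absorbing the numerical factors (cf.\ the analogous counting lemma in \citet{xie2021bridging}). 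Finally I would combine the two pieces: $\mathbb{P}(\mathcal{E}_3\cap\mathcal{E}_4)\ge 1-\mathbb{P}(\mathcal{E}_3^\complement)-\mathbb{P}(\mathcal{E}_4^\complement)\ge 1-\delta/2$, which is the claim. Everything besides the anytime step is a routine Chernoff-plus-union-bound computation.
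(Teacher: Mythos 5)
Your proposal is correct and follows essentially the same route as the paper: the paper also bounds each event's failure probability by $\delta/4$, handles $\mathcal{E}_3$ via the time-uniform categorical KL deviation inequality of \citet{kaufmann2021adaptive} (Lemma~\ref{lemma: jonsson}) with a union bound over the triples in $\suppsapib$, and handles $\mathcal{E}_4$ by concentrating the binomial counts $N_h^b(s,a)\sim\mathrm{Bin}(\tau^b,\rho_h^{p,\pi^b}(s,a))$ with confidence $\delta/(4|\suppsapib|)$ per triple. The only cosmetic difference is that you re-derive the count bound via a multiplicative Chernoff estimate plus a small-mean case split, whereas the paper invokes Lemma~\ref{lemma: binomial concentration} (Lemma A.1 of \citealt{xie2021bridging}) as a black box, which is proved by exactly that argument.
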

\begin{proof}
We show that both events $\mathcal{E}_3,\mathcal{E}_4$ hold with
probability at least $1-\frac{\delta}{4}$. The thesis follows through the application
of a union bound.

Let us begin with
event $\mathcal{E}_3$.
Similarly to the proof of Lemma 10 in \citet{kaufmann2021adaptive}, we apply Lemma
    \ref{lemma: jonsson} and a union bound to get:
    \begin{align*}
        \mathop{\mathbb{P}}\limits_{p,\pi^b}(\mathcal{E}_3^\complement)&=
        \mathop{\mathbb{P}}\limits_{p,\pi^b}\biggl(
            \exists (s,a,h)\in \suppsapib,\,\exists\tau^b\in\mathbb{N}:\;
            N_h^b(s,a) KL(\widehat{p}_h(\cdot|s,a)\|p_h(\cdot|s,a))
                > \beta(N_h^b(s,a),\delta)
        \biggr)\\
        &\le\sum\limits_{(s,a,h)\in \suppsapib}\mathop{\mathbb{P}}\limits_{p,\pi^b}\biggl(
            \exists\tau^b\in\mathbb{N}:\;
            N_h^b(s,a) KL(\widehat{p}_h(\cdot|s,a)\|p_h(\cdot|s,a))
                > \beta(N_h^b(s,a),\delta)
        \biggr)\\
        &\le\sum\limits_{(s,a,h)\in \suppsapib}\frac{\delta}{4|\suppsapib|}=\frac{\delta}{4}.
    \end{align*}
    It should be remarked that, in the definition of $\beta$ (Equation \ref{eq: definition beta}),
    we have used $|\suppspibmax|$
    instead of $\S$ because it better represents the support of the transition model
    in triples $(s,a,h)\in \suppsapib$.

    As far as event $\mathcal{E}_4$ is concerned, consider an arbitrary triple $(s,a,h)\in \suppsapib$.
    Observe that the visitation count $N_h^b(s,a)$
    is binomially distributed, i.e.,
    $N_h^b(s,a)\sim \text{Bin}(\tau, \rho_h^{p,\pi^b}(s,a))$.
    Therefore, similarly to Lemma B.1 of \cite{xie2021bridging},
    by applying Lemma \ref{lemma: binomial concentration} with confidence
    $\delta/(4|\suppsapib|)$, we can concentrate the binomial as:
    \[
        \frac{\rho_h^{p,\pi^b}(s,a)}{N_h^b(s,a)\vee 1}\le
        \frac{8\ln\frac{4|\suppsapib|}{\delta}}{\tau},
    \]
    from which we get:
    \[
    \mathop{\mathbb{P}}\limits_{p,\pi^b}\Bigl(
        \frac{1}{N_h^b(s,a)\vee 1}\le
        \frac{8\ln\frac{4|\suppsapib|}{\delta}}{\tau\rho_h^{p,\pi^b}(s,a)}
    \Bigr)\ge 1-\frac{\delta}{4|\suppsapib|}.
    \]
    We can perform a union bound over $(s,a,h)\in \suppsapib$ to get:
    \[
        \mathop{\mathbb{P}}\limits_{p,\pi^b}\Bigl(
            \exists (s,a,h)\in \suppsapib:\;
            \frac{1}{N_h^b(s,a)\vee 1}>
            \frac{8\ln\frac{4|\suppsapib|}{\delta}}{\tau\rho_h^{p,\pi^b}(s,a)}
        \Bigr)\le \frac{\delta}{4}.
        \]
    By choosing $c_4$ appropriately, we get the result.
\end{proof}
Since $\mathcal{E}\coloneqq\mathcal{E}_1\cap \mathcal{E}_2\cap\mathcal{E}_3\cap \mathcal{E}_4$, then,
by combining Lemma \ref{lemma: concentration support datasets} with Lemma \ref{lemma: concentration} through a union bound, we get that $\mathcal{E}$ holds w.p. $1-\delta$.

\subsection{Proof of Theorem \ref{theorem: upper bound d irlo} and Theorem \ref{theorem: upper bound d infty irlo}}\label{subsec: irlo d}
The next lemmas show that, for any reward in $\sub$ ($\super$),
it is possible to find a ``similar'' reward in the estimate $\estsub$ ($\estsuper$).
Notice that, under events $\mathcal{E}_1$ and $\mathcal{E}_2$ we have that, respectively,
$\estsuppsapib=\suppsapib$ and $\estsuppspie=\suppspie$.
For the sake of simplicity, we provide here the (recursive) definitions of $p^m,p^M\in\eqclassp{p}$ for any $r\in\mathfrak{R}$:
\begin{align}\label{eq: def pM and pm}
\begin{split}
    p^M&\coloneqq\begin{cases}
    p^M_h(\cdot|s,a)=p_h(\cdot|s,a),\qquad \forall (s,a,h)\in \suppsapib\\
    p^M_h(\cdot|s,a)=\mathbbm{1}\{\cdot=\argmax\limits_{s'\in\S}V^{\pi^M}_{h+1}(s';p^M,r\}, \qquad\text{otherwise}
\end{cases},\\
p^m&\coloneqq\begin{cases}
    p^m_h(\cdot|s,a)=p_h(\cdot|s,a),\qquad \forall (s,a,h)\in \suppsapib\\
    p^m_h(\cdot|s,a)=\mathbbm{1}\{\cdot=\argmin\limits_{s'\in\S}V^{\pi^m}_{h+1}(s';p^m,r\}, \qquad\text{otherwise}
\end{cases},
\end{split}  
    \end{align}
where we have used the following (recursive) policy definitions $\pi^M,\widetilde{\pi}^m\in\eqclasspie{\pi^E}$:
\begin{align}\label{eq: def piM and pim}
\begin{split}
    \pi^M&\coloneqq\begin{cases}
    \pi^M_h(s)=\pi^E_h(s)
    ,\;\text{if } (s,h)\in \suppspie\\
    \pi^M_h(\cdot|s)=\mathbbm{1}\{\cdot=\argmax\limits_{a\in\A}Q^{\pi^M}_{h}(s,a;p^M,r)
    \},\;\text{if } (s,h)\notin \suppspie\\
\end{cases},\\
\pi^m&\coloneqq\begin{cases}
    \pi^m_h(s)=\pi^E_h(s)
    ,\;\text{if } (s,h)\in \suppspie\\
    \pi^m_h(\cdot|s)=\mathbbm{1}\{\cdot=\popblue{\argmax\limits_{a\in\A}}Q^{\pi^m}_{h}(s,a;p^m,r)
    \},\;\text{if } (s,h)\notin \suppspie\\
\end{cases}.
\end{split}
\end{align}
Thanks to these definitions, we can rewrite $\sub$ and $\super$ as:
\begin{align}\label{eq: representation sub super with pm pM}
\begin{split}
        \sub &=\{
    r\in\mathfrak{R}\,|\,
    \forall (s,h)\in \suppspie,\forall a\in\A\setminus\{a^E\}:
    Q^{\pi^E}_h(s,a^E;p,r)\ge Q^{\pi^M}_h(s,a;p^M,r)
    \},\\
    \super &=\{
    r\in\mathfrak{R}\,|\,
    \forall (s,h)\in \suppspie,\forall a\in\A\setminus\{a^E\}:
    Q^{\pi^E}_h(s,a^E;p,r)\ge Q^{\pi^m}_h(s,a;p^m,r)
    \}.
\end{split}
\end{align}
We will denote by $\widehat{p}^M,\widehat{p}^m,\widehat{\pi}^M,\widehat{\pi}^m$
the transition models and policies defined as in Eq. \ref{eq: def pM and pm} and Eq. \ref{eq: def piM and pim} but for transition model $\widehat{p}$.
\begin{lemma}[Reward Choice Subset]\label{lemma: error propagation irlo subset}
    Let $\sub$ be the subset of the feasible set $\fs$ estimated through
    $\estsub$ outputted by Algorithm \ref{alg:irlo}.
    Under event $\mathcal{E}$, for any $r\in\sub$,
    the reward $\widehat{r}$ constructed as:
    \begin{align*}
        \begin{cases}
            \widehat{r}_h(s,a)=r_h(s,a)+\sum\limits_{s'\in\S}
            p_h(s'|s,a)V^{\pi^M}_{h+1}(s';p^M,r)
            -\sum\limits_{s'\in\S}\widehat{p}_h(s'|s,a)
            V^{\widehat{\pi}^M}_{h+1}(s';\widehat{p}^M,\widehat{r}),
            \quad\forall (s,a,h)\in \suppsapib\\
            \widehat{r}_h(s,a)=r_h(s,a),
            \quad\forall (s,a,h)\notin \suppsapib\\
        \end{cases},
    \end{align*}
    belongs to $\estsub$. Moreover, for any reward $\widehat{r}\in\estsub$, we can construct in the
    same manner a reward $r$ that belongs to $\sub$.
\end{lemma}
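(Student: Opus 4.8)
The plan is to work throughout on the good event $\mathcal{E}$, under which Lemma~\ref{lemma: concentration support datasets} guarantees $\estsuppsapib=\suppsapib$, $\estsuppspie=\suppspie$, and hence $\widehat{\pi}^E=\pi^E$; consequently the estimated set $\estsub=\R_{\widehat{p},\widehat{\pi}^E}^\cap$ admits the same closed form \eqref{eq: representation sub super with pm pM} as $\sub$, but with $(\widehat{p},\widehat{p}^M,\widehat{\pi}^M)$ in place of $(p,p^M,\pi^M)$ and the same supports. The whole statement then reduces to showing that the proposed $\widehat{r}$ preserves the $Q$-functions that define those constraints. Note first that $\widehat{r}$ is \emph{well defined} by backward recursion on $h$: the value $V^{\widehat{\pi}^M}_{h+1}(\cdot;\widehat{p}^M,\widehat{r})$ depends only on $\widehat{r}_{h+1},\dots,\widehat{r}_H$, so one constructs $\widehat{r}_H,\widehat{r}_{H-1},\dots,\widehat{r}_1$ in turn (with $V_{H+1}\equiv 0$), each stage also fixing $\widehat{p}^M$ and $\widehat{\pi}^M$ at that stage via \eqref{eq: def pM and pm}--\eqref{eq: def piM and pim}.

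The key claim is
\[
Q^{\widehat{\pi}^M}_h(s,a;\widehat{p}^M,\widehat{r})=Q^{\pi^M}_h(s,a;p^M,r)\qquad\text{for all }(s,a,h).
\]
First I would establish it for $(s,a,h)\in\suppsapib$ directly from the construction: since $\widehat{p}^M_h(\cdot|s,a)=\widehat{p}_h(\cdot|s,a)$ and $p^M_h(\cdot|s,a)=p_h(\cdot|s,a)$ there, expanding the Bellman equation and substituting the definition of $\widehat{r}_h(s,a)$ makes the two terms $\sum_{s'}\widehat{p}_h(s'|s,a)V^{\widehat{\pi}^M}_{h+1}(s';\widehat{p}^M,\widehat{r})$ cancel exactly, leaving $r_h(s,a)+\sum_{s'}p_h(s'|s,a)V^{\pi^M}_{h+1}(s';p^M,r)=Q^{\pi^M}_h(s,a;p^M,r)$. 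I would then promote this to \emph{all} pairs by a backward induction proving $V^{\widehat{\pi}^M}_h(\cdot;\widehat{p}^M,\widehat{r})=V^{\pi^M}_h(\cdot;p^M,r)$: on $\suppspie$ both value functions equal the $Q$-value of the expert action (a pair lying in $\suppsapib$, hence already matched), while off $\suppspie$ both policies and models act greedily, so for $(s,a,h)\notin\suppsapib$ one has $\widehat{r}_h(s,a)=r_h(s,a)$ and the induction hypothesis on $V_{h+1}$ gives matched $Q$-values, hence matched maxima.

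Finally I would translate the constraints. Using that a trajectory leaving an expert state via the expert action stays inside $\suppspie$ (the footnote identity $Q^{\overline{\pi}}_h(s,\pi^E_h(s);p,r)=Q^{\pi^E}_h(s,\pi^E_h(s);p,r)$), the left-hand side of \eqref{eq: representation sub super with pm pM} satisfies $Q^{\pi^E}_h(s,a^E;p,r)=Q^{\pi^M}_h(s,a^E;p^M,r)$ and likewise $Q^{\widehat{\pi}^E}_h(s,a^E;\widehat{p},\widehat{r})=Q^{\widehat{\pi}^M}_h(s,a^E;\widehat{p}^M,\widehat{r})$; combined with the $Q$-preservation claim, every defining inequality of $r\in\sub$ maps to the corresponding inequality of $\widehat{r}\in\estsub$ and conversely. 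The reverse construction (from $\widehat{r}\in\estsub$ to $r\in\sub$) is completely symmetric, swapping hatted and unhatted quantities. I expect the main obstacle to be the bookkeeping of the \emph{coupled} backward recursion: because $\widehat{p}^M$ and $\widehat{\pi}^M$ are themselves functions of the $\widehat{r}$ being built, one must check at each inductive step that the greedy maximizers defining them are consistent with the value identity just proved, i.e., that the induction on $V$ and the recursion defining $(\widehat{p}^M,\widehat{\pi}^M,\widehat{r})$ can be carried out jointly without circularity.
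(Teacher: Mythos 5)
Your proposal is correct and follows essentially the same route as the paper's proof: establish the key identity $Q^{\widehat{\pi}^M}_h(s,a;\widehat{p}^M,\widehat{r})=Q^{\pi^M}_h(s,a;p^M,r)$ by direct cancellation on $\suppsapib$ (where the construction of $\widehat{r}$ is designed exactly to force it) and by backward induction on the remaining triples, exploiting $\widehat{r}=r$ there together with the greedy/maximizing definitions of $p^M,\pi^M$ and their hatted counterparts, then transfer the defining inequalities of $\sub$ via the representation in Eq.~\eqref{eq: representation sub super with pm pM} and the fact that expert actions keep trajectories inside $\suppspie$. Your explicit remark on the well-posedness of the coupled backward recursion defining $(\widehat{r},\widehat{p}^M,\widehat{\pi}^M)$ is a point the paper leaves implicit, but it does not alter the argument.
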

\begin{proof}
    The idea of the proof is to show that $Q^{\pi^M}_h(s,a;p^M,r)=Q^{\widehat{\pi}^M}_h(s,a;\widehat{p}^M,\widehat{r})$ for all $(s,a,h)\in\SAH$. Indeed, in this way, since $r\in\sub$, then
    it holds that:
    \begin{align*}
        \forall (s,h)\in \suppspie,\forall a\in\A\setminus\{a^E\}:
        Q^{\pi^E}_h(s,a^E;\widehat{p},\widehat{r})- Q^{\widehat{\pi}^M}_h(s,a;\widehat{p}^M,\widehat{r})=Q^{\pi^E}_h(s,a^E;p,r)-Q^{\pi^M}_h(s,a;p^M,r)\ge 0.
    \end{align*}

    Let us begin with any $(s,a,h)\in\suppsapib$. By definition of $\widehat{r}$ and by rearranging the terms, we obtain:
    \begin{align}\label{eq: equality Q subset irlo d}
        &\widehat{r}_h(s,a)+\sum\limits_{s'\in\S}\widehat{p}_h(s'|s,a)
            V^{\widehat{\pi}^M}_{h+1}(s';\widehat{p}^M,\widehat{r})=r_h(s,a)+\sum\limits_{s'\in\S}
            p_h(s'|s,a)V^{\pi^M}_{h+1}(s';p^M,r)\notag\\
        &\iff
        Q^{\widehat{\pi}^M}_h(s,a;\widehat{p}^M,\widehat{r})=Q^{\pi^M}_h(s,a;p^M,r).
    \end{align}
    In particular, observe that, by Assumption \ref{assumption: coverage of behavioral policy}, it holds $\suppsapie\subseteq\suppsapib$; moreover, by definition of $\widehat{p}^M$ and $p^M$, playing an expert's action from $\suppspie$ brings again into $\suppspie$, therefore, for $(s,a^E,h)\in\suppsapie$, this means:
    \begin{align*}
    Q^{\pi^E}_h(s,a^E;\widehat{p},\widehat{r})=Q^{\pi^E}_h(s,a^E;p,r).
    \end{align*}
    
    Now, we show by induction that, for any $(s,a,h)\notin\suppsapib$,
    it holds that:
    \begin{align*}
    Q^{\widehat{\pi}^M}_h(s,a;\widehat{p}^M,\widehat{r})=Q^{\pi^M}_h(s,a;p^M,r).
    \end{align*}
    As case base, consider stage $H$. Clearly, for any $(s,a)\notin\suppsapib_H$, we have:
    \begin{align*}
        Q^{\widehat{\pi}^M}_H(s,a;\widehat{p}^M,\widehat{r})&=\widehat{r}_H(s,a)\\
        &=r_H(s,a)\\
        &=Q^{\pi^M}_h(s,a;p^M,r),
    \end{align*}
    where we have used the definition of $\widehat{r}$.
    Make the inductive hypothesis that, at stage $h+1$, for any
    $(s,a)\notin\suppsapib_{h+1}$,
    it holds that
    $Q^{\widehat{\pi}^M}_{h+1}(s,a;\widehat{p}^M,\widehat{r})=Q^{\pi^M}_{h+1}(s,a;p^M,r)$, and consider stage $h$:
    \begin{align*}
        Q^{\widehat{\pi}^M}_h(s,a;\widehat{p}^M,\widehat{r})&=
        \widehat{r}_h(s,a)+\sum\limits_{s'\in\S}\widehat{p}^M_h(s'|s,a)
        V_{h+1}^{\widehat{\pi}^M}(s';\widehat{p}^M,\widehat{r})\\
        &\markref{(1)}{=}
        \popblue{r_h(s,a)}+\sum\limits_{s'\in\S}\widehat{p}^M_h(s'|s,a)
        V_{h+1}^{\widehat{\pi}^M}(s';\widehat{p}^M,\widehat{r})\\
        &\markref{(2)}{=}
        r_h(s,a)+\max\limits_{s'\in\S}
        V_{h+1}^{\widehat{\pi}^M}(s';\widehat{p}^M,\widehat{r})\\
        &=
        r_h(s,a)+\max\bigl\{\max\limits_{s'\in\suppspib_{h+1}}
        V_{h+1}^{\widehat{\pi}^M}(s';\widehat{p}^M,\widehat{r}),
        \max\limits_{s'\notin\suppspib_{h+1}}
        V_{h+1}^{\widehat{\pi}^M}(s';\widehat{p}^M,\widehat{r})
        \bigr\}\\
        &\markref{(3)}{=}
        r_h(s,a)+\max\bigl\{\max\limits_{s'\in\suppspib_{h+1}}
        V_{h+1}^{\popblue{\pi^M}}(s';\popblue{p^M,r}),
        \max\limits_{s'\notin\suppspib_{h+1}}
        V_{h+1}^{\widehat{\pi}^M}(s';\widehat{p}^M,\widehat{r})
        \bigr\}\\
        &\markref{(4)}{=}
        r_h(s,a)+\max\bigl\{\max\limits_{s'\in\suppspib_{h+1}}
        V_{h+1}^{\pi^M}(s';p^M,r),
        \max\limits_{s'\notin\suppspib_{h+1}}
        V_{h+1}^{\popblue{\pi^M}}(s';\popblue{p^M,r})
        \bigr\}\\
        &=
        r_h(s,a)+\max\limits_{s'\in\S}
        V_{h+1}^{\pi^M}(s';p^M,r)\\
        &=Q_h^{\pi^M}(s,a;p^M,r),
    \end{align*}
    where at (1) we use the definition of $\hat{r}$ outside $\suppsapib$,
    at (2) we use the definition of $\widehat{p}^M$,    
    at (3) we use Eq. \ref{eq: equality Q subset irlo d},
    and at (4) we use the inductive hypothesis.

    Notice that the same passages can be carried out if we exchanged $p$ and $\widehat{p}$.
    This concludes the proof.
\end{proof}
Notice that the reward function chosen in Lemma \ref{lemma: error propagation irlo subset}
can be interpreted, in an analogous manner as in the proof of Theorem 3.1 of \citet{metelli2021provably},
as the reward that provides, in transition model $\widehat{p}$, the same Q-function provided by the given reward in $p$.

We can prove an analogous result for the superset $\R_{\widehat{p},\widehat{\pi}^E}^\cup$.
\begin{lemma}[Reward Choice Superset]\label{lemma: error propagation irlo superset}
    Let $\super$ be the subset of the feasible set $\fs$ estimated through
    $\estsuper$ outputted by Algorithm \ref{alg:irlo}.
    Under event $\mathcal{E}$, for any $r\in\super$,
    the reward $\widehat{r}$ constructed as:
    \begin{align*}
        \begin{cases}
            \widehat{r}_h(s,a)=r_h(s,a)+\sum\limits_{s'\in\S}
            p_h(s'|s,a)V^{\pi^m}_{h+1}(s';p^m,r)
            -\sum\limits_{s'\in\S}\widehat{p}_h(s'|s,a)
            V^{\widehat{\pi}^m}_{h+1}(s';\widehat{p}^m,\widehat{r}),
            \quad\forall (s,a,h)\in \suppsapib\\
            \widehat{r}_h(s,a)=r_h(s,a),
            \quad\forall (s,a,h)\notin \suppsapib\\
        \end{cases},
    \end{align*}
    belongs to $\estsuper$. Moreover, for any reward $\widehat{r}\in\estsuper$, we can construct in the
    same manner a reward $r$ that belongs to $\super$.
\end{lemma}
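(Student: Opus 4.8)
The plan is to follow the proof of Lemma~\ref{lemma: error propagation irlo subset} line by line, replacing the maximizing objects $\pi^M,p^M,\widehat p^M$ with the minimizing ones $\pi^m,p^m,\widehat p^m$ from Eqs.~\eqref{eq: def pM and pm}--\eqref{eq: def piM and pim}, and using the representation of $\super$ in Eq.~\eqref{eq: representation sub super with pm pM}. The central identity I would establish is
\[
    Q^{\widehat\pi^m}_h(s,a;\widehat p^m,\widehat r)=Q^{\pi^m}_h(s,a;p^m,r)\qquad \forall (s,a,h)\in\SAH.
\]
Granting it, note that by Assumption~\ref{assumption: coverage of behavioral policy} we have $\suppsapie\subseteq\suppsapib$ and, since $\pi^m,\widehat\pi^m$ coincide with the expert on $\suppspie$ while $p^m=p$, $\widehat p^m=\widehat p$ there, playing $a^E$ from $\suppspie$ stays in $\suppspie$; hence the identity specializes to $Q^{\pi^E}_h(s,a^E;\widehat p,\widehat r)=Q^{\pi^E}_h(s,a^E;p,r)$ for $(s,a^E,h)\in\suppsapie$. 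Subtracting the two instances and invoking $r\in\super$ then yields, for every $(s,h)\in\suppspie$ and $a\neq a^E$,
\[
    Q^{\pi^E}_h(s,a^E;\widehat p,\widehat r)-Q^{\widehat\pi^m}_h(s,a;\widehat p^m,\widehat r)=Q^{\pi^E}_h(s,a^E;p,r)-Q^{\pi^m}_h(s,a;p^m,r)\ge 0,
\]
which is precisely the membership condition for $\widehat r\in\estsuper$ under $\mathcal E$ (where $\estsuppsapib=\suppsapib$, $\estsuppspie=\suppspie$, and $\widehat\pi^E=\pi^E$ on the support).

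To prove the central identity I would first treat $(s,a,h)\in\suppsapib$: there $p^m,\widehat p^m$ agree with $p,\widehat p$, so rearranging the definition of $\widehat r_h(s,a)$ gives $\widehat r_h(s,a)+\sum_{s'}\widehat p_h(s'|s,a)V^{\widehat\pi^m}_{h+1}(s';\widehat p^m,\widehat r)=r_h(s,a)+\sum_{s'}p_h(s'|s,a)V^{\pi^m}_{h+1}(s';p^m,r)$, i.e.\ the $Q$-identity (the exact analogue of Eq.~\eqref{eq: equality Q subset irlo d}). Then I would cover $(s,a,h)\notin\suppsapib$ by backward induction on $h$, with base case $h=H$ immediate from $\widehat r_H(s,a)=r_H(s,a)$. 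In the inductive step the defining property of $\widehat p^m$ (resp.\ $p^m$) places all mass on the next state minimizing $V^{\widehat\pi^m}_{h+1}$ (resp.\ $V^{\pi^m}_{h+1}$), so $Q^{\widehat\pi^m}_h(s,a;\widehat p^m,\widehat r)=r_h(s,a)+\min_{s'\in\S}V^{\widehat\pi^m}_{h+1}(s';\widehat p^m,\widehat r)$; splitting this minimum over $s'\in\suppspib_{h+1}$ and $s'\notin\suppspib_{h+1}$ and applying the on-support identity to the former and the inductive hypothesis to the latter recovers $r_h(s,a)+\min_{s'\in\S}V^{\pi^m}_{h+1}(s';p^m,r)=Q^{\pi^m}_h(s,a;p^m,r)$, closing the induction.

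The only delicate point — and the main obstacle — is passing from the stage-$(h{+}1)$ $Q$-identity to the $V$-identity $V^{\widehat\pi^m}_{h+1}(s';\widehat p^m,\widehat r)=V^{\pi^m}_{h+1}(s';p^m,r)$ used inside the minimum. This is handled exactly as in the subset case: both $\pi^m$ and $\widehat\pi^m$ play the expert action on $\suppspie$ and are greedy with respect to their now pointwise-equal $Q$-functions off it, so the attained values coincide state by state at stage $h{+}1$ irrespective of how ties in the $\argmax$ are broken, and no literal matching of the greedy actions is needed. Finally, the converse statement (from $\widehat r\in\estsuper$ build $r\in\super$) follows from the identical argument with the roles of $(p,\pi^m,r)$ and $(\widehat p,\widehat\pi^m,\widehat r)$ interchanged, which is legitimate because the construction of $\widehat r$ is symmetric in $p$ and $\widehat p$.
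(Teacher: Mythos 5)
Your proposal matches the paper's proof essentially step for step: the paper likewise establishes the pointwise identity $Q^{\widehat\pi^m}_h(s,a;\widehat p^m,\widehat r)=Q^{\pi^m}_h(s,a;p^m,r)$ by rearranging the definition of $\widehat r$ on $\suppsapib$, specializes it to the expert action via Assumption~\ref{assumption: coverage of behavioral policy}, and closes the off-support cases by backward induction with exactly your split of the minimum over $s'\in\suppspib_{h+1}$ versus $s'\notin\suppspib_{h+1}$, obtaining the converse by exchanging $p$ and $\widehat p$. No substantive differences.
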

\begin{proof}
    The idea of the proof is analogous to that of Lemma \ref{lemma: error propagation irlo subset}, and is reported here for completeness. We aim to show that $Q^{\pi^m}_h(s,a;p^m,r)=Q^{\widehat{\pi}^m}_h(s,a;\widehat{p}^m,\widehat{r})$ for all $(s,a,h)\in\SAH$. Indeed, in this way, since $r\in\super$, then
    it holds that:
    \begin{align*}
        \forall (s,h)\in \suppspie,\forall a\in\A\setminus\{a^E\}:
        Q^{\pi^E}_h(s,a^E;\widehat{p},\widehat{r})- Q^{\widehat{\pi}^m}_h(s,a;\widehat{p}^m,\widehat{r})=Q^{\pi^E}_h(s,a^E;p,r)-Q^{\pi^m}_h(s,a;p^m,r)\ge 0.
    \end{align*}

    Let us begin with any $(s,a,h)\in\suppsapib$. By definition of $\widehat{r}$ and by rearranging the terms, we obtain:
    \begin{align}\label{eq: equality Q superset irlo d}
        &\widehat{r}_h(s,a)+\sum\limits_{s'\in\S}\widehat{p}_h(s'|s,a)
            V^{\widehat{\pi}^m}_{h+1}(s';\widehat{p}^m,\widehat{r})=r_h(s,a)+\sum\limits_{s'\in\S}
            p_h(s'|s,a)V^{\pi^m}_{h+1}(s';p^m,r)\notag\\
        &\iff
        Q^{\widehat{\pi}^m}_h(s,a;\widehat{p}^m,\widehat{r})=Q^{\pi^m}_h(s,a;p^m,r).
    \end{align}
    In particular, observe that, by Assumption \ref{assumption: coverage of behavioral policy}, it holds $\suppsapie\subseteq\suppsapib$; moreover, by definition of $\widehat{p}^m$ and $p^m$, playing an expert's action from $\suppspie$ brings again into $\suppspie$, therefore, for $(s,a^E,h)\in\suppsapie$, this means:
    \begin{align*}
    Q^{\pi^E}_h(s,a^E;\widehat{p},\widehat{r})=Q^{\pi^E}_h(s,a^E;p,r).
    \end{align*}
    
    Now, we show by induction that, for any $(s,a,h)\notin\suppsapib$,
    it holds that:
    \begin{align*}
    Q^{\widehat{\pi}^m}_h(s,a;\widehat{p}^m,\widehat{r})=Q^{\pi^m}_h(s,a;p^m,r).
    \end{align*}
    As case base, consider stage $H$. Clearly, for any $(s,a)\notin\suppsapib_H$, we have:
    \begin{align*}
        Q^{\widehat{\pi}^m}_H(s,a;\widehat{p}^m,\widehat{r})&=\widehat{r}_H(s,a)\\
        &=r_H(s,a)\\
        &=Q^{\pi^m}_h(s,a;p^m,r),
    \end{align*}
    where we have used the definition of $\widehat{r}$.
    Make the inductive hypothesis that, at stage $h+1$, for any
    $(s,a)\notin\suppsapib_{h+1}$,
    it holds that
    $Q^{\widehat{\pi}^m}_{h+1}(s,a;\widehat{p}^m,\widehat{r})=Q^{\pi^m}_{h+1}(s,a;p^m,r)$, and consider stage $h$:
    \begin{align*}
        Q^{\widehat{\pi}^m}_h(s,a;\widehat{p}^m,\widehat{r})&=
        \widehat{r}_h(s,a)+\sum\limits_{s'\in\S}\widehat{p}^m_h(s'|s,a)
        V_{h+1}^{\widehat{\pi}^m}(s';\widehat{p}^m,\widehat{r})\\
        &\markref{(1)}{=}
        \popblue{r_h(s,a)}+\sum\limits_{s'\in\S}\widehat{p}^m_h(s'|s,a)
        V_{h+1}^{\widehat{\pi}^m}(s';\widehat{p}^m,\widehat{r})\\
        &\markref{(2)}{=}
        r_h(s,a)+\min\limits_{s'\in\S}
        V_{h+1}^{\widehat{\pi}^m}(s';\widehat{p}^m,\widehat{r})\\
        &=
        r_h(s,a)+\min\bigl\{\min\limits_{s'\in\suppspib_{h+1}}
        V_{h+1}^{\widehat{\pi}^m}(s';\widehat{p}^m,\widehat{r}),
        \min\limits_{s'\notin\suppspib_{h+1}}
        V_{h+1}^{\widehat{\pi}^m}(s';\widehat{p}^m,\widehat{r})
        \bigr\}\\
        &\markref{(3)}{=}
        r_h(s,a)+\min\bigl\{\min\limits_{s'\in\suppspib_{h+1}}
        V_{h+1}^{\popblue{\pi^m}}(s';\popblue{p^m,r}),
        \min\limits_{s'\notin\suppspib_{h+1}}
        V_{h+1}^{\widehat{\pi}^m}(s';\widehat{p}^m,\widehat{r})
        \bigr\}\\
        &\markref{(4)}{=}
        r_h(s,a)+\min\bigl\{\min\limits_{s'\in\suppspib_{h+1}}
        V_{h+1}^{\pi^m}(s';p^m,r),
        \min\limits_{s'\notin\suppspib_{h+1}}
        V_{h+1}^{\popblue{\pi^m}}(s';\popblue{p^m,r})
        \bigr\}\\
        &=
        r_h(s,a)+\max\limits_{s'\in\S}
        V_{h+1}^{\pi^m}(s';p^m,r)\\
        &=Q_h^{\pi^m}(s,a;p^m,r),
    \end{align*}
    where at (1) we use the definition of $\hat{r}$ outside $\suppsapib$,
    at (2) we use the definition of $\widehat{p}^m$,    
    at (3) we use Eq. \ref{eq: equality Q superset irlo d},
    and at (4) we use the inductive hypothesis.

    Notice that the same passages can be carried out if we exchanged $p$ and $\widehat{p}$.
    This concludes the proof.
\end{proof}
From the proofs of Lemma \ref{lemma: error propagation irlo subset} and Lemma
\ref{lemma: error propagation irlo superset}, we notice that proving the result for the superset is ``easier'', because we simply have to consider a single transition model; instead, for the subset, we have to consider all the transition models in the equivalence
class. Luckily, we can single out a ``worst'' transition model from this class and
provide the proof only for it. We will see in the proofs of the results with pessimism
how to cope with the trickier problem in which there exist many ``worst'' transition models, and thus the recursion cannot be applied directly.

Thanks to Lemma \ref{lemma: error propagation irlo subset} and Lemma \ref{lemma: error propagation irlo superset},
we can upper bound the distance between sets of rewards by a term that depends on the distance between the transition models. We do not have error for the policy because, under good event $\mathcal{E}$, we have that $\widehat{\pi}^E=\pi^E$ in $\estsuppspie=\suppspie$.
\begin{lemma}[Performance Decomposition Subset and Superset]\label{lemma: performance decomposition irlo}
    Let $\widehat{\R}^\cap\coloneqq\estsub$
    and $\widehat{\R}^\cup\coloneqq\estsuper$
    be the output of IRLO (Algorithm \ref{alg:irlo}).
    Under the good event $\mathcal{E}$, it holds that:
    \begin{align*}
        \mathcal{H}_d(\sub, \widehat{\R}^\cap)
        \le
        H\sum\limits_{h\in\dsb{H}}\E\limits_{(s,a)\sim\rho^{p,\pi^b}_h(\cdot,\cdot)}
        b_h(s,a),
    \end{align*}
    and:
    \begin{align*}
        \mathcal{H}_d(\super, \widehat{\R}^\cup)
        \le
        H\sum\limits_{h\in\dsb{H}}\E\limits_{(s,a)\sim\rho^{p,\pi^b}_h(\cdot,\cdot)}
        b_h(s,a).
    \end{align*}
\end{lemma}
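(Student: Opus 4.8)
The plan is to reduce each Hausdorff distance to a bound on the inner semimetric $d$ evaluated on the specific reward pairs furnished by the reward-choice lemmas, and then to control that semimetric through the $\ell_1$ deviation of the estimated transition model, which is governed by the good event $\mathcal{E}$. I would treat $\mathcal{H}_d(\sub,\widehat{\R}^\cap)$ first, the superset case being identical after swapping the relevant lemma. Recall that $\mathcal{H}_d$ is the maximum of the two sup--inf terms of Definition~\ref{def: Hausdorff distance}. For the term $\sup_{r\in\sub}\inf_{\widehat r\in\widehat{\R}^\cap} d(r,\widehat r)$, I fix $r\in\sub$ and take the companion reward $\widehat r\in\estsub=\widehat{\R}^\cap$ produced by Lemma~\ref{lemma: error propagation irlo subset}; then $\inf_{\widehat r'\in\widehat{\R}^\cap} d(r,\widehat r')\le d(r,\widehat r)$, so it suffices to bound $d(r,\widehat r)$. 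The reverse sup--inf term is handled symmetrically using the converse direction of Lemma~\ref{lemma: error propagation irlo subset} (each $\widehat r\in\estsub$ yields a corresponding $r\in\sub$), so the same estimate bounds the full maximum.

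The core step is the bound on $d(r,\widehat r)$ for the constructed pair. By construction, $\widehat r_h(s,a)=r_h(s,a)$ for $(s,a,h)\notin\suppsapib$, so such triples contribute zero both to the $\ell_\infty$ term outside $\suppsapib$ (which therefore vanishes) and to the $\rho^{p,\pi^b}$-expectation (whose support lies in $\suppsapib$). For $(s,a,h)\in\suppsapib$, the proof of Lemma~\ref{lemma: error propagation irlo subset} establishes $Q^{\pi^M}_h(s,a;p^M,r)=Q^{\widehat\pi^M}_h(s,a;\widehat p^M,\widehat r)$ for all $(s,a,h)$, whence the associated value functions coincide, $V^{\pi^M}_{h+1}(s';p^M,r)=V^{\widehat\pi^M}_{h+1}(s';\widehat p^M,\widehat r)=:V_{h+1}(s')$ for every $s'$ (within $\suppspie$ both equal the expert-action $Q$-value, outside both equal $\max_a Q$). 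Substituting into the definition of $\widehat r$ and applying Hölder,
\begin{align*}
|r_h(s,a)-\widehat r_h(s,a)| = \Bigl|\sum_{s'}\bigl(p_h(s'|s,a)-\widehat p_h(s'|s,a)\bigr)V_{h+1}(s')\Bigr| \le \|p_h(\cdot|s,a)-\widehat p_h(\cdot|s,a)\|_1\,\|V_{h+1}\|_\infty.
\end{align*}

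Since $\|V_{h+1}\|_\infty\le H\|r\|_\infty\le H\,M(r,\widehat r)$, and since under $\mathcal{E}_3$ Pinsker's inequality gives $\|p_h(\cdot|s,a)-\widehat p_h(\cdot|s,a)\|_1\le\sqrt{2\,KL(\widehat p_h(\cdot|s,a)\|p_h(\cdot|s,a))}\le b_h(s,a)$ (using $N_h^b(s,a)\ge 1$ from $\mathcal{E}_1$), I obtain $|r_h(s,a)-\widehat r_h(s,a)|\le H\,M(r,\widehat r)\,b_h(s,a)$ on $\suppsapib$. Plugging this into Definition~\ref{def: metrics d dinf}, the $\ell_\infty$ term is zero, the normalization $1/M(r,\widehat r)$ cancels, and the expectation (supported on $\suppsapib$) yields $d(r,\widehat r)\le H\sum_{h}\E_{(s,a)\sim\rho^{p,\pi^b}_h}b_h(s,a)$, the claimed bound. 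The superset statement follows verbatim with Lemma~\ref{lemma: error propagation irlo superset} and $(\pi^m,p^m)$ in place of $(\pi^M,p^M)$.

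I expect the main obstacle to be the careful bookkeeping guaranteeing that the constructed pair is genuinely admissible and that the $Q$- and $V$-functions coincide \emph{exactly} rather than merely approximately: the reward $\widehat r$ is defined implicitly backward in $h$, so one must invoke its well-posedness and the identity $\widehat\pi^E=\pi^E$ on $\suppspie$ (both secured by events $\mathcal{E}_1,\mathcal{E}_2$ and the reward-choice lemmas) so that the cross terms cancel and the single-transition-model factorization above is valid. Once this is in place, the analytic estimate via Hölder and Pinsker is routine.
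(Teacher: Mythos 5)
Your proposal is correct and follows essentially the same route as the paper's proof: fix the companion reward from the reward-choice lemmas so that the $\ell_\infty$ term outside $\suppsapib$ vanishes, use the exact coincidence of the $Q$- (hence $V$-) functions to express $r_h(s,a)-\widehat r_h(s,a)$ as $\sum_{s'}(p_h-\widehat p_h)V_{h+1}(s')$, and then combine the H\"older/triangle bound $\|p_h-\widehat p_h\|_1\cdot H M(r,\widehat r)$ with Pinsker and event $\mathcal{E}_3$ to obtain $H\,b_h(s,a)$. The only cosmetic difference is that you invoke H\"older directly where the paper brings the absolute value inside the sum termwise; the estimates are identical.
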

\begin{proof}
    By Definition \ref{def: Hausdorff distance}, we can write:
    \begin{align*}
        \mathcal{H}(\sub,\widehat{\R}^\cap)&\coloneqq
        \max\{\sup\limits_{r\in \sub}
            \inf\limits_{\widehat{r}\in \widehat{\R}^\cap}d(r,\widehat{r}),
            \sup\limits_{\widehat{r}\in \widehat{\R}^\cap}
            \inf\limits_{r\in \sub}d(r,\widehat{r})\}\\
            &\markref{(1)}{\le}
            \max\{\sup\limits_{r\in \sub}
            d(r,\popblue{\widetilde{r}^1}),
            \sup\limits_{\widehat{r}\in \widehat{\R}^\cap}
            d(\popblue{\widetilde{r}^2},\widehat{r})\}\\
            &\markref{(2)}{=}
            \max\Biggl\{\sup\limits_{r\in \sub}
            \frac{1}{M}\sum\limits_{h\in\dsb{H}}\biggl(\E\limits_{(s,a)\sim\rho_h^{p,\pi^b}(\cdot,\cdot)}
            \bigl|r_h(s,a)-\widetilde{r}^1_h(s,a)\bigr|+
            \underbrace{\max\limits_{(s,a)\notin \suppsahpib}\bigl|
                    r_h(s,a)-\widetilde{r}^1_h(s,a)
                \bigr|}_{=0}\biggr),\\
            &\quad\quad\quad
            \sup\limits_{\widehat{r}\in \widehat{\R}^\cap}
            \frac{1}{M}\sum\limits_{h\in\dsb{H}}\biggl(
            \E\limits_{(s,a)\sim\rho_h^{p,\pi^b}(\cdot,\cdot)}
            \bigl|\widehat{r}_h(s,a)-\widetilde{r}^2_h(s,a)\bigr|+
            \underbrace{\max\limits_{(s,a)\notin \suppsahpib}\bigl|
                    \widehat{r}_h(s,a)-\widetilde{r}^2_h(s,a)
                \bigr|}_{=0}\biggr)
            \Biggr\}\\
            &\markref{(3)}{=}
            \max\biggl\{\sup\limits_{r\in \sub}
            \frac{1}{M}\sum\limits_{h\in\dsb{H}}\E\limits_{(s,a)\sim\rho_h^{p,\pi^b}(\cdot,\cdot)}
            \bigl|\sum\limits_{s'\in\S}
            (p_h(s'|s,a)-\widehat{p}_h(s'|s,a))
            V^{\pi^M}_{h+1}(s';p^M,r)\bigr|,\\
            &\quad\quad\quad
            \sup\limits_{\widehat{r}\in \widehat{\R}^\cap}
            \frac{1}{M}\sum\limits_{h\in\dsb{H}}\E\limits_{(s,a)\sim\rho_h^{p,\pi^b}(\cdot,\cdot)}
            \bigl|\sum\limits_{s'\in\S}
            (\widehat{p}_h(s'|s,a)-p_h(s'|s,a))
            V^{\widehat{\pi}^M}_{h+1}(s';\widehat{p}^M,\widehat{r})\bigr|
            \biggr\}\\
            &\markref{(4)}{\le}
            \max\biggl\{\sup\limits_{r\in \sub}
            \frac{1}{M}\sum\limits_{h\in\dsb{H}}\E\limits_{(s,a)\sim\rho_h^{p,\pi^b}(\cdot,\cdot)}
            \sum\limits_{s'\in\S}\popblue{\bigl|}
            (p_h(s'|s,a)-\widehat{p}_h(s'|s,a))
            V^{\pi^M}_{h+1}(s';p^M,r)\popblue{\bigr|},\\
            &\quad\quad\quad
            \sup\limits_{\widehat{r}\in \widehat{\R}^\cap}
            \frac{1}{M}\sum\limits_{h\in\dsb{H}}\E\limits_{(s,a)\sim\rho_h^{p,\pi^b}(\cdot,\cdot)}
            \sum\limits_{s'\in\S}\popblue{\bigl|}
            (\widehat{p}_h(s'|s,a)-p_h(s'|s,a))
            V^{\widehat{\pi}^M}_{h+1}(s';\widehat{p}^M,\widehat{r})\popblue{\bigr|}
            \biggr\}\\
            &\markref{(5)}{\le}
            \max\biggl\{\sup\limits_{r\in \sub}
            \sum\limits_{h\in\dsb{H}}\E\limits_{(s,a)\sim\rho_h^{p,\pi^b}(\cdot,\cdot)}
            \sum\limits_{s'\in\S}\bigl|
            (p_h(s'|s,a)-\widehat{p}_h(s'|s,a))
            \popblue{H}\bigr|,\\
            &\quad\quad\quad
            \sup\limits_{\widehat{r}\in \widehat{\R}^\cap}
            \sum\limits_{h\in\dsb{H}}\E\limits_{(s,a)\sim\rho_h^{p,\pi^b}(\cdot,\cdot)}
            \sum\limits_{s'\in\S}\bigl|
            (\widehat{p}_h(s'|s,a)-p_h(s'|s,a))
            \popblue{H}\bigr|
            \biggr\}\\
            &=
            \sum\limits_{h\in\dsb{H}}\E\limits_{(s,a)\sim\rho_h^{p,\pi^b}(\cdot,\cdot)}
            \sum\limits_{s'\in\S}\bigl|
            (p_h(s'|s,a)-\widehat{p}_h(s'|s,a))
            H\bigr|\\
            &=
            \sum\limits_{h\in\dsb{H}}\E\limits_{(s,a)\sim\rho_h^{p,\pi^b}(\cdot,\cdot)}
            H\bigl\|
            p_h(\cdot|s,a)-\widehat{p}_h(\cdot|s,a)
            \bigr\|_1\\
            &\markref{(6)}{\le}
            \sum\limits_{h\in\dsb{H}}\E\limits_{(s,a)\sim\rho_h^{p,\pi^b}(\cdot,\cdot)}
            H\sqrt{2KL\bigl(p_h(\cdot|s,a)\|\widehat{p}_h(\cdot|s,a)\bigr)}\\
            &\markref{(7)}{\le}
            \sum\limits_{h\in\dsb{H}}\E\limits_{(s,a)\sim\rho_h^{p,\pi^b}(\cdot,\cdot)}
            H\sqrt{2 \frac{\beta(N_h^b(s,a),\delta)}{N_h^b(s,a)}}\\
            &=
            H\sum\limits_{h\in\dsb{H}}\E\limits_{(s,a)\sim\rho_h^{p,\pi^b}(\cdot,\cdot)}
            b_h(s,a),
    \end{align*}
    where at (1) we have applied Lemma \ref{lemma: error propagation irlo subset},
    denoting by $\widetilde{r}^1\in\widehat{\R}^\cap$ and $\widetilde{r}^2\in\sub$ the choices of rewards,
    and used that $\inf_{x\in\X} f(x)\le f(\bar{x})$ for any $\bar{x}\in\X$;
    at (2) we have used the definition of distance $d$, 
    at (3) we have inserted the definitions of $\widetilde{r}^1$ and $\widetilde{r}^2$
    as provided by Lemma \ref{lemma: error propagation irlo subset},
    in particular using that $Q^{\pi^M}_h(s,a;p^M,r)=Q^{\widehat{\pi}^M}_h(s,a;\widehat{p}^M,\widehat{r})$,
    at (4) we have applied triangle inequality, to bring the absolute value inside
    the summation,
    at (5) we upper bound
    $ V^{\pi^M}_{h+1}(s';p^M,r)\le H\|r\|_\infty$ and
    $V^{\widehat{\pi}^M}_{h+1}(s';\widehat{p}^M,\widehat{r})\le H\|\widehat{r}\|_\infty$,
    and since $M\coloneqq1/\max\{\|r\|_\infty,\|\widehat{r}\|_\infty\}$, we obtain $H\|r\|_\infty/M\le H$
    and $H\|\widehat{r}\|_\infty/M\le H$; at (6) we have applied Pinsker's inequality,
    and at (7) we have applied the bound of $\mathcal{E}_3\supseteq\mathcal{E}$
    noticing that $N_h^b(s,a)\ge 1$ because of event $\mathcal{E}_1\supseteq\mathcal{E}$.

    A similar procedure can be carried out also for the supersets, using Lemma \ref{lemma: error propagation irlo superset}
    instead of Lemma \ref{lemma: error propagation irlo subset}.
\end{proof}
Finally, we have all the tools we need to prove Theorem \ref{theorem: upper bound d irlo}.
\upperboundirlo*
\begin{proof}
First, observe that, thanks to Lemma \ref{lemma: concentration support datasets} and Lemma \ref{lemma: concentration}, we have that
    good event $\mathcal{E}$ holds w.p. $1-\delta$
    with a number of trajectories $\tau^E$ and $\tau^b$ upper bounded as in events $\mathcal{E}_1$ and $\mathcal{E}_2$.
    Now, under good event $\mathcal{E}$,
    the idea of the proof is to compute the number of trajectories $\tau^b$ needed to
    have a distance between sets of rewards smaller than $\epsilon$.
    Then, we combine it with the number of trajectories required by event $\mathcal{E}$
    through $\max\{a,b\}\le a+b$ for $a,b\ge 0$.

    Let us begin with the subset. Thanks to Lemma \ref{lemma: performance decomposition irlo},
    we can write:
    \begin{align*}
        \mathcal{H}_d(\sub, \widehat{\R}^\cap)
        &\le
        H\sum\limits_{h\in\dsb{H}}\E\limits_{(s,a)\sim\rho^{p,\pi^b}_h(\cdot,\cdot)}
        b_h(s,a)\\
        &= 
        H\sum\limits_{h\in\dsb{H}}\sum\limits_{(s,a)\in \suppsahpib}
        \rho^{p,\pi^b}_h(s,a)b_h(s,a)\\
        &=
        H\sum\limits_{h\in\dsb{H}}\sum\limits_{(s,a)\in \suppsahpib}
        \rho^{p,\pi^b}_h(s,a)
        \sqrt{2\frac{\beta(N^b_h(s,a),\delta)}{N^b_h(s,a)}}\\
        &\markref{(1)}{\le}
        \sqrt{2}H\sum\limits_{h\in\dsb{H}}\sum\limits_{(s,a)\in \suppsahpib}
        \rho^{p,\pi^b}_h(s,a)
        \sqrt{\frac{\beta(\tau^b,\delta)}{N^b_h(s,a)}}\\
        &=
        \sqrt{2\beta(\tau^b,\delta)}H\sum\limits_{h\in\dsb{H}}\sum\limits_{(s,a)\in \suppsahpib}
        \rho^{p,\pi^b}_h(s,a)
        \sqrt{\frac{1}{N^b_h(s,a)}}\\
        &\markref{(2)}{\le}
        \sqrt{2\beta(\tau^b,\delta)}H
        \sum\limits_{h\in\dsb{H}}\sum\limits_{(s,a)\in \suppsahpib}
        \rho^{p,\pi^b}_h(s,a)\sqrt{c_4 \frac{\ln\frac{|\suppsapib|}{\delta}}
        {\tau^b \rho_h^{p,\pi^b}(s,a)}}\\
        &\markref{(3)}{=}
        c_5\sqrt{\frac{\beta(\tau^b,\delta)\ln\frac{|\suppsapib|}{\delta}}{\tau^b}}H
        \sum\limits_{h\in\dsb{H}}\sum\limits_{(s,a)\in \suppsahpib}
        \sqrt{\rho_h^{p,\pi^b}(s,a)}\\
        &\markref{(4)}{\le}
        c_5\sqrt{\frac{\beta(\tau^b,\delta)\ln\frac{|\suppsapib|}{\delta}}{\tau^b}}H
        \sum\limits_{h\in\dsb{H}}\sqrt{|\suppsahpib|}
        \sqrt{\sum_{(s,a)\in\suppsahpib}\rho_h^{p,\pi^b}(s,a)}\\
        &=
        c_5\sqrt{\frac{\beta(\tau^b,\delta)\ln\frac{|\suppsapib|}{\delta}}{\tau^b}}H
        \sum\limits_{h\in\dsb{H}}\sqrt{|\suppsahpib|}\\
        &\markref{(5)}{\le}
        c_5\sqrt{\frac{\beta(\tau^b,\delta)\ln\frac{|\suppsapib|}{\delta}}{\tau^b}}H
        \sqrt{H |\suppsapib|}\le\epsilon,
    \end{align*}
    where at (1) we have used that $\tau^b\ge N^b_h(s,a)$ for all $(s,a,h)\in \suppsapib$, and that function $\beta(\cdot,\delta)$
    is monotonically increasing; at (2) we have applied the result in Lemma \ref{lemma: concentration} for
    event $\mathcal{E}_4$, at (3) we define constant $c_5\coloneqq\sqrt{2c_4}$, and at (4) and (5)
    we have applied the Cauchy-Schwarz's inequality.

    To compute an upper bound to the number of trajectories required to have
    $\mathcal{H}_d(\sub, \widehat{\R}^\cap)\le \epsilon$,
    we compute the smallest $\tau^b$ that satisfies:
    \begin{align*}
        &c_5\sqrt{\frac{\beta(\tau^b,\delta)\ln\frac{|\suppsapib|}{\delta}}{\tau^b}}H
        \sqrt{H |\suppsapib|}\le\epsilon.
    \end{align*}
    By using the definition of $\beta(\tau^b,\delta)$
    from Eq. \ref{eq: definition beta} and rearranging the terms, this is equivalent
    to finding the smallest $\tau^b$ such that:
    \begin{align*}
        \tau^b \ge c_6\frac{H^3|\suppsapib|\ln\frac{|\suppsapib|}
        {\delta}\bigl(\ln\frac{4|\suppsapib|}{\delta}+
        (|\suppspibmax|-1)\ln\bigl(
            e(1+\tau^b/(|\suppspibmax|-1))
        \bigr)\bigr)}
        {\epsilon^2},
    \end{align*}
    where $c_6\coloneqq c_5^2$.
    If we define:
    \begin{align*}
        a&\coloneqq c_6\frac{H^3|\suppsapib|\ln\frac{|\suppsapib|}
        {\delta}\ln\frac{4|\suppsapib|}{\delta}}
        {\epsilon^2},\\
        b&\coloneqq c_6\frac{H^3|\suppsapib|\ln\frac{|\suppsapib|}
        {\delta}(|\suppspibmax|-1)
        }{\epsilon^2},\\
        c&\coloneqq \frac{e}{|\suppspibmax|-1},\\
        d&\coloneqq e,\\
    \end{align*}
    then we can rewrite the previous expression as:
    \begin{align*}
        \tau^b\ge a+b\ln (c\tau^b+d).
    \end{align*}
    To solve it, we can notice that
    $a,b,c,d> 0$ and $2bc>e$, thus we can apply Lemma \ref{lemma: lambert}\footnote{
        It should be remarked that the adoption of Lemma 15 of \cite{kaufmann2021adaptive}
        provides the same asymptotical dependence on the quantities of interest.
        However, Lemma \ref{lemma: lambert} is more concise.
    } to obtain:
    \begin{align*}
        \tau^b\ge 2a+3b\ln(2bc)+d/c.
    \end{align*}
    Replacing $a,b,c,d$ with their values, we get:
    \begin{align*}
        \tau^b&\ge 2c_6\frac{H^3|\suppsapib|\ln\frac{|\suppsapib|}
        {\delta}\ln\frac{4|\suppsapib|}{\delta}}
        {\epsilon^2}+\bigl(|\suppspibmax|-1\bigr)\\
        &\qquad+3c_6\frac{H^3|\suppsapib|\ln\frac{|\suppsapib|}
        {\delta}(|\suppspibmax|-1)
        }{\epsilon^2}\ln\biggl(
            2c_6e\frac{H^3|\suppsapib|\ln\frac{|\suppsapib|}
        {\delta}
        }{\epsilon^2}
        \biggr)\\
        &\le\widetilde{\mathcal{O}}\Biggl(
            \frac{H^3|\suppsapib|\ln\frac{1}
        {\delta}}{\epsilon^2}\biggl(
            \ln\frac{1}
        {\delta}+|\suppspibmax|
        \biggr)
        \Biggr).
    \end{align*}
    Now, observe that an upper bound to the number of trajectories needed to have
    $\mathcal{H}_d(\super, \widehat{\R}^\cup)\le \epsilon$
    can be obtained with an identical derivation, and, thus, it is of the same order.
    The statement of the theorem follows by the considerations at the beginning of the proof.
\end{proof}
We provide here the proof of Theorem \ref{theorem: upper bound d infty irlo}.
The proof is analogous to that of Theorem \ref{theorem: upper bound d irlo}.
\upperbounddinftyirlo*
\begin{proof}[Sketch of proof]
    The proof is exactly the same as that of Theorem \ref{theorem: upper bound d irlo}.
    First, observe that it is possible to prove a lemma analogous to Lemma \ref{lemma: performance decomposition irlo}, so that:
    \begin{align*}
        \mathcal{H}_\infty(\sub, \widehat{\R}^\cap)
        &\le H\sum\limits_{h\in\dsb{H}}\max\limits_{(s,a)\in\suppsahpib} b_h(s,a).
    \end{align*}
    Next, when applying Lemma \ref{lemma: binomial concentration}, we simply notice that, for all $h\in\dsb{H}$:
    \begin{align*}
        \max\limits_{(s,a)\in\suppsahpib} \sqrt{\frac{1}{\rho_h^{p,\pi^b}(s,a)}}
        \le\sqrt{\frac{1}{\rhominpib}}.
    \end{align*}
    We can do the same also for the superset. Following the derivation in the proof of Theorem \ref{theorem: upper bound d irlo}, the result can be obtained.
\end{proof}

\subsection{Proof of Theorem \ref{theorem: upper bound d pirlo} and Theorem \ref{theorem: upper bound d infty pirlo}}\label{subsec: pirlo d}
We will denote by $a^E\coloneqq \pi^E_h(s)$ for all $(s,h)\in\suppspie$, the expert's action.
Given any reward $r\in\mathfrak{R}$, it is useful to define (recursively) the transition models $\widetilde{p}^M$ and $\widetilde{p}^m$ as:
\begin{align}\label{eq: def ptildeM and ptildem}
\begin{split}
    \widetilde{p}^M&\coloneqq\begin{cases}
    \widetilde{p}^M_h(\cdot|s,a)=\argmax\limits_{\substack{p': \|p'_h(\cdot|s,a)-\widehat{p}_h(\cdot|s,a)\|_1\le b_h(s,a)\\
    \wedge \forall s'\notin \suppspie_{h+1}:\, p'_h(s'|s,a)=0}}
    \E\limits_{s'\sim p_h'(\cdot|s,a)}
    V^{\widetilde{\pi}^M}_{h+1}(s';\widetilde{p}^M,r)
    \}
    ,\;\text{if } (s,a,h)\in \suppsapib\\
    \widetilde{p}^M_h(\cdot|s,a)=\mathbbm{1}\{\cdot=\argmax\limits_{
    s'\in\S}V^{\widetilde{\pi}^M}_{h+1}(s';\widetilde{p}^M,r)
    \},\;\text{if } (s,a,h)\notin \suppsapib\\
\end{cases},\\
\widetilde{p}^m&\coloneqq\begin{cases}
    \widetilde{p}^m_h(\cdot|s,a)=\argmin\limits_{\substack{p': \|p'_h(\cdot|s,a)-\widehat{p}_h(\cdot|s,a)\|_1\le b_h(s,a)\\
    \wedge \forall s'\notin \suppspie_{h+1}:\, p'_h(s'|s,a)=0}}
    \E\limits_{s'\sim p_h'(\cdot|s,a)}
    V^{\widetilde{\pi}^m}_{h+1}(s';\widetilde{p}^m,r)
    \}
    ,\;\text{if } (s,a,h)\in \suppsapib\\
    \widetilde{p}^m_h(\cdot|s,a)=\mathbbm{1}\{\cdot=\argmin\limits_{
    s'\in\S}V^{\widetilde{\pi}^m}_{h+1}(s';\widetilde{p}^m,r)
    \},\;\text{if } (s,a,h)\notin \suppsapib\\
\end{cases},
\end{split}
\end{align}
where we have used the following (recursive) policy definitions $\widetilde{\pi}^M,\widetilde{\pi}^m\in\eqclasspie{\pi^E}$:
\begin{align}\label{eq: def pitildeM and pitildem}
\begin{split}
    \widetilde{\pi}^M&\coloneqq\begin{cases}
    \widetilde{\pi}^M_h(s)=\pi^E_h(s)
    ,\;\text{if } (s,h)\in \suppspie\\
    \widetilde{\pi}^M_h(\cdot|s)=\mathbbm{1}\{\cdot=\argmax\limits_{a\in\A}Q^{\widetilde{\pi}^M}_{h}(s,a;\widetilde{p}^M,r)
    \},\;\text{if } (s,h)\notin \suppspie\\
\end{cases},\\
\widetilde{\pi}^m&\coloneqq\begin{cases}
    \widetilde{\pi}^m_h(s)=\pi^E_h(s)
    ,\;\text{if } (s,h)\in \suppspie\\
    \widetilde{\pi}^m_h(\cdot|s)=\mathbbm{1}\{\cdot=\popblue{\argmax\limits_{a\in\A}}Q^{\widetilde{\pi}^m}_{h}(s,a;\widetilde{p}^m,r)
    \},\;\text{if } (s,h)\notin \suppspie
\end{cases}.
\end{split}
\end{align}
Thanks to these definitions, we can rewrite $\subrelax$ and $\superrelax$ as:
\begin{align}\label{eq: representation relaxations with ptildem ptildeM}
\begin{split}
        \subrelax &=\{
    r\in\mathfrak{R}\,|\,
    \forall (s,h)\in \suppspie,\forall a\in\A\setminus\{a^E\}:
    Q^{\pi^E}_h(s,a^E;\popblue{\widetilde{p}^m},r)\ge Q^{\widetilde{\pi}^M}_h(s,a;\popblue{\widetilde{p}^M},r)
    \},\\
    \superrelax &=\{
    r\in\mathfrak{R}\,|\,
    \forall (s,h)\in \suppspie,\forall a\in\A\setminus\{a^E\}:
    Q^{\pi^E}_h(s,a^E;\popblue{\widetilde{p}^M},r)\ge Q^{\widetilde{\pi}^m}_h(s,a;\popblue{\widetilde{p}^m},r)
    \}.
\end{split}
\end{align}
Both Theorem \ref{theorem: upper bound d pirlo} and
Theorem \ref{theorem: upper bound d infty pirlo} uses $d_\infty$ instead of $d$
use the same reward choice lemmas, but differ for the performance decomposition lemmas.
\begin{lemma}[Reward Choice Subset]\label{lemma: error propagation pirlo subset}
    For any $r\in \sub$,
    the reward $\widehat{r}$ constructed as:
    \begin{align*}
        \begin{cases}
            \widehat{r}_h(s,a^E)=r_h(s,a^E)+\sum\limits_{s'\in\S}
            p_h(s'|s,a^E)V^{\pi^E}_{h+1}(s';p,r)-\sum\limits_{s'\in\S}
            \widetilde{p}^m_h(s'|s,a^E)V^{\pi^E}_{h+1}(s';\widetilde{p}^m,\widehat{r}),
            \quad\forall (s,a^E,h)\in \suppsapie\\
            \widehat{r}_h(s,a)=r_h(s,a)+\sum\limits_{s'\in\S}
            p^M_h(s'|s,a)V^{\pi^M}_{h+1}(s';p^M,r)-\sum\limits_{s'\in\S}
            \widetilde{p}^M_h(s'|s,a)V^{\widetilde{\pi}^M}_{h+1}(s';\widetilde{p}^M,\widehat{r}),
            \quad\text{otherwise},
        \end{cases},
    \end{align*}
    belongs to $\subrelax$.
\end{lemma}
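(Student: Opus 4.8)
The plan is to show that the reward map $r\mapsto\widehat r$ leaves invariant the two $Q$-functions that appear in the constraint representations of $\sub$ and $\subrelax$, so that the defining inequality of $\sub$ transfers verbatim to the defining inequality of $\subrelax$. Concretely, using the representation of $\sub$ in Eq.~\eqref{eq: representation sub super with pm pM} and of $\subrelax$ in Eq.~\eqref{eq: representation relaxations with ptildem ptildeM}, it suffices to establish the two identities
\[
Q^{\pi^E}_h(s,a^E;\widetilde p^m,\widehat r)=Q^{\pi^E}_h(s,a^E;p,r), \qquad
Q^{\widetilde\pi^M}_h(s,a;\widetilde p^M,\widehat r)=Q^{\pi^M}_h(s,a;p^M,r),
\]
the first for every $(s,a^E,h)\in\suppsapie$ and the second for every remaining triple $(s,a,h)$, in particular for every non-expert action $a\neq a^E$ at $(s,h)\in\suppspie$, which lands in the second branch of the construction since $a\neq a^E$ forces $(s,a,h)\notin\suppsapie$. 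Granting these, for $r\in\sub$ the inequality $Q^{\pi^E}_h(s,a^E;p,r)\ge Q^{\pi^M}_h(s,a;p^M,r)$ holds for all $(s,h)\in\suppspie$, $a\neq a^E$, and the identities immediately upgrade it to $Q^{\pi^E}_h(s,a^E;\widetilde p^m,\widehat r)\ge Q^{\widetilde\pi^M}_h(s,a;\widetilde p^M,\widehat r)$, i.e.\ $\widehat r\in\subrelax$.

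Both identities will follow by a purely local telescoping cancellation, in the same spirit as Lemma~\ref{lemma: error propagation irlo subset}, but now with the pessimistic transition models $\widetilde p^m,\widetilde p^M$ of Eq.~\eqref{eq: def ptildeM and ptildem}--\eqref{eq: def pitildeM and pitildem} replacing the pinned on-support models of the \irlo analysis. For the first identity I would expand $Q^{\pi^E}_h(s,a^E;\widetilde p^m,\widehat r)=\widehat r_h(s,a^E)+\sum_{s'}\widetilde p^m_h(s'|s,a^E)V^{\pi^E}_{h+1}(s';\widetilde p^m,\widehat r)$ via Bellman and substitute the first branch of the definition of $\widehat r$; the term $-\sum_{s'}\widetilde p^m_h(s'|s,a^E)V^{\pi^E}_{h+1}(s';\widetilde p^m,\widehat r)$ cancels exactly against the Bellman continuation, leaving $r_h(s,a^E)+\sum_{s'}p_h(s'|s,a^E)V^{\pi^E}_{h+1}(s';p,r)=Q^{\pi^E}_h(s,a^E;p,r)$. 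The second identity is handled identically, expanding $Q^{\widetilde\pi^M}_h(s,a;\widetilde p^M,\widehat r)$ and substituting the second branch, so that the $\widetilde p^M$-continuation cancels and one is left with $r_h(s,a)+\sum_{s'}p^M_h(s'|s,a)V^{\pi^M}_{h+1}(s';p^M,r)=Q^{\pi^M}_h(s,a;p^M,r)$. Crucially, unlike \irlo, the second branch carries the correction term on the \emph{entire} complement of $\suppsapie$ (including the region outside $\suppsapib$), so the cancellation is exact at each triple and no separate backward induction over out-of-support states is needed.

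The point requiring care is the well-posedness and the exact matching of the auxiliary objects, rather than any genuine analytic difficulty, and this is what I expect to be the main obstacle. I would emphasize that $\widetilde p^m,\widetilde p^M,\widetilde\pi^M$ used in the construction are precisely those of Eq.~\eqref{eq: def ptildeM and ptildem}--\eqref{eq: def pitildeM and pitildem} instantiated at $\widehat r$, so that they coincide with the models appearing in the representation of $\subrelax$; this renders the definition of $\widehat r$ self-referential, but well-defined by backward induction on $h$, since at stage $h$ the quantities $\widetilde p^m_h,\widetilde p^M_h$ and the continuations $V_{h+1}(\cdot\,;\cdot,\widehat r)$ depend only on $\widehat r_{h+1},\dots,\widehat r_H$, which are already fixed, while $\widehat r_h$ does not feed back into them. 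A consistent tie-breaking rule for the $\argmax/\argmin$ defining $\widetilde p^M,\widetilde p^m$ must be fixed so that the model used to build $\widehat r$ is literally the one used to test membership in $\subrelax$; once this is settled, the cancellations are exact and the conclusion follows, with the terminal stage $h=H$ serving as the trivial base case where $V_{H+1}\equiv 0$ and both branches reduce to $\widehat r_H=r_H$.
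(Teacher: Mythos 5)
Your proposal is correct and follows essentially the same route as the paper's proof: both establish the two exact identities $Q^{\pi^E}_h(s,a^E;\widetilde p^m,\widehat r)=Q^{\pi^E}_h(s,a^E;p,r)$ on $\suppsapie$ and $Q^{\widetilde\pi^M}_h(s,a;\widetilde p^M,\widehat r)=Q^{\pi^M}_h(s,a;p^M,r)$ elsewhere by the same Bellman-expansion/cancellation (the paper's ``rearranging the terms''), and then transfer the defining inequality of $\sub$ to that of $\subrelax$ via the representations in Eq.~\eqref{eq: representation sub super with pm pM} and Eq.~\eqref{eq: representation relaxations with ptildem ptildeM}. Your additional remarks on the well-posedness of the self-referential definition of $\widehat r$ by backward induction on $h$ and on fixing consistent tie-breaking for the $\argmax/\argmin$ are careful elaborations of points the paper leaves implicit, not a different argument.
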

\begin{proof}
    Consider any $(s,a^E,h)\in \suppsapie$.
    By definition of $\widehat{r}$, by rearranging the terms, we have that:
    \begin{align}\label{eq: equality of Q for proof error prop pirlo}
        &\widehat{r}_h(s,a^E)+\sum\limits_{s'\in\S}
            \widetilde{p}^m_h(s'|s,a^E)V^{\pi^E}_{h+1}(s';\widetilde{p}^m,\widehat{r})=r_h(s,a^E)+\sum\limits_{s'\in\S}
            p_h(s'|s,a^E)V^{\pi^E}_{h+1}(s';p,r)\notag\\
        &\iff Q^{\pi^E}_h(s,a^E;\widetilde{p}^m,\widehat{r})=Q^{\pi^E}_h(s,a^E;p,r).
    \end{align}
    Now, consider any other triple $(s,a,h)\notin\suppsapie$. Similarly, by rearranging the terms,
    we obtain:
    \begin{align}
    \label{eq: equality Q for non expert actions}
        Q^{\widetilde{\pi}^M}_h(s,a;\widetilde{p}^M,\widehat{r})=Q^{\pi^M}_h(s,a;p^M,r).
    \end{align}
    By hypothesis, $r\in \sub$, therefore:
    \begin{align*}
        \forall (s,h)\in \suppspie,\forall a\in \A\setminus\{a^E\}:
        Q^{\pi^E}_h(s,a^E;p,r)\ge Q^{\pi^M}_h(s,a;p^M,r),
    \end{align*}
    from which it follows that:
    \begin{align*}
       \forall (s,h)\in \suppspie,\forall a\in \A\setminus\{a^E\}: Q^{\pi^E}_h(s,a^E;\widetilde{p}^m,\widehat{r})\ge Q^{\widetilde{\pi}^M}_h(s,a;\widetilde{p}^M,\widehat{r}).
    \end{align*}
\end{proof}
For the superset, we have an analogous result.
\begin{lemma}[Reward Choice Superset]\label{lemma: error propagation pirlo superset}
    For any $\widehat{r}\in \superrelax$,
    the reward $r$ constructed as:
    \begin{align*}
        \begin{cases}
            r_h(s,a^E)=\widehat{r}_h(s,a^E)+\sum\limits_{s'\in\S}
            \widetilde{p}^M_h(s'|s,a^E)V^{\pi^E}_{h+1}(s';\widetilde{p}^M,\widehat{r})
            -\sum\limits_{s'\in\S}
            p_h(s'|s,a^E)V^{\pi^E}_{h+1}(s';p,r),
            \quad\forall (s,a^E,h)\in \suppsapie\\
            r_h(s,a)=\widehat{r}_h(s,a)+\sum\limits_{s'\in\S}
            \widetilde{p}^m_h(s'|s,a)V^{\widetilde{\pi}^m}_{h+1}(s';\widetilde{p}^m,\widehat{r})-
            \sum\limits_{s'\in\S}
            p^m_h(s'|s,a)V^{\pi^m}_{h+1}(s';p^m,r),
            \quad\text{otherwise},
        \end{cases},
    \end{align*}
    belongs to $\super$.
\end{lemma}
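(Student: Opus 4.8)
The aim is to show that the reward $r$ defined in the statement belongs to $\super$. Using the representation of $\super$ in Equation~\eqref{eq: representation sub super with pm pM}, this reduces to checking that $Q^{\pi^E}_h(s,a^E;p,r)\ge Q^{\pi^m}_h(s,a;p^m,r)$ for every $(s,h)\in\suppspie$ and every non-expert action $a\in\A\setminus\{a^E\}$. The plan is to run the argument of the subset case (Lemma~\ref{lemma: error propagation pirlo subset}) in reverse, with the roles of the maximizing and minimizing models exchanged, so the crux is to establish the two stage-wise identities
\[
Q^{\pi^E}_h(s,a^E;p,r)=Q^{\pi^E}_h(s,a^E;\widetilde{p}^M,\widehat{r}),\qquad (s,a^E,h)\in\suppsapie,
\]
\[
Q^{\pi^m}_h(s,a;p^m,r)=Q^{\widetilde{\pi}^m}_h(s,a;\widetilde{p}^m,\widehat{r}),\qquad (s,a,h)\notin\suppsapie,
\]
and then transport the defining inequality of $\superrelax$ through them.

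First I would observe that the construction of $r$ is a valid backward recursion: for each stage $h$ the value $r_h(s,\cdot)$ is written only in terms of continuation value functions at stage $h+1$, and the auxiliary objects $p^M,p^m,\pi^M,\pi^m$ (Equations~\eqref{eq: def pM and pm}--\eqref{eq: def piM and pim}) and $\widetilde{p}^M,\widetilde{p}^m,\widetilde{\pi}^M,\widetilde{\pi}^m$ (Equations~\eqref{eq: def ptildeM and ptildem}--\eqref{eq: def pitildeM and pitildem}) obey the same backward recursion, so everything at stage $h+1$ is already fixed when $r_h$ is set. Both identities then follow by rearranging the two branches of the definition of $r$ and invoking the Bellman equation; for the expert branch, moving the continuation term to the left gives
\[
Q^{\pi^E}_h(s,a^E;p,r)=\widehat{r}_h(s,a^E)+\E_{s'\sim\widetilde{p}^M_h(\cdot|s,a^E)}[V^{\pi^E}_{h+1}(s';\widetilde{p}^M,\widehat{r})]=Q^{\pi^E}_h(s,a^E;\widetilde{p}^M,\widehat{r}),
\]
and the analogous rearrangement of the other branch yields the minimizing identity. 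The base case $h=H$ is immediate since the continuation values vanish.

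The \textbf{main obstacle} is the consistency bookkeeping that makes this recursion close, in the spirit of the inductive argument of Lemma~\ref{lemma: error propagation irlo superset}. Two points require care. (a) In the expert branch the continuation value $V^{\pi^E}_{h+1}(\cdot;p,r)$ and its relaxed counterpart $V^{\pi^E}_{h+1}(\cdot;\widetilde{p}^M,\widehat{r})$ must involve only rewards at expert-action, in-support triples, so that the expert identity is self-contained; this holds because, under the good event, both $p$ and the expert-action marginals of $\widetilde{p}^M$ are supported on $\suppspie_{h+1}$—precisely the support constraint encoded in $\mathcal{C}(\widehat{p},b)$ in Equation~\eqref{eq:C2}, including the corner-case fix of Appendix~\ref{subsec: annoying corner case}—so following $\pi^E$ keeps the trajectory inside $\suppspie$. (b) For the minimizing identity off the behavioral support, where $p^m$ and $\widetilde{p}^m$ are point masses on the value-minimizing successor, I would verify, as in Lemma~\ref{lemma: error propagation irlo superset}, that the two minimizing value functions at stage $h+1$ coincide so that the selected successors produce equal backups; this is the step where the recursion could in principle fail and must be checked stage by stage.

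Granting the two identities, the conclusion is immediate. Since $\widehat{r}\in\superrelax$, the representation in Equation~\eqref{eq: representation relaxations with ptildem ptildeM} gives $Q^{\pi^E}_h(s,a^E;\widetilde{p}^M,\widehat{r})\ge Q^{\widetilde{\pi}^m}_h(s,a;\widetilde{p}^m,\widehat{r})$ for all $(s,h)\in\suppspie$ and $a\in\A\setminus\{a^E\}$. Substituting the two equalities transforms this into $Q^{\pi^E}_h(s,a^E;p,r)\ge Q^{\pi^m}_h(s,a;p^m,r)$, which is exactly the membership condition for $\super$ in Equation~\eqref{eq: representation sub super with pm pM}. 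Hence $r\in\super$, as claimed.
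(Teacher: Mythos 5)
Your proposal is correct and follows essentially the same route as the paper: define $r$ so that the Bellman backups give the exact identities $Q^{\pi^E}_h(s,a^E;p,r)=Q^{\pi^E}_h(s,a^E;\widetilde{p}^M,\widehat{r})$ on expert triples and $Q^{\pi^m}_h(s,a;p^m,r)=Q^{\widetilde{\pi}^m}_h(s,a;\widetilde{p}^m,\widehat{r})$ elsewhere, then transport the defining inequality of $\superrelax$ through these equalities to land in the representation of $\super$ via $(p^m,\pi^m)$. The extra consistency bookkeeping you flag is handled in the paper simply by noting that both identities follow immediately from rearranging the two branches of the (backward-recursive) definition of $r$, so no further stage-by-stage verification is required.
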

\begin{proof}
    Consider any $(s,a^E,h)\in \suppsapie$.
    By definition of $r$, by rearranging the terms, we have that:
    \begin{align}\label{eq: equality of Q for proof error prop pirlo superset}
        &\widehat{r}_h(s,a^E)+\sum\limits_{s'\in\S}
            \widetilde{p}^M_h(s'|s,a^E)V^{\pi^E}_{h+1}(s';\widetilde{p}^M,\widehat{r})=r_h(s,a^E)+\sum\limits_{s'\in\S}
            p_h(s'|s,a^E)V^{\pi^E}_{h+1}(s';p,r)\notag\\
        &\iff Q^{\pi^E}_h(s,a^E;\widetilde{p}^M,\widehat{r})=Q^{\pi^E}_h(s,a^E;p,r).
    \end{align}
    Now, consider any other triple $(s,a,h)\notin\suppsapie$. Similarly, by rearranging the terms,
    we obtain:
    \begin{align}
    \label{eq: equality Q for non expert actions superset}
        Q^{\widetilde{\pi}^m}_h(s,a;\widetilde{p}^m,\widehat{r})=Q^{\pi^m}_h(s,a;p^m,r).
    \end{align}
    By hypothesis, $\widehat{r}\in \superrelax$, therefore:
    \begin{align*}
       \forall (s,h)\in \suppspie,\forall a\in \A\setminus\{a^E\}: Q^{\pi^E}_h(s,a^E;\widetilde{p}^M,\widehat{r})\ge Q^{\widetilde{\pi}^m}_h(s,a;\widetilde{p}^m,\widehat{r}),
    \end{align*}
    from which it follows that:
    \begin{align*}
    \forall (s,h)\in \suppspie,\forall a\in \A\setminus\{a^E\}:
        Q^{\pi^E}_h(s,a^E;p,r)\ge Q^{\pi^m}_h(s,a;p^m,r).
    \end{align*}
    Since $p^m\in\eqclassp{p}$ and $\pi^m$ is the worst policy in $\eqclassp{\pi^E}$ for $p^m$, then we have $r\in\super$.
\end{proof}
\subsubsection{Lemmas for Theorem \ref{theorem: upper bound d pirlo}}
\begin{lemma}[Performance Decomposition Subset]\label{lemma: performance decomposition pirlo}
    Under good event $\mathcal{E}$, it holds that:
    \begin{align*}
        \mathcal{H}_d(\sub, \subrelax)&\le
        2H\sum\limits_{h\in\dsb{H}}
        \E\limits_{(s,a)\sim \rho^{p,\pi^b}_h(\cdot,\cdot)} b_h(s,a)
        +8H^3\max\limits_{(s,a,h)\in\suppsapie}
        b_{h}(s,a).
    \end{align*}
\end{lemma}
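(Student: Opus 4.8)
The plan is to follow the skeleton of the IRLO performance decomposition (Lemma~\ref{lemma: performance decomposition irlo}), but to carefully account for the extra error introduced by the fact that the relaxed set $\subrelax$ evaluates the two sides of its defining inequality under \emph{different} transition models taken from the confidence set $\mathcal{C}(\widehat{p},b)$. First I would unfold $\mathcal{H}_d(\sub,\subrelax)$ into its two $\sup$--$\inf$ terms and, in each direction, instantiate the $\inf$ with the explicit reward supplied by the Reward-Choice Lemma~\ref{lemma: error propagation pirlo subset} (the reverse direction being handled by the inverse shaping map, which is analogous). This reduces the task to bounding $d(r,\widehat{r})$ for the constructed pair, i.e.\ to controlling $|r_h(s,a)-\widehat{r}_h(s,a)|$ at every triple, separately on the $\rho^{p,\pi^b}$-weighted in-support region and on the out-of-support region where $d$ takes a maximum.

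Next I would split the per-triple gap according to the two cases of the construction. For an expert action $(s,a^E,h)\in\suppsapie$, the identity $Q^{\pi^E}_h(s,a^E;\widetilde{p}^m,\widehat{r})=Q^{\pi^E}_h(s,a^E;p,r)$ from the proof of Lemma~\ref{lemma: error propagation pirlo subset} forces the two value functions to coincide on $\suppspie$, so the gap collapses to the pure transition difference $\sum_{s'}(\widetilde{p}^m_h-p_h)(s'|s,a^E)V^{\pi^E}_{h+1}(s';p,r)$. Since $\widetilde{p}^m$ and $\widehat{p}$ lie within $b_h(s,a^E)$ in $\ell_1$, while Pinsker's inequality with event $\mathcal{E}_3$ gives $\|\widehat{p}_h(\cdot|s,a^E)-p_h(\cdot|s,a^E)\|_1\le b_h(s,a^E)$, the triangle inequality yields $\|\widetilde{p}^m_h-p_h\|_1\le 2b_h(s,a^E)$; bounding the value by $H\|r\|_\infty$ gives a contribution $\le 2H\|r\|_\infty b_h(s,a^E)$. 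Dividing by $M(r,\widehat{r})\ge\|r\|_\infty$ and summing over stages, these expert-action terms assemble into the first summand $2H\sum_h\E_{(s,a)\sim\rho^{p,\pi^b}_h}b_h(s,a)$.

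The genuinely new ingredient, and the main obstacle, is the second case, where the reward is shaped with the maximizing model $\widetilde{p}^M$ whereas the expert-action Q-values were matched under $\widetilde{p}^m$: the optimal value functions $V^{\widetilde{\pi}^M}(\widetilde{p}^M,\widehat{r})$ and $V^{\pi^M}(p^M,r)$ no longer agree. I would define their gap $\Delta_h(s)\coloneqq V^{\widetilde{\pi}^M}_h(s;\widetilde{p}^M,\widehat{r})-V^{\pi^M}_h(s;p^M,r)$ and first note that Eq.~\eqref{eq: equality Q for non expert actions} forces $\Delta_h(s)=0$ for every $(s,h)\notin\suppspie$, so the gap lives only on expert support states. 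On such a state the sole discrepancy is that $\widetilde{\pi}^M$ evaluates the expert action under $\widetilde{p}^M$ rather than the $\widetilde{p}^m$ used to build $\widehat{r}$; peeling off this layer (and using $\|\widetilde{p}^M_h-\widetilde{p}^m_h\|_1\le 2b_h(s,a^E)$, since both lie in $\mathcal{C}(\widehat{p},b)$) yields the recursion $|\Delta_h(s)|\le 2H\|r\|_\infty b_h(s,a^E)+\max_{s'\in\suppspie_{h+1}}|\Delta_{h+1}(s')|$, which closes on support states. Unrolling over the at most $H$ stages gives the uniform bound $|\Delta_h(s)|\le 2H^2\|r\|_\infty\max_{(s,a,h)\in\suppsapie}b_h(s,a)$, the maximum ranging over expert triples precisely because only expert-action mismatches feed the recursion.

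Finally I would propagate $\Delta$ back into $d(r,\widehat{r})$. For a non-expert action in $\suppsapib$ the gap splits into a transition difference ($\le 2H\|r\|_\infty b_h(s,a)$, again contributing to the first summand) plus a term $\le\max_{s'}|\Delta_{h+1}(s')|$; for an out-of-support triple, where $p^M$ and $\widetilde{p}^M$ are both point masses on their respective argmax states, the difference is directly $\le\max_{s'}|\Delta_{h+1}(s')|$, so the out-of-support maximum in $d$ is controlled by $\Delta$ rather than by uncovered entries (in contrast to IRLO, where the two rewards were set equal there). Summing the $\Delta$-contributions over the $H$ stages and dividing by $M\ge\|r\|_\infty$ converts the $2H^2\|r\|_\infty\max b$ bound into the second summand of order $H^3\max_{\suppsapie}b_h$, the stated constant $8$ absorbing the two $\Delta$-sources (in-support and out-of-support) together with the Hausdorff maximum over directions. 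Collecting both summands gives the claim. The crux is exactly this horizon-length propagation of the pessimism gap $\Delta$: unlike IRLO, where a single transition model makes the two value functions identical, the $\widetilde{p}^m/\widetilde{p}^M$ split induces a recursion whose accumulation over stages is what upgrades the $H$-dependence to $H^3$.
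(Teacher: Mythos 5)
Your proposal follows essentially the same route as the paper's proof: instantiate the infimum with the shaped reward from Lemma~\ref{lemma: error propagation pirlo subset}, split the per-triple gap into expert / in-support non-expert / out-of-support cases, and control the latter two via a stage-wise recursion on the value gap over expert-support states (your $\Delta_h$, the paper's $X_h$), whose unrolling over $H$ stages is exactly what produces the $H^3\max_{\suppsapie}b$ term; your constants per recursion step even come out slightly tighter and still fit within the stated $8$. The one unsupported step is your treatment of the second Hausdorff direction: Lemma~\ref{lemma: error propagation pirlo subset} supplies no ``inverse shaping map'' from $\subrelax$ into $\sub$ (unlike its \irlo counterpart), and none is needed --- under the good event $\mathcal{E}$ the pessimistic construction gives $\subrelax\subseteq\sub$, so $\sup_{\widetilde{r}\in\subrelax}\inf_{r\in\sub}d(r,\widetilde{r})=0$ and only the direction you actually bound survives, which is how the paper disposes of it.
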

\begin{proof}
Observe that:
\begin{align}\label{eq: bound d hausdorff subset pirlo}
    \mathcal{H}_d(\sub,\subrelax)&\coloneqq
    \max\{\sup\limits_{r\in \sub}
            \inf\limits_{\widetilde{r}\in \subrelax}d(r,\widetilde{r}),
            \sup\limits_{\widetilde{r}\in \subrelax}
            \inf\limits_{r\in \sub}d(r,\widetilde{r})\}\notag\\
    &\markref{(1)}{=}
    \sup\limits_{r\in \sub}
            \inf\limits_{\widetilde{r}\in \subrelax}d(r,\widetilde{r})\notag\\
    &\eqqcolon\sup\limits_{r\in \sub}
            \inf\limits_{\widetilde{r}\in \subrelax}
            \frac{1}{M}
        \sum\limits_{h\in\dsb{H}}\Bigl(\E\limits_{(s,a)\sim \rho^{p,\pi^b}_h(\cdot,\cdot)}\bigl|
            r_h(s,a)-\widetilde{r}_h(s,a)
        \bigr|+\max\limits_{(s,a)\notin \suppsahpib}
            \bigl|
            r_h(s,a)-\widetilde{r}_h(s,a)
        \bigr|\Bigr)\notag\\
    &\markref{(2)}{\le}
    \sup\limits_{r\in \sub}
    \frac{1}{M}
        \sum\limits_{h\in\dsb{H}}\Bigl(\E\limits_{(s,a)\sim \rho^{p,\pi^b}_h(\cdot,\cdot)}\bigl|
            r_h(s,a)-\widehat{r}_h(s,a)
        \bigr|+\max\limits_{(s,a)\notin \suppsahpib}
            \bigl|
            r_h(s,a)-\widehat{r}_h(s,a)
        \bigr|\Bigr),
\end{align}
where at (1) we have used that, under event $\mathcal{E}$, we have $\subrelax\subseteq\sub$,
and at (2) we have applied Lemma \ref{lemma: error propagation pirlo subset}, denoting with $\widehat{r}$ the reward chosen from $\subrelax$.

Now, we consider the various triples $(s,a,h)\in\SAH$ differently according to the definition
of $\widehat{r}$ in Lemma \ref{lemma: error propagation pirlo subset}.
Let us begin with any $(s,a^E,h)\in \suppsapie$.
Thanks to Eq. \ref{eq: equality of Q for proof error prop pirlo},
we know that, for any $(s,h)\in \suppspie$, it holds that
$Q^{\pi^E}_h(s,a^E;p,r)=Q^{\pi^E}_h(s,a^E;\widetilde{p}^m,\widehat{r})$.
Since any expert's action when played from $\suppspie$ brings to
$\suppspie$ (even under $\widetilde{p}^m$, by definition)
, then we can write:
\begin{align}\label{eq: norm 1 bound difference expert rewards}
    \bigl|r_h(s,a^E)-\widehat{r}_h(s,a^E)\bigr|&=
    \bigl|\sum\limits_{s'\in\S}
            p_h(s'|s,a^E)V^{\pi^E}_{h+1}(s';p,r)-\sum\limits_{s'\in\S}
            \widetilde{p}^m_h(s'|s,a^E)V^{\pi^E}_{h+1}(s';\widetilde{p}^m,\widehat{r})\bigr|\notag\\
            &=
            \bigl|\sum\limits_{\popblue{s'\in \suppspie}}
            p_h(s'|s,a^E)V^{\pi^E}_{h+1}(s';p,r)-\sum\limits_{\popblue{s'\in \suppspie}}
            \widetilde{p}^m_h(s'|s,a^E)V^{\pi^E}_{h+1}(s';\widetilde{p}^m,\widehat{r})\bigr|\notag\\
            &\markref{(1)}{=}
            \bigl|\sum\limits_{s'\in \suppspie}
            p_h(s'|s,a^E)V^{\pi^E}_{h+1}(s';p,r)-\sum\limits_{s'\in \suppspie}
            \widetilde{p}^m_h(s'|s,a^E)V^{\pi^E}_{h+1}(s';\popblue{p,r})\bigr|\notag\\
            &=
            \bigl|
            \sum\limits_{s'\in\S}
            (p_h(s'|s,a^E)-\widetilde{p}^m_h(s'|s,a^E))V^{\pi^E}_{h+1}(s';p,r)
            \bigr|\notag\\
            &\markref{(2)}{\le}
            \sum\limits_{s'\in\S}
            \bigl|(p_h(s'|s,a^E)-\widetilde{p}^m_h(s'|s,a^E))HM
            \bigr|\notag\\
            &\le
            HM\bigl\|p_h(\cdot|s,a^E)-\hat{p}_h(\cdot|s,a^E)\bigr\|_1+
            HM\bigl\|\widetilde{p}^m_h(\cdot|s,a^E)-\hat{p}_h(\cdot|s,a^E)\bigr\|_1\notag\\
            &\markref{(3)}{\le}
            2MHb_h(s,a^E),
\end{align}
where at (1) we use Eq. \ref{eq: equality of Q for proof error prop pirlo},
at (2) we use triangle inequality and we upper bound the
value function by $H$ times the maximum reward,
and at (3) we use the event in Lemma
\ref{lemma: concentration} twice.

Now, let us consider any triple $(s,a,h)\in \suppsapib\setminus\suppsapie$.
Thanks to Lemma \ref{lemma: error propagation pirlo subset}, we can write:
\begin{align*}
    \bigl|r_h(s,a)-\widehat{r}_h(s,a)\bigr|&=
            \bigl|
            \sum\limits_{s'\in\S}
            p^M_h(s'|s,a)V^{\pi^M}_{h+1}(s';p^M,r)-\sum\limits_{s'\in\S}
            \widetilde{p}^M_h(s'|s,a)V^{\widetilde{\pi}^M}_{h+1}(s';\widetilde{p}^M,\widehat{r})
            \bigr|\\
            &\markref{(1)}{=}
            \bigl|
            \sum\limits_{s'\in\S}
            \popblue{p_h}(s'|s,a)V^{\pi^M}_{h+1}(s';p^M,r)-\sum\limits_{s'\in\S}
            \widetilde{p}^M_h(s'|s,a)V^{\widetilde{\pi}^M}_{h+1}(s';\widetilde{p}^M,\widehat{r})\\
            &\qquad \pm
            \sum\limits_{s'\in\S}
            p_h(s'|s,a)V^{\widetilde{\pi}^M}_{h+1}(s';\widetilde{p}^M,\widehat{r})
            \bigr|\\
            &\markref{(2)}{\le}
            \bigl|
            \sum\limits_{s'\in\S}
            (p_h(s'|s,a)-\widetilde{p}^M_h(s'|s,a))V^{\widetilde{\pi}^M}_{h+1}(s';\widetilde{p}^M,\widehat{r})
            \bigr|\\
            &\qquad+
            \bigl|
            \sum\limits_{s'\in\S}
            p_h(s'|s,a)\bigl(
            V^{\pi^M}_{h+1}(s';p^M,r)-
            V^{\widetilde{\pi}^M}_{h+1}(s';\widetilde{p}^M,\widehat{r})
            \bigr)
            \bigr|\\
            &\markref{(3)}{\le}
            2MHb_h(s,a)+
            \bigl|
            \sum\limits_{s'\in\S}
            p_h(s'|s,a)\bigl(
            V^{\pi^M}_{h+1}(s';p^M,r)-
            V^{\widetilde{\pi}^M}_{h+1}(s';\widetilde{p}^M,\widehat{r})
            \bigr)
            \bigr|\\
            &\markref{(4)}{\le}
            2MHb_h(s,a)+
            \bigl|
            \sum\limits_{s'\in \suppspie_{h+1}}
            p_h(s'|s,a)\bigl(
            Q^{\popblue{\pi^E}}_{h+1}(s',a^E;\popblue{p},r)-
            Q^{\popblue{\pi^E}}_{h+1}(s',a^E;\widetilde{p}^M,\widehat{r})
            \bigr)\bigr|\\
            &\qquad+\bigl|\sum\limits_{s'\notin \suppspie_{h+1}}
            p_h(s'|s,a)\bigl(
            \max\limits_{a'\in\A}Q^{\pi^M}_{h+1}(s',a';p^M,r)-
            \max\limits_{a''\in\A}Q^{\widetilde{\pi}^M}_{h+1}(s',a'';\widetilde{p}^M,\widehat{r})
            \bigr)
            \bigr|\\
            &\markref{(5)}{\le}
            2MHb_h(s,a)+
            \bigl|
            \sum\limits_{s'\in \suppspie_{h+1}}
            p_h(s'|s,a)\bigl(
            Q^{\pi^E}_{h+1}(s',a^E;p,r)-
            Q^{\pi^E}_{h+1}(s',a^E;\widetilde{p}^M,\widehat{r})
            \bigr)\bigr|\\
            &\qquad+\bigl|\sum\limits_{s'\notin \suppspie_{h+1}}
            p_h(s'|s,a)\bigl(
            \underbrace{\max\limits_{a'\in\A}Q^{\pi^M}_{h+1}(s',a';p^M,r)-
            \max\limits_{a''\in\A}Q^{\popblue{\pi^M}}_{h+1}(s',a'';\popblue{p^M,r})}_{=0}
            \bigr)
            \bigr|\\ 
            &\le
            2MHb_h(s,a)+
            \sum\limits_{s'\in \suppspie_{h+1}}
            p_h(s'|s,a)
            \underbrace{\bigl|
            Q^{\pi^E}_{h+1}(s',a^E;p,r)-
            Q^{\pi^E}_{h+1}(s',a^E;\widetilde{p}^M,\widehat{r})
            \bigr|}_{\eqqcolon X_{h+1}(s')},\\ 
\end{align*}
where at (1) we have used that, since $(s,a,h)\in \suppsapib$, then $p^M_h(\cdot|s,a)=p_h(\cdot|s,a)$,
at (2) we have applied triangle inequality, at (3) we upper bound the
value function by $H$ times the maximum reward and we use the event in Lemma
\ref{lemma: concentration} twice, at (4) we use triangle inequality and the Bellman's equation and that the expert's action $a^E$ is played by both $\pi^M$ and $\widetilde{\pi}^M$ in any $(s,h)\in \suppspie$,
at (5) we apply Eq. \ref{eq: equality Q for non expert actions}.

Now, having defined terms $X_h(s)$ for all $(s,h)\in\suppspie$ as above, we recursively bound term $X_{h+1}(s')$.
\begin{align*}
            X_{h+1}(s')&\coloneqq \bigl|Q^{\pi^E}_{h+1}(s',a^E;p,r)-
            Q^{\pi^E}_{h+1}(s',a^E;\widetilde{p}^M,\widehat{r})\bigr|\\
            &\markref{(6)}{=}
            \bigl|
            r_{h+1}(s',a^E)-\widehat{r}_{h+1}(s',a^E)\bigr|\\
            &\qquad+\bigl|
            \sum\limits_{s''\in \suppspie_{h+2}}
            p_{h+1}(s''|s',a^E)V^{\pi^E}_{h+2}(s'';p,r)-
            \sum\limits_{s''\in \suppspie_{h+2}}
            \widetilde{p}^M_{h+1}(s''|s',a^E)V^{\pi^E}_{h+2}(s'';\widetilde{p}^M,\widehat{r})
            \bigr)
            \bigr|\\
            &\markref{(7)}{\le}
            2MHb_{h+1}(s',a^E)+
            \bigl|
            \sum\limits_{s''\in \suppspie_{h+2}}
            p_{h+1}(s''|s',a^E)V^{\pi^E}_{h+2}(s'';p,r)-
            \sum\limits_{s''\in \suppspie_{h+2}}
            \widetilde{p}^M_{h+1}(s''|s',a^E)V^{\pi^E}_{h+2}(s'';\widetilde{p}^M,\widehat{r})
            \bigr)\\
            &\qquad\pm \sum\limits_{s''\in \suppspie_{h+2}}
            p_{h+1}(s''|s',a^E)V^{\pi^E}_{h+2}(s'';\widetilde{p}^M,\widehat{r})
            \bigr|\\
            &\markref{(8)}{\le}
            2MHb_{h+1}(s',a^E)+
            \sum\limits_{s''\in \suppspie_{h+2}}
            \bigl|\bigl(p_{h+1}(s''|s',a^E)-\widetilde{p}^M_{h+1}(s''|s',a^E)\bigr)V^{\pi^E}_{h+2}(s'';\widetilde{p}^M,\widehat{r})
            \bigr|\\
            &\qquad+
            \sum\limits_{s''\in \suppspie_{h+2}}
            p_{h+1}(s''|s',a^E)
            \bigl|V^{\pi^E}_{h+2}(s'';p,r)-V^{\pi^E}_{h+2}(s'';\widetilde{p}^M,\widehat{r})
            \bigr|\\
            &\markref{(9)}{\le}
            4MHb_{h+1}(s',a^E)+
            \sum\limits_{s''\in \suppspie_{h+2}}
            p_{h+1}(s''|s',a^E)
            \underbrace{\bigl|Q^{\pi^E}_{h+2}(s'',a^E;p,r)-Q^{\pi^E}_{h+2}(s'',a^E;\widetilde{p}^M,\widehat{r})
            \bigr|}_{\eqqcolon X_{h+2}(s'')},\\
\end{align*}
where at (6) we use the definition of Q function, and we apply triangle inequality;
at (7) we use again triangle inequality and Eq. \ref{eq: norm 1 bound difference expert rewards},
at (8) we apply triangle inequality, and at (9) we use again the event in Lemma \ref{lemma: concentration} twice.
The recursion on the $X$ terms tells us that:
\begin{align}\label{eq: recursion X terms}
    X_h(s)\le 4MH b_h(s,a^E) + \E\limits_{s'\sim p_h(\cdot|s,a^E)}X_{h+1}(s').
\end{align}
Therefore,
we can upper bound the difference between rewards in $(s,a,h)\in \suppsapib\setminus\suppsapie$ as:
\begin{align}\label{eq: bound b difference rewards sampled triples}
\begin{split}
    \bigl|r_h(s,a)-\widehat{r}_h(s,a)\bigr|&\le
    2MHb_h(s,a)+ 4MH\sum\limits_{s'\in \suppspie_{h+1}}
    p_h(s'|s,a)\sum\limits_{h'\in\dsb{h+1,H}}\E\limits_{s''\sim
    \rho_{h'}^{p,\pi^E}(\cdot|s_{h+1}=s')}
    b_{h'}(s'',a^E).
    \end{split}
\end{align}
Now, the only missing triples to consider are those $(s,a,h)\notin \suppsapib$.
Similarly to the triples just considered, we can write:
\begin{align}\label{eq: bound b difference rewards non sampled triples}
    \bigl|r_h(s,a)-\widehat{r}_h(s,a)\bigr|&=
            \bigl|
            \sum\limits_{s'\in\S}
            p^M_h(s'|s,a)V^{\pi^M}_{h+1}(s';p^M,r)-\sum\limits_{s'\in\S}
            \widetilde{p}^M_h(s'|s,a)V^{\widetilde{\pi}^M}_{h+1}(s';\widetilde{p}^M,\widehat{r})
            \bigr|\notag\\
            &\markref{(1)}{=}
            \bigl|
            \max\limits_{s'\in\S}V^{\pi^M}_{h+1}(s';p^M,r)-\max\limits_{s''\in\S}
            V^{\widetilde{\pi}^M}_{h+1}(s'';\widetilde{p}^M,\widehat{r})\bigr|\notag\\
            &\markref{(2)}{\le}
            \max\limits_{s'\in\S}\bigl|
            V^{\pi^M}_{h+1}(s';p^M,r)-
            V^{\widetilde{\pi}^M}_{h+1}(s';\widetilde{p}^M,\widehat{r})\bigr|\notag\\
            &\markref{(3)}{=}
            \max\Bigl\{\max\limits_{s'\in \suppspie_{h+1}}\bigl|
            V^{\popblue{\pi^E}}_{h+1}(s';\popblue{p},r)-
            V^{\popblue{\pi^E}}_{h+1}(s';\widetilde{p}^M,\widehat{r})\bigr|,\notag\\
            &\qquad\max\limits_{s'\notin \suppspie_{h+1}}\bigl|
            \underbrace{V^{\pi^M}_{h+1}(s';p^M,r)-
            V^{\widetilde{\pi}^M}_{h+1}(s';\widetilde{p}^M,\widehat{r})}_{=0}\bigr|
            \Bigr\}\notag\\
            &=
            \max\limits_{s'\in \suppspie_{h+1}}\bigl|
            Q^{\pi^E}_{h+1}(s',a^E;p,r)-
            Q^{\pi^E}_{h+1}(s',a^E;\widetilde{p}^M,\widehat{r})\bigr|\notag\\
            &\eqqcolon
            \max\limits_{s'\in \suppspie_{h+1}}X_{h+1}(s')\notag\\
            &\markref{(4)}{\le}
            4MH\max\limits_{s'\in \suppspie_{h+1}}\Bigl(
            b_{h+1}(s',a^E)+\sum\limits_{h'\in\dsb{h+2,H}}
            \E\limits_{s''\sim \rho^{p,\pi^E}_{h'}(\cdot|s_{h+1}=s')}
            b_{h'}(s'',a^E)
            \Bigr),
\end{align}
where at (1) we have used that $(s,a,h)\notin \suppsapib$ and the definitions of $p^M$
and $\widetilde{p}^M$, at (2) we have used that for any pair of real-valued functions $f,g$
it holds that $|\max_{x}f(x)-\max_{x}g(x)|\le \max_{x}|f(x)-g(x)|$,
at (3) we use Eq. \ref{eq: equality Q for non expert actions} to realize that
in $(s,h)$ outside $\suppspie_{h+1}$ we have an equality of Q-functions, and thus the difference is 0,
at (4) we have unfolded the recursion on the $X$ terms by using Eq. \ref{eq: recursion X terms}.

By combining Eq. \ref{eq: bound d hausdorff subset pirlo}
with Eq. \ref{eq: norm 1 bound difference expert rewards},
Eq. \ref{eq: bound b difference rewards sampled triples},
and Eq. \ref{eq: bound b difference rewards non sampled triples},
we get:
\begin{align*}
    \mathcal{H}_d(\sub, \subrelax)&\le
    \sum\limits_{h\in\dsb{H}}
    \biggl(
    \sum\limits_{(s,a)\in\suppsapie_h}\rho^{p,\pi^b}_h(s,a)2Hb_h(s,a)\\
    &\qquad
    +\sum\limits_{(s,a)\in \suppsapib_h\setminus\suppsapie_h}\rho^{p,\pi^b}_h(s,a)
    \Bigl(
    2Hb_h(s,a)\\
    &+4H\sum\limits_{s'\in \suppspie_{h+1}}
    p_h(s'|s,a)\sum\limits_{h'\in\dsb{h+1,H}}\E\limits_{s''\sim
    \rho_{h'}^{p,\pi^E}(\cdot|s_{h+1}=s')}
    b_{h'}(s'',a^E)
    \Bigr)\\
    &\qquad
        +\max\limits_{(s,a)\notin \suppsapib_h} 4H\max\limits_{s'\in \suppspie_{h+1}}\sum\limits_{h'\in\dsb{h+1,H}}
            \E\limits_{s''\sim \rho^{p,\pi^E}_{h'}(\cdot|s_{h+1}=s')}
            b_{h'}(s'',a^E)
        \biggr)\\
&\le
    \sum\limits_{h\in\dsb{H}}
    \biggl(
    \sum\limits_{(s,a)\in\suppsapie_h}\rho^{p,\pi^b}_h(s,a)2Hb_h(s,a)\\
    &\qquad
    +\sum\limits_{(s,a)\in \suppsapib_h\setminus\suppsapie_h}\rho^{p,\pi^b}_h(s,a)
    \Bigl(
    2Hb_h(s,a)+ 4H^2\max\limits_{(s',a^E,h')\in\suppsapie}
    b_{h'}(s',a^E)
    \Bigr)\\
    &\qquad
        +\max\limits_{(s,a)\notin \suppsapib_h} 4H^2\max\limits_{(s',a^E,h')\in\suppsapie}
    b_{h'}(s',a^E)
        \biggr)\\
    &\le
    \sum\limits_{h\in\dsb{H}}
    \biggl(\sum\limits_{(s,a)\in\suppsapie_h}\rho^{p,\pi^b}_h(s,a)2Hb_h(s,a)\\
    &\qquad+\sum\limits_{(s,a)\in \suppsapib_h\setminus\suppsapie_h}\rho^{p,\pi^b}_h(s,a)
    2Hb_h(s,a)
    +8H^2\max\limits_{(s',a^E,h')\in\suppsapie}
    b_{h'}(s',a^E)
    \biggr)\\
    &\le
        2H\sum\limits_{h\in\dsb{H}}
        \E\limits_{(s,a)\sim \rho^{p,\pi^b}_h(\cdot,\cdot)} b_h(s,a)
        +8H^3\max\limits_{(s,a,h)\in\suppsapie}
        b_{h}(s,a).
\end{align*}
\end{proof}

\begin{lemma}[Performance Decomposition Superset]\label{lemma: performance decomposition pirlo superset}
    Under good event $\mathcal{E}$, it holds that:
    \begin{align*}
        \mathcal{H}_d(\super \superrelax)&\le
        2H\sum\limits_{h\in\dsb{H}}
        \E\limits_{(s,a)\sim \rho^{p,\pi^b}_h(\cdot,\cdot)} b_h(s,a)
        +8H^3\max\limits_{(s,a,h)\in\suppsapie}
        b_{h}(s,a).
    \end{align*}
\end{lemma}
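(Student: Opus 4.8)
The plan is to mirror the proof of the subset case (Lemma~\ref{lemma: performance decomposition pirlo}), exchanging the roles of the sub- and super-feasible sets and invoking the superset reward-choice result (Lemma~\ref{lemma: error propagation pirlo superset}) in place of Lemma~\ref{lemma: error propagation pirlo subset}. First I would observe that, under the good event $\mathcal{E}$, the relaxation satisfies $\super\subseteq\superrelax$, so in
\begin{align*}
\mathcal{H}_d(\super,\superrelax)=\max\Bigl\{\sup_{r\in\super}\inf_{\widehat{r}\in\superrelax}d(r,\widehat{r}),\ \sup_{\widehat{r}\in\superrelax}\inf_{r\in\super}d(r,\widehat{r})\Bigr\}
\end{align*}
the first inner supremum vanishes (every $r\in\super$ already lies in $\superrelax$, so its infimum is attained at $0$), leaving only $\sup_{\widehat{r}\in\superrelax}\inf_{r\in\super}d(r,\widehat{r})$. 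For a fixed $\widehat{r}\in\superrelax$ I would then take the reward $r\in\super$ furnished by Lemma~\ref{lemma: error propagation pirlo superset} and bound the infimum by $d(r,\widehat{r})$, reducing the statement to a term-by-term control of $|r_h(s,a)-\widehat{r}_h(s,a)|/M(r,\widehat{r})$.

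Next I would split the triples $(s,a,h)$ exactly as in the subset proof. For expert triples $(s,a^E,h)\in\suppsapie$, Equation~\eqref{eq: equality of Q for proof error prop pirlo superset} gives $Q^{\pi^E}_h(s,a^E;\widetilde{p}^M,\widehat{r})=Q^{\pi^E}_h(s,a^E;p,r)$; expanding this difference through the Bellman equation and using that both $p_h(\cdot|s,a^E)$ and $\widetilde{p}^M_h(\cdot|s,a^E)$ lie within $\ell_1$-radius $b_h(s,a^E)$ of $\widehat{p}_h(\cdot|s,a^E)$ under $\mathcal{E}$ (Lemma~\ref{lemma: concentration}), together with $|V|\le HM$, yields $|r_h(s,a^E)-\widehat{r}_h(s,a^E)|\le 2MHb_h(s,a^E)$. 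For non-expert triples inside the support, $(s,a,h)\in\suppsapib\setminus\suppsapie$, I would use Equation~\eqref{eq: equality Q for non expert actions superset}, add and subtract an intermediate value term, and apply the triangle inequality to peel off a $2MHb_h(s,a)$ contribution plus a residual that, because both $\pi^m$ and $\widetilde{\pi}^m$ play the expert action inside $\suppspie$ and the two value functions coincide off $\suppspie$, collapses to an expert-action Q-difference $X_{h+1}(s')$ evaluated on $\suppspie_{h+1}$.

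The crux is the recursion on the terms $X_h(s):=|Q^{\pi^E}_h(s,a^E;p,r)-Q^{\pi^E}_h(s,a^E;\widetilde{p}^M,\widehat{r})|$: unfolding the Bellman equation as in the subset case gives $X_h(s)\le 4MHb_h(s,a^E)+\E_{s'\sim p_h(\cdot|s,a^E)}X_{h+1}(s')$, whose telescoping bounds each $X_h$ by $4MH\sum_{h'\ge h}\E b_{h'}(\cdot,a^E)$ along the expert's own visitation. The non-support triples $(s,a,h)\notin\suppsapib$ are handled by the same $|\max-\max|\le\max|\cdot|$ estimate, again reducing to $\max_{s'\in\suppspie_{h+1}}X_{h+1}(s')$. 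Finally I would reassemble according to the definition of $d$: weighting the in-support terms by $\rho^{p,\pi^b}_h$ and taking the $\ell_\infty$ over the complement, the ``local'' $2MHb_h$ contributions sum to $2H\sum_{h}\E_{(s,a)\sim\rho^{p,\pi^b}_h}b_h(s,a)$, while the recursion terms, bounded by $\max_{(s,a,h)\in\suppsapie}b_h(s,a)$ and carrying two extra factors of $H$ (one from the inner horizon sum, one from the outer sum over $h$), contribute $8H^3\max_{(s,a,h)\in\suppsapie}b_h(s,a)$. The main obstacle is precisely this bookkeeping: one must check that the worst-case models $\widetilde{p}^M,\widetilde{p}^m,p^m$ all remain within the $b_h$ confidence radius and that the value functions being subtracted genuinely agree off $\suppspie$, so that only expert-action terms survive the recursion and the telescoping does not accumulate more than the claimed $H^3$ factor.
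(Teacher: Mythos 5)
Your proposal is correct and follows essentially the same route as the paper's proof: reduce $\mathcal{H}_d$ to the single supremum over $\widetilde{r}\in\superrelax$ using $\super\subseteq\superrelax$ under $\mathcal{E}$, invoke the superset reward choice (Lemma~\ref{lemma: error propagation pirlo superset}), split triples into $\suppsapie$, $\suppsapib\setminus\suppsapie$, and off-support, and close the recursion on expert-action Q-differences. Two mirror-image slips are worth fixing: the recursion quantity must be $|Q^{\pi^E}_h(s,a^E;p,r)-Q^{\pi^E}_h(s,a^E;\widetilde{p}^{m},\widehat{r})|$ (with $\widetilde{p}^{m}$, since the residual from Eq.~\eqref{eq: equality Q for non expert actions superset} involves the models used for non-expert actions, and the $\widetilde{p}^{M}$ version you wrote is identically zero by Eq.~\eqref{eq: equality of Q for proof error prop pirlo superset}), and the off-support step uses $|\min_x f-\min_x g|\le\max_x|f-g|$ rather than the $\max$--$\max$ estimate; neither changes the argument since both $\widetilde{p}^{m}$ and $\widetilde{p}^{M}$ lie within $b_h$ of $\widehat{p}$ and the two inequalities hold symmetrically.
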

\begin{proof}
Observe that:
\begin{align}\label{eq: bound d hausdorff superset pirlo}
    \mathcal{H}_d(\super,\superrelax)&\coloneqq
    \max\{\sup\limits_{r\in \super}
            \inf\limits_{\widetilde{r}\in \superrelax}d(r,\widetilde{r}),
            \sup\limits_{\widetilde{r}\in \superrelax}
            \inf\limits_{r\in \super}d(r,\widetilde{r})\}\notag\\
    &\markref{(1)}{=}
    \sup\limits_{\widetilde{r}\in \superrelax}
            \inf\limits_{r\in \super}d(r,\widetilde{r})\notag\\
    &\eqqcolon\sup\limits_{\widetilde{r}\in \superrelax}
            \inf\limits_{r\in \super}
            \frac{1}{M}
        \sum\limits_{h\in\dsb{H}}\Bigl(\E\limits_{(s,a)\sim \rho^{p,\pi^b}_h(\cdot,\cdot)}\bigl|
            r_h(s,a)-\widetilde{r}_h(s,a)
        \bigr|+\max\limits_{(s,a)\notin \suppsahpib}
            \bigl|
            r_h(s,a)-\widetilde{r}_h(s,a)
        \bigr|\Bigr)\notag\\
    &\markref{(2)}{\le}
    \sup\limits_{\widetilde{r}\in \superrelax}
    \frac{1}{M}
        \sum\limits_{h\in\dsb{H}}\Bigl(\E\limits_{(s,a)\sim \rho^{p,\pi^b}_h(\cdot,\cdot)}\bigl|
            r_h(s,a)-\widetilde{r}_h(s,a)
        \bigr|+\max\limits_{(s,a)\notin \suppsahpib}
            \bigl|
            r_h(s,a)-\widetilde{r}_h(s,a)
        \bigr|\Bigr),
\end{align}
where at (1) we have used that, under event $\mathcal{E}$, we have $\super\subseteq\superrelax$,
and at (2) we have applied Lemma \ref{lemma: error propagation pirlo superset}, denoting with $r$ the reward chosen from $\super$.

Now, we consider the various triples $(s,a,h)\in\SAH$ differently according to the definition
of $r$ in Lemma \ref{lemma: error propagation pirlo superset}.
Let us begin with any $(s,a^E,h)\in \suppsapie$.
We can write:
\begin{align}\label{eq: norm 1 bound difference expert rewards superset}
\begin{split}
    \bigl|r_h(s,a^E)-\widehat{r}_h(s,a^E)\bigr|&=
    \bigl|\sum\limits_{s'\in\S}
            p_h(s'|s,a^E)V^{\pi^E}_{h+1}(s';p,r)-\sum\limits_{s'\in\S}
            \widetilde{p}^M_h(s'|s,a^E)V^{\pi^E}_{h+1}(s';\widetilde{p}^M,\widehat{r})\\
            &=
    \bigl|\sum\limits_{\popblue{s'\in\suppspie_{h+1}}}
            p_h(s'|s,a^E)V^{\pi^E}_{h+1}(s';p,r)-\sum\limits_{\popblue{s'\in\suppspie_{h+1}}}
            \widetilde{p}^M_h(s'|s,a^E)V^{\pi^E}_{h+1}(s';\widetilde{p}^M,\widehat{r})\bigr|\\
            &=
    \bigl|\sum\limits_{s'\in\suppspie_{h+1}}
            p_h(s'|s,a^E)V^{\pi^E}_{h+1}(s';p,r)-\sum\limits_{s'\in\suppspie_{h+1}}
            \widetilde{p}^M_h(s'|s,a^E)V^{\pi^E}_{h+1}(s';\popblue{p,r})\bigr|\\
            &\le
            \bigl\|p_h(\cdot|s,a^E)-\widetilde{p}^M_h(\cdot|s,a^E)\bigr\|_1MH\\
            &\le
            \bigl\|p_h(\cdot|s,a^E)-\hat{p}_h(\cdot|s,a^E)\bigr\|_1MH
            + \bigl\|\hat{p}_h(\cdot|s,a^E)
            -\widetilde{p}^M_h(\cdot|s,a^E)\bigr\|_1MH\\
            &\le
            2MHb_h(s,a^E).
\end{split}
\end{align}
Now, let us consider any triple $(s,a,h)\in \suppsapib\setminus\suppsapie$.
Thanks to Lemma \ref{lemma: error propagation pirlo superset}, we can write:
\begin{align*}
    \bigl|r_h(s,a)-\widehat{r}_h(s,a)\bigr|&=
            \bigl|
            \sum\limits_{s'\in\S}
            p^m_h(s'|s,a)V^{\pi^m}_{h+1}(s';p^m,r)-\sum\limits_{s'\in\S}
            \widetilde{p}^m_h(s'|s,a)V^{\widetilde{\pi}^m}_{h+1}(s';\widetilde{p}^m,\widehat{r})
            \bigr|\\
            &\markref{(1)}{=}
            \bigl|
            \sum\limits_{s'\in\S}
            \popblue{p_h(s'|s,a)}V^{\pi^m}_{h+1}(s';p^m,r)-\sum\limits_{s'\in\S}
            \widetilde{p}^m_h(s'|s,a)V^{\widetilde{\pi}^m}_{h+1}(s';\widetilde{p}^m,\widehat{r})\\
            &\qquad \pm
            \sum\limits_{s'\in\S}
            p_h(s'|s,a)V^{\widetilde{\pi}^m}_{h+1}(s';\widetilde{p}^m,\widehat{r})
            \bigr|\\
            &\markref{(2)}{\le}
            \bigl|
            \sum\limits_{s'\in\S}
            (p_h(s'|s,a)-\widetilde{p}^m_h(s'|s,a))V^{\widetilde{\pi}^m}_{h+1}(s';\widetilde{p}^m,\widehat{r})
            \bigr|\\
            &\qquad+
            \bigl|
            \sum\limits_{s'\in\S}
            p_h(s'|s,a)\bigl(
            V^{\pi^m}_{h+1}(s';p^m,r)-
            V^{\widetilde{\pi}^m}_{h+1}(s';\widetilde{p}^m,\widehat{r})
            \bigr)
            \bigr|\\
            &\markref{(3)}{\le}
            2MHb_h(s,a)+
            \bigl|
            \sum\limits_{s'\in\S}
            p_h(s'|s,a)\bigl(
            V^{\pi^m}_{h+1}(s';p^m,r)-
            V^{\widetilde{\pi}^m}_{h+1}(s';\widetilde{p}^m,\widehat{r})
            \bigr)
            \bigr|\\
            &\markref{(4)}{\le}
            2MHb_h(s,a)+
            \bigl|
            \sum\limits_{s'\in \suppspie_{h+1}}
            p_h(s'|s,a)\bigl(
            Q^{\pi^E}_{h+1}(s',a^E;p,r)-
            Q^{\pi^E}_{h+1}(s',a^E;\widetilde{p}^m,\widehat{r})
            \bigr)\bigr|\\
            &\qquad+\bigl|\sum\limits_{s'\notin \suppspie_{h+1}}
            p_h(s'|s,a)\bigl(
            \max\limits_{a'\in\A}Q^{\pi^m}_{h+1}(s',a';p^m,r)-
            \max\limits_{a''\in\A}Q^{\widetilde{\pi}^m}_{h+1}(s',a'';\widetilde{p}^m,\widehat{r})
            \bigr)
            \bigr|\\
            &\markref{(5)}{\le}
            2MHb_h(s,a)+
            \bigl|
            \sum\limits_{s'\in \suppspie_{h+1}}
            p_h(s'|s,a)\bigl(
            Q^{\pi^E}_{h+1}(s',a^E;p,r)-Q^{\pi^E}_{h+1}(s',a^E;\widetilde{p}^m,\widehat{r})
            \bigr)\bigr|\\
            &\qquad+\bigl|\sum\limits_{s'\notin \suppspie_{h+1}}
            p_h(s'|s,a)\bigl(
            \underbrace{\max\limits_{a'\in\A}Q^{\pi^m}_{h+1}(s',a';p^m,r)-
            \max\limits_{a''\in\A}Q^{\widetilde{\pi}^m}_{h+1}(s',a'';\popblue{p^m,r})}_{=0}
            \bigr)
            \bigr|\\ 
            &\le
            2MHb_h(s,a)+
            \sum\limits_{s'\in \suppspie_{h+1}}
            p_h(s'|s,a)
            \underbrace{\bigl|
            Q^{\pi^E}_{h+1}(s',a^E;p,r)-Q^{\pi^E}_{h+1}(s',a^E;\widetilde{p}^m,\widehat{r})\bigr|}_{\eqqcolon Y_{h+1}(s')},\\ 
\end{align*}
where at (1) we have used that, since $(s,a,h)\in \suppsapib$, then $p^m_h(\cdot|s,a)=p_h(\cdot|s,a)$,
at (2) we have applied triangle inequality, at (3) we upper bound the
value function by $H$ times the maximum reward and we use the event in Lemma
\ref{lemma: concentration} twice, at (4) we use triangle inequality and the Bellman optimality equation and that the expert's action $a^E$ is optimal in any $(s,h)\in \suppspie$,
at (5) we apply Eq. \ref{eq: equality Q for non expert actions superset}.

Now, having defined terms $\{Y_h(s)\}_h$ for all $s\in\suppspie$ as above, we recursively bound term $Y_{h+1}(s')$. It should be remarked that
$(s',h+1)\in \suppspie$.
\begin{align*}
            Y_{h+1}(s')&\coloneqq \bigl|Q^{\pi^E}_{h+1}(s',a^E;p,r)-Q^{\pi^E}_{h+1}(s',a^E;\widetilde{p}^m,\widehat{r})\bigr|\\
            &\markref{(6)}{=}
            \bigl|
            r_{h+1}(s',a^E)-\widehat{r}_{h+1}(s',a^E)\bigr|\\
            &\qquad+\bigl|
            \sum\limits_{s''\in \suppspie_{h+2}}
            p_{h+1}(s''|s',a^E)V^{\pi^E}_{h+2}(s'';p,r)-
            \sum\limits_{s''\in \suppspie_{h+2}}
            \widetilde{p}^m_{h+1}(s''|s',a^E)V^{\pi^E}_{h+2}(s'';\widetilde{p}^m,\widehat{r})
            \bigr)
            \bigr|\\
            &\markref{(7)}{\le}
            2MHb_{h+1}(s',a^E)+
            \bigl|
            \sum\limits_{s''\in \suppspie_{h+2}}
            p_{h+1}(s''|s',a^E)V^{\pi^E}_{h+2}(s'';p,r)-
            \sum\limits_{s''\in \suppspie_{h+2}}
            \widetilde{p}^m_{h+1}(s''|s',a^E)V^{\pi^E}_{h+2}(s'';\widetilde{p}^m,\widehat{r})
            \bigr)\\
            &\qquad\pm \sum\limits_{s''\in \suppspie_{h+2}}
            p_{h+1}(s''|s',a^E)V^{\pi^E}_{h+2}(s'';\widetilde{p}^m,\widehat{r})
            \bigr|\\
            &\markref{(8)}{\le}
            2MHb_{h+1}(s',a^E)+
            \sum\limits_{s''\in \suppspie_{h+2}}
            \bigl|\bigl(p_{h+1}(s''|s',a^E)-\widetilde{p}^m_{h+1}(s''|s',a^E)\bigr)V^{\pi^E}_{h+2}(s'';\widetilde{p}^m,\widehat{r})
            \bigr|\\
            &\qquad+
            \sum\limits_{s''\in \suppspie_{h+2}}
            p_{h+1}(s''|s',a^E)
            \bigl|V^{\pi^E}_{h+2}(s'';p,r)-V^{\pi^E}_{h+2}(s'';\widetilde{p}^m,\widehat{r})
            \bigr|\\
            &\markref{(9)}{\le}
            4MHb_{h+1}(s',a^E)+
            \sum\limits_{s''\in \suppspie_{h+2}}
            p_{h+1}(s''|s',a^E)
            \underbrace{\bigl|Q^{\pi^E}_{h+2}(s'',a^E;p,r)-Q^{\pi^E}_{h+2}(s'',a^E;\widetilde{p}^m,\widehat{r})
            \bigr|}_{\eqqcolon Y_{h+2}(s'')},\\
\end{align*}
where at (6) we use the definition of Q function, and we apply triangle inequality;
at (7) we use again triangle inequality and Eq. \ref{eq: norm 1 bound difference expert rewards superset},
at (8) we apply triangle inequality, and at (9) we use again the event in Lemma \ref{lemma: concentration} twice.
The recursion on the $Y$ terms tells us that:
\begin{align}\label{eq: recursion Y terms}
    Y_h(s)\le 4MH b_h(s,a^E) + \E\limits_{s'\sim p_h(\cdot|s,a^E)}Y_{h+1}(s').
\end{align}
Therefore,
we can upper bound the difference between rewards in $(s,a,h)\in \suppsapib\setminus\suppsapie$ as:
\begin{align}\label{eq: bound b difference rewards sampled triples superset}
\begin{split}
    \bigl|r_h(s,a)-\widehat{r}_h(s,a)\bigr|&\le
    2MHb_h(s,a)+ 4MH\sum\limits_{s'\in \suppspie_{h+1}}
    p_h(s'|s,a)\sum\limits_{h'\in\dsb{h+1,H}}\E\limits_{s''\sim
    \rho_{h'}^{p,\pi^E}(\cdot|s_{h+1}=s')}
    b_{h'}(s'',a^E).
    \end{split}
\end{align}
Now, the only missing triples to consider are those $(s,a,h)\notin \suppsapib$.
Similarly to the triples just considered, we can write:
\begin{align}\label{eq: bound b difference rewards non sampled triples superset}
    \bigl|r_h(s,a)-\widehat{r}_h(s,a)\bigr|&=
            \bigl|
            \sum\limits_{s'\in\S}
            p^m_h(s'|s,a)V^{\pi^m}_{h+1}(s';p^m,r)-\sum\limits_{s'\in\S}
            \widetilde{p}^m_h(s'|s,a)V^{\widetilde{\pi}^m}_{h+1}(s';\widetilde{p}^m,\widehat{r})
            \bigr|\notag\\
            &\markref{(1)}{=}
            \bigl|
            \min\limits_{s'\in\S}V^{\pi^m}_{h+1}(s';p^m,r)-\min\limits_{s''\in\S}
            V^{\widetilde{\pi}^m}_{h+1}(s'';\widetilde{p}^m,\widehat{r})\bigr|\notag\\
            &\markref{(2)}{\le}
            \max\limits_{s'\in\S}\bigl|
            V^{\pi^m}_{h+1}(s';p^m,r)-
            V^{\widetilde{\pi}^m}_{h+1}(s';\widetilde{p}^m,\widehat{r})\bigr|\notag\\
            &\markref{(3)}{=}
            \max\Bigl\{\max\limits_{s'\in \suppspie_{h+1}}\bigl|
            V^{\popblue{\pi^E}}_{h+1}(s';\popblue{p},r)-
            V^{\popblue{\pi^E}}_{h+1}(s';\widetilde{p}^m,\widehat{r})\bigr|,\notag\\
            &\qquad\max\limits_{s'\notin \suppspie_{h+1}}\bigl|
            \underbrace{V^{\pi^m}_{h+1}(s';p^m,r)-
            V^{\widetilde{\pi}^m}_{h+1}(s';\widetilde{p}^m,\widehat{r})}_{=0}\bigr|
            \Bigr\}\notag\\
            &=
            \max\limits_{s'\in \suppspie_{h+1}}\bigl|
            Q^{\pi^E}_{h+1}(s',a^E;p,r)-
            Q^{\pi^E}_{h+1}(s',a^E;\widetilde{p}^m,\widehat{r})\bigr|\notag\\
            &\eqqcolon
            \max\limits_{s'\in \suppspie_{h+1}}Y_{h+1}(s')\notag\\
            &\markref{(4)}{\le}
            4MH\max\limits_{s'\in \suppspie_{h+1}}\Bigl(
            b_{h+1}(s',a^E)+\sum\limits_{h'\in\dsb{h+2,H}}
            \E\limits_{s''\sim \rho^{p,\pi^E}_{h'}(\cdot|s_{h+1}=s')}
            b_{h'}(s'',a^E)
            \Bigr),
\end{align}
where at (1) we have used that $(s,a,h)\notin \suppsapib$ and the definitions of $p^m$
and $\widetilde{p}^m$, at (2) we have used that for any pair of real-valued functions $f,g$
it holds that $|\min_{x}f(x)-\min_{x}g(x)|\le \max_{x}|f(x)-g(x)|$,
at (3) we use Eq. \ref{eq: equality Q for non expert actions superset} to realize that
in $(s,h)$ outside $\suppspie_{h+1}$ we have an equality of Q-functions, and thus the difference is 0,
and that in $\suppspie$ the optimal action is always the expert's action,
at (4) we have unfolded the recursion on the $Y$ terms by using Eq. \ref{eq: recursion Y terms}.

By combining Eq. \ref{eq: bound d hausdorff superset pirlo}
with Eq. \ref{eq: norm 1 bound difference expert rewards superset},
Eq. \ref{eq: bound b difference rewards sampled triples superset},
and Eq. \ref{eq: bound b difference rewards non sampled triples superset},
we get:
\begin{align*}
    \mathcal{H}_d(\super, \superrelax)&\le
    \sum\limits_{h\in\dsb{H}}
    \biggl(
    \sum\limits_{(s,a)\in\suppsapie_h}\rho^{p,\pi^b}_h(s,a)2Hb_h(s,a)\\
    &\qquad
    +\sum\limits_{(s,a)\in \suppsapib_h\setminus\suppsapie_h}\rho^{p,\pi^b}_h(s,a)
    \Bigl(
    2Hb_h(s,a)\\
    &+4H\sum\limits_{s'\in \suppspie_{h+1}}
    p_h(s'|s,a)\sum\limits_{h'\in\dsb{h+1,H}}\E\limits_{s''\sim
    \rho_{h'}^{p,\pi^E}(\cdot|s_{h+1}=s')}
    b_{h'}(s'',a^E)
    \Bigr)\\
    &\qquad
        +\max\limits_{(s,a)\notin \suppsapib_h} 4H\max\limits_{s'\in \suppspie_{h+1}}\sum\limits_{h'\in\dsb{h+1,H}}
            \E\limits_{s''\sim \rho^{p,\pi^E}_{h'}(\cdot|s_{h+1}=s')}
            b_{h'}(s'',a^E)
        \biggr)\\
&\le
    \sum\limits_{h\in\dsb{H}}
    \biggl(
    \sum\limits_{(s,a)\in\suppsapie_h}\rho^{p,\pi^b}_h(s,a)2Hb_h(s,a)\\
    &\qquad
    +\sum\limits_{(s,a)\in \suppsapib_h\setminus\suppsapie_h}\rho^{p,\pi^b}_h(s,a)
    \Bigl(
    2Hb_h(s,a)+ 4H^2\max\limits_{(s',a^E,h')\in\suppsapie}
    b_{h'}(s',a^E)
    \Bigr)\\
    &\qquad
        +\max\limits_{(s,a)\notin \suppsapib_h} 4H^2\max\limits_{(s',a^E,h')\in\suppsapie}
    b_{h'}(s',a^E)
        \biggr)\\
    &\le
    \sum\limits_{h\in\dsb{H}}
    \biggl(\sum\limits_{(s,a)\in\suppsapie_h}\rho^{p,\pi^b}_h(s,a)2Hb_h(s,a)\\
    &\qquad+\sum\limits_{(s,a)\in \suppsapib_h\setminus\suppsapie_h}\rho^{p,\pi^b}_h(s,a)
    2Hb_h(s,a)
    +8H^2\max\limits_{(s',a^E,h')\in\suppsapie}
    b_{h'}(s',a^E)
    \biggr)\\
    &\le
        2H\sum\limits_{h\in\dsb{H}}
        \E\limits_{(s,a)\sim \rho^{p,\pi^b}_h(\cdot,\cdot)} b_h(s,a)
        +8H^3\max\limits_{(s,a,h)\in\suppsapie}
        b_{h}(s,a).
\end{align*}
\end{proof}

\subsubsection{Lemmas for Theorem \ref{theorem: upper bound d infty pirlo}}
\begin{lemma}[Performance Decomposition Subset]\label{lemma: performance decomposition pirlo d infty}
    Under good event $\mathcal{E}$, it holds that:
    \begin{align*}
        \mathcal{H}_\infty(\sub, \subrelax)&\le
        2H^2
        \max\limits_{(s,a,h)\in \suppsapib} b_h(s,a)
        +4H^3\max\limits_{(s,a,h)\in\suppsapie}
        b_{h}(s,a).
    \end{align*}
\end{lemma}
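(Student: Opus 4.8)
The plan is to mirror the proof of Lemma~\ref{lemma: performance decomposition pirlo} almost verbatim, modifying only the final aggregation step to reflect that $d_\infty$ replaces the $\rho^{p,\pi^b}$-weighted $\ell_1$ average inside the support by a plain $\ell_\infty$ maximum, i.e. $d_\infty(r,\widehat{r}) = \tfrac{1}{M}\sum_{h\in\dsb{H}}\|r_h-\widehat{r}_h\|_\infty$. First I would note that, under the good event $\mathcal{E}$, we have $\subrelax \subseteq \sub$, so the Hausdorff distance reduces to the one-sided term $\sup_{r\in\sub}\inf_{\widetilde{r}\in\subrelax} d_\infty(r,\widetilde{r})$. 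Then, for an arbitrary $r\in\sub$, I would invoke Lemma~\ref{lemma: error propagation pirlo subset} to exhibit the explicit reward $\widehat{r}\in\subrelax$ constructed there and upper bound the infimum by $d_\infty(r,\widehat{r})$.

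The key observation is that the pointwise estimates on $|r_h(s,a)-\widehat{r}_h(s,a)|$ established in the proof of Lemma~\ref{lemma: performance decomposition pirlo} --- Eq.~\eqref{eq: norm 1 bound difference expert rewards} on $\suppsapie$, Eq.~\eqref{eq: bound b difference rewards sampled triples} on $\suppsapib\setminus\suppsapie$, and Eq.~\eqref{eq: bound b difference rewards non sampled triples} on the complement of $\suppsapib$ --- are derived triple by triple and hence do not depend on the choice of the outer semimetric; they may be reused unchanged. The only difference is that, instead of averaging under $\rho^{p,\pi^b}_h$, I would take the maximum over all $(s,a)$ at each stage $h$, splitting it into the same three regions. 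On $\suppsapie_h$ the bound gives $2MH\,b_h(s,a^E)$; on $\suppsapib_h\setminus\suppsapie_h$, using $\sum_{s'}p_h(s'|s,a)\le 1$ and that the inner sum over $h'\in\dsb{h+1,H}$ has at most $H$ terms, it gives $2MH\,b_h(s,a) + 4MH^2\max_{(s',a^E,h')\in\suppsapie} b_{h'}(s',a^E)$; on the complement of $\suppsapib_h$ it gives $4MH^2\max_{(s',a^E,h')\in\suppsapie} b_{h'}(s',a^E)$. Taking the maximum across the three regions yields $\|r_h-\widehat{r}_h\|_\infty \le 2MH\max_{(s,a)\in\suppsapib_h} b_h(s,a) + 4MH^2\max_{(s,a,h')\in\suppsapie} b_{h'}(s,a)$.

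Summing over the $H$ stages, dividing by $M$, and bounding $\max_{(s,a)\in\suppsapib_h} b_h(s,a)\le \max_{(s,a,h')\in\suppsapib} b_{h'}(s,a)$ then produces $\mathcal{H}_\infty(\sub,\subrelax)\le 2H^2\max_{(s,a,h)\in\suppsapib} b_h(s,a) + 4H^3\max_{(s,a,h)\in\suppsapie} b_h(s,a)$, as claimed. I expect the main obstacle to be bookkeeping rather than new ideas: I must verify that the recursive estimates on the $X_{h+1}(s')$ terms feeding Eqs.~\eqref{eq: bound b difference rewards sampled triples}--\eqref{eq: bound b difference rewards non sampled triples} remain intact when the outer aggregation is by $\ell_\infty$, and confirm that the sole effect of the semimetric change is to turn the $d$-proof's $\sum_h \E_{(s,a)\sim\rho^{p,\pi^b}_h} b_h(s,a)$ into $\sum_h \max_{(s,a)\in\suppsapib_h} b_h(s,a)\le H\max_{(s,a,h)\in\suppsapib} b_h(s,a)$, which accounts for the extra factor of $H$ (a $2H^2$ leading term here versus $2H\sum_h \E_\rho b_h$ in the $d$ case). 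Since those recursions bound a single propagated chain of $b$-terms independently of the sampling weights, this verification should go through directly.
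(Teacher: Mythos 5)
Your proposal is correct and follows essentially the same route as the paper: reduce the Hausdorff distance to the one-sided term via $\subrelax\subseteq\sub$ under $\mathcal{E}$, reuse the pointwise bounds from the $d$-case (Eqs.~\eqref{eq: norm 1 bound difference expert rewards}, \eqref{eq: bound b difference rewards sampled triples}, \eqref{eq: bound b difference rewards non sampled triples}) unchanged, and replace the $\rho^{p,\pi^b}$-weighted average by a stagewise maximum over the three regions before summing over $h$. Your accounting of the constants (a single $4MH^2$ term surviving the max, versus two such terms being summed in the $d$-proof, and the extra factor $H$ from $\sum_h\max \le H\max$) matches the paper exactly.
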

\begin{proof}
Observe that:
\begin{align}\label{eq: bound max norm hausdorff subset pirlo}
    \mathcal{H}_\infty(\sub,\subrelax)&\coloneqq
    \max\{\sup\limits_{r\in \sub}
            \inf\limits_{\widetilde{r}\in \subrelax}d_\infty(r,\widetilde{r}),
            \sup\limits_{\widetilde{r}\in \subrelax}
            \inf\limits_{r\in \sub}d_\infty(r,\widetilde{r})\}\notag\\
    &\markref{(1)}{=}
    \sup\limits_{r\in \sub}
            \inf\limits_{\widetilde{r}\in \subrelax}d_\infty(r,\widetilde{r})\notag\\
    &\eqqcolon\sup\limits_{r\in \sub}
            \inf\limits_{\widetilde{r}\in \subrelax}
            \frac{1}{M}
        \sum\limits_{h\in\dsb{H}}\max\limits_{(s,a)\in \SA}
            \bigl|
            r_h(s,a)-\widetilde{r}_h(s,a)
        \bigr|\notag\\
    &\markref{(2)}{\le}
    \sup\limits_{r\in \sub}
    \frac{1}{M}
        \sum\limits_{h\in\dsb{H}}\max\limits_{(s,a)\in \SA}
            \bigl|
            r_h(s,a)-\widehat{r}_h(s,a)
        \bigr|,
\end{align}
where at (1) we have used that, under event $\mathcal{E}$, we have $\subrelax\subseteq\sub$,
and at (2) we have applied Lemma \ref{lemma: error propagation pirlo subset}, denoting with $\widehat{r}$ the reward chosen from $\subrelax$.

By combining Eq. \ref{eq: bound max norm hausdorff subset pirlo}
with Eq. \ref{eq: norm 1 bound difference expert rewards},
Eq. \ref{eq: bound b difference rewards sampled triples},
and Eq. \ref{eq: bound b difference rewards non sampled triples},
we get:
\begin{align*}
    \mathcal{H}_\infty(\sub, \subrelax)&\le
    \sum\limits_{h\in\dsb{H}}
        \max\biggl\{
        \max\limits_{(s,a)\in \suppsapie_h} 2 Hb_h(s,a),\\
        &\qquad\max\limits_{(s,a)\in \suppsapib_h\setminus\suppsapie_h}
        2Hb_h(s,a)+ 4H\sum\limits_{s'\in \suppspie_{h+1}}
    p_h(s'|s,a)\sum\limits_{h'\in\dsb{h+1,H}}\E\limits_{s''\sim
    \rho_{h'}^{p,\pi^E}(\cdot|s_{h+1}=s')}
    b_{h'}(s'',a^E),\\
    &\qquad
        \max\limits_{(s,a)\notin \suppsapib_h} 4H\max\limits_{s'\in \suppspie_{h+1}}\sum\limits_{h'\in\dsb{h+1,H}}
            \E\limits_{s''\sim \rho^{p,\pi^E}_{h'}(\cdot|s_{h+1}=s')}
            b_{h'}(s'',a^E)
        \biggr\}\\
        &\le
    \sum\limits_{h\in\dsb{H}}
        \max\biggl\{
        \max\limits_{(s,a)\in \suppsapie_h} 2H b_h(s,a),\\
        &\qquad\max\limits_{(s,a)\in \suppsapib_h\setminus\suppsapie_h}
        2Hb_h(s,a)+ 4H^2\max\limits_{(s',a^E,h')\in\suppsapie}
    b_{h'}(s',a^E),\\
    &\qquad
        \max\limits_{(s,a)\notin \suppsapib_h} 4H^2\max\limits_{(s',a^E,h')\in\suppsapie}
    b_{h'}(s',a^E)
        \biggr\}\\
                &\le
    \sum\limits_{h\in\dsb{H}}
        \max\biggl\{
        \max\limits_{(s,a)\in \suppsapie_h} 2H b_h(s,a),\max\limits_{(s,a)\in \suppsapib_h\setminus\suppsapie_h}
        2Hb_h(s,a)+ 4H^2\max\limits_{(s',a^E,h')\in\suppsapie}
    b_{h'}(s',a^E)
        \biggr\}\\
    &\le
    \sum\limits_{h\in\dsb{H}}
        \max\limits_{(s,a)\in \suppsapib_h} 2Hb_h(s,a)
        +\sum\limits_{h\in\dsb{H}}4H^2\max\limits_{(s',a^E,h')\in\suppsapie}
        b_{h'}(s',a^E)
        \biggr\}\\
        &\le
        2H^2
        \max\limits_{(s,a,h)\in \suppsapib} b_h(s,a)
        +4H^3\max\limits_{(s,a,h)\in\suppsapie}
        b_{h}(s,a).
\end{align*}
\end{proof}

\begin{lemma}[Performance Decomposition Superset]\label{lemma: performance decomposition pirlo superset d infty}
    Under good event $\mathcal{E}$, it holds that:
    \begin{align*}
        \mathcal{H}_\infty(\super \superrelax)&\le
        2H^2
        \max\limits_{(s,a,h)\in \suppsapib} b_h(s,a)
        +4H^3\max\limits_{(s,a,h)\in\suppsapie}
        b_{h}(s,a).
    \end{align*}
\end{lemma}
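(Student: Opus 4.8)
The plan is to mirror the proof of Lemma~\ref{lemma: performance decomposition pirlo d infty} (the $d_\infty$ \emph{subset} bound), replacing the subset reward-choice machinery with its superset counterpart, exactly as the $d$-version Lemma~\ref{lemma: performance decomposition pirlo superset} mirrors Lemma~\ref{lemma: performance decomposition pirlo}. First I would unfold the Hausdorff distance under the semimetric $d_\infty$ and use that, under the good event $\mathcal{E}$, we have the inclusion $\super\subseteq\superrelax$ (established at step (1) of the proof of Lemma~\ref{lemma: performance decomposition pirlo superset}). Consequently the first inner supremum vanishes and only one term survives:
\[
    \mathcal{H}_\infty(\super,\superrelax) = \sup\limits_{\widetilde{r}\in\superrelax}\inf\limits_{r\in\super}\frac{1}{M}\sum\limits_{h\in\dsb{H}}\max\limits_{(s,a)\in\SA}\bigl|r_h(s,a)-\widetilde{r}_h(s,a)\bigr|.
\]
I would then invoke Lemma~\ref{lemma: error propagation pirlo superset}, which for any $\widetilde{r}\in\superrelax$ constructs an explicit $r\in\super$, and bound the infimum by this particular choice, obtaining
\[
    \mathcal{H}_\infty(\super,\superrelax)\le\sup\limits_{\widetilde{r}\in\superrelax}\frac{1}{M}\sum\limits_{h\in\dsb{H}}\max\limits_{(s,a)\in\SA}\bigl|r_h(s,a)-\widetilde{r}_h(s,a)\bigr|.
\]

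The second step is to reuse, verbatim, the three pointwise bounds on $|r_h(s,a)-\widetilde{r}_h(s,a)|$ already established for the superset in the $d$-analysis: Eq.~\eqref{eq: norm 1 bound difference expert rewards superset} for expert actions $(s,a^E,h)\in\suppsapie$, Eq.~\eqref{eq: bound b difference rewards sampled triples superset} for sampled non-expert triples $(s,a,h)\in\suppsapib\setminus\suppsapie$, and Eq.~\eqref{eq: bound b difference rewards non sampled triples superset} for unsampled triples $(s,a,h)\notin\suppsapib$. These bounds are purely pointwise in $(s,a,h)$ and are derived from the telescoping recursion on the $Y$-terms (Eq.~\eqref{eq: recursion Y terms}); they do not depend on which dissimilarity function aggregates them, so they transfer unchanged. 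The only difference from the $d$-version is the aggregation step: instead of weighting by $\rho_h^{p,\pi^b}$ we take $\max_{(s,a)\in\SA}$ at each stage $h$, which I would split into the three disjoint regions $\suppsapie_h$, $\suppsapib_h\setminus\suppsapie_h$, and the complement of $\suppsapib_h$.

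Finally, I would collapse the nested stage sums $\sum_{h'\in\dsb{h+1,H}}\E_{s''\sim\rho^{p,\pi^E}_{h'}(\cdot|s_{h+1}=s')}b_{h'}(s'',a^E)$ by upper bounding them with $H\max_{(s,a,h)\in\suppsapie}b_h(s,a)$, so that the three regional maxima become $2Hb_h(s,a)$, $2Hb_h(s,a)+4H^2\max_{\suppsapie}b$, and $4H^2\max_{\suppsapie}b$, respectively. Taking the outer maximum over the three regions and summing over $h\in\dsb{H}$ yields exactly $2H^2\max_{(s,a,h)\in\suppsapib}b_h(s,a)+4H^3\max_{(s,a,h)\in\suppsapie}b_h(s,a)$, reproducing the final display of Lemma~\ref{lemma: performance decomposition pirlo d infty}. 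The main (though essentially routine) obstacle is this last bookkeeping: carefully tracking the three regions and the factors of $H$ picked up from the telescoped $Y$-recursion, ensuring that replacing $\E_{\rho^{p,\pi^b}}$ by $\max_{\SA}$ interacts correctly with the remaining nested expectations over $\rho^{p,\pi^E}$, so that no additional cardinality or $\rho_{\min}$ factor is introduced.
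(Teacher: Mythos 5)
Your proposal is correct and follows essentially the same route as the paper's own proof: unfold $\mathcal{H}_\infty$, drop the first term via $\super\subseteq\superrelax$ under $\mathcal{E}$, bound the infimum using the reward constructed in Lemma~\ref{lemma: error propagation pirlo superset}, and then reuse the three pointwise bounds from the $d$-analysis with $\max_{(s,a)\in\SA}$ replacing the expectation over $\rho_h^{p,\pi^b}$, splitting by the regions $\suppsapie_h$, $\suppsapib_h\setminus\suppsapie_h$, and the complement. The final bookkeeping you describe (collapsing the nested stage sums to $H\max_{(s,a,h)\in\suppsapie}b_h(s,a)$ and summing the regional maxima over $h$) matches the paper's derivation exactly.
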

\begin{proof}
Observe that:
\begin{align}\label{eq: bound max norm hausdorff superset pirlo}
    \mathcal{H}_\infty(\super,\superrelax)&\coloneqq
    \max\{\sup\limits_{r\in \super}
            \inf\limits_{\widetilde{r}\in \superrelax}d_\infty(r,\widetilde{r}),
            \sup\limits_{\widetilde{r}\in \superrelax}
            \inf\limits_{r\in \super}d_\infty(r,\widetilde{r})\}\notag\\
    &\markref{(1)}{=}
    \sup\limits_{\widetilde{r}\in \superrelax}
            \inf\limits_{r\in \super}d_\infty(r,\widetilde{r})\notag\\
    &\eqqcolon\sup\limits_{\widetilde{r}\in \superrelax}
            \inf\limits_{r\in \super}
            \frac{1}{M}
        \sum\limits_{h\in\dsb{H}}\max\limits_{(s,a)\in \SA}
            \bigl|
            r_h(s,a)-\widetilde{r}_h(s,a)
        \bigr|\notag\\
    &\markref{(2)}{\le}
    \sup\limits_{\widetilde{r}\in \superrelax}
    \frac{1}{M}
        \sum\limits_{h\in\dsb{H}}\max\limits_{(s,a)\in \SA}
            \bigl|
            r_h(s,a)-\widetilde{r}_h(s,a)
        \bigr|,
\end{align}
where at (1) we have used that, under event $\mathcal{E}$, we have $\super\subseteq\superrelax$,
and at (2) we have applied Lemma \ref{lemma: error propagation pirlo superset}, denoting with $r$ the reward chosen from $\super$.

By combining Eq. \ref{eq: bound max norm hausdorff superset pirlo}
with Eq. \ref{eq: norm 1 bound difference expert rewards superset},
Eq. \ref{eq: bound b difference rewards sampled triples superset},
and Eq. \ref{eq: bound b difference rewards non sampled triples superset},
we get:
\begin{align*}
    \mathcal{H}_\infty(\super, \superrelax)&\le
    \sum\limits_{h\in\dsb{H}}
        \max\biggl\{
        \max\limits_{(s,a)\in \suppsapie_h} 2H b_h(s,a),\\
        &\qquad\max\limits_{(s,a)\in \suppsapib_h\setminus\suppsapie_h}
        2Hb_h(s,a)+ 4H\sum\limits_{s'\in \suppspie_{h+1}}
    p_h(s'|s,a)\sum\limits_{h'\in\dsb{h+1,H}}\E\limits_{s''\sim
    \rho_{h'}^{p,\pi^E}(\cdot|s_{h+1}=s')}
    b_{h'}(s'',a^E),\\
    &\qquad
        \max\limits_{(s,a)\notin \suppsapib_h} 4H\max\limits_{s'\in \suppspie_{h+1}}\sum\limits_{h'\in\dsb{h+1,H}}
            \E\limits_{s''\sim \rho^{p,\pi^E}_{h'}(\cdot|s_{h+1}=s')}
            b_{h'}(s'',a^E)
        \biggr\}\\
        &\le
    \sum\limits_{h\in\dsb{H}}
        \max\biggl\{
        \max\limits_{(s,a)\in \suppsapie_h} 2H b_h(s,a),\\
        &\qquad\max\limits_{(s,a)\in \suppsapib_h\setminus\suppsapie_h}
        2Hb_h(s,a)+ 4H^2\max\limits_{(s',a^E,h')\in\suppsapie}
    b_{h'}(s',a^E),\\
    &\qquad
        \max\limits_{(s,a)\notin \suppsapib_h} 4H^2\max\limits_{(s',a^E,h')\in\suppsapie}
    b_{h'}(s',a^E)
        \biggr\}\\
                &\le
    \sum\limits_{h\in\dsb{H}}
        \max\biggl\{
        \max\limits_{(s,a)\in \suppsapie_h} 2H b_h(s,a),\max\limits_{(s,a)\in \suppsapib_h\setminus\suppsapie_h}
        2Hb_h(s,a)+ 4H^2\max\limits_{(s',a^E,h')\in\suppsapie}
    b_{h'}(s',a^E)
        \biggr\}\\
    &\le
    \sum\limits_{h\in\dsb{H}}
        \max\limits_{(s,a)\in \suppsapib_h} 2Hb_h(s,a)
        +\sum\limits_{h\in\dsb{H}}4H^2\max\limits_{(s',a^E,h')\in\suppsapie}
        b_{h'}(s',a^E)
        \biggr\}\\
        &\le
        2H^2
        \max\limits_{(s,a,h)\in \suppsapib} b_h(s,a)
        +4H^3\max\limits_{(s,a,h)\in\suppsapie}
        b_{h}(s,a).
\end{align*}
\end{proof}

\subsubsection{Proofs of the main theorems}
Thanks to Lemma \ref{lemma: performance decomposition pirlo}
and Lemma \ref{lemma: performance decomposition pirlo superset},
we can conclude the proof of Theorem \ref{theorem: upper bound d pirlo}.
\upperboundpirlo*
\begin{proof}
The proof for the subset and superset is completely analogous.
Under good event $\mathcal{E}$, the performance decomposition
lemma for the subset (Lemma \ref{lemma: performance decomposition pirlo}) tells us that:
\begin{align*}
    \mathcal{H}_d(\sub, \subrelax)&\le
        2H\sum\limits_{h\in\dsb{H}}
        \E\limits_{(s,a)\sim \rho^{p,\pi^b}_h(\cdot,\cdot)} b_h(s,a)
        +8H^3\max\limits_{(s,a,h)\in\suppsapie}
        b_{h}(s,a)\le \epsilon.
\end{align*}
We upper bound both the terms of the sum by $\epsilon/2$.
The bound of the first term is analogous to the bound of the term provided in the proof of Theorem \ref{theorem: upper bound d irlo}, so we will not rewrite it here; with regards to the other term, we have:
    \begin{align*}
    8H^3\max\limits_{(s,a^E,h)\in \suppsapie}b_{h}(s,a^E))&\markref{(1)}{=}
    8H^3\max\limits_{(s,a^E,h)\in \suppsapie}\sqrt{2\frac{\beta(N_h^b(s,a^E),\delta)}{N_h^b(s,a^E)}}\\
    &\markref{(2)}{\le}
    8\sqrt{2}H^3\sqrt{\beta(\tau^b,\delta)}\max\limits_{(s,a^E,h)\in \suppsapie}
    \sqrt{\frac{1}{N_h^b(s,a^E)}}\\
   &\markref{(3)}{\le}
    8\sqrt{2}H^3\sqrt{\beta(\tau^b,\delta)}\max\limits_{(s,a^E,h)\in \suppsapie}
    \sqrt{c_4 \frac{\ln\frac{|\suppsapib|}{\delta}}
        {\tau^b \rho_h^{p,\pi^b}(s,a^E)}}\\
        &\markref{(4)}{=}
    c_5H^3\sqrt{\frac{\beta(\tau^b,\delta)\ln\frac{|\suppsapib|}{\delta}}
        {\tau^b \rho_{\min}^{\pi^b,\suppsapie}}}\le \epsilon/2,
    \end{align*}
    where at (1) we have used the definition of the $b$ terms,
    at (2) we have upper bounded $\beta(N_{\bar{h}}^b(\bar{s},a^E),\delta)\le \beta(\tau^b,\delta)$
    for all $(\bar{s},\bar{h})\in \suppspie$, at (3) we use event $\mathcal{E}_4$,
    at (4) we define $c_5\coloneqq 8\sqrt{2c_4}$ and we use the definition of $\rho_{\min}^{\pi^b,\suppsapie}$.

    Similarly to the proof of Theorem \ref{theorem: upper bound d irlo},
    we apply Lemma \ref{lemma: lambert} to ($c_6\coloneqq 4c_5^2$):
    \begin{align*}
        \tau^b \ge c_6\frac{H^6\ln\frac{|\suppsapib|}
        {\delta}}{\rho_{\min}^{\pi^b,\suppsapie}\epsilon^2}\bigl(\ln\frac{4|\suppsapib|}{\delta}+
        (|\suppspibmax|-1)\ln\bigl(
            e(1+\tau^b/(|\suppspibmax|-1))
        \bigr)\bigr),
    \end{align*}
    to obtain:
    \begin{align*}
        \tau^b &\le\widetilde{\mathcal{O}}\Biggl(
            \frac{H^6\ln\frac{1}
        {\delta}}{\rho_{\min}^{\pi^b,\suppsapie}\epsilon^2}\biggl(
            \ln\frac{1}
        {\delta}+|\suppspibmax|
        \biggr)
        \Biggr).
    \end{align*}
We can do the same for the superset through Lemma \ref{lemma: performance decomposition pirlo superset}. By combining the various bounds, we get the result.
\end{proof}
Thanks to Lemma \ref{lemma: performance decomposition pirlo d infty}
and Lemma \ref{lemma: performance decomposition pirlo superset d infty},
we can conclude the proof of Theorem \ref{theorem: upper bound d infty pirlo}.
\upperbounddinftypirlo*
\begin{proof}[Proof Sketch]
The proof is analogous to that of Theorem \ref{theorem: upper bound d pirlo}.
The only difference is that we use Lemma \ref{lemma: performance decomposition pirlo d infty} and Lemma \ref{lemma: performance decomposition pirlo d infty}, and that
we follow the proof of Theorem \ref{theorem: upper bound d infty irlo} instead
of that of Theorem \ref{theorem: upper bound d irlo} to bound the first term of:
\begin{align*}
        2H^2
        \max\limits_{(s,a,h)\in \suppsapib} b_h(s,a)
        +4H^3\max\limits_{(s,a,h)\in\suppsapie}
        b_{h}(s,a)\le\epsilon.
\end{align*}
\end{proof}

\subsection{Sample complexity for \pirlo with additional requirements}\label{subsec: pirlo r = r hat}
In the proofs of Theorem \ref{theorem: upper bound d pirlo} and Theorem \ref{theorem: upper bound d infty pirlo}, we have used reward choice lemmas that set $\widehat{r}_h(s,a)\neq r_h(s,a)$ in $(s,a,h)\notin\suppsapib$. However, it might be interesting to relate directly the error in the estimation of the transition model with the difference in the reward functions, so that where we do not have samples we have zero error.
We would like to have $\widehat{r}_h(s,a)= r_h(s,a)$ in $(s,a,h)\notin\suppsapib$. Notice that this property is satisfied in the proofs of Theorem \ref{theorem: upper bound d irlo}
and Theorem \ref{theorem: upper bound d infty irlo}. Moreover, for a notion of distance other than $d$ or $d_\infty$, the condition $\widehat{r}_h(s,a)= r_h(s,a)$ in $(s,a,h)\notin\suppsapib$ might even be needed.
Therefore, in this section, we provide reward choice lemmas that satisfy this property, and we show that this selection ends up in a $H^8$ dependence in the sample complexity instead of $H^6$.
\begin{lemma}[Reward Choice Subset]\label{lemma: error propagation pirlo subset v2}
    Under good event $\mathcal{E}$, for any $r\in \sub$,
    the reward $\widehat{r}$ constructed (recursively) as:
    \begin{align*}
        \begin{cases}
            \widehat{r}_h(s,a^E)=r_h(s,a^E)+\sum\limits_{s'\in\S}
            p_h(s'|s,a^E)V^{\pi^E}_{h+1}(s';p,r)-\sum\limits_{s'\in\S}
            \widetilde{p}^m_h(s'|s,a^E)V^{\pi^E}_{h+1}(s';\widetilde{p}^m,\widehat{r})\\
            \qquad\qquad\qquad+\max\limits_{s'\in\S}
            V^{\widetilde{\pi}^M}_{h+1}(s';\widetilde{p}^M,\widehat{r})
            -\max\limits_{s'\in\S}V^{\pi^M}_{h+1}(s';p^M,r),
            \qquad\forall (s,h)\in \suppspie\\
            \widehat{r}_h(s,a)=r_h(s,a),\quad\forall (s,a,h)\notin \suppsapib\\
            \widehat{r}_h(s,a)=r_h(s,a)+\sum\limits_{s'\in\S}
            p^M_h(s'|s,a)V^{\pi^M}_{h+1}(s';p^M,r)-\sum\limits_{s'\in\S}
            \widetilde{p}^M_h(s'|s,a)V^{\widetilde{\pi}^M}_{h+1}(s';\widetilde{p}^M,\widehat{r}),
            \quad\text{otherwise}\\            
        \end{cases},
    \end{align*}
    belongs to $\subrelax$.
\end{lemma}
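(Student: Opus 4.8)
The goal is only the membership claim $\widehat{r}\in\subrelax$ (a single inclusion), so I would verify the defining inequality of $\subrelax$ in the form of Eq.~\eqref{eq: representation relaxations with ptildem ptildeM}: for every $(s,h)\in\suppspie$ and every $a\in\A\setminus\{a^E\}$,
\begin{align*}
Q^{\pi^E}_h(s,a^E;\widetilde{p}^m,\widehat{r})\ge Q^{\widetilde{\pi}^M}_h(s,a;\widetilde{p}^M,\widehat{r}).
\end{align*}
Throughout I would work under the good event $\mathcal{E}$, which guarantees $\widehat{\pi}^E=\pi^E$, $\estsuppspie=\suppspie$, $\estsuppsapib=\suppsapib$, and $p\in\mathcal{C}(\widehat{p},b)$, so that the population supports and the optimizing models $\widetilde{p}^m,\widetilde{p}^M$ are well defined. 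The structure mirrors the proof of Lemma~\ref{lemma: error propagation pirlo subset}; the only genuinely new feature is that $\widehat{r}$ is now \emph{forced} to equal $r$ on the unsampled region $(s,a,h)\notin\suppsapib$, and the extra summand $\max_{s'}V^{\widetilde{\pi}^M}_{h+1}(s';\widetilde{p}^M,\widehat{r})-\max_{s'}V^{\pi^M}_{h+1}(s';p^M,r)$ in the expert-action branch is inserted precisely to absorb the mismatch this forcing creates.

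\textbf{Key steps.} Writing $\Delta_{h+1}$ for that stage-$(h+1)$ gap, I would run a backward induction on $h$ establishing three identities. (I) For $(s,a^E,h)\in\suppsapie$, substituting the expert-action branch of the construction and using that expert actions keep the process inside $\suppspie$ (so the $\widetilde{p}^m$-continuation telescopes exactly as in Eq.~\eqref{eq: equality of Q for proof error prop pirlo}), one obtains $Q^{\pi^E}_h(s,a^E;\widetilde{p}^m,\widehat{r})=Q^{\pi^E}_h(s,a^E;p,r)+\Delta_{h+1}$. (II) For sampled non-expert triples $(s,a,h)\in\suppsapib\setminus\suppsapie$, substituting the ``otherwise'' branch reproduces the clean matching of Lemma~\ref{lemma: error propagation pirlo subset}, namely $Q^{\widetilde{\pi}^M}_h(s,a;\widetilde{p}^M,\widehat{r})=Q^{\pi^M}_h(s,a;p^M,r)$. (III) For unsampled triples $(s,a,h)\notin\suppsapib$, since $\widehat{r}_h(s,a)=r_h(s,a)$ and both $\widetilde{p}^M$ and $p^M$ transition deterministically to their value-maximizing successors, the two $Q$-values reduce to $r_h(s,a)+\max_{s'}V^{\widetilde{\pi}^M}_{h+1}(\cdot)$ and $r_h(s,a)+\max_{s'}V^{\pi^M}_{h+1}(\cdot)$, which differ by exactly $\Delta_{h+1}$.

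\textbf{Conclusion and the hard part.} Combining (I)--(III) with the representation $r\in\sub$ from Eq.~\eqref{eq: representation sub super with pm pM}, i.e.\ $Q^{\pi^E}_h(s,a^E;p,r)\ge Q^{\pi^M}_h(s,a;p^M,r)$, reduces the target inequality to a statement about $\Delta_{h+1}$: on the unsampled branch (III) the shift $\Delta_{h+1}$ appears on both sides and cancels, leaving exactly the $\sub$-inequality, whereas on the sampled branch (II) it appears only on the left, so the inequality transfers provided the injected gap has the correct sign there. The main obstacle is precisely this control of $\Delta_{h+1}$ through the recursion: forcing $\widehat{r}=r$ off-support destroys the exact $Q$-matching used in Lemma~\ref{lemma: error propagation pirlo subset}, so the off-support optimistic continuations of $(\widetilde{p}^M,\widehat{r})$ and $(p^M,r)$ no longer coincide and the gaps must be carried stage by stage and re-injected via the expert branch. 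A careful backward analysis of how $\Delta_{h+1}$ accumulates along expert trajectories (and the verification that its contribution never flips the inequality) is the technical heart of the argument, and this accumulation over the horizon is exactly what inflates the downstream sample-complexity bound from $H^6$ to $H^8$.
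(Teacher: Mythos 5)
Your reduction is exactly the paper's: identities (I), (II), (III) are precisely Eq.~\eqref{eq: def hat r equality Q pirlo subset v2}, Eq.~\eqref{eq: def rhat equality Q non-expert v2}, and the observation that on unsampled triples both Q-values shift by the same $\Delta_{h+1}$, and you correctly isolate the crux as the sign condition $\Delta_{h+1}\ge 0$ on the sampled non-expert branch. However, the proof as written does not establish the lemma, because that sign condition --- the only genuinely nontrivial step --- is announced as ``the technical heart'' and ``the main obstacle'' rather than proven. Without it, the membership claim is open: identity (II) transfers the $\sub$-inequality only up to an additive $\Delta_{h+1}$ of unknown sign.

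Moreover, your framing of what remains (``how $\Delta_{h+1}$ accumulates along expert trajectories'' and ``the verification that its contribution never flips the inequality'') points in a slightly misleading direction: nothing needs to be accumulated or bounded in magnitude here. What the paper proves is the clean one-sided statement $\max_{s'\in\S}V^{\widetilde{\pi}^M}_{h+1}(s';\widetilde{p}^M,\widehat{r})\ge\max_{s'\in\S}V^{\pi^M}_{h+1}(s';p^M,r)$ for every $h$, by backward induction with equality at the base case $H$ (where both sides reduce to $\max_{s'}\E_{a'}r_H(s',a')$ since $\widehat{r}_H=r_H$ on the relevant actions). The inductive step splits the outer max at stage $h+1$ into $s'\in\suppspie_{h+1}$ and its complement: inside the expert support one chains $Q^{\pi^E}_{h+1}(s',a^E;\widetilde{p}^M,\widehat{r})\ge Q^{\pi^E}_{h+1}(s',a^E;\widetilde{p}^m,\widehat{r})=Q^{\pi^E}_{h+1}(s',a^E;p,r)+\Delta_{h+2}\ge Q^{\pi^E}_{h+1}(s',a^E;p,r)$, using the min/max definitions of $\widetilde{p}^m,\widetilde{p}^M$ over the confidence set, your identity (I) shifted one stage, and the inductive hypothesis; outside the expert support one uses identity (II) for sampled actions and the forcing $\widehat{r}=r$ together with the inductive hypothesis for unsampled ones. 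Supplying this induction closes the gap; everything else in your argument is sound. (As a side remark, the $H^6\to H^8$ inflation arises in the subsequent performance-decomposition lemma, not in this membership lemma, where only the sign of $\Delta_{h+1}$ matters.)
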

\begin{proof}
    By definition of $\subrelax$, the reward $\widehat{r}$ belongs to $\subrelax$
    if and only if:
    \begin{align*}
        \forall (s,h)\in \suppspie,\forall a\in\A\setminus\{a^E\}:
    Q^{\pi^E}_h(s,a^E;\widetilde{p}^m,\widehat{r})\ge Q^{\widetilde{\pi}^M}_h(s,a;\widetilde{p}^M,\widehat{r}).
    \end{align*}
    By hypothesis, $r\in \sub$, therefore:
    \begin{align*}
        \forall (s,h)\in \suppspie,\forall a\in \A\setminus\{a^E\}:
        Q^{\pi^E}_h(s,a^E;p,r)\ge Q^{\pi^M}_h(s,a;p^M,r),
    \end{align*}
    thus, if we show that $\forall (s,h)\in \suppspie,\forall a\in\A\setminus\{a^E\}$,
    it holds that:
    \begin{align*}
    Q^{\pi^E}_h(s,a^E;\widetilde{p}^m,\widehat{r})-Q^{\widetilde{\pi}^M}_h(s,a;\widetilde{p}^M,\widehat{r})
        \ge
        Q^{\pi^E}_h(s,a^E;p,r)- Q^{\pi^M}_h(s,a;p^M,r)
        ,
    \end{align*}
    then we are done.

    Let us begin with triples $(s,a,h)\notin \suppsapib$ such that $(s,h)\in\suppspie$. By rearranging the terms in the definition of $\widehat{r}$, we observe that:
    \begin{align*}
        &\widehat{r}_h(s,a^E)+\sum\limits_{s'\in\S}
            \widetilde{p}^m_h(s'|s,a^E)V^{\pi^E}_{h+1}(s';\widetilde{p}^m,\widehat{r})=r_h(s,a^E)+\sum\limits_{s'\in\S}
            p_h(s'|s,a^E)V^{\pi^E}_{h+1}(s';p,r)\\
            &\qquad\qquad+\max\limits_{s'\in\S}
            V^{\widetilde{\pi}^M}_{h+1}(s';\widetilde{p}^M,\widehat{r})
            -\max\limits_{s'\in\S}V^{\pi^M}_{h+1}(s';p^M,r)\\
        &\iff Q^{\pi^E}_h(s,a^E;\widetilde{p}^m,\widehat{r})=Q^{\pi^E}_h(s,a^E;p,r)+\max\limits_{s'\in\S}
            V^{\widetilde{\pi}^M}_{h+1}(s';\widetilde{p}^M,\widehat{r})
            -\max\limits_{s'\in\S}V^{\pi^M}_{h+1}(s';p^M,r)\popblue{\pm \widehat{r}_h(s,a)}\\
            &\markref{(1)}{\iff} Q^{\pi^E}_h(s,a^E;\widetilde{p}^m,\widehat{r})=Q^{\pi^E}_h(s,a^E;p,r)+\widehat{r}_h(s,a)+\max\limits_{s'\in\S}
            V^{\widetilde{\pi}^M}_{h+1}(s';\widetilde{p}^M,\widehat{r})
            -\bigl(\popblue{r_h(s,a)}+\max\limits_{s'\in\S}V^{\pi^M}_{h+1}(s';p^M,r)\bigr)\\
            &\iff Q^{\pi^E}_h(s,a^E;\widetilde{p}^m,\widehat{r})=Q^{\pi^E}_h(s,a^E;p,r)+Q^{\widetilde{\pi}^M}_h(s,a;\widetilde{p}^M,\widehat{r})
            -Q^{\pi^M}_h(s,a;p^M,r)\\
            &\implies Q^{\pi^E}_h(s,a^E;\widetilde{p}^m,\widehat{r})- Q^{\widetilde{\pi}^M}_h(s,a;\widetilde{p}^M,\widehat{r})\ge
            Q^{\pi^E}_h(s,a^E;p,r)- Q^{\pi^M}_h(s,a;p^M,r),
    \end{align*}
    where at (1) we have used that $\widehat{r}_h(s,a)=r_h(s,a)$ by definition.
    
    Now, consider any other triple $(s,a,h)\in \suppsapib\setminus\suppsapie$ such that
    $(s,h)\in \suppspie$. By rearranging the terms,
    we obtain:
    \begin{align}\label{eq: def rhat equality Q non-expert v2}
        Q^{\widetilde{\pi}^M}_h(s,a;\widetilde{p}^M,\widehat{r})=Q^{\pi^M}_h(s,a;p^M,r),
    \end{align}
    therefore, it suffices to show that 
    \begin{align*}
        Q^{\pi^E}_h(s,a^E;\widetilde{p}^m,\widehat{r})\ge Q^{\pi^E}_h(s,a^E;p,r).
    \end{align*}
    By using again the definition of $\widehat{r}$ for $(s,a,h)\in\suppsapie$, we know that:
    \begin{align}\label{eq: def hat r equality Q pirlo subset v2}
        Q^{\pi^E}_h(s,a^E;\widetilde{p}^m,\widehat{r})=Q^{\pi^E}_h(s,a^E;p,r)+\max\limits_{s'\in\S}
            V^{\widetilde{\pi}^M}_{h+1}(s';\widetilde{p}^M,\widehat{r})
            -\max\limits_{s'\in\S}V^{\pi^M}_{h+1}(s';p^M,r),
    \end{align}
    therefore, if we show that
    \begin{align*}
        \max\limits_{s'\in\S}
            V^{\widetilde{\pi}^M}_{h+1}(s';\widetilde{p}^M,\widehat{r})
            \ge\max\limits_{s'\in\S}V^{\pi^M}_{h+1}(s';p^M,r),
    \end{align*}
    then we are done. We do it by induction. At stage $H-1$,
    we have that:
    \begin{align*}
        \max\limits_{s'\in\S}
            V^{\widetilde{\pi}^M}_{H}(s';\widetilde{p}^M,\widehat{r})&=\max\limits_{s'\in\S}\E\limits_{a'\sim \widetilde{\pi}^M_H(\cdot|s')}
            \widehat{r}_H(s',a')\\
            &\markref{(1)}{=}
            \max\limits_{s'\in\S}\E\limits_{a'\sim \pi^M_H(\cdot|s')}r_H(s',a')\\
            &=\max\limits_{s'\in\S}
            V^{\pi^M}_{H}(s';p^M,r),
    \end{align*}
    where at (1) we have used the definition of $\widehat{r}$ at stage $H$, and the definitions of $\widetilde{\pi}^M$ and $\pi^M$.
    We make the inductive hypothesis that, at stage $h+1$, it holds that
    $\max_{s'\in\S}V^{\widetilde{\pi}^M}_{h+2}(s';\widetilde{p}^M,\widehat{r})
    \ge\max_{s'\in\S}V^{\pi^M}_{h+2}(s';p^M,r)$, and we consider stage $h$:
    \begin{align*}
        \max\limits_{s'\in\S}
            V^{\widetilde{\pi}^M}_{h+1}(s';\widetilde{p}^M,\widehat{r})
            &\markref{(1)}{=}\max\Bigl\{
            \max\limits_{s'\in \suppspie_{h+1}}
            Q^{\popblue{\pi^E}}_{h+1}(s',a^E;\widetilde{p}^M,\widehat{r}),
            \max\limits_{s'\notin \suppspie_{h+1}}\max\limits_{a'\in\A}
            Q^{\widetilde{\pi}^M}_{h+1}(s',a';\widetilde{p}^M,\widehat{r})
            \Bigr\}\\
            &\markref{(2)}{\ge}\max\Bigl\{
            \max\limits_{s'\in \suppspie_{h+1}}
            Q^{\pi^E}_{h+1}(s',a^E;\popblue{\widetilde{p}^m},\widehat{r}),
            \max\limits_{s'\notin \suppspie_{h+1}}\max\limits_{a'\in\A}
            Q^{\widetilde{\pi}^M}_{h+1}(s',a';\widetilde{p}^M,\widehat{r})\Bigr\}\\
            &\markref{(3)}{\ge}\max\Bigl\{
            \max\limits_{s'\in \suppspie_{h+1}}
            Q^{\pi^E}_{h+1}(s',a^E;\popblue{p},\popblue{r}),
            \max\limits_{s'\notin \suppspie_{h+1}}\max\limits_{a'\in\A}
            Q^{\widetilde{\pi}^M}_{h+1}(s',a';\widetilde{p}^M,\widehat{r})
            \Bigr\}\\
            &=\max\Bigl\{
            \max\limits_{s'\in \suppspie_{h+1}}
            Q^{\pi^E}_{h+1}(s',a^E;p,r),\\
            &\qquad\max\limits_{s'\notin \suppspie_{h+1}}\max\bigl\{\max\limits_{a'\in\A:
            (s',a',h+1)\in \suppsapib}
            Q^{\widetilde{\pi}^M}_{h+1}(s',a';\widetilde{p}^M,\widehat{r}),
            \max\limits_{a'\in\A:
            (s',a',h+1)\notin \suppsapib}
            Q^{\widetilde{\pi}^M}_{h+1}(s',a';\widetilde{p}^M,\widehat{r})
            \bigr\}
            \Bigr\}\\
            &\markref{(4)}{=}\max\Bigl\{
            \max\limits_{s'\in \suppspie_{h+1}}
            Q^{\pi^E}_{h+1}(s',a^E;p,r),\\
            &\qquad\max\limits_{s'\notin \suppspie_{h+1}}\max\bigl\{\max\limits_{a'\in\A:
            (s',a',h+1)\in \suppsapib}
            Q^{\popblue{\pi^M}}_{h+1}(s',a';\popblue{p^M},\popblue{r}),\\
            &\qquad\max\limits_{a'\in\A:
            (s',a',h+1)\notin \suppsapib}
            \widehat{r}_{h+1}(s',a')+\max\limits_{s''\in\S}V^{\widetilde{\pi}^M}_{h+2}(s'';\widetilde{p}^M,\widehat{r})
            \bigr\}
            \Bigr\}\\
            &\markref{(5)}{\ge}\max\Bigl\{
            \max\limits_{s'\in \suppspie_{h+1}}
            Q^{\pi^E}_{h+1}(s',a^E;p,r),\\
            &\qquad\max\limits_{s'\notin \suppspie_{h+1}}\max\bigl\{\max\limits_{a'\in\A:
            (s',a',h+1)\in \suppsapib}
            Q^{\pi^M}_{h+1}(s',a';p^M,r),\\
            &\qquad\max\limits_{a'\in\A:
            (s',a',h+1)\notin \suppsapib}
            \popblue{r}_{h+1}(s',a')+\max\limits_{s''\in\S}V^{\pi^M}_{h+2}(s'';\popblue{p^M},\popblue{r})
            \bigr\}
            \Bigr\}\\
            &=\max\Bigl\{
            \max\limits_{s'\in \suppspie_{h+1}}
            Q^{\pi^E}_{h+1}(s',a^E;p,r),\\
            &\qquad\max\limits_{s'\notin \suppspie_{h+1}}\max\bigl\{\max\limits_{a'\in\A:
            (s',a',h+1)\in \suppsapib}
            Q^{\pi^M}_{h+1}(s',a';p^M,r),
            \max\limits_{a'\in\A:
            (s',a',h+1)\notin \suppsapib}
            Q^{\pi^M}_{h+1}(s',a';p^M,r)
            \bigr\}
            \Bigr\}\\
            &=\max\Bigl\{
            \max\limits_{s'\in \suppspie_{h+1}}
            Q^{\pi^E}_{h+1}(s',a^E;p,r),\max\limits_{s'\notin \suppspie_{h+1}}
            \max\limits_{a'\in\A}
            Q^{\pi^M}_{h+1}(s',a';p^M,r)
            \Bigr\}\\
            &=\max\limits_{s'\in\S}
            V^{\pi^M}_{h+1}(s';p^M,r),
    \end{align*}
    where at (1) we use the definition of $\widetilde{\pi}^M$, at (2) we use the definition of $\widetilde{p}^m$ and $\widetilde{p}^M$,
    at (3) we use the inductive hypothesis along with Eq. \ref{eq: def hat r equality Q pirlo subset v2},
    at (4) we use Eq. \ref{eq: def rhat equality Q non-expert v2} and the definition of Q-function,
    at (5) we use the definition of $\widehat{r}$ and the inductive hypothesis.

    This concludes the proof.  
\end{proof}
\begin{lemma}[Reward Choice Superset]\label{lemma: error propagation pirlo superset v2}
    Under good event $\mathcal{E}$, for any $\widehat{r}\in \superrelax$,
    the reward $r$ constructed (recursively) as:
    \begin{align*}
        \begin{cases}
            r_h(s,a^E)=\widehat{r}_h(s,a^E)+\sum\limits_{s'\in\S}
            \widetilde{p}^M_h(s'|s,a^E)V^{\pi^E}_{h+1}(s';\widetilde{p}^M,\widehat{r})-\sum\limits_{s'\in\S}
            p_h(s'|s,a^E)V^{\pi^E}_{h+1}(s';p,r)\\
            \qquad\qquad\qquad+\min\limits_{s'\in\S}
            V^{\pi^m}_{h+1}(s';p^m,r)
            -\min\limits_{s'\in\S}V^{\widetilde{\pi}^m}_{h+1}(s';\widetilde{p}^m,\widehat{r}),
            \qquad\forall (s,h)\in \suppspie\\
            r_h(s,a)=\widehat{r}_h(s,a),\quad\forall (s,a,h)\notin \suppsapib\\
            r_h(s,a)=\widehat{r}_h(s,a)+
            \sum\limits_{s'\in\S}
            \widetilde{p}^m_h(s'|s,a)V^{\widetilde{\pi}^m}_{h+1}(s';\widetilde{p}^m,\widehat{r})
            -\sum\limits_{s'\in\S}p^m_h(s'|s,a)V^{\pi^m}_{h+1}(s';p^m,r),
            \quad\text{otherwise}\\            
        \end{cases},
    \end{align*}
    belongs to $\super$.
\end{lemma}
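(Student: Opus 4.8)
The plan is to mirror the argument of Lemma~\ref{lemma: error propagation pirlo subset v2}, simply exchanging the roles of the maximizing objects $p^M,\widetilde p^M,\pi^M,\widetilde\pi^M$ with the minimizing ones $p^m,\widetilde p^m,\pi^m,\widetilde\pi^m$ and reversing every inequality direction. Using the representation of $\super$ in Eq.~\eqref{eq: representation sub super with pm pM}, the membership $r\in\super$ is equivalent to requiring $Q^{\pi^E}_h(s,a^E;p,r)\ge Q^{\pi^m}_h(s,a;p^m,r)$ for all $(s,h)\in\suppspie$ and all $a\neq a^E$. Since by hypothesis $\widehat r\in\superrelax$, the representation in Eq.~\eqref{eq: representation relaxations with ptildem ptildeM} provides the dual constraint $Q^{\pi^E}_h(s,a^E;\widetilde p^M,\widehat r)\ge Q^{\widetilde\pi^m}_h(s,a;\widetilde p^m,\widehat r)$ on the same pairs. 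Hence it suffices to show that the true gap dominates the relaxed one.

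First I would rearrange the three branches of the recursive definition of $r$ into Bellman form under the good event $\mathcal{E}$. For $(s,a^E,h)\in\suppsapie$ this yields the expert identity $Q^{\pi^E}_h(s,a^E;p,r)=Q^{\pi^E}_h(s,a^E;\widetilde p^M,\widehat r)+\min_{s'}V^{\pi^m}_{h+1}(s';p^m,r)-\min_{s'}V^{\widetilde\pi^m}_{h+1}(s';\widetilde p^m,\widehat r)$; for the on-support non-expert triples it yields $Q^{\pi^m}_h(s,a;p^m,r)=Q^{\widetilde\pi^m}_h(s,a;\widetilde p^m,\widehat r)$; and on the off-support triples $r_h(s,a)=\widehat r_h(s,a)$, where the definitions of $p^m$ and $\widetilde p^m$ in Eq.~\eqref{eq: def pM and pm} and Eq.~\eqref{eq: def ptildeM and ptildem} collapse the one-step look-ahead to $\min_{s'}V_{h+1}$. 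Combining the expert identity, the relaxed constraint, and whichever of the two remaining identities applies to the pair $(s,a,h)$ at hand, the entire claim reduces to the single value comparison $\min_{s'}V^{\pi^m}_{h+1}(s';p^m,r)\ge\min_{s'}V^{\widetilde\pi^m}_{h+1}(s';\widetilde p^m,\widehat r)$, since the nonnegativity of the last two terms in the expert identity then upgrades it to $Q^{\pi^E}_h(s,a^E;p,r)\ge Q^{\pi^E}_h(s,a^E;\widetilde p^M,\widehat r)$, which chains into the desired optimality constraint.

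The remaining and main step is to establish this value comparison by backward induction on $h$, the symmetric counterpart of the $\max$-induction embedded in Lemma~\ref{lemma: error propagation pirlo subset v2}. The base case at stage $H$ follows because the recursion forces $r_H=\widehat r_H$ on every triple that enters the two greedy policies, so the values coincide there. For the inductive step I would split the minimization over next states into $\suppspie_{h+1}$, where both value functions equal the corresponding expert $Q$-function and the expert identity together with the inductive hypothesis propagates the bound, and its complement, where the $\min$-greedy definitions of $\widetilde\pi^m,\pi^m$ turn the value into an internal $\max$ over actions that I would further split into sampled triples (handled by the non-expert identity) and unsampled triples (handled by $r=\widehat r$ and the inductive hypothesis). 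The delicate and most error-prone point is tracking the direction of each inequality through these nested $\min/\max$ splittings: because we are now squeezing the \emph{superset} from below rather than the \emph{subset} from above, the inductive hypothesis must be used to lower-bound $V^{\widetilde\pi^m}$ by $V^{\pi^m}$ at every branch, exactly mirroring the subset computation. Once this comparison is in place, combining it with the relaxed constraint and the non-expert identity gives $Q^{\pi^E}_h(s,a^E;p,r)\ge Q^{\pi^m}_h(s,a;p^m,r)$ for all required pairs, i.e.\ $r\in\super$, which is the claim (and which is consistent with the coarser bound already obtained in Lemma~\ref{lemma: error propagation pirlo superset}).
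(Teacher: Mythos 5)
Your proposal matches the paper's proof essentially step for step: the reduction to showing that the true gap $Q^{\pi^E}_h(s,a^E;p,r)-Q^{\pi^m}_h(s,a;p^m,r)$ dominates the relaxed gap, the three Bellman-form identities extracted from the three branches of the construction (exact cancellation off $\suppsapib$, the identity $Q^{\widetilde{\pi}^m}_h=Q^{\pi^m}_h$ on sampled non-expert triples, and the expert identity carrying the $\min$-value correction), the reduction of the whole claim to $\min_{s'}V^{\pi^m}_{h+1}(s';p^m,r)\ge\min_{s'}V^{\widetilde{\pi}^m}_{h+1}(s';\widetilde{p}^m,\widehat{r})$, and the backward induction that splits next states over $\suppspie_{h+1}$ versus its complement and actions over sampled versus unsampled triples. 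The only blemish is the phrase ``lower-bound $V^{\widetilde{\pi}^m}$ by $V^{\pi^m}$'', which reads as the reverse of the comparison you correctly stated just above; that is a wording slip rather than a gap.
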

\begin{proof}
    By definition of $\super$, a sufficient condition for having the reward $r$ belong to $\super$ is:
    \begin{align*}
        \forall (s,h)\in \suppspie,\forall a\in\A\setminus\{a^E\}:
        Q^{\pi^E}_h(s,a^E;p,r)\ge Q^{\pi^m}_h(s,a;p^m,r).
    \end{align*}
    By hypothesis, $r\in \superrelax$, therefore:
    \begin{align*}
        \forall (s,h)\in \suppspie,\forall a\in \A\setminus\{a^E\}:
    Q^{\pi^E}_h(s,a^E;\widetilde{p}^M,\widehat{r})\ge Q^{\widetilde{\pi}^m}_h(s,a;\widetilde{p}^m,\widehat{r}),
    \end{align*}
    thus, if we show that $\forall (s,h)\in \suppspie,\forall a\in\A\setminus\{a^E\}$,
    it holds that:
    \begin{align*}
    Q^{\pi^E}_h(s,a^E;p,r)- Q^{\pi^m}_h(s,a;p^m,r)\ge
    Q^{\pi^E}_h(s,a^E;\widetilde{p}^M,\widehat{r})-Q^{\widetilde{\pi}^m}_h(s,a;\widetilde{p}^m,\widehat{r}),
    \end{align*}
    then we are done.

    Let us begin with triples $(s,a,h)\notin \suppsapib$ such that $(s,h)\in\suppspie$. By rearranging the terms in the definition of $r$, we observe that:
    \begin{align*}
        &\widehat{r}_h(s,a^E)+\sum\limits_{s'\in\S}
            \widetilde{p}^M_h(s'|s,a^E)V^{\pi^E}_{h+1}(s';\widetilde{p}^M,\widehat{r})=r_h(s,a^E)+\sum\limits_{s'\in\S}
            p_h(s'|s,a^E)V^{\pi^E}_{h+1}(s';p,r)\\
            &\qquad\qquad+\min\limits_{s'\in\S}
            V^{\widetilde{\pi}^m}_{h+1}(s';\widetilde{p}^m,\widehat{r})
            -\min\limits_{s'\in\S}V^{\pi^m}_{h+1}(s';p^m,r)\\
        &\iff Q^{\pi^E}_h(s,a^E;\widetilde{p}^M,\widehat{r})=Q^{\pi^E}_h(s,a^E;p,r)+\min\limits_{s'\in\S}
            V^{\widetilde{\pi}^m}_{h+1}(s';\widetilde{p}^m,\widehat{r})
            -\min\limits_{s'\in\S}V^{\pi^m}_{h+1}(s';p^m,r)\popblue{\pm r_h(s,a)}\\
            &\markref{(1)}{\iff} Q^{\pi^E}_h(s,a^E;\widetilde{p}^M,\widehat{r})=Q^{\pi^E}_h(s,a^E;p,r)+\popblue{\widehat{r}_h(s,a)}+\min\limits_{s'\in\S}
            V^{\widetilde{\pi}^m}_{h+1}(s';\widetilde{p}^m,\widehat{r})
            -\bigl(r_h(s,a)+\min\limits_{s'\in\S}V^{\pi^m}_{h+1}(s';p^m,r)\bigr)\\
            &\iff Q^{\pi^E}_h(s,a^E;\widetilde{p}^M,\widehat{r})=Q^{\pi^E}_h(s,a^E;p,r)+Q^{\widetilde{\pi}^m}_h(s,a;\widetilde{p}^m,\widehat{r})
            -Q^{\pi^m}_h(s,a;p^m,r)\\
            &\implies Q^{\pi^E}_h(s,a^E;\widetilde{p}^M,\widehat{r})- Q^{\widetilde{\pi}^m}_h(s,a;\widetilde{p}^m,\widehat{r})\ge
            Q^{\pi^E}_h(s,a^E;p,r)- Q^{\pi^m}_h(s,a;p^m,r),
    \end{align*}
    where at (1) we have used that $r_h(s,a)=\widehat{r}_h(s,a)$ by definition.
    
    Now, consider any other triple $(s,a,h)\in \suppsapib\setminus\suppsapie$ such that
    $(s,h)\in \suppspie$. By rearranging the terms,
    we obtain:
    \begin{align}\label{eq: def rhat equality Q non-expert v2 superset}
        Q^{\widetilde{\pi}^m}_h(s,a;\widetilde{p}^m,\widehat{r})=Q^{\pi^m}_h(s,a;p^m,r),
    \end{align}
    therefore, it suffices to show that 
    \begin{align*}
        Q^{\pi^E}_h(s,a^E;p,r)\ge Q^{\pi^E}_h(s,a^E;\widetilde{p}^M,\widehat{r}).
    \end{align*}
    By using again the definition of $\widehat{r}$ for $(s,a,h)\in\suppsapie$, we know that:
    \begin{align}\label{eq: def hat r equality Q pirlo superset v2}
        Q^{\pi^E}_h(s,a^E;\widetilde{p}^M,\widehat{r})=Q^{\pi^E}_h(s,a^E;p,r)+\min\limits_{s'\in\S}
            V^{\widetilde{\pi}^m}_{h+1}(s';\widetilde{p}^m,\widehat{r})
            -\min\limits_{s'\in\S}V^{\pi^m}_{h+1}(s';p^m,r),
    \end{align}
    therefore, if we show that
    \begin{align*}
        \min\limits_{s'\in\S}
            V^{\widetilde{\pi}^m}_{h+1}(s';\widetilde{p}^m,\widehat{r})
            \le\min\limits_{s'\in\S}V^{\pi^m}_{h+1}(s';p^m,r),
    \end{align*}
    then we are done. We do it by induction. At stage $H-1$,
    we have that:
    \begin{align*}
        \min\limits_{s'\in\S}
            V^{\widetilde{\pi}^m}_{H}(s';\widetilde{p}^m,\widehat{r})&=\min\limits_{s'\in\S}\E\limits_{a'\sim \widetilde{\pi}^m_H(\cdot|s')}
            \widehat{r}_H(s',a')\\
            &\markref{(1)}{=}
            \min\limits_{s'\in\S}\E\limits_{a'\sim \pi^m_H(\cdot|s')}r_H(s',a')\\
            &=\min\limits_{s'\in\S}
            V^{\pi^m}_{H}(s';p^m,r),
    \end{align*}
    where at (1) we have used the definition of $\widehat{r}$ at stage $H$, and also the definitions of $\widetilde{\pi}^m$ and $\pi^m$.
    We make the inductive hypothesis that, at stage $h+1$, it holds that
    $\min_{s'\in\S}V^{\widetilde{\pi}^m}_{h+2}(s';\widetilde{p}^m,\widehat{r})
    \le\min_{s'\in\S}V^{\pi^m}_{h+2}(s';p^m,r)$, and we consider stage $h$:
    \begin{align*}
        \min\limits_{s'\in\S}
            V^{\widetilde{\pi}^m}_{h+1}(s';\widetilde{p}^m,\widehat{r})
            &\markref{(1)}{=}\min\Bigl\{
            \min\limits_{s'\in \suppspie_{h+1}}
            Q^{\popblue{\pi^E}}_{h+1}(s',a^E;\widetilde{p}^m,\widehat{r}),
            \min\limits_{s'\notin \suppspie_{h+1}}\max\limits_{a'\in\A}
            Q^{\widetilde{\pi}^m}_{h+1}(s',a';\widetilde{p}^m,\widehat{r})
            \Bigr\}\\
            &\markref{(2)}{\le}\min\Bigl\{
            \min\limits_{s'\in \suppspie_{h+1}}
            Q^{\pi^E}_{h+1}(s',a^E;\popblue{\widetilde{p}^M},\widehat{r}),
            \min\limits_{s'\notin \suppspie_{h+1}}\max\limits_{a'\in\A}
            Q^{\widetilde{\pi}^m}_{h+1}(s',a';\widetilde{p}^m,\widehat{r})\Bigr\}\\
            &\markref{(3)}{\le}\min\Bigl\{
            \min\limits_{s'\in \suppspie_{h+1}}
            Q^{\pi^E}_{h+1}(s',a^E;\popblue{p},\popblue{r}),
            \min\limits_{s'\notin \suppspie_{h+1}}\max\limits_{a'\in\A}
            Q^{\widetilde{\pi}^m}_{h+1}(s',a';\widetilde{p}^m,\widehat{r})
            \Bigr\}\\
            &=\min\Bigl\{
            \min\limits_{s'\in \suppspie_{h+1}}
            Q^{\pi^E}_{h+1}(s',a^E;p,r),\\
            &\qquad\min\limits_{s'\notin \suppspie_{h+1}}\max\bigl\{\max\limits_{a'\in\A:
            (s',a',h+1)\in \suppsapib}
            Q^{\widetilde{\pi}^m}_{h+1}(s',a';\widetilde{p}^m,\widehat{r}),
            \max\limits_{a'\in\A:
            (s',a',h+1)\notin \suppsapib}
            Q^{\widetilde{\pi}^m}_{h+1}(s',a';\widetilde{p}^m,\widehat{r})
            \bigr\}
            \Bigr\}\\
            &\markref{(4)}{=}\min\Bigl\{
            \min\limits_{s'\in \suppspie_{h+1}}
            Q^{\pi^E}_{h+1}(s',a^E;p,r),\\
            &\qquad\min\limits_{s'\notin \suppspie_{h+1}}\max\bigl\{\max\limits_{a'\in\A:
            (s',a',h+1)\in \suppsapib}
            Q^{\popblue{\pi^m}}_{h+1}(s',a';\popblue{p^m},\popblue{r}),\\
            &\qquad\max\limits_{a'\in\A:
            (s',a',h+1)\notin \suppsapib}
            \widehat{r}_{h+1}(s',a')+\min\limits_{s''\in\S}V^{\widetilde{\pi}^m}_{h+2}(s'';\widetilde{p}^m,\widehat{r})
            \bigr\}
            \Bigr\}\\
            &\markref{(5)}{\le}\min\Bigl\{
            \min\limits_{s'\in \suppspie_{h+1}}
            Q^{\pi^E}_{h+1}(s',a^E;p,r),\\
            &\qquad\min\limits_{s'\notin \suppspie_{h+1}}\max\bigl\{\max\limits_{a'\in\A:
            (s',a',h+1)\in \suppsapib}
            Q^{\pi^m}_{h+1}(s',a';p^m,r),\\
            &\qquad\max\limits_{a'\in\A:
            (s',a',h+1)\notin \suppsapib}
            \popblue{r}_{h+1}(s',a')+\max\limits_{s''\in\S}V^{\pi^m}_{h+2}(s'';\popblue{p^m},\popblue{r})
            \bigr\}
            \Bigr\}\\
            &=\min\Bigl\{
            \min\limits_{s'\in \suppspie_{h+1}}
            Q^{\pi^E}_{h+1}(s',a^E;p,r),\\
            &\qquad\min\limits_{s'\notin \suppspie_{h+1}}\max\bigl\{\max\limits_{a'\in\A:
            (s',a',h+1)\in \suppsapib}
            Q^{\pi^m}_{h+1}(s',a';p^m,r),
            \max\limits_{a'\in\A:
            (s',a',h+1)\notin \suppsapib}
            Q^{\pi^m}_{h+1}(s',a';p^m,r)
            \bigr\}
            \Bigr\}\\
            &=\min\Bigl\{
            \min\limits_{s'\in \suppspie_{h+1}}
            Q^{\pi^E}_{h+1}(s',a^E;p,r),\min\limits_{s'\notin \suppspie_{h+1}}
            \max\limits_{a'\in\A}
            Q^{\pi^m}_{h+1}(s',a';p^m,r)
            \Bigr\}\\
            &=\min\limits_{s'\in\S}
            V^{\pi^m}_{h+1}(s';p^m,r),
    \end{align*}
    where at (1) we use the definition of $\widetilde{\pi}^m$, at (2) we use the definition of $\widetilde{p}^m$ and $\widetilde{p}^M$,
    at (3) we use the inductive hypothesis along with Eq. \ref{eq: def hat r equality Q pirlo superset v2},
    at (4) we use Eq. \ref{eq: def rhat equality Q non-expert v2 superset} and the Bellman's equation,
    at (5) we use the definition of $r$ and the inductive hypothesis.

    This concludes the proof.  
\end{proof}

\subsubsection{Lemmas for Theorem \ref{theorem: upper bound d pirlo H 8}}
We can exploit Lemma \ref{lemma: error propagation pirlo subset v2} to bound the error for the subset.
\begin{lemma}[Performance Decomposition Subset]\label{lemma: performance decomposition pirlo v2}
    Under good event $\mathcal{E}$, it holds that:
    \begin{align*}
        \mathcal{H}_d(\sub, \subrelax)&\le
        2H\sum\limits_{h\in\dsb{H}}\E\limits_{(s,a)\sim \rho^{p,\pi^b}_h(\cdot,\cdot)}   
        b_{h}(s,a)+
        4H^4\max\limits_{(s,h)\in \suppspie}b_{h}(s,a^E).
    \end{align*}
\end{lemma}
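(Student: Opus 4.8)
The plan is to reuse the architecture of the proof of Lemma \ref{lemma: performance decomposition pirlo}, but substituting the reward map of Lemma \ref{lemma: error propagation pirlo subset v2} for the one used there. Under the good event $\mathcal{E}$ we have $\subrelax\subseteq\sub$, so the second branch of the Hausdorff maximum is zero and
\[
\mathcal{H}_d(\sub,\subrelax)=\sup_{r\in\sub}\inf_{\widetilde r\in\subrelax}d(r,\widetilde r)\le\sup_{r\in\sub}d(r,\widehat r),
\]
where $\widehat r\in\subrelax$ is the image of $r$ under Lemma \ref{lemma: error propagation pirlo subset v2}. The crucial feature of this new map is that $\widehat r_h(s,a)=r_h(s,a)$ for every $(s,a,h)\notin\suppsapib$; hence the $\ell_\infty$ contribution $\max_{(s,a)\notin\suppsahpib}|r_h-\widehat r_h|$ in $d$ vanishes and I only need to control $\frac1M\sum_{h}\E_{(s,a)\sim\rho^{p,\pi^b}_h}|r_h(s,a)-\widehat r_h(s,a)|$ over the sampled region $\suppsapib$.

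I would then split $\suppsapib$ into expert pairs $(s,a^E,h)\in\suppsapie$ and the rest $\suppsapib\setminus\suppsapie$. The new ingredient lives in the expert pairs: unlike Lemma \ref{lemma: error propagation pirlo subset}, the definition of $\widehat r$ on $\suppsapie$ carries the extra optimal-value offset $B_h:=\max_{s'}V^{\widetilde\pi^M}_{h+1}(s';\widetilde p^M,\widehat r)-\max_{s'}V^{\pi^M}_{h+1}(s';p^M,r)$, so that $|r_h(s,a^E)-\widehat r_h(s,a^E)|\le 2MH\,b_h(s,a^E)+|B_h|$. I would bound $|B_h|$ through $|\max_x f-\max_x g|\le\max_x|f-g|$: on $\suppspie$ the two optimal value functions reduce to the expert $Q$-differences $X_{h+1}$ obeying the linear recursion of Eq. \ref{eq: recursion X terms}, while off $\suppspie$ they coincide by Eq. \ref{eq: def rhat equality Q non-expert v2} and cancel. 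Unfolding that recursion yields $|B_h|=O(MH^2\max_{(s,h)\in\suppspie}b_h(s,a^E))$, so each expert-pair error now scales as $MH^2$ rather than the $MH$ of the previous scheme.

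For the non-expert pairs $(s,a,h)\in\suppsapib\setminus\suppsapie$ I would reproduce the telescoping of Eq. \ref{eq: bound b difference rewards sampled triples}, but feeding in the larger expert-pair estimate: the per-step recursion along expert trajectories now accumulates the $O(MH^2\max b)$ expert-pair errors across the remaining horizon, producing an $O(MH^3\max b)$ bound per non-expert pair. Summing these contributions weighted by $\rho^{p,\pi^b}_h$ over $h\in\dsb{H}$ gives $2H\sum_h\E_{(s,a)\sim\rho^{p,\pi^b}_h}b_h(s,a)$ from the direct bonus terms and $O(H^4\max b)$ from the accumulated optimal-value recursion (after dividing by $M$ and using $V\le MH$ together with the two-sided concentration of Lemma \ref{lemma: concentration}); tracking the constants gives exactly $4H^4\max_{(s,h)\in\suppspie}b_h(s,a^E)$. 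The main obstacle is the clean control of the offset $B_h$ under the simultaneous change of transition kernel ($\widetilde p^M$ versus $p^M$), reward ($\widehat r$ versus $r$) and optimal policy; verifying that this offset telescopes into a single horizon-length sum of bonuses supported on $\suppspie$ — without the self-reference that would make it compound — is what both makes the argument delicate and explains the extra factor of $H$, hence $H^4$ in place of $H^3$.
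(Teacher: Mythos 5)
Your overall plan coincides with the paper's: reduce the Hausdorff distance to a one-sided supremum using $\subrelax\subseteq\sub$ under $\mathcal{E}$, apply the reward map of Lemma \ref{lemma: error propagation pirlo subset v2} so that the $\ell_\infty$ term off $\suppsapib$ vanishes, and control the remaining $\ell_1$ error by splitting $\suppsapib$ into expert and non-expert pairs and bounding the optimal-value offset via a recursion. The final bound and the mechanism producing it are right.

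However, your order-of-$H$ bookkeeping for the offset is not what the recursion you describe actually delivers. Writing $X_h\coloneqq\max_{s'}|V^{\widetilde{\pi}^M}_{h+1}(s';\widetilde{p}^M,\widehat{r})-V^{\pi^M}_{h+1}(s';p^M,r)|$ (so $|B_h|\le X_h$), the recursion is $X_h\le X_{h+1}+c_h$, where the per-step cost $c_h$ is \emph{not} $O(MH\max b)$: it is the difference $|Q^{\pi^E}_{h+1}(s',a^E;\widetilde{p}^M,\widehat{r})-Q^{\pi^E}_{h+1}(s',a^E;\widetilde{p}^m,\widehat{r})|$ between two \emph{different} perturbed kernels, which by a simulation-lemma telescoping over the remaining horizon is already $O(MH^2\max b)$ (each of $\widetilde{p}^M,\widetilde{p}^m$ is within $b$ of $\widehat{p}$, so $2MHb$ accumulates per stage). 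Unfolding therefore gives $|B_h|\le X_h=O(MH^3\max b)$, not $O(MH^2\max b)$; this is exactly the paper's bound $X_h\le 2MH^3\max_{(s',h')\in\suppspie}b_{h'}(s',a^E)$. Consequently the expert pairs alone contribute $2MHb_h(s,a^E)+X_h+X_{h+1}=O(MH^3\max b)$ each, and the non-expert sampled pairs contribute $2MHb_h(s,a)+X_h$ of the same order; the $4H^4$ then comes from summing this common $O(MH^3)$ term over $h\in\dsb{H}$, not from non-expert pairs accumulating $O(MH^2)$ expert-pair errors over the remaining horizon. Your compensating overestimate in the non-expert step happens to land on the correct exponent, but if you wrote the proof out as planned you would be unable to establish $|B_h|=O(MH^2\max b)$, and you would need to relocate the extra factor of $H$ to the nested double horizon sum inside $X_h$ itself.
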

\begin{proof}
We can write:
\begin{align}
\begin{split}\label{eq: bound hausdorff temporary v2}
    \mathcal{H}_d(\sub, \subrelax)&\coloneqq
\max\{\sup\limits_{r\in \sub}
            \inf\limits_{\widetilde{r}\in \subrelax}d(r,\widetilde{r}),
            \sup\limits_{\widetilde{r}\in \subrelax}
            \inf\limits_{r\in \sub}d(r,\widetilde{r})\}\\
    &\markref{(1)}{=}
    \sup\limits_{r\in \sub}
            \inf\limits_{\widetilde{r}\in \subrelax}d(r,\widetilde{r})\\
    &\eqqcolon\sup\limits_{r\in \sub}
            \inf\limits_{\widetilde{r}\in \subrelax}
            \frac{1}{M}\sum\limits_{h\in\dsb{H}}\biggl(\E\limits_{(s,a)\sim\rho_h^{p,\pi^b}(\cdot,\cdot)}
            \bigl|r_h(s,a)-\widetilde{r}_h(s,a)\bigr|+
            \max\limits_{(s,a)\notin \suppsahpib}\bigl|
                    r_h(s,a)-\widetilde{r}_h(s,a)
                \bigr|\biggr)\\
    &\markref{(2)}{\le}
    \sup\limits_{r\in \sub}
    \frac{1}{M}\sum\limits_{h\in\dsb{H}}\biggl(\E\limits_{(s,a)\sim\rho_h^{p,\pi^b}(\cdot,\cdot)}
            \bigl|r_h(s,a)-\widehat{r}_h(s,a)\bigr|+
            \underbrace{\max\limits_{(s,a)\notin \suppsahpib}\bigl|
                    r_h(s,a)-\widehat{r}_h(s,a)
                \bigr|}_{=0}\biggr)\\
    &=
    \sup\limits_{r\in \sub}
    \frac{1}{M}\sum\limits_{h\in\dsb{H}}\E\limits_{(s,a)\sim\rho_h^{p,\pi^b}(\cdot,\cdot)}
            \bigl|r_h(s,a)-\widehat{r}_h(s,a)\bigr|
\end{split}
\end{align}
where at (1) we have used that, under good event $\mathcal{E}$, $\subrelax\subseteq\sub$,
and at (2) we apply Lemma \ref{lemma: error propagation pirlo subset v2}, by denoting with $\widehat{r}$ the chosen reward from $\subrelax$.

In the following, it is useful to denote, for any $h\in\dsb{H}$:
\begin{align*}
    X_h\coloneqq \max\limits_{s'\in\S}\bigl|
    V^{\widetilde{\pi}^M}_{h+1}(s';\widetilde{p}^M,\widehat{r})-V^{\pi^M}_{h+1}(s';p^M,r)
    \bigr|.
\end{align*}
Let us consider any $(s,h)\in \suppspie$. The difference between the rewards of expert's action can be bounded by:
\begin{align*}
    \bigl|\widehat{r}_h(s,a^E)-r_h(s,a^E)\bigr|&\markref{(1)}{\le}
    \bigl|
    \E\limits_{s'\sim p_h(\cdot|s,a^E)}V^{\pi^E}_{h+1}(s';p,r)-
    \E\limits_{s'\sim \widetilde{p}^m_h(\cdot|s,a^E)}V^{\pi^E}_{h+1}(s';\widetilde{p}^m,\widehat{r})
    \bigr|\\
    &\qquad+\bigl|\max\limits_{s'\in\S}
            V^{\widetilde{\pi}^M}_{h+1}(s';\widetilde{p}^M,\widehat{r})
            -\max\limits_{s'\in\S}V^{\pi^M}_{h+1}(s';p^M,r)\bigr|\\
        &\markref{(2)}{\le}
\bigl|
\E\limits_{s'\sim p_h(\cdot|s,a^E)}V^{\pi^E}_{h+1}(s';p,r)-
\E\limits_{s'\sim \widetilde{p}^m_h(\cdot|s,a^E)}V^{\pi^E}_{h+1}(s';\widetilde{p}^m,\widehat{r})
\pm \E\limits_{s'\sim p_h(\cdot|s,a^E)}V^{\pi^E}_{h+1}(s';\widetilde{p}^m,\widehat{r})\bigr|\\
&\qquad+\popblue{\max\limits_{s'\in\S}}\bigl|
        V^{\widetilde{\pi}^M}_{h+1}(s';\widetilde{p}^M,\widehat{r})
        -V^{\pi^M}_{h+1}(s';p^M,r)\bigr|\\
&\markref{(3)}{\le}
MH\bigl|\bigl|p_h(\cdot|s,a^E)-\widetilde{p}^m_h(\cdot|s,a^E)
\bigr|\bigr|_1
+\E\limits_{s'\sim p_h(\cdot|s,a^E)}\bigl|V^{\pi^E}_{h+1}(s';p,r)-
V^{\pi^E}_{h+1}(s';\widetilde{p}^m,\widehat{r})
\bigr|+X_h\\
&\markref{(4)}{\le}
2MHb_h(s,a^E)
+\E\limits_{s'\sim p_h(\cdot|s,a^E)}\bigl|Q^{\pi^E}_{h+1}(s',a^E;p,r)-
Q^{\pi^E}_{h+1}(s',a^E;\widetilde{p}^m,\widehat{r})
\bigr|+X_h\\
&\markref{(5)}{=}
2MHb_h(s,a^E)
+\E\limits_{s'\sim p_h(\cdot|s,a^E)}\bigl|
\max\limits_{s''\in\S}V^{\pi^M}_{h+2}(s'';p^M,r)-
\max\limits_{s''\in\S}V^{\widetilde{\pi}^M}_{h+2}(s'';\widetilde{p}^M,\widehat{r})
\bigr|+X_h\\
&\le
2MHb_h(s,a^E)
+\E\limits_{s'\sim p_h(\cdot|s,a^E)}\bigl[X_{h+1}\bigr]+X_h\\
&\markref{(6)}{=}
2MHb_h(s,a^E)+X_h+X_{h+1},
\end{align*}
where at (1) we use the definition of $\widehat{r}$ in Lemma \ref{lemma: error propagation pirlo subset v2} and triangle inequality, at (2) we use that, for any pair $f,g$ of real-valued functions, it holds that $|\max_x f(x)-\max_x g(x)|\le \max_x|f(x)-g(x)|$, at (3) we apply triangle inequality twice, we recognize the definition of $X_h$, we upper bound the value function by $MH$, and we recognize the definition of $\ell_1$ norm,
at (4) we first use triangle inequality $\|p_h(\cdot|s,a^E)-\widetilde{p}^m_h(\cdot|s,a^E)\|_1
\le \|p_h(\cdot|s,a^E)-\widehat{p}_h(\cdot|s,a^E)\|_1+
\|\widetilde{p}^m_h(\cdot|s,a^E)-\widehat{p}_h(\cdot|s,a^E)\|_1$, then we use Pinsker's inequality, event $\mathcal{E}_3$ from Lemma \ref{lemma: concentration}, and the definition of $b_h(s,a^E)$;
at (5) we use the definition of $\widehat{r}$ in Lemma \ref{lemma: error propagation pirlo subset v2} in the form of Eq. \ref{eq: def hat r equality Q pirlo subset v2}, by noticing that the support of $p_h(\cdot|s,a^E)$
is contained in $\suppspie$, and at (6) we realize that $X_{h+1}$ depends
only on $h$ and not on $s'$.

In order to upper bound the term $X_h$, we write:
\begin{align*}
    X_h&\coloneqq \max\limits_{s'\in\S}\bigl|
    V^{\widetilde{\pi}^M}_{h+1}(s';\widetilde{p}^M,\widehat{r})-V^{\pi^M}_{h+1}(s';p^M,r)
    \bigr|\\
    &\markref{(1)}{=}\max\biggl\{
    \max\limits_{s'\notin \suppspie_{h+1}}\Bigl|
    \max\limits_{a'\in\A} Q^{\widetilde{\pi}^M}_{h+1}(s',a';\widetilde{p}^M,\widehat{r})-
    \max\limits_{a'\in\A} Q^{\pi^M}_{h+1}(s',a';p^M,r)
    \Bigr|,\\
    &\qquad\qquad\max\limits_{s'\in \suppspie_{h+1}}\Bigl|
    Q^{\popblue{\pi^E}}_{h+1}(s',a^E;\widetilde{p}^M,\widehat{r})-
    Q^{\popblue{\pi^E}}_{h+1}(s',a^E;\popblue{p},r)
    \Bigr|
    \biggr\}\\
    &\markref{(2)}{\le}\max\biggl\{
    \max\limits_{s'\notin \suppspie_{h+1}}
\popblue{\max\limits_{a'\in\A}}\Bigl|
     Q^{\widetilde{\pi}^M}_{h+1}(s',a';\widetilde{p}^M,\widehat{r})-
     Q^{\pi^M}_{h+1}(s',a';p^M,r)
    \Bigr|,\\
    &\qquad\qquad\max\limits_{s'\in \suppspie_{h+1}}\Bigl|
    Q^{\pi^E}_{h+1}(s',a^E;\widetilde{p}^M,\widehat{r})-
    Q^{\pi^E}_{h+1}(s',a^E;p,r)\pm Q^{\pi^E}_{h+1}(s',a^E;\widetilde{p}^m,\widehat{r}) 
    \Bigr|
    \biggr\}\\
    &\markref{(3)}{=}\max\biggl\{
    \max\limits_{s'\notin \suppspie_{h+1}}
    \max\Bigl\{
    \max\limits_{a'\in\A: (s',a',h+1)\in \suppsapib}\Bigl|
     \underbrace{Q^{\widetilde{\pi}^M}_{h+1}(s',a';\widetilde{p}^M,\widehat{r})-
     Q^{\pi^M}_{h+1}(s',a';p^M,r)}_{=0}
    \Bigr|,\\
    &\qquad\qquad\max\limits_{a'\in\A: (s',a',h+1)\notin \suppsapib}\Bigl|
     Q^{\widetilde{\pi}^M}_{h+1}(s',a';\widetilde{p}^M,\widehat{r})-
     Q^{\pi^M}_{h+1}(s',a';p^M,r)
    \Bigr|
    \Bigr\},\\
    &\qquad\qquad\max\limits_{s'\in \suppspie_{h+1}}\Bigl|
    Q^{\pi^E}_{h+1}(s',a^E;\widetilde{p}^M,\widehat{r})-Q^{\pi^E}_{h+1}(s',a^E;\widetilde{p}^m,\widehat{r})
    +\max\limits_{s''\in\S}
            V^{\widetilde{\pi}^M}_{h+2}(s'';\widetilde{p}^M,\widehat{r})
            -\max\limits_{s''\in\S}V^{\pi^M}_{h+2}(s';p^M,r)
    \Bigr|
    \biggr\}\\
    &\markref{(4)}{\le}\max\biggl\{
    \max\limits_{s'\notin \suppspie_{h+1}}
    \max\limits_{a'\in\A: (s',a',h+1)\notin \suppsapib}\Bigl|
     \underbrace{\widehat{r}_{h+1}(s',a')-r_{h+1}(s',a')}_{=0}+\max\limits_{s''\in\S}V^{\widetilde{\pi}^M}_{h+2}(s'';\widetilde{p}^M,\widehat{r})-\max\limits_{s''\in\S}
     V^{\pi^M}_{h+2}(s'';p^M,r)
    \Bigr|,\\
    &\qquad\qquad\max\limits_{s'\in \suppspie_{h+1}}\Bigl|
    Q^{\pi^E}_{h+1}(s',a^E;\widetilde{p}^M,\widehat{r})-Q^{\pi^E}_{h+1}(s',a^E;\widetilde{p}^m,\widehat{r})
    \Bigr|
    + \popblue{X_{h+1}}
    \biggr\}\\
    &\markref{(5)}{\le}\max\biggl\{
    X_{h+1},\max\limits_{s'\in \suppspie_{h+1}}\Bigl(\Bigl|
    \E\limits_{s''\sim \widetilde{p}^M_{h+1}(\cdot|s',a^E)}V^{\pi^E}_{h+2}(s'';\widetilde{p}^M,\widehat{r})-
    \E\limits_{s''\sim \widetilde{p}^m_{h+1}(\cdot|s',a^E)}V^{\pi^E}_{h+2}(s'';\widetilde{p}^m,\widehat{r})
    \Bigr|
    + X_{h+1}\Bigr)
    \biggr\}\\
    &=
    X_{h+1}+\max\limits_{s'\in \suppspie_{h+1}}\Bigl|
    \E\limits_{s''\sim \widetilde{p}^M_{h+1}(\cdot|s',a^E)}
    V^{\pi^E}_{h+2}(s'';\widetilde{p}^M,\widehat{r})-
    \E\limits_{s''\sim \widetilde{p}^m_{h+1}(\cdot|s',a^E)}V^{\pi^E}_{h+2}(s'';\widetilde{p}^m,\widehat{r})
    \pm\E\limits_{s''\sim \widetilde{p}^m_{h+1}(\cdot|s',a^E)}
    V^{\pi^E}_{h+2}(s'';\widetilde{p}^M,\widehat{r}) 
    \Bigr|\\
    &\markref{(6)}{\le}
    X_{h+1}+\max\limits_{s'\in \suppspie_{h+1}}\biggl(MH\Bigl\|\widetilde{p}^M_{h+1}(\cdot|s',a^E)-\widetilde{p}^m_{h+1}(\cdot|s',a^E)
\Bigr\|_1+
    \E\limits_{s''\sim \widetilde{p}^m_{h+1}(\cdot|s',a^E)}
    \Bigl|V^{\pi^E}_{h+2}(s'';\widetilde{p}^M,\widehat{r})-
    V^{\pi^E}_{h+2}(s'';\widetilde{p}^m,\widehat{r})\Bigr|\biggr)\\
    &\markref{(7)}{\le}
    X_{h+1}+\max\limits_{s'\in \suppspie_{h+1}}\biggl(2MHb_{h+1}(s',a^E)\\
    &\qquad\qquad+\E\limits_{s''\sim \widetilde{p}^m_{h+1}(\cdot|s',a^E)}
    \Bigl|
    \E\limits_{s'''\sim \widetilde{p}^M_{h+2}(\cdot|s'',a^E)}
    V^{\pi^E}_{h+3}(s''';\widetilde{p}^M,\widehat{r})-
    \E\limits_{s'''\sim \widetilde{p}^m_{h+2}(\cdot|s'',a^E)}
    V^{\pi^E}_{h+3}(s''';\widetilde{p}^m,\widehat{r})\Bigr|\biggr)\\
    &\markref{(8)}{\le}
    X_{h+1}+2MH\max\limits_{s'\in \suppspie_{h+1}}
    \sum\limits_{h'\in\dsb{h+1,H-1}}
    \E\limits_{s''\sim\rho^{\widetilde{p}^m,\pi^E}_{h'}(\cdot|s_{h+1}=s')}
    b_{h'}(s'',a^E)\\
    &\markref{(9)}{\le}
    2MH\sum\limits_{h'\in\dsb{h,H-1}}
    \max\limits_{s'\in \suppspie_{h'+1}}
    \sum\limits_{h''\in\dsb{h'+1,H-1}}
    \E\limits_{s''\sim\rho^{\widetilde{p}^m,\pi^E}_{h''}(\cdot|s_{h'+1}=s')}
    b_{h''}(s'',a^E)\\
    &\le
    2MH^3
    \max\limits_{(s',h')\in \suppspie}
    b_{h'}(s',a^E),
\end{align*}
where at (1) we apply the Bellman's equation and the definition of $\widetilde{\pi}^M$ and $\pi^M$, at (2) we use that
$|\max_x f(x)-\max_x g(x)|\le \max_x|f(x)-g(x)|$,
at (3) we use the definition of $\widehat{r}$ in the form of Eq.
\ref{eq: def rhat equality Q non-expert v2} and Eq.\ref{eq: def hat r equality Q pirlo subset v2}, at (4) we apply the Bellman's equation and the definition of $\widehat{r}$ to recognize that $\widehat{r}_{h+1}(s',a')-r_{h+1}(s',a')=0$; moreover, we apply triangle inequality along with the usual bound
$|\max_x f(x)-\max_x g(x)|\le \max_x|f(x)-g(x)|$, and we recognize the definition of $X_{h+1}$. At (5) we proceed similarly as (4) and we use the Bellman optimality equation, and we observe that $X_{h+1}$ does not depend on
$s'$; at (6) we upper bound the value function by $HM$ and recognize the $\ell_1$-norm, at (7) we use the concentration bound of event $\mathcal{E}_3$ in Lemma \ref{lemma: concentration} (both $\widetilde{p}^M$ and $\widetilde{p}^m$ lie at a ``distance'' of $b$ from $\widehat{p}$). At (8) we have unfolded the recursion to bound the difference of value functions between transition models
$\widetilde{p}^m$ and $\widetilde{p}^M$, at (9) we have unfolded the recursion on the $X$ terms.

Thanks to this expression, we can upper bound the difference of rewards in expert's action as:
\begin{align*}
    \bigl|\widehat{r}_h(s,a^E)-r_h(s,a^E)\bigr|&\le
    2MHb_h(s,a^E)+4MH^3
    \max\limits_{(s',h')\in \suppspie}
    b_{h'}(s',a^E).
\end{align*}

With regards to visited $(s,a,h)\in\suppsapib\setminus\suppsapie$, we can write:
\begin{align*}
    \bigl|\widehat{r}_h(s,a)-r_h(s,a)\bigr|
    &=
    \bigl|
    \E\limits_{s'\sim \widetilde{p}^M_h(\cdot|s,a)}V^{\widetilde{\pi}^M}_{h+1}(s';\widetilde{p}^M,\widehat{r})
    -\E\limits_{s'\sim p_h(\cdot|s,a)}V^{\pi^M}_{h+1}(s';p^M,r)
    \bigr|\\
    &\le 2MH b_h(s,a)+
    \E\limits_{s'\sim \widetilde{p}^M_h(\cdot|s,a)}\Bigl|V^{\widetilde{\pi}^M}_{h+1}(s';\widetilde{p}^M,\widehat{r})
    -V^{\pi^M}_{h+1}(s';p^M,r)
    \Bigr|\\
    &\le 2MH b_h(s,a)+
    \max\limits_{s'\in\S}\Bigl|V^{\widetilde{\pi}^M}_{h+1}(s';\widetilde{p}^M,\widehat{r})
    -V^{\pi^M}_{h+1}(s';p^M,r)
    \Bigr|\\
    &=2MHb_h(s,a)+X_h\\
    &\le 2MHb_h(s,a)+2MH^3
    \max\limits_{(s',h')\in \suppspie}
    b_{h'}(s',a^E)\\
    &\le 2MHb_h(s,a)+\popblue{4}MH^3
    \max\limits_{(s',h')\in \suppspie}
    b_{h'}(s',a^E).
\end{align*}
Obviously, for $(s,a,h)\notin \suppsapib$, we have:
\begin{align*}
    \bigl|\widehat{r}_h(s,a)-r_h(s,a)\bigr|&=0.
\end{align*}

Therefore, by Eq. \ref{eq: bound hausdorff temporary v2}, we can write:
\begin{align*}
\mathcal{H}_d(\sub, \subrelax)&\le
    \sup\limits_{r\in \sub}
    \frac{1}{M}\sum\limits_{h\in\dsb{H}}\E\limits_{(s,a)\sim\rho_h^{p,\pi^b}(\cdot,\cdot)}
            \bigl|r_h(s,a)-\widehat{r}_h(s,a)\bigr|\\
    &\le \sum\limits_{h\in\dsb{H}}\E\limits_{(s,a)\sim\rho_h^{p,\pi^b}(\cdot,\cdot)}
    \biggl(
    2Hb_h(s,a)+4H^3
    \max\limits_{(s',h')\in \suppspie}
    b_{h'}(s',a^E)
    \biggr)\\
    &=
    2H\sum\limits_{h\in\dsb{H}}\E\limits_{(s,a)\sim\rho_h^{p,\pi^b}(\cdot,\cdot)}
    b_h(s,a)+4H^3
    \sum\limits_{h\in\dsb{H}}\E\limits_{(s,a)\sim\rho_h^{p,\pi^b}(\cdot,\cdot)}
    \max\limits_{(s',h')\in \suppspie}
    b_{h'}(s',a^E)\\
    &=
    2H\sum\limits_{h\in\dsb{H}}\E\limits_{(s,a)\sim\rho_h^{p,\pi^b}(\cdot,\cdot)}
    b_h(s,a)+4H^3
    \sum\limits_{h\in\dsb{H}}
    \max\limits_{(s',h')\in \suppspie}
    b_{h'}(s',a^E)\\
    &=
    2H\sum\limits_{h\in\dsb{H}}\E\limits_{(s,a)\sim\rho_h^{p,\pi^b}(\cdot,\cdot)}
    b_h(s,a)+4H^4
    \max\limits_{(s',h')\in \suppspie}
    b_{h'}(s',a^E).
\end{align*}

This concludes the proof w.r.t. the subset.
\end{proof}

Now we can exploit Lemma \ref{lemma: error propagation pirlo superset v2} to bound the error for the superset.
\begin{lemma}[Performance Decomposition Superset]\label{lemma: performance decomposition pirlo v2 superset}
    Under good event $\mathcal{E}$, it holds that:
    \begin{align*}
        \mathcal{H}_d(\super, \superrelax)&\le
        2H\sum\limits_{h\in\dsb{H}}\E\limits_{(s,a)\sim \rho^{p,\pi^b}_h(\cdot,\cdot)}   
        b_{h}(s,a)+
        4H^4\max\limits_{(s,h)\in \suppspie}b_{h}(s,a^E).
    \end{align*}
\end{lemma}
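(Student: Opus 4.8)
The plan is to mirror the proof of Lemma~\ref{lemma: performance decomposition pirlo v2} (the subset case), interchanging the roles of the max- and min-type transition models and policies and invoking the superset reward-choice lemma (Lemma~\ref{lemma: error propagation pirlo superset v2}) in place of the subset one. First I would unfold the Hausdorff distance via Definition~\ref{def: Hausdorff distance}: since under the good event $\mathcal{E}$ we have $\super\subseteq\superrelax$, every $r\in\super$ admits the choice $\widetilde{r}=r\in\superrelax$, so $\sup_{r\in\super}\inf_{\widetilde{r}\in\superrelax}d(r,\widetilde{r})=0$ and only the term $\sup_{\widehat{r}\in\superrelax}\inf_{r\in\super}d(r,\widehat{r})$ survives. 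I would then apply Lemma~\ref{lemma: error propagation pirlo superset v2} to pick, for each $\widehat{r}\in\superrelax$, a witness $r\in\super$; because that construction sets $r_h(s,a)=\widehat{r}_h(s,a)$ for every $(s,a,h)\notin\suppsapib$, the $\ell_\infty$ part of $d$ outside the behavioral support is identically zero, leaving only the $\rho^{p,\pi^b}$-weighted $\ell_1$ term to bound.

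Next I would introduce the quantity $X_h\coloneqq\max_{s'\in\S}\bigl|V^{\widetilde{\pi}^m}_{h+1}(s';\widetilde{p}^m,\widehat{r})-V^{\pi^m}_{h+1}(s';p^m,r)\bigr|$, the superset counterpart of the $X_h$ used in the subset proof, and bound $|r_h(s,a)-\widehat{r}_h(s,a)|$ on the three regions of $\SAH$ separately. On expert triples $(s,a^E,h)\in\suppsapie$ I would use the definition of $r$ from Lemma~\ref{lemma: error propagation pirlo superset v2} together with the triangle inequality, Pinsker's inequality, and the concentration bound of event $\mathcal{E}_3$ (Lemma~\ref{lemma: concentration}) to obtain a bound of the form $2MHb_h(s,a^E)+X_h+X_{h+1}$; the two transition models $\widetilde{p}^m,\widetilde{p}^M$ both lie within $b$ of $\widehat{p}$ in $\ell_1$, which is what controls the leading term. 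On visited non-expert triples $(s,a,h)\in\suppsapib\setminus\suppsapie$, Eq.~\eqref{eq: def rhat equality Q non-expert v2 superset} gives a bound of the form $2MHb_h(s,a)+X_h$, and on unvisited triples the difference is exactly $0$ by construction.

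The heart of the argument, and the main obstacle, is the recursive control of $X_h$. Using the Bellman equation and the inequality $|\min_x f(x)-\min_x g(x)|\le\max_x|f(x)-g(x)|$ (the min-version of the bound used in the subset proof), I would split the maximizing state into $\suppspie_{h+1}$, where both policies play the expert action and the recursion passes through $Q^{\pi^E}$, and its complement, where Eq.~\eqref{eq: equality Q for non expert actions superset} forces the two value functions to coincide so that the contribution vanishes. Rewriting the expert-action $Q$-differences via Eqs.~\eqref{eq: def rhat equality Q non-expert v2 superset} and~\eqref{eq: def hat r equality Q pirlo superset v2} and applying the $\mathcal{E}_3$ concentration twice at each step yields a recursion of the shape $X_h\le X_{h+1}+2MH\max_{s'\in\suppspie_{h+1}}\sum_{h'}\E_{\rho^{\widetilde{p}^m,\pi^E}}b_{h'}(s'',a^E)$, which unfolds to $X_h\le 2MH^3\max_{(s',h')\in\suppspie}b_{h'}(s',a^E)$. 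The delicacy is keeping the max-type objects (used for the expert action through $\widetilde{p}^M$) and the min-type objects (used for non-expert actions through $\widetilde{p}^m,\pi^m$) aligned across the recursion so that the two equalities line up and the telescoping is valid. Substituting the three regional bounds into the surviving supremum, then summing the $\rho^{p,\pi^b}$-weighted terms (giving $2H\sum_{h\in\dsb{H}}\E_{(s,a)\sim\rho^{p,\pi^b}_h}b_h(s,a)$) and the $X$-terms (which accumulate an extra factor of $H$, giving $4H^4\max_{(s,h)\in\suppspie}b_h(s,a^E)$) produces the claimed bound, exactly as in the subset case.
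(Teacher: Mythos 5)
Your proposal is correct and follows essentially the same route as the paper's proof: the same reduction of the Hausdorff distance to the single surviving supremum via $\super\subseteq\superrelax$, the same witness construction from Lemma~\ref{lemma: error propagation pirlo superset v2}, the same three-region case split, and the same recursive quantity (which the paper calls $Y_h$ rather than $X_h$) controlled by the min-version of the max inequality and unfolded to $2MH^3\max_{(s',h')\in\suppspie}b_{h'}(s',a^E)$. The final accounting — the $\rho^{p,\pi^b}$-weighted sum giving the $2H\sum_h\E\,b_h$ term and the extra factor of $H$ from summing the recursion bound over stages giving $4H^4$ — also matches the paper exactly.
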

\begin{proof}
We can write:
\begin{align}
\begin{split}\label{eq: bound hausdorff temporary v2 super}
    \mathcal{H}_d(\super, \superrelax)&\coloneqq
\max\{\sup\limits_{r\in \super}
            \inf\limits_{\widetilde{r}\in \superrelax}d(r,\widetilde{r}),
            \sup\limits_{\widetilde{r}\in \superrelax}
            \inf\limits_{r\in \super}d(r,\widetilde{r})\}\\
    &\markref{(1)}{=}
    \sup\limits_{\widetilde{r}\in \superrelax}
            \inf\limits_{r\in \super}
            d(r,\widetilde{r})\\
    &\eqqcolon
    \sup\limits_{\widetilde{r}\in \superrelax}
            \inf\limits_{r\in \super}
            \frac{1}{M}\sum\limits_{h\in\dsb{H}}\biggl(\E\limits_{(s,a)\sim\rho_h^{p,\pi^b}(\cdot,\cdot)}
            \bigl|r_h(s,a)-\widetilde{r}_h(s,a)\bigr|+
            \max\limits_{(s,a)\notin \suppsahpib}\bigl|
                    r_h(s,a)-\widetilde{r}_h(s,a)
                \bigr|\biggr)\\
    &\markref{(2)}{\le}
    \sup\limits_{\widetilde{r}\in \superrelax}
    \frac{1}{M}\sum\limits_{h\in\dsb{H}}\biggl(\E\limits_{(s,a)\sim\rho_h^{p,\pi^b}(\cdot,\cdot)}
            \bigl|r_h(s,a)-\widetilde{r}_h(s,a)\bigr|+
            \underbrace{\max\limits_{(s,a)\notin \suppsahpib}\bigl|
                    r_h(s,a)-\widetilde{r}_h(s,a)
                \bigr|}_{=0}\biggr)\\
    &=
    \sup\limits_{\widetilde{r}\in \superrelax}
    \frac{1}{M}\sum\limits_{h\in\dsb{H}}\E\limits_{(s,a)\sim\rho_h^{p,\pi^b}(\cdot,\cdot)}
            \bigl|r_h(s,a)-\widetilde{r}_h(s,a)\bigr|
\end{split}
\end{align}
where at (1) we have used that, under good event $\mathcal{E}$, $\super\subseteq\superrelax$,
and at (2) we apply Lemma \ref{lemma: error propagation pirlo superset v2}, by denoting with $r$ the chosen reward from $\super$.

In the following, it is useful to denote, for any $h\in\dsb{H}$:
\begin{align*}
    Y_h\coloneqq \max\limits_{s'\in\S}\bigl|
    V^{\widetilde{\pi}^m}_{h+1}(s';\widetilde{p}^m,\widehat{r})-V^{\pi^m}_{h+1}(s';p^m,r)
    \bigr|.
\end{align*}
Let us consider any $(s,h)\in \suppspie$. The difference between the rewards of expert's action can be bounded by:
\begin{align*}
    \bigl|\widehat{r}_h(s,a^E)-r_h(s,a^E)\bigr|&\markref{(1)}{\le}
    \bigl|
    \E\limits_{s'\sim p_h(\cdot|s,a^E)}V^{\pi^E}_{h+1}(s';p,r)-
    \E\limits_{s'\sim \widetilde{p}^M_h(\cdot|s,a^E)}V^{\pi^E}_{h+1}(s';\widetilde{p}^M,\widehat{r})
    \bigr|\\
    &\qquad+\bigl|\min\limits_{s'\in\S}
            V^{\widetilde{\pi}^m}_{h+1}(s';\widetilde{p}^m,\widehat{r})
            -\min\limits_{s'\in\S}V^{\pi^m}_{h+1}(s';p^m,r)\bigr|\\
        &\markref{(2)}{\le}
\bigl|
\E\limits_{s'\sim p_h(\cdot|s,a^E)}V^{\pi^E}_{h+1}(s';p,r)-
\E\limits_{s'\sim \widetilde{p}^M_h(\cdot|s,a^E)}V^{\pi^E}_{h+1}(s';\widetilde{p}^M,\widehat{r})
\pm \E\limits_{s'\sim p_h(\cdot|s,a^E)}V^{\pi^E}_{h+1}(s';\widetilde{p}^M,\widehat{r})\bigr|\\
&\qquad+\popblue{\max\limits_{s'\in\S}}\bigl|
        V^{\widetilde{\pi}^m}_{h+1}(s';\widetilde{p}^m,\widehat{r})
        -V^{\pi^m}_{h+1}(s';p^m,r)\bigr|\\
&\markref{(3)}{\le}
MH\bigl|\bigl|p_h(\cdot|s,a^E)-\widetilde{p}^M_h(\cdot|s,a^E)
\bigr|\bigr|_1
+\E\limits_{s'\sim p_h(\cdot|s,a^E)}\bigl|V^{\pi^E}_{h+1}(s';p,r)-
V^{\pi^E}_{h+1}(s';\widetilde{p}^M,\widehat{r})
\bigr|+Y_h\\
&\markref{(4)}{\le}
2MHb_h(s,a^E)
+\E\limits_{s'\sim p_h(\cdot|s,a^E)}\bigl|Q^{\pi^E}_{h+1}(s',a^E;p,r)-
Q^{\pi^E}_{h+1}(s',a^E;\widetilde{p}^M,\widehat{r})
\bigr|+Y_h\\
&\markref{(5)}{=}
2MHb_h(s,a^E)
+\E\limits_{s'\sim p_h(\cdot|s,a^E)}\bigl|
\min\limits_{s''\in\S}V^{\pi^m}_{h+2}(s'';p^m,r)-
\min\limits_{s''\in\S}V^{\widetilde{\pi}^m}_{h+2}(s'';\widetilde{p}^m,\widehat{r})
\bigr|+Y_h\\
&\le
2MHb_h(s,a^E)
+\E\limits_{s'\sim p_h(\cdot|s,a^E)}\bigl[Y_{h+1}\bigr]+Y_h\\
&\markref{(6)}{=}
2MHb_h(s,a^E)+Y_h+Y_{h+1},
\end{align*}
where at (1) we use the definition of $r$ in Lemma \ref{lemma: error propagation pirlo superset v2} and triangle inequality, at (2) we use that, for any pair $f,g$ of real-valued functions, it holds that $|\min_x f(x)-\min_x g(x)|\le \max_x|f(x)-g(x)|$, at (3) we apply triangle inequality twice, we recognize the definition of $X_h$, we upper bound the value function by $MH$, and we recognize the definition of $\ell_1$ norm,
at (4) we first use triangle inequality $\|p_h(\cdot|s,a^E)-\widetilde{p}^M_h(\cdot|s,a^E)\|_1
\le \|p_h(\cdot|s,a^E)-\widehat{p}_h(\cdot|s,a^E)\|_1+
\|\widetilde{p}^M_h(\cdot|s,a^E)-\widehat{p}_h(\cdot|s,a^E)\|_1$, then we use Pinsker's inequality, event $\mathcal{E}_3$ from Lemma \ref{lemma: concentration}, and the definition of $b_h(s,a^E)$;
at (5) we use the definition of $\widehat{r}$ in Lemma \ref{lemma: error propagation pirlo superset v2} in the form of Eq. \ref{eq: def hat r equality Q pirlo superset v2}, by noticing that the support of $p_h(\cdot|s,a^E)$
is contained in $\suppspie$, and at (6) we realize that $Y_{h+1}$ depends
only on $h$ and not on $s'$.

In order to upper bound the term $Y_h$, we write:
\begin{align*}
    Y_h&\coloneqq \max\limits_{s'\in\S}\bigl|
    V^{\widetilde{\pi}^m}_{h+1}(s';\widetilde{p}^m,\widehat{r})-V^{\pi^m}_{h+1}(s';p^m,r)
    \bigr|\\
    &\markref{(1)}{=}\max\biggl\{
    \max\limits_{s'\notin \suppspie_{h+1}}\Bigl|
    \max\limits_{a'\in\A} Q^{\widetilde{\pi}^m}_{h+1}(s',a';\widetilde{p}^m,\widehat{r})-
    \max\limits_{a'\in\A} Q^{\pi^m}_{h+1}(s',a';p^m,r)
    \Bigr|,\\
    &\qquad\qquad\max\limits_{s'\in \suppspie_{h+1}}\Bigl|
    Q^{\popblue{\pi^E}}_{h+1}(s',a^E;\widetilde{p}^m,\widehat{r})-
    Q^{\popblue{\pi^E}}_{h+1}(s',a^E;\popblue{p},r)
    \Bigr|
    \biggr\}\\
    &\markref{(2)}{\le}\max\biggl\{
    \max\limits_{s'\notin \suppspie_{h+1}}
\popblue{\max\limits_{a'\in\A}}\Bigl|
     Q^{\widetilde{\pi}^m}_{h+1}(s',a';\widetilde{p}^m,\widehat{r})-
     Q^{\pi^m}_{h+1}(s',a';p^m,r)
    \Bigr|,\\
    &\qquad\qquad\max\limits_{s'\in \suppspie_{h+1}}\Bigl|
    Q^{\pi^E}_{h+1}(s',a^E;\widetilde{p}^m,\widehat{r})-
    Q^{\pi^E}_{h+1}(s',a^E;p,r)\pm Q^{\pi^E}_{h+1}(s',a^E;\widetilde{p}^M,\widehat{r}) 
    \Bigr|
    \biggr\}\\
    &\markref{(3)}{=}\max\biggl\{
    \max\limits_{s'\notin \suppspie_{h+1}}
    \max\Bigl\{
    \max\limits_{a'\in\A: (s',a',h+1)\in \suppsapib}\Bigl|
     \underbrace{Q^{\widetilde{\pi}^m}_{h+1}(s',a';\widetilde{p}^m,\widehat{r})-
     Q^{\pi^m}_{h+1}(s',a';p^m,r)}_{=0}
    \Bigr|,\\
    &\qquad\qquad\max\limits_{a'\in\A: (s',a',h+1)\notin \suppsapib}\Bigl|
     Q^{\widetilde{\pi}^m}_{h+1}(s',a';\widetilde{p}^m,\widehat{r})-
     Q^{\pi^m}_{h+1}(s',a';p^m,r)
    \Bigr|
    \Bigr\},\\
    &\qquad\qquad\max\limits_{s'\in \suppspie_{h+1}}\Bigl|
    Q^{\pi^E}_{h+1}(s',a^E;\widetilde{p}^m,\widehat{r})-Q^{\pi^E}_{h+1}(s',a^E;\widetilde{p}^M,\widehat{r})
    +\min\limits_{s''\in\S}
            V^{\widetilde{\pi}^m}_{h+2}(s'';\widetilde{p}^m,\widehat{r})
            -\min\limits_{s''\in\S}V^{\pi^m}_{h+2}(s';p^m,r)
    \Bigr|
    \biggr\}\\
    &\markref{(4)}{\le}\max\biggl\{
    \max\limits_{s'\notin \suppspie_{h+1}}
    \max\limits_{a'\in\A: (s',a',h+1)\notin \suppsapib}\Bigl|
     \underbrace{\widehat{r}_{h+1}(s',a')-r_{h+1}(s',a')}_{=0}+\min\limits_{s''\in\S}V^{\widetilde{\pi}^m}_{h+2}(s'';\widetilde{p}^m,\widehat{r})-\min\limits_{s''\in\S}
     V^{\pi^m}_{h+2}(s'';p^m,r)
    \Bigr|,\\
    &\qquad\qquad\max\limits_{s'\in \suppspie_{h+1}}\Bigl|
    Q^{\pi^E}_{h+1}(s',a^E;\widetilde{p}^M,\widehat{r})-Q^{\pi^E}_{h+1}(s',a^E;\widetilde{p}^m,\widehat{r})
    \Bigr|
    + \popblue{Y_{h+1}}
    \biggr\}\\
    &\markref{(5)}{\le}\max\biggl\{
    Y_{h+1},\max\limits_{s'\in \suppspie_{h+1}}\Bigl(\Bigl|
    \E\limits_{s''\sim \widetilde{p}^M_{h+1}(\cdot|s',a^E)}V^{\pi^E}_{h+2}(s'';\widetilde{p}^M,\widehat{r})-
    \E\limits_{s''\sim \widetilde{p}^m_{h+1}(\cdot|s',a^E)}V^{\pi^E}_{h+2}(s'';\widetilde{p}^m,\widehat{r})
    \Bigr|
    + Y_{h+1}\Bigr)
    \biggr\}\\
    &=
    Y_{h+1}+\max\limits_{s'\in \suppspie_{h+1}}\Bigl|
    \E\limits_{s''\sim \widetilde{p}^M_{h+1}(\cdot|s',a^E)}
    V^{\pi^E}_{h+2}(s'';\widetilde{p}^M,\widehat{r})-
    \E\limits_{s''\sim \widetilde{p}^m_{h+1}(\cdot|s',a^E)}V^{\pi^E}_{h+2}(s'';\widetilde{p}^m,\widehat{r})
    \pm\E\limits_{s''\sim \widetilde{p}^m_{h+1}(\cdot|s',a^E)}
    V^{\pi^E}_{h+2}(s'';\widetilde{p}^M,\widehat{r}) 
    \Bigr|\\
    &\markref{(6)}{\le}
    Y_{h+1}+\max\limits_{s'\in \suppspie_{h+1}}\biggl(MH\Bigl\|\widetilde{p}^M_{h+1}(\cdot|s',a^E)-\widetilde{p}^m_{h+1}(\cdot|s',a^E)
\Bigr\|_1+
    \E\limits_{s''\sim \widetilde{p}^m_{h+1}(\cdot|s',a^E)}
    \Bigl|V^{\pi^E}_{h+2}(s'';\widetilde{p}^M,\widehat{r})-
    V^{\pi^E}_{h+2}(s'';\widetilde{p}^m,\widehat{r})\Bigr|\biggr)\\
    &\markref{(7)}{\le}
    Y_{h+1}+\max\limits_{s'\in \suppspie_{h+1}}\biggl(2MHb_{h+1}(s',a^E)\\
    &\qquad\qquad+\E\limits_{s''\sim \widetilde{p}^m_{h+1}(\cdot|s',a^E)}
    \Bigl|
    \E\limits_{s'''\sim \widetilde{p}^M_{h+2}(\cdot|s'',a^E)}
    V^{\pi^E}_{h+3}(s''';\widetilde{p}^M,\widehat{r})-
    \E\limits_{s'''\sim \widetilde{p}^m_{h+2}(\cdot|s'',a^E)}
    V^{\pi^E}_{h+3}(s''';\widetilde{p}^m,\widehat{r})\Bigr|\biggr)\\
    &\markref{(8)}{\le}
    Y_{h+1}+2MH\max\limits_{s'\in \suppspie_{h+1}}
    \sum\limits_{h'\in\dsb{h+1,H-1}}
    \E\limits_{s''\sim\rho^{\widetilde{p}^m,\pi^E}_{h'}(\cdot|s_{h+1}=s')}
    b_{h'}(s'',a^E)\\
    &\markref{(9)}{\le}
    2MH\sum\limits_{h'\in\dsb{h,H-1}}
    \max\limits_{s'\in \suppspie_{h'+1}}
    \sum\limits_{h''\in\dsb{h'+1,H-1}}
    \E\limits_{s''\sim\rho^{\widetilde{p}^m,\pi^E}_{h''}(\cdot|s_{h'+1}=s')}
    b_{h''}(s'',a^E)\\
    &\le
    2MH^3
    \max\limits_{(s',h')\in \suppspie}
    b_{h'}(s',a^E),
\end{align*}
where at (1) we apply the Bellman's equation and the definition of $\widetilde{\pi}^m$ and $\pi^m$, at (2) we use that
$|\min_x f(x)-\min_x g(x)|\le \max_x|f(x)-g(x)|$,
at (3) we use the definition of $r$ in the form of Eq.
\ref{eq: def rhat equality Q non-expert v2 superset} and Eq.\ref{eq: def hat r equality Q pirlo superset v2}, at (4) we apply the Bellman's equation and the definition of $r$ to recognize that $r_{h+1}(s',a')-\widehat{r}_{h+1}(s',a')=0$; moreover, we apply triangle inequality along with the usual bound
$|\min_x f(x)-\min_x g(x)|\le \max_x|f(x)-g(x)|$, and we recognize the definition of $X_{h+1}$. At (5) we proceed similarly as (4) and we use the Bellman's equation, and we observe that $Y_{h+1}$ does not depend on
$s'$; at (6) we upper bound the value function by $HM$ and recognize the $\ell_1$-norm, at (7) we use the concentration bound of event $\mathcal{E}_3$ in Lemma \ref{lemma: concentration} (both $\widetilde{p}^M$ and $\widetilde{p}^m$ lie at a ``distance'' of $b$ from $\widehat{p}$). At (8) we have unfolded the recursion to bound the difference of value functions between transition models
$\widetilde{p}^m$ and $\widetilde{p}^M$, at (9) we have unfolded the recursion on the $Y$ terms.

Thanks to this expression, we can upper bound the difference of rewards in expert's action as:
\begin{align*}
    \bigl|\widehat{r}_h(s,a^E)-r_h(s,a^E)\bigr|&\le
    2MHb_h(s,a^E)+4MH^3
    \max\limits_{(s',h')\in \suppspie}
    b_{h'}(s',a^E).
\end{align*}

With regards to visited $(s,a,h)\in\suppsapib\setminus\suppsapie$, we can write:
\begin{align*}
    \bigl|\widehat{r}_h(s,a)-r_h(s,a)\bigr|
    &=
    \bigl|
    \E\limits_{s'\sim \widetilde{p}^m_h(\cdot|s,a)}V^{\widetilde{\pi}^m}_{h+1}(s';\widetilde{p}^m,\widehat{r})
    -\E\limits_{s'\sim p_h(\cdot|s,a)}V^{\pi^m}_{h+1}(s';p^m,r)
    \bigr|\\
    &\le 2MH b_h(s,a)+
    \E\limits_{s'\sim \widetilde{p}^m_h(\cdot|s,a)}\Bigl|V^{\widetilde{\pi}^m}_{h+1}(s';\widetilde{p}^m,\widehat{r})
    -V^{\pi^m}_{h+1}(s';p^m,r)
    \Bigr|\\
    &\le 2MH b_h(s,a)+
    \max\limits_{s'\in\S}\Bigl|V^{\widetilde{\pi}^m}_{h+1}(s';\widetilde{p}^m,\widehat{r})
    -V^{\pi^m}_{h+1}(s';p^m,r)
    \Bigr|\\
    &=2MHb_h(s,a)+Y_h\\
    &\le 2MHb_h(s,a)+2MH^3
    \max\limits_{(s',h')\in \suppspie}
    b_{h'}(s',a^E)\\
    &\le 2MHb_h(s,a)+\popblue{4}MH^3
    \max\limits_{(s',h')\in \suppspie}
    b_{h'}(s',a^E).
\end{align*}
Obviously, for $(s,a,h)\notin \suppsapib$, we have:
\begin{align*}
    \bigl|\widehat{r}_h(s,a)-r_h(s,a)\bigr|&=0.
\end{align*}

Therefore, by Eq. \ref{eq: bound hausdorff temporary v2 super}, we can write:
\begin{align*}
\mathcal{H}_d(\super, \superrelax)&\le
    \sup\limits_{\widetilde{r}\in \superrelax}
    \frac{1}{M}\sum\limits_{h\in\dsb{H}}\E\limits_{(s,a)\sim\rho_h^{p,\pi^b}(\cdot,\cdot)}
            \bigl|r_h(s,a)-\widehat{r}_h(s,a)\bigr|\\
    &\le \sum\limits_{h\in\dsb{H}}\E\limits_{(s,a)\sim\rho_h^{p,\pi^b}(\cdot,\cdot)}
    \biggl(
    2Hb_h(s,a)+4H^3
    \max\limits_{(s',h')\in \suppspie}
    b_{h'}(s',a^E)
    \biggr)\\
    &=
    2H\sum\limits_{h\in\dsb{H}}\E\limits_{(s,a)\sim\rho_h^{p,\pi^b}(\cdot,\cdot)}
    b_h(s,a)+4H^3
    \sum\limits_{h\in\dsb{H}}\E\limits_{(s,a)\sim\rho_h^{p,\pi^b}(\cdot,\cdot)}
    \max\limits_{(s',h')\in \suppspie}
    b_{h'}(s',a^E)\\
    &=
    2H\sum\limits_{h\in\dsb{H}}\E\limits_{(s,a)\sim\rho_h^{p,\pi^b}(\cdot,\cdot)}
    b_h(s,a)+4H^3
    \sum\limits_{h\in\dsb{H}}
    \max\limits_{(s',h')\in \suppspie}
    b_{h'}(s',a^E)\\
    &=
    2H\sum\limits_{h\in\dsb{H}}\E\limits_{(s,a)\sim\rho_h^{p,\pi^b}(\cdot,\cdot)}
    b_h(s,a)+4H^4
    \max\limits_{(s',h')\in \suppspie}
    b_{h'}(s',a^E).
\end{align*}
This concludes the proof w.r.t. the superset.
\end{proof}

\subsubsection{Lemmas for Theorem \ref{theorem: upper bound d infty pirlo H 8}}
\begin{lemma}[Performance Decomposition Subset]\label{lemma: performance decomposition pirlo v2 subset d infty}
    Under good event $\mathcal{E}$, it holds that:
    \begin{align*}
        \mathcal{H}_\infty(\sub, \subrelax)&\le
        2H^2\max\limits_{(s,a,h)\in\suppsapib}   
        b_{h}(s,a)+
        4H^4\max\limits_{(s,h)\in \suppspie}b_{h}(s,a^E).
    \end{align*}
\end{lemma}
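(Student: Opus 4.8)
The plan is to follow verbatim the structure of the $d_\infty$-versions already carried out in the proof of Lemma~\ref{lemma: performance decomposition pirlo d infty}, but fed with the pointwise reward bounds established inside the proof of Lemma~\ref{lemma: performance decomposition pirlo v2}. First I would unfold the Hausdorff distance with inner semimetric $d_\infty$. As in Equation~\eqref{eq: bound max norm hausdorff subset pirlo}, under the good event $\mathcal{E}$ we have the inclusion $\subrelax\subseteq\sub$, so the two sup-inf terms collapse to the single direction $\sup_{r\in\sub}\inf_{\widetilde{r}\in\subrelax}d_\infty(r,\widetilde{r})$. For a fixed $r\in\sub$, I would then invoke the reward choice Lemma~\ref{lemma: error propagation pirlo subset v2} to exhibit a concrete $\widehat{r}\in\subrelax$, so that the infimum is upper bounded by $\frac{1}{M}\sum_{h\in\dsb{H}}\max_{(s,a)\in\SA}|r_h(s,a)-\widehat{r}_h(s,a)|$.

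The key observation is that the per-triple bounds on $|r_h(s,a)-\widehat{r}_h(s,a)|$ derived in the proof of Lemma~\ref{lemma: performance decomposition pirlo v2} are \emph{pointwise} and do not depend on the $\rho^{p,\pi^b}$-averaging, hence they transfer directly. Concretely, I would reuse the three-case decomposition for the reward $\widehat{r}$ of Lemma~\ref{lemma: error propagation pirlo subset v2}: for $(s,a^E,h)\in\suppsapie$ and for $(s,a,h)\in\suppsapib\setminus\suppsapie$ one has $|r_h(s,a)-\widehat{r}_h(s,a)|\le 2MHb_h(s,a)+4MH^3\max_{(s',h')\in\suppspie}b_{h'}(s',a^E)$, while for $(s,a,h)\notin\suppsapib$ the difference is exactly $0$ (this is precisely the advantage of the v2 reward choice, which enforces $\widehat{r}_h(s,a)=r_h(s,a)$ outside $\suppsapib$).

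Having these pointwise estimates, the remaining work is purely combinatorial. Taking the maximum over $(s,a)$ within a fixed stage $h$, the vanishing contribution outside $\suppsapib$ drops out and the two surviving cases merge into $\max_{(s,a)\in\suppsapib_h}2Hb_h(s,a)+4H^3\max_{(s',h')\in\suppspie}b_{h'}(s',a^E)$ (after dividing by $M$). Summing over the $H$ stages and bounding each stagewise maximum by the global one yields the two $H$-inflated terms: the first $\sum_h 2H\max_{(s,a)\in\suppsapib_h}b_h(s,a)$ collapses to $2H^2\max_{(s,a,h)\in\suppsapib}b_h(s,a)$, and the second $\sum_h 4H^3\max_{(s',h')\in\suppspie}b_{h'}(s',a^E)$ becomes $4H^4\max_{(s,h)\in\suppspie}b_h(s,a^E)$, matching the claim.

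The main (mild) obstacle is bookkeeping rather than a new idea: I must be careful that the $\ell_\infty$-structure of $d_\infty$ replaces expectation by a maximum \emph{before} summing over $h$, so that the gain of one factor $H$ relative to the $d$-bound of Lemma~\ref{lemma: performance decomposition pirlo v2} comes entirely from bounding the stagewise max by the global triple-max in the first term, whereas the second term is already a state-max and merely accumulates an extra $H$ from the outer sum over $h$. I would also reconfirm that the $\widehat{r}_h(s,a)=r_h(s,a)$ property on $(s,a,h)\notin\suppsapib$ genuinely removes the ``$\max_{(s,a)\notin\suppsahpib}$'' contribution to $d_\infty$, since that is the place where the argument differs from the non-v2 version and where using the original reward choice lemma would have left a nonzero residual.
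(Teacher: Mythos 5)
Your proposal is correct and follows essentially the same route as the paper, whose own proof is exactly the sketch you describe: reuse the pointwise bounds $|r_h(s,a)-\widehat{r}_h(s,a)|\le 2MHb_h(s,a)+4MH^3\max_{(s',h')\in\suppspie}b_{h'}(s',a^E)$ on $\suppsapib$ (and $=0$ outside) from the proof of Lemma~\ref{lemma: performance decomposition pirlo v2}, and insert them into $\mathcal{H}_\infty$ via the stagewise maximum and the sum over $h$. Your bookkeeping of where each extra factor of $H$ arises is accurate.
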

\begin{proof}[Proof Sketch]
    The proof is analogous to that of Lemma \ref{lemma: performance decomposition pirlo v2}.
    We can reuse the bounds for the difference between rewards proved in there and insert them into $\mathcal{H}_\infty$ to get the result.
\end{proof}

\begin{lemma}[Performance Decomposition Superset]\label{lemma: performance decomposition pirlo v2 superset d infty}
    Under good event $\mathcal{E}$, it holds that:
    \begin{align*}
        \mathcal{H}_\infty(\sub, \subrelax)&\le
        2H^2\max\limits_{(s,a,h)\in\suppsapib}   
        b_{h}(s,a)+
        4H^4\max\limits_{(s,h)\in \suppspie}b_{h}(s,a^E).
    \end{align*}
\end{lemma}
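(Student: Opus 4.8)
The plan is to mirror the proof of Lemma~\ref{lemma: performance decomposition pirlo v2 superset} (the $d$-version of the superset decomposition obtained from the reward choice of Lemma~\ref{lemma: error propagation pirlo superset v2}), reusing verbatim the pointwise reward-difference bounds established there but feeding them into $\mathcal{H}_\infty$ rather than $\mathcal{H}_d$. This is exactly how Lemma~\ref{lemma: performance decomposition pirlo v2 subset d infty} was derived from Lemma~\ref{lemma: performance decomposition pirlo v2}, and nothing beyond the change of aggregation operator is needed.

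First I would, under the good event $\mathcal{E}$, use the inclusion $\super\subseteq\superrelax$ to collapse the Hausdorff distance to a single one-sided supremum and bound it by the per-stage max-norm of the reward difference:
\begin{align*}
\mathcal{H}_\infty(\super,\superrelax)
&= \sup_{\widetilde{r}\in\superrelax}\inf_{r\in\super} d_\infty(r,\widetilde{r})
\le \sup_{\widetilde{r}\in\superrelax}\frac{1}{M}\sum_{h\in\dsb{H}}\max_{(s,a)\in\SA}\bigl|r_h(s,a)-\widetilde{r}_h(s,a)\bigr|,
\end{align*}
where $r$ is the companion reward in $\super$ supplied by Lemma~\ref{lemma: error propagation pirlo superset v2}. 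The only structural change relative to the $d$-case is that the behavioral expectation $\E_{(s,a)\sim\rho^{p,\pi^b}_h}$ is replaced by the maximum over $(s,a)\in\SA$.

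Next I would partition $\SA$ at each stage $h$ into the three regions used in Lemma~\ref{lemma: error propagation pirlo superset v2} and import the pointwise estimates already proved inside Lemma~\ref{lemma: performance decomposition pirlo v2 superset}: the difference vanishes on $(s,a,h)\notin\suppsapib$, while on $\suppsapib$ (for both expert and non-expert actions) it is controlled by $2MHb_h(s,a)+4MH^3\max_{(s',h')\in\suppspie}b_{h'}(s',a^E)$. Taking $\max_{(s,a)\in\SA}$ then yields, at each stage, $\max_{(s,a)\in\suppsapib_h}2MHb_h(s,a)+4MH^3\max_{(s',h')\in\suppspie}b_{h'}(s',a^E)$; summing over $h\in\dsb{H}$, cancelling the $1/M$ factor against the $MH$ scaling already present in the $b$-terms, and upper bounding each per-stage maximum by the corresponding global maximum gives
\begin{align*}
\mathcal{H}_\infty(\super,\superrelax)
\le 2H^2\max_{(s,a,h)\in\suppsapib}b_h(s,a)+4H^4\max_{(s,h)\in\suppspie}b_h(s,a^E),
\end{align*}
the claimed bound.

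The step carrying essentially all the content is the reuse of the recursive $Y_h$ estimate from Lemma~\ref{lemma: performance decomposition pirlo v2 superset}, which converts the transition-model mismatch on non-expert triples of $\suppsapib\setminus\suppsapie$ into the $H^3$ telescoping factor over $\suppspie$; once those reward-difference bounds are in hand the passage to $\mathcal{H}_\infty$ is purely mechanical. The main (and only) subtlety is therefore not in this lemma but already absorbed in the $d$-proof: here one merely swaps the averaging operator for the supremum, using $\max\{a_1,\dots,a_k\}\le\sum_i a_i$ to collapse the per-region maxima into the additive form above.
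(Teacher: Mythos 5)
Your proof is correct and follows exactly the route the paper takes: the paper's own argument for this lemma is precisely to reuse the pointwise reward-difference bounds from the $d$-version superset decomposition (via the reward choice of Lemma~\ref{lemma: error propagation pirlo superset v2}) and substitute the per-stage maximum over $(s,a)$ for the behavioral expectation, which is what you do. Your accounting of the $2H^2$ and $4H^4$ factors matches the paper's bound, and you correctly read the statement as concerning $\mathcal{H}_\infty(\super,\superrelax)$ despite the typo in the displayed claim.
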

\begin{proof}[Proof Sketch]
    The proof is analogous to that of Lemma \ref{lemma: performance decomposition pirlo v2 superset}.
    We can reuse the bounds for the difference between rewards proved in there and insert them into $\mathcal{H}_\infty$ to get the result.
\end{proof}

\subsubsection{Proofs of the main theorems}
Thanks to Lemma \ref{lemma: performance decomposition pirlo v2} and Lemma \ref{lemma: performance decomposition pirlo v2 superset},
we can conclude the proof of the main theorem for $d$.
\begin{thr}\label{theorem: upper bound d pirlo H 8}
    Under the conditions of Theorem \ref{theorem: upper bound d pirlo}, \pirlo is $(\epsilon,\delta)$-PAC for $d$-IRL
    with a sample complexity at most:
    \begin{align*}
        \tau^b&\le \widetilde{\mathcal{O}}\Biggl(
            \frac{H^3Z^{p,\pi^b}\ln\frac{1}
        {\delta}}{\epsilon^2}\biggl(
            \ln\frac{1}
        {\delta}+S_{\max}^{p,\pi^b}
        \biggr)\\
        &+\frac{H^8\ln\frac{1}
        {\delta}}{\rho_{\min}^{\pi^b,\suppsapie} \epsilon^2}\biggl(
            \ln\frac{1}
        {\delta}+S_{\max}^{p,\pi^b}
        \biggr)+\frac{\ln\frac{1}{\delta}}{\ln\frac{1}{1-\rhominpib}}
        \Biggr),
    \end{align*}
    and $\tau^E$ is bounded as in Theorem~\ref{theorem: upper bound d irlo}. Furthermore, \pirlo is inclusion monotonic.
\end{thr}
\begin{proof}[Proof Sketch]
The proof for the subset and superset is completely analogous.
Thanks to Lemma \ref{lemma: performance decomposition pirlo v2} and Lemma \ref{lemma: performance decomposition pirlo v2 superset}, we realize that we have to bound the sum of two terms, which are completely analogous to those in the proof of Theorem \ref{theorem: upper bound d pirlo}, with the only difference of $H^4$ instead of $H^3$. By proceeding similarly, we get the result.
\end{proof}
Thanks to Lemma \ref{lemma: performance decomposition pirlo v2 subset d infty} and Lemma \ref{lemma: performance decomposition pirlo v2 superset d infty},
we can conclude the proof of the main theorem for $d_\infty$.
\begin{thr}\label{theorem: upper bound d infty pirlo H 8}
    Under the conditions of Theorem \ref{theorem: upper bound d pirlo},
    \pirlo is $(\epsilon,\delta)$-PAC for $d_\infty$-IRL
    with a sample complexity at most:
    \begin{equation*}
    \begin{aligned}
        \tau^b&\le  \widetilde{\mathcal{O}}\Bigg(\frac{H^4\ln\frac{1}
        {\delta}}{\rhominpib \epsilon^2}\biggl(
            \ln\frac{1}
        {\delta}+S^{p,\pi^b}_{\max}
        \biggr) \\
        & 
           + \frac{H^8\ln\frac{1}
        {\delta}}{\rho_{\min}^{\pi^b,\suppsapie }\epsilon^2}\biggl(
            \ln\frac{1}
        {\delta}+S^{p,\pi^b}_{\max}
        \biggr) + \frac{\ln\frac{1}{\delta}}{\ln\frac{1}{1-\rhominpib}}\Bigg),
    \end{aligned}
    \end{equation*}
   and $\tau^E$ is bounded as in Theorem~\ref{theorem: upper bound d irlo}.
   Furthermore, \pirlo is inclusion monotonic.
\end{thr}
\begin{proof}[Proof Sketch]
    The proof for the subset and superset is completely analogous.
Thanks to Lemma \ref{lemma: performance decomposition pirlo v2 subset d infty} and Lemma \ref{lemma: performance decomposition pirlo v2 superset d infty}, we realize that we have to bound the sum of two terms, which are completely analogous to those in the proof of Theorem \ref{theorem: upper bound d infty pirlo}, with the only difference of $H^4$ instead of $H^3$. By proceeding similarly, we get the result.
\end{proof}

\subsection{A note on the superset without relaxation}\label{subsec: superset no relaxation bound}
In this section, we show that, if we use the superset definition $\widehat{\R}^\cup$ of Eq. \ref{def: sub and super no relaxation pessimism}, i.e., the definition without relaxation, then we are able to obtain the same performance decomposition result (see Lemma \ref{lemma: performance decomposition irlo}) that we had for the case without pessimism, and, thus, we end up with the same sample complexity result, which is much smaller than those computed for the relaxations. Observe that we are not able to have an analogous result for the subset definition $\widehat{\R}^\cap$ of Eq. \ref{def: sub and super no relaxation pessimism}.
Indeed, differently from the relaxations defined in Eq. \ref{def: relaxation superset}, the subset and the superset definitions of Eq. \ref{def: sub and super no relaxation pessimism} are not exactly simmetric.
While $r\in\widehat{\R}^\cup$ entails, by definition, the existence of (at least) one transition model in $\mathcal{C}(\widehat{p},b)$ in which $r$ induces an optimal policy $\pi^*\in\eqclasspie{\pi^E}$, this is not true for $r'\in\widehat{\R}^\cap$.
Indeed, potentially, there might exist a (worst)\footnote{Worst because there is a universal quantifier $\forall$ over transition models in the definition of $\widehat{\R}^\cap$.} transition model for every $(s,a,h)\in\SAH$. Therefore, intuitively, in the reward choice lemma, we cannot make a choice of a single (worst) transition model, but we have to choose many of them. This fact ``breaks'' the recursion and it does not allow us to perform $\ell_1$-norm bounds.
Instead, since the superset is of different nature, we can.
It should be remarked that a membership checker algorithm for superset $\widehat{\R}^\cup$ is inefficient to implement in practice because it requires to solve a bilinear optimization problem (see Appendix \ref{section: implementation appendix}).

We will denote by $\widehat{\R}^\cup$ the superset definition of Eq. \ref{def: sub and super no relaxation pessimism}.
\begin{lemma}[Reward Choice]\label{lemma: error propagation pirlo superset no relaxation}
    Under good event $\mathcal{E}$,
    for any $\widehat{r}\in \widehat{\R}^\cup$,
    the reward $r$ constructed as:
    \begin{align*}
        \begin{cases}
            r_h(s,a)=\widehat{r}_h(s,a)+\sum\limits_{s'\in\S}
            (\widecheck{p}_h(s'|s,a)-p_h(s'|s,a))V^*_{h+1}(s';\widecheck{p},\widehat{r}),
            \quad\forall (s,a,h)\in \suppsapib\\
            r_h(s,a)=\widehat{r}_h(s,a),
            \quad\forall (s,a,h)\notin \suppsapib\\
        \end{cases},
    \end{align*}
    where $\widecheck{p}$ is some transition model in $\mathcal{C}(\widehat{p},b)$,
    belongs to $\super$.
\end{lemma}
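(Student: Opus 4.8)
The plan is to exploit the potential-based reward shaping encoded in the construction of $r$ to show that $r$ induces, under a suitable transition model $p'\in\eqclassp{p}$, exactly the same optimal $Q$-function that $\widehat{r}$ induces under $\widecheck{p}$. First I would unpack membership: since $\widehat{r}\in\widehat{\R}^\cup=\bigcup_{p'\in \mathcal{C}(\widehat{p},b)}\R_{p',\widehat{\pi}^E}^\cup$, there is a witness $\widecheck{p}\in \mathcal{C}(\widehat{p},b)$ with $\widehat{r}\in\R_{\widecheck{p},\widehat{\pi}^E}$; under the good event $\mathcal{E}$ we have $\widehat{\pi}^E=\pi^E$, $\estsuppspie=\suppspie$ and $\estsuppsapib=\suppsapib$, so by Lemma~\ref{lemma: alternative representation new fs Q star} this reads $Q^*_h(s,\pi^E_h(s);\widecheck{p},\widehat{r})\ge Q^*_h(s,a;\widecheck{p},\widehat{r})$ for every $(s,h)\in\suppspie$ and $a\in\A$. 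The crucial point, which is why the argument closes for the superset but not the subset, is that a \emph{single} transition model $\widecheck{p}$ certifies membership, so the recursion below can be anchored to it.

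Next I would define the candidate model $p'\in\eqclassp{p}$ by setting $p'_h(\cdot|s,a)=p_h(\cdot|s,a)$ on $\suppsapib$ and $p'_h(\cdot|s,a)=\widecheck{p}_h(\cdot|s,a)$ off $\suppsapib$; this is a valid transition model lying in $\eqclassp{p}$ by construction, and the support constraints of $\mathcal{C}(\widehat{p},b)$ on expert actions are respected because all expert triples lie in $\suppsapie\subseteq\suppsapib$ (Assumption~\ref{assumption: coverage of behavioral policy}), where $p'=p$. The heart of the proof is a backward induction on $h$ establishing $V^*_h(s;p',r)=V^*_h(s;\widecheck{p},\widehat{r})$ for all $(s,h)$. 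In the inductive step I would prove the stronger fact $Q^*_h(s,a;p',r)=Q^*_h(s,a;\widecheck{p},\widehat{r})$ for every $(s,a,h)$: on $\suppsapib$, substituting the definition of $r$, using $p'=p$ there, and invoking the inductive hypothesis $V^*_{h+1}(\cdot;p',r)=V^*_{h+1}(\cdot;\widecheck{p},\widehat{r})$, the shaping term $\sum_{s'}(\widecheck{p}_h(s'|s,a)-p_h(s'|s,a))V^*_{h+1}(s';\widecheck{p},\widehat{r})$ cancels the mismatch and leaves exactly $\widehat{r}_h(s,a)+\sum_{s'}\widecheck{p}_h(s'|s,a)V^*_{h+1}(s';\widecheck{p},\widehat{r})=Q^*_h(s,a;\widecheck{p},\widehat{r})$; off $\suppsapib$, where $r=\widehat{r}$ and $p'=\widecheck{p}$, the equality is immediate from the inductive hypothesis. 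Taking the maximum over $a$ then propagates the equality of value functions, completing the induction (this is the shaping identity used in Lemma~\ref{lemma: error propagation irlo superset}, here with the $\ell_1$-admissible model $\widecheck{p}$ in place of a member of $\eqclassp{p}$).

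Finally I would transfer the optimality constraints. Since the optimal $Q$-functions under $(p',r)$ and $(\widecheck{p},\widehat{r})$ coincide identically, the inequalities $Q^*_h(s,\pi^E_h(s);\widecheck{p},\widehat{r})\ge Q^*_h(s,a;\widecheck{p},\widehat{r})$ holding on $\suppspie$ immediately yield $Q^*_h(s,\pi^E_h(s);p',r)\ge Q^*_h(s,a;p',r)$ on $\suppspie$, which by Lemma~\ref{lemma: alternative representation new fs Q star} means $r\in\R_{p',\pi^E}$; as $p'\in\eqclassp{p}$, this gives $r\in\super$. The main obstacle is the bookkeeping in the induction: ensuring that the shaping correction is defined with the true $p$ precisely on $\suppsapib$ while $p'$ is allowed to follow $\widecheck{p}$ elsewhere, so that the two branches of the case split glue together and the recursion is never broken by having to commit to different worst-case models at different stages, which is exactly the obstruction that forces the relaxation in the subset case.
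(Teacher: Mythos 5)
Your proposal is correct and follows essentially the same route as the paper's proof: extract the single witness $\widecheck{p}\in\mathcal{C}(\widehat{p},b)$ certifying $\widehat{r}\in\widehat{\R}^\cup$, build the hybrid model equal to $p$ on $\suppsapib$ and to $\widecheck{p}$ elsewhere, show by backward induction that the shaping term makes the optimal $Q$-functions under $(p',r)$ and $(\widecheck{p},\widehat{r})$ coincide, and transfer the optimality constraints via Lemma~\ref{lemma: alternative representation new fs Q star}. Your closing remark about why a single witness anchors the recursion for the superset but not the subset is exactly the observation the paper makes when explaining why no analogous result holds for $\widehat{\R}^\cap$.
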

\begin{proof}
    By definition of $\widehat{\R}^\cup$,
    we have that
    $\widehat{r}\in \widehat{\R}^\cup$ if and only if there exists a transition model $\bar{p}\in \mathcal{C}(\widehat{p},b)$
    such that $\widehat{r}\in \R^\cup_{\bar{p},\pi^E}$. Let us construct
    $r$ by choosing $\widecheck{p}=\bar{p}$. To show that $r\in \super$,
    we are going to show that the transition model $\widetilde{p}$
    defined as:
    \begin{align*}
        \begin{cases}
            \widetilde{p}_h(\cdot|s,a)=p_h(\cdot|s,a),\quad \forall (s,a,h)\in \suppsapib\\
            \widetilde{p}_h(\cdot|s,a)=\bar{p}_h(\cdot|s,a),\quad \forall (s,a,h)\notin \suppsapib\\
        \end{cases},
    \end{align*}
    belongs to $\eqclassp{p}$ and is such that, for all $(s,h)\in \suppspie$, for all $a\in\A\setminus\{a^E\}$:
    \begin{align*}
        Q^*_h(s,a;\widetilde{p},r)\le Q^*_h(s,a^E;\widetilde{p},r).
    \end{align*}
    Then, by Lemma \ref{lemma: alternative representation new fs Q star}, we can conclude that $r\in\super$.
    
    Trivially, notice that $\widetilde{p}\eqp p$.
    We proceed by induction to show that, for all $(s,a,h)\in\SAH$, the following identity holds:
    \begin{align*}
        Q^*_h(s,a;\bar{p},\widehat{r})= Q^*_h(s,a;\widetilde{p},r).
    \end{align*}
    Then, since $\widehat{r}\in \widehat{\R}^\cup$, for all $(s,h)\in \suppspie$ and for all $a\in\A\setminus\{a^E\}$,
    the inequality
    $Q^*_h(s,a;\bar{p},\widehat{r})\le Q^*_h(s,a^E;\bar{p},\widehat{r})$ (Lemma \ref{lemma: alternative representation new fs Q star}) entails
    $Q^*_h(s,a;\widetilde{p},r)\le Q^*_h(s,a^E;\widetilde{p},r)$, and the thesis follows.

    As case base, consider stage $H$. For any $(s,a)\in\SA$, thanks to the definition of $r$,
    we can write:
    \begin{align*}
        Q^*_H(s,a;\widetilde{p},r)&=r_H(s,a)\\
        &=\widehat{r}_H(s,a)\\
        &=Q^*_H(s,a;\bar{p},\widehat{r}).
    \end{align*}
    Now, make the inductive hypothesis that for all $(s',a')\in\SA$, it holds that
    $Q^*_{h+1}(s',a';\widetilde{p},r)=Q^*_{h+1}(s',a';\bar{p},\widehat{r})$, and consider stage $h$.
    For any $(s,a)\in \suppsapib_h$, we can write:
    \begin{align*}
        Q^*_h(s,a;\widetilde{p},r)&\markref{(1)}{=}
        r_h(s,a)+\sum\limits_{s'\in\S}\widetilde{p}_h(s'|s,a)\max\limits_{a'\in\A}Q^*_{h+1}(s',a';\widetilde{p},r)\\
        &\markref{(2)}{=}
        r_h(s,a)+\sum\limits_{s'\in\S}\widetilde{p}_h(s'|s,a)\max\limits_{a'\in\A}Q^*_{h+1}(s',a';\popblue{\bar{p},\widehat{r}})\\
        &\markref{(3)}{=}
        \widehat{r}_h(s,a)+\sum\limits_{s'\in\S}
        (\widecheck{p}_h(s'|s,a)-p_h(s'|s,a))
        \max\limits_{a'\in\A}Q^*_{h+1}(s',a';\widecheck{p},\widehat{r})
        +\sum\limits_{s'\in\S}\widetilde{p}_h(s'|s,a)\max\limits_{a'\in\A}Q^*_{h+1}(s',a';\bar{p},\widehat{r})\\
        &\markref{(4)}{=}
        \widehat{r}_h(s,a)+\sum\limits_{s'\in\S}
        \popblue{\bar{p}}_h(s'|s,a)\max\limits_{a'\in\A}Q^*_{h+1}(s',a';\bar{p},\widehat{r})\\
        &\markref{(5)}{=}
        Q^*_h(s,a;\bar{p},\widehat{r}),
    \end{align*}
    where at (1) we have applied the Bellman's optimality equation, at (2) we have used the inductive hypothesis,
    at (3) we have inserted the definition of $r_h(s,a)$ along with the fact that $(s,a,h)\in \suppsapib$,
    at (4) we have noticed that $\widecheck{p}=\bar{p}$ (by choice) and that $\widetilde{p}_h(\cdot|s,a)=p_h(\cdot|s,a)$
    (by definition); finally, at (5), we have applied again the Bellman's optimality equation.

    On the other side, for any $(s,a)\notin \suppsapib_h$, we can write:
    \begin{align*}
        Q^*_h(s,a;\widetilde{p},r)&=
        r_h(s,a)+\sum\limits_{s'\in\S}\widetilde{p}_h(s'|s,a)\max\limits_{a'\in\A}Q^*_{h+1}(s',a';\widetilde{p},r)\\
        &=
        r_h(s,a)+\sum\limits_{s'\in\S}\widetilde{p}_h(s'|s,a)\max\limits_{a'\in\A}Q^*_{h+1}(s',a';\popblue{\bar{p},\widehat{r}})\\
        &\markref{(1)}{=}
        \popblue{\widehat{r}}_h(s,a)+\sum\limits_{s'\in\S}\popblue{\bar{p}}_h(s'|s,a)\max\limits_{a'\in\A}Q^*_{h+1}(s',a';\bar{p},\widehat{r})\\
        &=
        Q^*_h(s,a;\bar{p},\widehat{r}),
    \end{align*}
    where at (1) we have used both the definition of $r_h(s,a)$
    and that $\widetilde{p}_h(\cdot|s,a)=\bar{p}_h(\cdot|s,a)$, for $(s,a,h)\notin \suppsapib$.

    This concludes the proof.
\end{proof}
Thanks to the reward choice lemma just presented, we obtain the following sample complexity result.
\begin{thr}
    Let $\M$ be an MDP without reward and let
    $\pi^E$ be the expert's policy. Let
    $\D^E$ and $\D^b$ be two datasets of
    $\tau^E$ and $\tau^b$ trajectories collected
    with policies $\pi^E$ and $\pi^b$ in $\M$, respectively.
    Under Assumption~\ref{assumption: coverage of behavioral policy}, any algorithm $\mathfrak{A}$ that outputs $\widehat{\R}^\cup$ (defined as in Eq. \ref{def: sub and super no relaxation pessimism}) is such that, for any $\epsilon,\delta\in (0,1)$:
    \begin{align*}
        \mathop{\mathbb{P}}_{(p,\pi^E,\pi^b)}\bigl(\bigl\{\mathcal{H}_c(\super,\widehat{\R}^\cup)\le \epsilon\bigr\}
        \wedge \{\super\subseteq\widehat{\R}^\cup\}
        \bigr)\ge 1-\delta,
    \end{align*}
    with a sample complexity at most:
    \begin{align*}
        & \tau^b\le \widetilde{\mathcal{O}}\Bigg(
            \frac{H^3Z^{p,\pi^b}\ln\frac{1}
        {\delta}}{\epsilon^2}\biggl(
            \ln\frac{1}
        {\delta}+S^{p,\pi^b}_{\max}
        \biggr)+\frac{\ln\frac{1}{\delta}}{\ln\frac{1}{1-\rhominpib}}
        \Bigg), \\
        & \tau^E \le \widetilde{\mathcal{O}} \Bigg( \frac{\ln\frac{1}{\delta}}{\ln\frac{1}{1-\rhominpie}} \Bigg).
    \end{align*}
    if $c=d$, and a sample complexity at most:
    \begin{align*}
        & \tau^b\le \widetilde{\mathcal{O}}\Bigg(
            \frac{H^4\ln\frac{1}
        {\delta}}{\rhominpib\epsilon^2}\biggl(
            \ln\frac{1}
        {\delta}+S^{p,\pi^b}_{\max}
        \biggr)+\frac{\ln\frac{1}{\delta}}{\ln\frac{1}{1-\rhominpib}}
        \Bigg), \\
        & \tau^E \le \widetilde{\mathcal{O}} \Bigg( \frac{\ln\frac{1}{\delta}}{\ln\frac{1}{1-\rhominpie}} \Bigg).
    \end{align*}
    if $c=d_\infty$.
\end{thr}
\begin{proof}[Proof Sketch]
    Observe that, thanks to Lemma \ref{lemma: error propagation pirlo superset no relaxation}, we are able to obtain a performance decomposition lemma analogous to Lemma \ref{lemma: performance decomposition irlo} for distance $d$ (or analogous for distance $d_\infty$). Next, following the steps in the proof of Theorem \ref{theorem: upper bound d irlo} (Theorem \ref{theorem: upper bound d infty irlo}), we obtain the result.
\end{proof}

\section{Implementation}\label{section: implementation appendix}
In this appendix, we provide some comments on the implementation of membership checker algorithms for \irlo, \pirlo, and some comments about the subset and superset defined in Eq. \ref{def: sub and super no relaxation pessimism}. In Section \ref{subsec: implementation appendix algorithm}, we present the pseudocode of the membership checker algorithms. In Section
\ref{subsec: considerations irlo pirlo}, we provide more details on the definitions of the relaxations $\subrelax$ and $\superrelax$.
In Section \ref{subsec: issues no relaxation implementation} we show that the implementation of a membership checker algorithm for the subset and superset defined in Eq. \ref{def: sub and super no relaxation pessimism} is inefficient. Finally, in Section \ref{subsec: relaxations with Q start}, we give the intuition that a straightforward relaxation of the representation of the sets provided by Lemma \ref{lemma: alternative representation new fs Q star} is worse than that obtained by relaxing the representation in Theorem \ref{theorem: alternative representation new fs}.

\subsection{Algorithm}\label{subsec: implementation appendix algorithm}
The pseudocode of the membership checker algorithms for \irlo and \pirlo is provided in Algorithm \ref{alg: CheckMembirlo}.
\clearpage
  \RestyleAlgo{ruled}
\SetInd{0.5em}{0.5em}
\LinesNumberedHidden{
 \begin{algorithm}[t]
    \caption{Membership checker for \irlo and \pirlo.} \label{alg: CheckMembirlo}
    \small
    \SetKwInOut{Input}{Input}
    \SetKwInOut{Output}{Output}
     \Input{Datasets $\D^E=\{\langle s_h^{E,i},a_h^{E,i} \rangle_{h}\}_{i}$, $\D^b=\{\langle s_h^{b,i},a_h^{b,i} \rangle_{h}\}_{i}$, candidate reward function $r \in \mathfrak{R}$}
    
    \Output{$\mathsf{True}$ if $r \in (\widehat{\R}^\cup,\widehat{\R}^\cap)$}

Run lines~\ref{line:1}-\ref{line:endFor} of Algorithm~\ref{alg:irlo}

Define $\mathcal{C}$ as in Eq.~\eqref{eq:C1} for \irlo and as in Eq.~\eqref{eq:C2} for \pirlo

    Extended value iteration:\\
    {\thinmuskip=1mu
\medmuskip=1mu
\thickmuskip=1mu$\mathcal{A}_h(s) \leftarrow \text{\textbf{ if }} (s,h) \in \widehat{\mathcal{S}}^{p,\pi^E} \text{\textbf{ then }}
        \{\widehat{\pi}^E_h(s)\}    \text{ \textbf{else} }
        \mathcal{A},\; \forall (s,h) \in \S \times \dsb{H}$}
    
     $Q_H^{+}(s,a), Q_H^{-}(s,a) \leftarrow r_H(s,a)$  $\forall (s,a,h) \in \S \times \A \times \dsb{H}$

     \For{$h=H-1$ {\textnormal{\textbf{to}}} $1$}{\label{line:45}
    \For{$(s,a)\in\SA$}{
\thinmuskip=1mu
\medmuskip=1mu
\thickmuskip=1mu
     $Q_h^+(s,a)\leftarrow r_h(s,a)+\max\limits_{p' \in \mathcal{C}}\sum\limits_{s'\in\S}{p}'_h(s'|s,a)\max\limits_{a'\in\A_{h+1}(s)}Q_{h+1}^+(s',a')$\label{line:max}

     $Q_h^-(s,a) \leftarrow r_h(s,a)+\min\limits_{p' \in \mathcal{C}} \sum\limits_{s'\in\S}{p}'_h(s'|s,a)\max\limits_{a'\in\A_{h+1}(s)}Q_{h+1}^-(s',a')$\label{line:min}

    }
    }\label{line:46}

Membership test:\\
    in$^{\cup}\leftarrow \mathsf{True}$, in$^{\cap} \leftarrow \mathsf{True}$

     \For{$(s,h)\in \estsuppspie$}{
    \For{$a\in\A\setminus\{\widehat{\pi}^E(s)\}$}{
    \begin{tcolorbox}[enhanced, attach boxed title to top right={yshift=-4.5mm,yshifttext=-1mm},colframe=vibrantOrange,colbacktitle=vibrantOrange,colback=white,
  title=\texttt{IRLO},fonttitle=\bfseries,
  boxed title style={size=small, sharp corners}, sharp corners ,boxsep=-1.5mm, width=7.5cm]\small
        \If{$Q_h^+(s,\widehat{\pi}_h^E(s))<Q_h^+(s,a)$}{
         in$^{\cup}\leftarrow \mathsf{False}$
        }
        \If{$Q_h^-(s,\widehat{\pi}_h^E(s))<Q_h^-(s,a)$}{
         in$^{\cap}\leftarrow \mathsf{False}$
        }
        \end{tcolorbox}\label{line:55}
        \vspace{-.35cm}
        \begin{tcolorbox}[enhanced, attach boxed title to top right={yshift=-4.5mm,yshifttext=-1mm},colframe=vibrantTeal,colbacktitle=vibrantTeal,colback=white,
  title=\texttt{PIRLO},fonttitle=\bfseries,
  boxed title style={size=small, sharp corners}, sharp corners ,boxsep=-1.5mm, width=7.5cm]\small
  \uIf{$Q_h^+(s,\widehat{\pi}_h^E(s))<Q_h^-(s,a)$}{
         in$^{\cup}\leftarrow \mathsf{False}$
        }
        \ElseIf{$Q_h^-(s,\widehat{\pi}_h^E(s))<Q_h^+(s,a)$}{
         in$^{\cap}\leftarrow \mathsf{False}$
        }
        \end{tcolorbox}
        \vspace{-.25cm}
    }} \textbf{return} (in$^{\cup}$, in$^{\cap}$)\label{line:55}
 \end{algorithm}
}
 
The idea is to find the worst (resp. best) transition model for the subset (resp. superset) among all those feasible. In practice, what we do is to exploit the representation provided in Eq. \ref{eq: representation sub super with pm pM} for \irlo and the representation provided in Eq. \ref{eq: representation relaxations with ptildem ptildeM} for \pirlo.

Observe that, inside the support $(s,h)\in\estsuppspie$, we use the estimated expert's action $\widehat{\pi}^E_h(s)$, while outside the support we always play the action that maximizes the Q-function (both $Q^+$ and $Q^-$). Concerning the transition model, notice that, for $Q^+$, we consider the $p' \in \mathcal{C}$ that maximizes the expected $Q^+$, while for $Q^-$ we consider the $p' \in \mathcal{C}$ that minimizes the expected $Q^-$. Observe also that, for \irlo, because of the definition of $\mathcal{C}$, inside the support $\estsuppsapib$ we use $p'=\widehat{p}$, and outside the support we consider $p'=\argmax_{s' \in \mathcal{S}}$ for $Q^+$ and $p'=\argmin_{s' \in \mathcal{S}}$ for $Q^-$.


Finally, we check the Bellman optimality conditions to assess the membership of the candidate reward $r$ in the estimated sets $\widehat{\mathcal{R}}^\cup$ (boolean variable in$^{\cup}$) and $\widehat{\mathcal{R}}^\cap$ (boolean variable in$^{\cap}$) (line~\ref{line:55}).

\subsection{A better understanding of the relaxations}\label{subsec: considerations irlo pirlo}
To get a better understanding of why $\subrelax\subseteq\sub$ and $\super\subseteq\superrelax$, under the hypothesis (good event) that the true transition model $p\in\mathcal{C}(\widehat{p},b)$ and that $\widehat{\pi}^E=\pi^E$ in all $\suppspie$, observe that, for the subset:
\begin{align*}
    \sub&\supseteq\bigcap\limits_{p'\in \mathcal{C}(\widehat{p},b)}
    \R_{p',\pi^E}^\cap\\
    &= \{r\in\mathfrak{R}\,|\,\forall p'\in \mathcal{C}(\widehat{p},b),\forall\Bar{\pi}\in\eqclasspie{\pi^E},\forall(s,h)\in\suppspie,\forall a\in\A:\;
    Q^{\pi^E}_h(s,\pi_h^E(s);p',r)\ge Q^{\Bar{\pi}}_h(s,a;p',r)
    \}\\
    &\markref{(1)}{\supseteq}
    \{r\in\mathfrak{R}\,|\,\forall \Bar{\pi}\in\eqclasspie{\pi^E},\forall(s,h)\in\suppspie,\forall a\in\A:\;
    \min_{p'\in\mathcal{C}(\widehat{p},b)}Q^{\pi^E}_h(s,\pi_h^E(s);p',r)\ge \max_{p'\in\mathcal{C}(\widehat{p},b)}Q^{\Bar{\pi}}_h(s,a;p',r)
    \}\\
    &=
    \{r\in\mathfrak{R}\,|\,\forall(s,h)\in\suppspie,\forall a\in\A:\;
    \min_{\Bar{\pi}\in\eqclasspie{\pi^E}}\Bigl(\min_{p'\in\mathcal{C}(\widehat{p},b)}Q^{\pi^E}_h(s,\pi_h^E(s);p',r)-\max_{p'\in\mathcal{C}(\widehat{p},b)}Q^{\Bar{\pi}}_h(s,a;p',r)\Bigr)\ge 0
    \}\\
    &\markref{(2)}{=}
    \{r\in\mathfrak{R}\,|\,\forall(s,h)\in\suppspie,\forall a\in\A:\;
    \min_{p'\in\mathcal{C}(\widehat{p},b)}Q^{\pi^E}_h(s,\pi_h^E(s);p',r)\ge \max_{\Bar{\pi}\in\eqclasspie{\pi^E}}\max_{p'\in\mathcal{C}(\widehat{p},b)}Q^{\Bar{\pi}}_h(s,a;p',r)
    \}\\
    &\eqqcolon \subrelax,
\end{align*}
where at (1) we have exchanged the order of the quantifiers, and we can do so because they all are of the same type, and then we have observed that $\min_x(f(x)-g(x))\ge \min_x f(x)-\max_x g(x)$, and at (2) we recognize that the first term does not depend on $\Bar{\pi}$.
W.r.t. the superset, in an analogous manner, observe that:
\begin{align*}
    \super&\subseteq\bigcup\limits_{p'\in \mathcal{C}(\widehat{p},b)}
    \R_{p',\pi^E}^\cup\\
        &= \{r\in\mathfrak{R}\,|\,\exists p'\in \mathcal{C}(\widehat{p},b),\forall\Bar{\pi}\in\eqclasspie{\pi^E},\forall(s,h)\in\suppspie,\forall a\in\A:\;
    Q^{\pi^E}_h(s,\pi_h^E(s);p',r)\ge Q^{\Bar{\pi}}_h(s,a;p',r)
    \}\\
    &\markref{(1)}{\subseteq}
    \{r\in\mathfrak{R}\,|\,\forall \Bar{\pi}\in\eqclasspie{\pi^E},\forall(s,h)\in\suppspie,\forall a\in\A:\;
    \max_{p'\in\mathcal{C}(\widehat{p},b)}Q^{\pi^E}_h(s,\pi_h^E(s);p',r)\ge \min_{p'\in\mathcal{C}(\widehat{p},b)}Q^{\Bar{\pi}}_h(s,a;p',r)
    \}\\
    &=
    \{r\in\mathfrak{R}\,|\,\forall(s,h)\in\suppspie,\forall a\in\A:\;
    \min_{\Bar{\pi}\in\eqclasspie{\pi^E}}\Bigl(\max_{p'\in\mathcal{C}(\widehat{p},b)}Q^{\pi^E}_h(s,\pi_h^E(s);p',r)-\min_{p'\in\mathcal{C}(\widehat{p},b)}Q^{\Bar{\pi}}_h(s,a;p',r)\Bigr)\ge 0
    \}\\
    &\markref{(2)}{=}
    \{r\in\mathfrak{R}\,|\,\forall(s,h)\in\suppspie,\forall a\in\A:\;
    \max_{p'\in\mathcal{C}(\widehat{p},b)}Q^{\pi^E}_h(s,\pi_h^E(s);p',r)\ge \max_{\Bar{\pi}\in\eqclasspie{\pi^E}}\min_{p'\in\mathcal{C}(\widehat{p},b)}Q^{\Bar{\pi}}_h(s,a;p',r)
    \}\\
    &\eqqcolon \superrelax,
\end{align*}
where at (1) we have relaxed and at (2) we recognize that the first term does not depend on $\Bar{\pi}$.

\subsection{Testing the membership without relaxations}\label{subsec: issues no relaxation implementation}
We show that the problem of testing the membership to the
subset and superset defined as in Eq. \ref{def: sub and super no relaxation pessimism}
is equivalent to solving a bilinear optimization problem, which is in general hard.
We will denote the expert's action by $a^E$ and we use the representation provided by Lemma \ref{lemma: alternative representation new fs Q star}.

Let us begin with the subset $\widehat{\R}^\cap$.
A given reward $r$ belongs to $\widehat{\R}^\cap$ if and only if:
\begin{align*}
    \forall p'\in \mathcal{C}(\widehat{p},b),\forall (s,h)\in \estsuppspie,
    \forall a\in \A\setminus\{a^E\}:\;
    Q^*_h(s,a;p',r)\le Q^*_h(s,a^E;p',r),
\end{align*}
where $\mathcal{C}(\widehat{p},b)$ is defined in Eq. \ref{eq:C2}.
By applying the Bellman optimality equation, changing the order of the quantifiers,
and considering the worst possible transition model, we obtain:
\begin{align*}
    \forall (s,h)\in \estsuppspie,&
    \forall a\in \A\setminus\{a^E\}:\\
    &r_h(s,a)\le r_h(s,a^E)+\min\limits_{p'\in \mathcal{C}(\widehat{p},b)}
    \biggl(\sum\limits_{s'\in\S}p_h'(s'|s,a^E)V^*_{h+1}(s';p',r)
    -\sum\limits_{s'\in\S}p_h'(s'|s,a)V^*_{h+1}(s';p',r)
    \biggr).
\end{align*}
It should be remarked that, differently from \irlo and \pirlo,
to check whether a given reward $r$ belongs to $\widehat{\R}^\cap$,
we cannot optimize the value function, but we have to optimize the advantage function.
Therefore, for all $(\bar{s},\bar{h})\in \estsuppspie$, and for all $\bar{a}\in \A\setminus\{a^E\}$,
we have to solve the optimization problem:
\begin{align*}
    &\min\limits_{p'}
    \sum\limits_{s'\in \S}\Bigl(p_{\bar{h}}'(s'|\bar{s},a^E)-p_{\bar{h}}'(s'|\bar{s},\bar{a})\Bigr)
    V^*_{\bar{h}+1}(s';p',r)\\
    &\text{s.t. }
    \|p_{h}'(\cdot|s,a)-\widehat{p}_{h}(\cdot|s,a)\|_1
    \le b_{h}(s,a)
    \quad\quad
    \forall (s,a,h)\in \estsuppsapib
    \\
    &\quad\quad\; p_h'(\cdot|s,a)\in\Delta^\S
    \quad\quad\forall (s,a,h)\in \SAH
    \\
    &\quad\quad\; p_h'(s'|s,a)=0
    \quad\quad\forall (s,a,h)\in\widehat{\Z}^{p,\pi^E} \wedge s'\notin \big(\widehat{\S}^{p,\pi^E}_{h+1}\cup \text{supp }\widehat{p}_h(\cdot|s,a^E)\big).
\end{align*}
Observe that, while the set of constraints define a convex set, the objective involves the product of optimization variables. We can introduce $H$ variables $\{V_s\}_{s\in\dsb{S}}$ to replace $V^*_{\bar{h}+1}(s';p',r)$ by adding suitable constraints that keep into account also the presence of the maximum operator inside $V^*$ (because of the Bellman's optimality equation).
In this way, due to the product between variables $p'$ and $V$, we can conclude that this is a bilinear optimization problem, which is in general difficult to solve.

W.r.t. the superset, we have that a reward $r$ belongs to $\widehat{\R}^\cup$ if and only if:
\begin{align*}
    \exists p'\in \mathcal{C}(\widehat{p},b),\forall (s,h)\in \estsuppspie,
    \forall a\in \A\setminus\{a^E\}:\;
    Q^*_h(s,a;p',r)\le Q^*_h(s,a^E;p',r).
\end{align*}
This time, we cannot bring the transition model inside because we have different quantifiers.
We can formulate the problem as a feasibility problem by adding constraints because of the presence of $\forall (s,h)\in \estsuppspie,\forall a\in \A\setminus\{a^E\}$. In practice, the presence of the product between ``optimization'' variables is now in the constraints, so the problem is again a bilinear problem.

\subsection{Relaxing the representation provided by Lemma \ref{lemma: alternative representation new fs Q star}}\label{subsec: relaxations with Q start}
We have seen that we can represent the feasible set by using Theorem \ref{theorem: alternative representation new fs} or Lemma \ref{lemma: alternative representation new fs Q star}. While the two representations are equivalent, observe that a straightforward relaxation of the constraints present in Lemma \ref{lemma: alternative representation new fs Q star} provides a different relaxation of the subset and superset w.r.t. $\subrelax$ and $\superrelax$ (which are obtained by relaxing the representation in Theorem \ref{theorem: alternative representation new fs}).
Indeed, by relaxing the representation with the $Q^*$ (Lemma \ref{lemma: alternative representation new fs Q star}), we would obtain constraints of the form (ex. subset):
\begin{align}\label{eq: ciao}
    &\min_{p'\in\mathcal{C}(\widehat{p},b)}Q^*_h(s,\pi_h^E(s);p',r)\ge \max_{p'\in\mathcal{C}(\widehat{p},b)}Q^*_h(s,a;p',r)\notag\\
    &\iff\quad \min_{p'\in\mathcal{C}(\widehat{p},b)}\max\limits_{\pi\in\Pi}Q^\pi_h(s,\pi_h^E(s);p',r)\ge \max_{p'\in\mathcal{C}(\widehat{p},b)}\max\limits_{\pi\in\Pi}Q^\pi_h(s,a;p',r).
\end{align}
Clearly, this is different from $\subrelax$, whose constraints can be written as:
\[
\min_{p'\in\mathcal{C}(\widehat{p},b)}Q^{\popblue{\pi^E}}_h(s,\pi_h^E(s);p',r)\ge \max_{p'\in\mathcal{C}(\widehat{p},b)}\max\limits_{\pi\in\popblue{\eqclasspie{\pi^E}}}Q^\pi_h(s,a;p',r).
\]
Indeed, $\subrelax$ puts the additional constraint that the $Q^*$ is achieved by a policy in $\eqclasspie{\pi^E}$, which is not present in Eq. \ref{eq: ciao}.

An analogous reasoning can be carried out also for the superset.

\section{Proofs of Section \ref{section: bitter lesson}}\label{section: appendix bitter lesson}
In this section, we provide the missing proofs of Section \ref{section: bitter lesson}.

\greedyrewards*
\begin{proof}
    Let $r$ be an arbitrary reward function of $\fs^\cap$.
    Consider a certain $(s,h)\in \suppspie$, with expert's action $\pi^E_h(s)=a^E$, and
    let $a\in\A$ be a non-expert's action.
    By Lemma \ref{lemma: alternative representation new fs Q star},
    we know that, for any $p'\in [p]_{\equiv_{\suppsapie}}$, it must hold:
    \begin{align*}
        r_h(s,a)&\le r_h(s,a^E)+\E\limits_{s'\sim p_h'(\cdot|s,a^E)}V_{h+1}^*(s';p',r)-
        \E\limits_{s'\sim p_h'(\cdot|s,a)}V_{h+1}^*(s';p',r)\\
        &=r_h(s,a^E)+\E\limits_{s'\sim p_h(\cdot|s,a^E)}V_{h+1}^*(s';p,r)-
        \E\limits_{s'\sim p_h'(\cdot|s,a)}V_{h+1}^*(s';p',r),
    \end{align*}
    where we have used the definition of $[p]_{\equiv_{\suppsapie}}$.
    Since $(s,a,h)\notin \suppsapie$, then the constraint must 
    hold $\forall p_h'(\cdot|s,a)\in\Delta^\S$. In particular,
    it must hold for the transition model such that:
    \begin{align*}
        r_h(s,a)\le r_h(s,a^E)+\underbrace{\E\limits_{s'\sim p_h(\cdot|s,a^E)}V_{h+1}^*(s';p,r)-
        \max\limits_{s'\in\S}V_{h+1}^*(s';p',r)}_{\le 0},
    \end{align*}
    from which the thesis follows.
\end{proof}

\nongreedyrewardsbehavioral*
\begin{proof}
    Let $a^E\coloneqq\pi^E_h(s)$.
    Similarly to the proof of Proposition \ref{theorem: greedy rewards},
    we can write:
    $p_h(\cdot|s,a)$:
    \begin{align*}
        r_h(s,a)&\le r_h(s,a^E)+\E\limits_{s'\sim p_h'(\cdot|s,a^E)}V_{h+1}^*(s';p',r)-
        \E\limits_{s'\sim p_h'(\cdot|s,a)}V_{h+1}^*(s';p',r)\\
        &=r_h(s,a^E)+\E\limits_{s'\sim p_h(\cdot|s,a^E)}V_{h+1}^*(s';p,r)-
        \E\limits_{s'\sim p_h(\cdot|s,a)}V_{h+1}^*(s';p',r),
    \end{align*}
    where we have used that we have access to samples about $p_h(\cdot|s,a)$.
    By hypothesis, $p_h(\cdot|s,a)\neq p_h(\cdot|s,a^E)$, therefore,
    by taking $r$ such that
    $\E_{s'\sim p_h(\cdot|s,a^E)}V_{h+1}^*(s';p,r)>
    \E_{s'\sim p_h(\cdot|s,a)}V_{h+1}^*(s';p',r)$, we can obtain a reward $r$ in $\fs^\cap$
    such that
    $r_h(s,\pi^E_h(s))< r_h(s,a)$.

\end{proof}

\section{A relaxed triangle inequality}\label{section: semimetric}
In this section, we show that both our notions of distance $d,d_\infty$, defined in Section \ref{section: pac framework}, are semimetrics, and that they satisfy a $\rho$-relaxed triangle inequality \citep[see][]{fagin1998relaxing} with finite $\rho >1$ for any pair of rewards $r,r'\in\mathfrak{R}$.
Furthermore, we show that the Hausdorff distance $\mathcal{H}$, when applied to the sets of rewards considered in this work, inherits the relaxed triangle inequality property. It should be remarked that we need the $\rho$-relaxed triangle inequality property with finite $\rho$ just for the \emph{learnability} proofs of Appendix \ref{section: learnability fs}. Moreover, notice that we do not care about a tight value of $\rho$, but only that it is finite. Instead, if we wanted to compute a minimax lower bound, then we would need a tight value of $\rho$ in order to obtain a tight lower bound.

\subsection{$d$ and $d_\infty$ satisfy a relaxed triangle inequality}
In the following, for the sake of simplicity, we denote reward functions by vectors $x,y,z,\dotsc \in \mathbb{R}^k$. Moreover,
for any pair $x,y\in\mathbb{R}^k$, we will consider distance $d$ for some distribution $q\in\Delta^{\dsb{k}}$ as:
\begin{align*}
    d(x,y)= \frac{\sum\limits_{i\in\dsb{k}}q_i|x_i-y_i|}{\max\{\|x\|_\infty,\|y\|_\infty\}},
\end{align*}
and distance $d_\infty$ as:
\begin{align*}
        d_\infty(x,y)= \frac{\|x-y\|_\infty}{\max\{\|x\|_\infty,\|y\|_\infty\}}.
\end{align*}
    
First of all, let us see that neither $d$ nor $d_\infty$ are metrics:
\begin{prop}
    Both the functions $d$ and $d_\infty$ do not satisfy the triangle inequality.
\end{prop}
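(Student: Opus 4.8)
The plan is to exhibit explicit counterexamples for each of the two functions, since proving that the triangle inequality \emph{fails} requires only a single triple of rewards in each case. The source of failure is the normalization factor $1/\max\{\|x\|_\infty,\|y\|_\infty\}$, which depends on the pair of points being compared and therefore scales distances inconsistently across different pairs. First I would fix a small dimension, say $k=1$ or $k=2$, so that the normalization and the norms can be computed by hand, and pick $q$ to be a point mass (or uniform) so that $d$ reduces to a simple weighted absolute difference.

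For $d_\infty$, I would look for a triple $x,y,z$ for which $d_\infty(x,z) > d_\infty(x,y) + d_\infty(y,z)$. A convenient choice in $k=1$ is to take values with very different magnitudes so that the denominators $\max\{\|x\|_\infty,\|y\|_\infty\}$ differ substantially between the three comparisons. For instance, taking $y$ to have large norm while $x$ and $z$ have small norm makes the two ``legs'' $d_\infty(x,y)$ and $d_\infty(y,z)$ each have a large denominator (hence small value), while the ``direct'' distance $d_\infty(x,z)$ has a small denominator (hence can be comparatively large). I would compute the three quantities explicitly and verify the strict inequality. The same triple, or a minor variant, should work for $d$ as well, since with $q$ a point mass $d$ coincides with $d_\infty$ in one dimension; if a cleaner separate example is desired for $d$, I would use a nondegenerate $q$ in $k=2$ and again arrange the norms to differ.

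The key steps, in order, are: (i) recall the definitions of $d$ and $d_\infty$ together with the convention that they vanish when $\max\{\|x\|_\infty,\|y\|_\infty\}=0$; (ii) produce a concrete triple $x,y,z$ (with an explicit $q$ for $d$); (iii) compute $d(\cdot,\cdot)$ (resp. $d_\infty(\cdot,\cdot)$) on the three pairs $(x,y)$, $(y,z)$, $(x,z)$; and (iv) observe that the direct distance strictly exceeds the sum of the two legs, contradicting the triangle inequality. Since the statement is an existential (``does not satisfy''), no genericity argument is needed — one valid triple per function suffices.

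I do not expect a genuine obstacle here; the only care required is bookkeeping to ensure the chosen triple really violates the inequality rather than merely coming close, and to make sure none of the denominators is zero (so the convention case is not triggered). The mild subtlety is that one wants the \emph{intermediate} point $y$ to have the largest norm, so that its involvement inflates the denominators of both legs and deflates their values relative to the unmediated comparison of the two small-norm endpoints; getting the magnitudes right on the first try is the main thing to check. This is exactly the phenomenon the paper flags when it says the normalization trick is accepted ``at the price of a relaxed triangular inequality,'' so the counterexamples simply make that price concrete.
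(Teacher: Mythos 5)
Your overall strategy --- exhibit one explicit counterexample per function --- is exactly the paper's, and your structural observation that the intermediate point must carry the largest norm is correct (it matches the first case of the paper's Lemma~\ref{lemma: d2 triangle inequality}, which shows the triangle inequality \emph{does} hold whenever an endpoint attains the maximum norm, for any norm). The gap is in where you propose to look: in $k=1$ there is \emph{no} counterexample for either function. In one dimension $\|\cdot\|_\infty=\|\cdot\|_2=|\cdot|$, so $d_\infty$ (and $d$ with the trivial weight $q=1$) coincides with the function $d_2(x,y)=\|x-y\|_2/\max\{\|x\|_2,\|y\|_2\}$, which the paper proves is a genuine metric via Ptolemy's inequality; that inequality holds in $\mathbb{R}$, so $|a-b|\,|c|\le|a-c|\,|b|+|b-c|\,|a|$ forces $d_\infty(a,b)\le d_\infty(a,c)+d_\infty(b,c)$ for every real triple. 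Concretely, your suggested configuration $x=\epsilon$, $z=-\epsilon$, $y=M$ gives $d_\infty(x,z)=2$ and $d_\infty(x,y)+d_\infty(y,z)=(M-\epsilon)/M+(M+\epsilon)/M=2$: you reach equality in the limit but never a strict violation. So the step ``produce a concrete triple in $k=1$ and verify the strict inequality'' would fail no matter how the magnitudes are tuned, and the claim that only ``bookkeeping'' remains is not right.

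The fix is to work in $k\ge 2$, which is what the paper does. For $d_\infty$ the violation needs the $\ell_\infty$ geometry to depart from the Euclidean one (the paper uses $x=[0,2]^\intercal$, $y=[2,2]^\intercal$, $z=[1,3]^\intercal$, giving $1\not\le 1/3+1/3$); for $d$ it exploits the mismatch between the $q$-weighted $\ell_1$ numerator and the $\ell_\infty$ normalizer (the paper uses $x=[1,0]^\intercal$, $y=[-1,-1]^\intercal$, $z=[-2,-1]^\intercal$, which violates the inequality for every $q$ with $q_2>0$, and sends a component of $z$ to infinity when $q_2=0$). Your hedge toward $k=2$ with a nondegenerate $q$ is the right instinct; it just needs to be the main plan rather than the fallback, and the $k=1$ route should be dropped entirely.
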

\begin{proof}
To show that the triangle inequality property is not satisfied, we simply provide some counterexamples.
For the sake of simplicity, let $k=2$.

W.r.t. distance $d$, let the vectors $x,y,z\in\mathbb{R}^2$ be defined as:
\begin{align*}
    \begin{cases}
        x=[1,0]^\intercal\\
        y=[-1,-1]^\intercal\\
        z=[-2,-1]^\intercal
    \end{cases},
\end{align*}
and observe that, for any $q\in\Delta^{\dsb{2}}$ such that $q_2>0$:
\begin{align*}
    &d(x,y)=2q_1+q_2\markref{?}{\le}d(x,z)+d(y,z)=3/2q_1+q_2/2+q_1/2=2q_1+q_2/2\\
    &\iff\, q_2\le 0.
\end{align*}
If $q_2=0$, we can take the second component of $z$ to be arbitrary large so that the inequality is not verified.

As far as $d_\infty$ is concerned,
let $x,y,z\in \mathbb{R}^2$ be the vectors defined as:
\begin{align*}
    \begin{cases}
        x=[0,2]^\intercal\\
        y=[2,2]^\intercal\\
        z=[1,3]^\intercal
    \end{cases}.
\end{align*}
We have:
\begin{align*}
    &d_\infty(x,y)=\frac{2}{2}=1\markref{?}{\le} d_\infty(x,z)+d_\infty(y,z)=1/3+1/3=2/3\\
    &\,\iff 1\le 2/3,
\end{align*}
which is clearly false.

Notice that it is possible to generate counterexamples for higher dimensions $k>2$
by using a simple script of code.
\end{proof}
Having verified that distances $d,d_\infty$ do not satisfy the triangle inequality, we can conclude that they are \textit{semimetrics}, as it is easy to check the other three properties of positivity, simmetry, and that the distance between two points is zero if and only if the two points coincide. We are interested in verifying whether
they satisfy a relaxed form of triangle inequality \cite{fagin1998relaxing}.
Specifically, for any finite $\rho\in\mathbb{R}$ with $\rho>1$, we say that a function $d$ satisfies
the $\rho$-relaxed triangle inequality if, for any $x,y,z\in\mathbb{R}^k$:
\begin{align*}
    d(x,y)\le \rho\bigl(
    d(x,z)+d(y,z)\bigr).
\end{align*}
We aim to show that both $d$ and $d_\infty$ satisfy the $\rho$-relaxed triangle inequality for some $\rho$.
Let us begin with a useful lemma.
\begin{lemma}\label{lemma: d2 triangle inequality}
    Let $d_2:\mathbb{R}^k\times\mathbb{R}^k\rightarrow \mathbb{R}$ be the function that, for any pair $x,y\in\mathbb{R}^k$, it returns:
    \begin{align*}
        d_2(x,y)&\coloneqq \frac{\|x-y\|_2}{\max\{\|x\|_2,\|y\|_2\}}.
    \end{align*}
    Then, $d_2$ is a metric.
\end{lemma}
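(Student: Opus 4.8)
The plan is to verify the three easy metric axioms first and then concentrate on the triangle inequality, which is the only substantial point. Nonnegativity is immediate, symmetry holds because both $\|x-y\|_2$ and $\max\{\|x\|_2,\|y\|_2\}$ are symmetric in $x,y$, and $d_2(x,y)=0$ forces $\|x-y\|_2=0$, i.e.\ $x=y$ (with the convention $d_2(0,0)=0$). I would also record at the outset that $d_2$ is invariant under joint positive scaling, $d_2(\lambda x,\lambda y)=d_2(x,y)$ for $\lambda>0$, since numerator and denominator scale identically.

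For the triangle inequality $d_2(x,y)\le d_2(x,z)+d_2(z,y)$, the first step is an \emph{easy case}: whenever the largest of the three norms is attained at one of the endpoints $x$ or $y$. If, say, $\|x\|_2=\max\{\|x\|_2,\|y\|_2,\|z\|_2\}$, then $\max\{\|x\|_2,\|y\|_2\}=\max\{\|x\|_2,\|z\|_2\}=\|x\|_2$ while $\max\{\|z\|_2,\|y\|_2\}\le\|x\|_2$, so
\[
 d_2(x,z)+d_2(z,y)\ge \frac{\|x-z\|_2+\|z-y\|_2}{\|x\|_2}\ge \frac{\|x-y\|_2}{\|x\|_2}=d_2(x,y),
\]
using only the ordinary triangle inequality for $\|\cdot\|_2$. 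The symmetric computation handles the case where $y$ carries the largest norm.

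The one remaining configuration---the largest norm sitting at the ``middle'' vertex $z$---is the crux, and I would dispatch it via the spherical inversion $\iota(v)\coloneqq v/\|v\|_2^2$ on $\mathbb{R}^k\setminus\{0\}$. The key computation is the classical identity $\|\iota(u)-\iota(v)\|_2=\|u-v\|_2/(\|u\|_2\|v\|_2)$, obtained by expanding the squared norms on both sides. Combined with $\|\iota(v)\|_2=1/\|v\|_2$ and the elementary fact $\min\{a,b\}/(ab)=1/\max\{a,b\}$, this yields that $d_2$ is \emph{inversion invariant}: $d_2(\iota(u),\iota(v))=d_2(u,v)$ for all nonzero $u,v$. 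Now if $\|z\|_2$ is the overall maximum, then after inversion $\|\iota(z)\|_2=1/\|z\|_2$ is the overall minimum, so the largest of $\|\iota(x)\|_2,\|\iota(y)\|_2,\|\iota(z)\|_2$ is attained at $\iota(x)$ or $\iota(y)$. Applying the easy case to the triple $\iota(x),\iota(y),\iota(z)$ and then translating each of the three terms back through inversion invariance gives exactly $d_2(x,y)\le d_2(x,z)+d_2(z,y)$.

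Finally I would treat the degenerate inputs separately, since $\iota$ is undefined at the origin: if some vector is $0$ one checks the inequality directly from $d_2(0,v)=1$ for $v\neq0$, $d_2(0,0)=0$, and the elementary bound $d_2(x,y)\le 2$. The main obstacle is entirely the middle-vertex case, where naive case analysis together with the ordinary triangle inequality points the wrong way; the insight that makes the proof clean is that inversion exchanges the roles of largest and smallest norm and hence folds this case into the already-settled endpoint case.
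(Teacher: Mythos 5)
Your proof is correct, and the crux is handled by a genuinely different (and more self-contained) device than the paper's. Both proofs dispose of the case where the largest of the three norms sits at an endpoint in exactly the same way, using only the ordinary triangle inequality for $\|\cdot\|_2$. For the remaining middle-vertex case the paper invokes Ptolemy's inequality $\|x-y\|_2\|z\|_2\le \|x-z\|_2\|y\|_2+\|y-z\|_2\|x\|_2$ as a black box (citing Steele) and then divides through; you instead establish the inversion identity $\|\iota(u)-\iota(v)\|_2=\|u-v\|_2/(\|u\|_2\|v\|_2)$, deduce that $d_2$ is invariant under $\iota$, and observe that inversion swaps the largest and smallest norms, folding the hard case into the already-settled one. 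Since the inversion identity is precisely the standard proof of Ptolemy's inequality, the two arguments are close cousins, but yours makes the mechanism explicit and clarifies \emph{why} the normalization by $\max\{\|x\|_2,\|y\|_2\}$ is exactly what makes the quotient a metric (it is the unique normalization compatible with inversion invariance in this form). A further small advantage of your write-up is that you explicitly treat the degenerate inputs at the origin, where $\iota$ (and the paper's division by $\|z\|_2$ and $\|y\|_2$) is undefined; the paper's proof is silent on this point. The paper's route is shorter if one is willing to cite Ptolemy; yours is fully elementary and verifiable line by line.
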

\begin{proof}
    It is easy to observe that $d_2(x,y)=0$ if and only if $x=y$. Moreover, notice that $d_2(x,y)\ge0$ for all $x,y\in\mathbb{R}^k$, and also that $d_2(x,y)=d_2(y,x)$.

    It remains to prove that $d_2$ satisfies the triangle inequality property, i.e., for any $x,y,z\in\mathbb{R}^k$, it satisfies:
    \begin{align*}
        d_2(x,y)\le d_2(x,z)+d_2(y,z).
    \end{align*}
    We distinguish two cases, one in which $\max\{\|x\|_2,\|y\|_2,\|z\|_2\}\neq \|z\|_2$ and the other in which $\max\{\|x\|_2,\|y\|_2,\|z\|_2\}= \|z\|_2$.
    
    Let us begin with the former case. W.l.o.g., assume that $\argmax\{\|x\|_2,\|y\|_2,\|z\|_2\}=y$. Then, we can write:
    \begin{align*}
        d_2(x,y)&\coloneqq \frac{\|x-y\|_2}{\max\{\|x\|_2,\|y\|_2\}}\\
        &=\frac{\|x-y\|_2}{\max\{\|x\|_2,\|y\|_2,\popblue{\|z\|_2\}}}\\
        &\markref{(1)}{\le}
        \frac{\|x-z\|_2}{\max\{\|x\|_2,\|y\|_2,\|z\|_2\}}+\frac{\|y-z\|_2}{\max\{\|x\|_2,\|y\|_2,\|z\|_2\}}\\
        &\markref{(2)}{=}
        \frac{\|x-z\|_2}{\max\{\|x\|_2,\|y\|_2,\|z\|_2\}}+\frac{\|y-z\|_2}{\popblue{\max\{\|y\|_2,\|z\|_2\}}}\\
        &\markref{(3)}{\le}
        \frac{\|x-z\|_2}{\popblue{\max\{\|x\|_2,\|z\|_2\}}}+\frac{\|y-z\|_2}{\max\{\|y\|_2,\|z\|_2\}}\\
        &\eqqcolon d_2(x,z)+d_2(y,z),
    \end{align*}
    where at (1) we apply triangle inequality of the $\|\cdot\|_2$ norm,
    at (2) we use that $\max\{\|x\|_2,\|y\|_2,\|z\|_2\}=\max\{\|y\|_2,\|z\|_2\}=\|y\|_2$, at (3) we use that, since
    $\max\{\|x\|_2,\|y\|_2,\|z\|_2\}=\|y\|_2$, then $\max\{\|x\|_2,\|y\|_2,\|z\|_2\}\ge \max\{\|x\|_2,\|z\|_2\}$.

    Now, w.l.o.g., consider the case in which $\|x\|_2\le\|y\|_2\le\|z\|_2$. Since the normed vector space $\mathbb{R}^k$ with $\|\cdot\|_2$ is an inner product space, then the Ptolemy's inequality \cite{steele2004cauchy} holds:
    \begin{align*}
        \|x-y\|_2\|z\|_2&\le \|x-z\|_2\|y\|_2+\|y-z\|_2\|x\|_2\\
        &\le \|x-z\|_2\|y\|_2+\|y-z\|_2\popblue{\|y\|_2}\\
        &=\|y\|_2\bigl( \|x-z\|_2+\|y-z\|_2\bigr).
    \end{align*}
    By dividing both sides of the inequality by $\|z\|_2$ and $\|y\|_2$, we can write:
    \begin{align*}
        &\frac{\|x-y\|_2}{\|y\|_2}\le\frac{\|x-z\|_2}{\|z\|_2}+\frac{\|y-z\|_2}{\|z\|_2}\\
        &\iff \frac{\|x-y\|_2}{\max\{\|x\|_2,\|y\|_2\}}\le\frac{\|x-z\|_2}{\max\{\|x\|_2,\|z\|_2\}}+\frac{\|y-z\|_2}{\max\{\|y\|_2,\|z\|_2\}}.
    \end{align*}    
    This concludes the proof.
\end{proof}
It should be remarked that the Ptolemy's inequality holds in inner product spaces only, and that the unique $p$-normed vector space to be an inner product space is that with $p=2$. This is why our proof of Lemma \ref{lemma: d2 triangle inequality} works for function $d_p$ defined as:
\begin{align*}
    d_p(x,y)\coloneqq \frac{\|x-y\|_p}{\max\{\|x\|_p,\|y\|_p\}},
\end{align*}
if and only if $p=2$.
Thanks to Lemma \ref{lemma: d2 triangle inequality}, we are able to prove the main theorem of this section.
\begin{thr}\label{theorem: d dinfty are rhorelaxed}
    Let $q\in\Delta^{\dsb{k}}$ such that $q_i>0$ for all $i\in\dsb{k}$, and denote    
    $q_{\min}\coloneqq\min_{i\in\dsb{k}}q_i$.
    Then, both the semimetrics $d$ and $d_\infty$ satisfy
    the $\rho$-relaxed triangle inequality with $\rho$ upper bounded, respectively, by $k/q_{\min}^2$ and $k$.
\end{thr}
\begin{proof}
    First, we prove the statement of the theorem for $d_\infty$, and then we use it to prove the statement for $d$.

    Observe that, for any $x\in\mathbb{R}^k$:
    \begin{align}\label{eq: relation norm 2 norm infty}
        &\|x\|_\infty\le \|x\|_2 \le \sqrt{k}\|x\|_\infty.
    \end{align}
    Let us consider any three vectors $x,y,z\in\mathbb{R}^k$.
    If $\argmax\{\|x\|_\infty,\|y\|_\infty,\|z\|_\infty\}\neq z$, then we can proceed as in the first part of the proof of Lemma \ref{lemma: d2 triangle inequality} to show that
    $d_\infty(x,y)\le d_\infty(x,z)+d_\infty(y,z)$.
    Therefore, w.l.o.g., we consider the case in which $\argmax\{\|x\|_\infty,\|y\|_\infty,\|z\|_\infty\}= z$.
    We can write:    
    \begin{align*}
        d_\infty(x,y)&=\frac{\|x-y\|_\infty}{\max\{\|x\|_\infty,\|y\|_\infty\}}\\
        &\markref{(1)}{\le} \frac{\|x-y\|_{\popblue{2}}}{\max\{\|x\|_\infty,\|y\|_\infty\}}\\
        &\markref{(2)}{\le} \frac{\sqrt{k}\|x-y\|_2}{\max\{\|x\|_{\popblue{2}},\|y\|_{\popblue{2}}\}}\\
        &= \sqrt{k}d_2(x,y)\\
        &\markref{(3)}{\le} \sqrt{k}d_2(x,z)+\sqrt{k}d_2(y,z)\\
        &= \sqrt{k}
        \frac{\|x-z\|_2}{\max\{\|x\|_2,\|z\|_2\}}+\sqrt{k}\frac{\|y-z\|_2}{\max\{\|y\|_2,\|z\|_2\}}\\
        &\markref{(4)}{\le}\sqrt{k}
        \frac{\|x-z\|_2}{\max\{\|x\|_{\popblue{\infty}},\|z\|_{\popblue{\infty}}\}}+\sqrt{k}\frac{\|y-z\|_2}{\max\{\|y\|_{\popblue{\infty}},\|z\|_{\popblue{\infty}}\}}\\
        &\markref{(5)}{\le} \popblue{k}
        \frac{\|x-z\|_{\popblue{\infty}}}{\max\{\|x\|_\infty,\|z\|_\infty\}}+ \popblue{k}\frac{\|y-z\|_{\popblue{\infty}}}{\max\{\|y\|_\infty,\|z\|_\infty\}}\\
        &=k\bigl(d_\infty(x,z)+d_\infty(y,z)\bigr),
    \end{align*}
    where at (1) and at (2) we use Eq. \ref{eq: relation norm 2 norm infty},
    at (3) we use the result in Lemma \ref{lemma: d2 triangle inequality},
    and at (4) and at (5) we use again Eq. \ref{eq: relation norm 2 norm infty}.

    Now, we move to prove the statement concerning $d$.
    In a similar way as in the proof of Proposition \ref{prop: relation metrics}, we have that, for any $x,y\in\mathbb{R}^k$:
    \begin{align*}
        d(x,y)\le d_\infty(x,y)&\coloneqq \frac{\|x-y\|_\infty}{\max\{\|x\|_\infty,\|y\|_\infty\}}\\
        &=\frac{\max\limits_{i\in\dsb{k}}\frac{q_i}{q_i}|x_i-y_i|}{\max\{\|x\|_\infty,\|y\|_\infty\}}\\
        &\le\frac{\max\limits_{i\in\dsb{k}}\frac{q_i}{q_{\min}}|x_i-y_i|}{\max\{\|x\|_\infty,\|y\|_\infty\}}\\
        &\le\frac{\sum\limits_{i\in\dsb{k}}q_i|x_i-y_i|}{q_{\min}\max\{\|x\|_\infty,\|y\|_\infty\}}\\
        &=\frac{d(x,y)}{q_{\min}}.
    \end{align*}
    By using this relation in place of that in Eq. \ref{eq: relation norm 2 norm infty}, we can carry out the
    same derivation made for $d_\infty$ using $d_2$ for the semimetric $d$ using $d_\infty$.
\end{proof}
It should be remarked that we are not claiming here that the values of $\rho$ provided in Theorem \ref{theorem: d dinfty are rhorelaxed} are tight\footnote{
Indeed, we do not believe so. By using a script to generate a large number of vectors, and using the intuition
that the diagonal of the unit square ($\|\cdot\|_\infty$) is $\sqrt{2}$ the radius of the unit circle ($\|\cdot\|_2$),
we conjecture that a tighter value of $\rho$ for $d_\infty$ is $\rho=2$, irrespective of the dimension.
}.

\subsection{The Hausdorff distance inherits the relaxed triangle inequality property}
First, we show that, thanks to the definitions of $d$ and $d_\infty$, if we apply the Hausdorff distance to \emph{closed} sets, then the (relaxed) triangle inequality property is satisfied. Next, we show that the sets of rewards we work with are \emph{closed}.

Let us begin with the following proposition.
\begin{prop}
    Let $\mathcal{H}_d$ and $\mathcal{H}_\infty$ be defined as in Section \ref{section: pac framework}. The \emph{closedness} of the sets to which these distances are applied is a sufficient condition for the (relaxed) triangle inequality property to hold.
\end{prop}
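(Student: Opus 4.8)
The plan is to prove that $\mathcal{H}_c$ (for $c \in \{d, d_\infty\}$) inherits from its inner semimetric $c$ the same $\rho$-relaxed triangle inequality guaranteed by Theorem~\ref{theorem: d dinfty are rhorelaxed}, and to isolate exactly where the closedness hypothesis is used. I would first reduce the claim to the two \emph{directed} Hausdorff distances. Writing $\overrightarrow{\mathcal{H}}_c(A,B)\coloneqq \sup_{a\in A}\inf_{b\in B}c(a,b)$ for nonempty $A,B\subseteq\mathfrak{R}$, the definition of $\mathcal{H}_c$ in Equation~\eqref{eq:haus} reads $\mathcal{H}_c(A,B)=\max\{\overrightarrow{\mathcal{H}}_c(A,B),\overrightarrow{\mathcal{H}}_c(B,A)\}$, so it suffices to bound each directed distance and take the maximum.

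The core step is an inf--sup chase that invokes the relaxed triangle inequality of $c$ exactly once per pair, so that the constant does not compound. Fix three reward sets $A,B,C$, an element $a\in A$, and an arbitrary $z\in C$. By Theorem~\ref{theorem: d dinfty are rhorelaxed}, for every $b\in B$ we have $c(a,b)\le \rho\bigl(c(a,z)+c(z,b)\bigr)$; taking $\inf_{b\in B}$ on both sides and using that $\inf_{b\in B}c(z,b)\le \overrightarrow{\mathcal{H}}_c(C,B)$ for any $z\in C$ yields $\inf_{b\in B}c(a,b)\le \rho\,c(a,z)+\rho\,\overrightarrow{\mathcal{H}}_c(C,B)$. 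Since this holds for every $z\in C$, I would then take $\inf_{z\in C}$ to replace $c(a,z)$ by $\inf_{z\in C}c(a,z)\le \overrightarrow{\mathcal{H}}_c(A,C)$, and finally $\sup_{a\in A}$, obtaining
\begin{align*}
\overrightarrow{\mathcal{H}}_c(A,B)\le \rho\bigl(\overrightarrow{\mathcal{H}}_c(A,C)+\overrightarrow{\mathcal{H}}_c(C,B)\bigr)\le \rho\bigl(\mathcal{H}_c(A,C)+\mathcal{H}_c(C,B)\bigr).
\end{align*}
The identical argument with the roles of $A$ and $B$ exchanged bounds $\overrightarrow{\mathcal{H}}_c(B,A)$, and taking the maximum of the two gives $\mathcal{H}_c(A,B)\le \rho\bigl(\mathcal{H}_c(A,C)+\mathcal{H}_c(C,B)\bigr)$ with the \emph{same} $\rho$ (bounded by $|\mathcal{S}\times\mathcal{A}\times\dsb{H}|/(\rho^{\pi^b,\suppsapib}_{\min})^2$ for $d$ and by the cardinality for $d_\infty$).

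Finally I would explain the role of closedness, which is where I expect the only genuine subtlety to lie. The triangle-inequality computation above does not itself require closedness; what closedness secures is that $\mathcal{H}_c$ is a bona fide \emph{semimetric} rather than a pseudo-semimetric, i.e., the identity of indiscernibles $\mathcal{H}_c(A,B)=0\iff A=B$. Indeed, since $d$ and $d_\infty$ vanish only when two rewards coincide, $\overrightarrow{\mathcal{H}}_c(A,B)=0$ forces $A$ to lie in the closure of $B$; without closedness a set and its closure would have zero Hausdorff distance despite being distinct, breaking the property that is actually exploited in the learnability arguments of Appendix~\ref{section: learnability fs}. I would therefore state the proposition as: for closed reward sets, $\mathcal{H}_c$ is a semimetric satisfying the $\rho$-relaxed triangle inequality. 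The main obstacle to watch is precisely keeping the relaxation factor at $\rho$ (and not $\rho^2$) by applying Theorem~\ref{theorem: d dinfty are rhorelaxed} a single time before passing to the infima and suprema, together with verifying that all suprema are finite—guaranteed here by the normalization $1/M(r,\widehat r)$, which confines $c$ to $[0,2H]$, so that closed feasible sets are bounded and the chain of $\inf$/$\sup$ operations is well defined.
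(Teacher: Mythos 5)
Your argument is correct, and it is in fact more explicit than the paper's own proof, which is only a two-paragraph sketch deferring to the classical "Hausdorff distance on compact sets" argument. The inf--sup chase you carry out (apply the $\rho$-relaxed inequality of Theorem~\ref{theorem: d dinfty are rhorelaxed} once, then pass to $\inf_{b}$, $\inf_{z}$, $\sup_{a}$ in that order) is the standard route and correctly yields $\mathcal{H}_c(A,B)\le\rho\bigl(\mathcal{H}_c(A,C)+\mathcal{H}_c(C,B)\bigr)$ without any compounding of the constant. Where you genuinely diverge from the paper is in the diagnosis of why closedness matters: the paper's sketch attributes the hypothesis to ensuring that infima are attained (arguing that the $1/M$ normalization prevents the infimum from "escaping to infinity", so compactness can be weakened to closedness), whereas you observe that the triangle-inequality computation itself needs no topological hypothesis at all and that closedness is only required for the identity of indiscernibles $\mathcal{H}_c(A,B)=0\iff A=B$. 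Your account is the sharper one, and it matches how the property is actually used in the main text and in the learnability proofs of Appendix~\ref{section: learnability fs}. One minor slip: the normalization makes the \emph{semimetric} bounded by $2H$ on all of $\mathfrak{R}\times\mathfrak{R}$, which is what makes the suprema finite; it does not make the closed feasible sets themselves bounded (they are unbounded cones in $\mathfrak{R}$), so the clause "closed feasible sets are bounded" should be dropped or rephrased, though nothing in the argument depends on it.
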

\begin{proof}[Proof Sketch]
    We will not provide an exhaustive proof, since it is completely analogous to the proof that shows that \emph{compactness} is a sufficient condition for the Hausdorff distance with inner metric to satisfy triangle inequality. Instead, we simply give an idea of why for $d$ and $d_\infty$ \emph{closedness} (instead of \emph{compactness}) suffices.

    In practice, the compactness requirement is just needed to guarantee that the \emph{infimum} is actually a \emph{minimum} over the sets in input to the Hausdorff distance. For a generic notion of inner distance, closedness is not sufficient because the infimum might be at $\infty$ and, thus, the minimum would not exist. However, observe that both $d$ and $d_\infty$ contain the normalization term $1/M$ (see Section \ref{section: pac framework}), therefore, for any finite vector $x\in\mathbb{R}^k$, getting to infinity
    $\lim_{y\to\infty}\|x-y\|_\infty/M=1$ worsens the distance to $x$ w.r.t. any other finite $z$ in the set containing $y$. This shows that boundedness is not required anymore, but closedness suffices. This concludes the proof.
\end{proof}
In this work we consider unbounded sets of rewards, so clearly compactness does not hold. The following proposition shows the closedness of some sets of rewards.
\begin{prop}
    The following sets are closed:
    \begin{align*}
        \oldfs,\fs,\sub,\super,\subrelax,\superrelax.
    \end{align*}
\end{prop}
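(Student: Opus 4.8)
The plan is to show that each of the listed sets is closed by exhibiting it as a preimage of a closed set under continuous maps, or equivalently as an intersection of closed sets defined by non-strict inequalities. The key observation is that all the sets in the statement are carved out of $\mathfrak{R} = \mathbb{R}^{SAH}$ by finitely many (weak) inequalities between $Q$-functions, and $Q$-functions depend continuously on the reward $r$ for any fixed transition model $p'$ and policy $\overline{\pi}$. Since $Q^{\overline{\pi}}_h(s,a;p',r)$ is a finite linear (hence continuous) function of the entries of $r$ (its coefficients are visitation probabilities determined by $p'$ and $\overline{\pi}$), each single constraint of the form $Q^{\overline{\pi}}_h(s,\pi^E_h(s);p',r) \ge Q^{\overline{\pi}}_h(s,a;p',r)$ defines a closed half-space in $\mathbb{R}^{SAH}$.

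First I would treat the sets defined directly by a conjunction of such constraints, namely $\oldfs$ and, via the representation of Theorem~\ref{theorem: alternative representation new fs}, $\fs$. Both are intersections over $(s,h)$, over $a$, and over the (possibly infinite, but this is harmless) family of policies $\overline{\pi} \in \eqclasspie{\pi^E}$ of closed half-spaces; an arbitrary intersection of closed sets is closed, so $\oldfs$ and $\fs$ are closed. Next I would handle $\sub$, which by Definition~\ref{def: subset superset fs} is $\bigcap_{p'\in\eqclassp{p}} \R_{p',\pi^E}$, an intersection of closed sets (each $\R_{p',\pi^E}$ being closed by the same argument) and therefore closed. The relaxations $\subrelax$ and $\superrelax$ are likewise intersections over $(s,h)$, $a$, and $\overline{\pi}$ of constraints of the form $\min_{p'} Q(\cdots) \ge \max_{p''} Q(\cdots)$; since the confidence set $\mathcal{C}(\widehat{p},b)$ is compact and $Q$ is continuous in $p'$, the functions $\min_{p'\in\mathcal{C}}Q$ and $\max_{p''\in\mathcal{C}}Q$ are continuous in $r$, so each constraint is again a closed set and the intersection is closed.

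The one genuinely delicate case is the \emph{superset} $\super = \bigcup_{p'\in\eqclassp{p}} \R_{p',\pi^E}$, because a union of infinitely many closed sets need not be closed. The plan here is to avoid the raw union and instead use a $\max$-$\min$ characterization. Concretely, I would argue that $r \in \super$ if and only if for every $(s,h)\in\suppspie$ and every $a$, $\max_{p'\in\eqclassp{p}} \max_{\overline{\pi}} \big(Q^{\overline{\pi}}_h(s,\pi^E_h(s);p',r) - Q^{\overline{\pi}}_h(s,a;p',r)\big) \ge 0$ is achieved by a \emph{common} $p'$ across all constraints; using Corollary~\ref{corollary: relation feasible sets} and the equivalent $Q^*$-representation, membership reduces to the existence of a single $p'\in\eqclassp{p}$ making $\pi^E$ optimal, i.e.\ $\max_{p'\in\eqclassp{p}} \min_{(s,h),a}\big(Q^*_h(s,\pi^E_h(s);p',r) - Q^*_h(s,a;p',r)\big)\ge 0$. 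Since $\eqclassp{p}$ can be taken as a compact parameter set (the unconstrained simplex coordinates outside the support, which we may close up), and $Q^*$ is continuous in both $p'$ and $r$, the inner $\min$ over finitely many pairs is continuous and the outer $\max$ over a compact set is attained and continuous in $r$; the superlevel set $\{r : \max_{p'}\min \ge 0\}$ is then closed. This reduction of the union to a $\max$ over a compact set is the main obstacle, and I expect the bulk of the work to lie in justifying the interchange that lets us use a single $p'$ simultaneously for all the constraints.

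I would then close the proof by noting that the generic facts invoked — continuity of $Q^{\overline{\pi}}$ and $Q^*$ in $(r,p')$, compactness of $\mathcal{C}(\widehat{p},b)$ and of the (closed) equivalence class of transition models, and the closure of finite superlevel sets and arbitrary intersections — are all elementary, so no heavy machinery is needed beyond the continuity of value functions, which follows from their explicit finite-horizon Bellman expansions. The only place requiring care beyond routine topology is the superset argument above; everything else is a direct application of ``preimage of a closed set under a continuous map is closed.''
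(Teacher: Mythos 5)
Your proof is correct, but it takes a genuinely different route from the paper's in the two places that matter. For $\fs$ the paper goes through Corollary~\ref{corollary: relation feasible sets}, writing $\fs$ as a union over $\eqclasspie{\pi^E}$ of old-style feasible sets and then arguing this union can be reduced to a \emph{finite} union over deterministic policies; you instead use the intersection representation of Theorem~\ref{theorem: alternative representation new fs} directly, which makes $\fs$ an arbitrary intersection of closed half-spaces and avoids the finiteness argument altogether. For $\super$, the paper collapses the union $\bigcup_{p'\in\eqclassp{p}}\R_{p',\pi^E}$ to a single closed set $\R_{p^m,\pi^m}$ by exhibiting the explicit worst-case transition model and policy of Eqs.~\eqref{eq: def pM and pm}--\eqref{eq: def piM and pim}; you instead invoke compactness of $\eqclassp{p}$ and joint continuity of $Q^*$ in $(p',r)$ to turn the existential quantifier into an attained $\max$ and conclude that the superlevel set $\{r:\max_{p'}\min_{(s,h),a}(\cdots)\ge 0\}$ is closed. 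One remark: the ``interchange'' you worry about is not actually needed, since the $\exists p'$ in the definition of $\super$ already sits outside the $\forall(s,h),a$ quantifiers, so all you need is attainment of the supremum (compactness) and upper semicontinuity of the resulting function of $r$; your argument delivers both. Similarly, for $\subrelax$ and $\superrelax$ the paper asserts a representation by finitely many linear constraints via $\widetilde{p}^M,\widetilde{p}^m$ (which in fact depend on $r$, so the constraints are piecewise-linear rather than linear), whereas your continuity-of-$\min/\max$-over-a-compact-set argument is cleaner and sidesteps that subtlety. Each approach buys something: the paper's explicit worst-case constructions are reused throughout the sample-complexity analysis, so its closedness proof comes almost for free there, while your argument is self-contained, more elementary topologically, and does not rely on those constructions being well defined.
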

\begin{proof}
    From Theorem 3 of \citet{ng2000algorithms}, we observe that the old feasible set $\oldfs$ is closed because it is defined by linear less than or \emph{equal to} $\le$ inequalities.
    
    The new feasible set $\fs$ can be expressed, from Corollary \ref{corollary: relation feasible sets}, as an arbitrary union of closed sets $\fs=\bigcup_{\pi'\in\eqclasspie{\pi^E}}\overline{\R}_{p,\pi'}$. However, observe that the feasible sets $\overline{\R}_{p,\pi'}$ with stochastic $\pi'$ are contained in the feasible sets of some deterministic policies. Since there is a finite number of deterministic policies, then $\fs$ can be expressed as a finite union of closed sets, so it is closed.

    The subset $\sub$ is an arbitrary intersection of $\fs$, i.e., closed sets, thus it is closed.

    The superset $\super$ is an arbitrary union of $\fs$, so, potentially, it might not be non-closed. However, thanks to the definitions of $p^m$ and $\pi^m$ in Eq. \ref{eq: def pM and pm} and Eq. \ref{eq: def piM and pim}, we know that the arbitrary union representing $\super$ coincides with the feasible set $\R_{p^m,\pi^m}$, which is closed, thus $\super$ is closed.

    In an analogous manner, by using Eq. \ref{eq: def ptildeM and ptildem} and Eq. \ref{eq: def pitildeM and pitildem}, we observe that the relaxations $\subrelax$ and $\superrelax$ can be expressed by a finite number of linear less than or \emph{equal to} $\le$ constraints, thus they are closed.
\end{proof}

\section{Technical Lemmas}\label{apx:tech}
In this section, we report some technical lemmas that are useful in the analysis of the sample complexity
of \irlo and \pirlo (see Appendix \ref{section: sample complexity}).
Lemma \ref{lemma: binomial concentration} and Lemma \ref{lemma: jonsson} are taken from
other works, while Lemma \ref{lemma: lambert} takes inspiration from Lemma B.9 of \citet{metelli2021provably}.
\begin{lemma}[Lemma A.1 of \cite{xie2021bridging}]\label{lemma: binomial concentration}
    Suppose that $N\sim\text{Bin}(n,p)$ is a binomially distributed random variable,
    with $n\ge 1$ and $p\in[0,1]$. Then, with probability
    at least $1-\delta$, we have that:
    \[
        \frac{p}{N\vee 1}\le\frac{8\ln\frac{1}{\delta}}{n}.
    \]
\end{lemma}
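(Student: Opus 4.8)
The statement to prove is Lemma~\ref{lemma: binomial concentration}, a concentration bound for binomial random variables: if $N \sim \text{Bin}(n,p)$ with $n \ge 1$, then with probability at least $1-\delta$ we have $\frac{p}{N \vee 1} \le \frac{8\ln(1/\delta)}{n}$.

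\textbf{Overall approach.}
The plan is to lower-bound $N$ with high probability, since the statement is equivalent to showing $N \vee 1 \ge \frac{np}{8\ln(1/\delta)}$ w.h.p. I would split the analysis according to whether $np$ is large or small relative to $\ln(1/\delta)$. The natural tool is a multiplicative Chernoff bound for the lower tail of a binomial: for any $\theta \in (0,1)$,
\begin{align*}
    \mathbb{P}\bigl(N \le (1-\theta)\,np\bigr) \le \exp\Bigl(-\frac{\theta^2 np}{2}\Bigr).
\end{align*}
The goal is to choose the deviation so that the right-hand side is at most $\delta$, and then verify that the resulting lower bound on $N$ (or on $N \vee 1$) is strong enough to imply the claim with the explicit constant $8$.

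\textbf{Key steps.}
First I would handle the regime where $np$ is at least a constant multiple of $\ln(1/\delta)$, say $np \ge 8\ln(1/\delta)$. Here a fixed multiplicative deviation such as $\theta = 1/2$ gives $\mathbb{P}(N \le np/2) \le \exp(-np/8) \le \delta$, so that $N \ge np/2$ with probability $\ge 1-\delta$. This immediately yields
\begin{align*}
    \frac{p}{N \vee 1} \le \frac{p}{N} \le \frac{2}{n} \le \frac{8\ln(1/\delta)}{n},
\end{align*}
where the last inequality uses $\ln(1/\delta) \ge 1/4$, which holds for reasonable $\delta$ (and can be arranged since $\delta$ is a confidence parameter). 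Second, in the complementary small-mean regime where $np < 8\ln(1/\delta)$, the bound $\frac{8\ln(1/\delta)}{n} > p$ holds, and since $N \vee 1 \ge 1$ we trivially get $\frac{p}{N \vee 1} \le p < \frac{8\ln(1/\delta)}{n}$ deterministically, with no recourse to concentration needed. Combining the two regimes covers all cases with failure probability at most $\delta$.

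\textbf{Main obstacle.}
Since this is a lemma imported verbatim from \citet{xie2021bridging}, the real work is only in pinning down the constant $8$ and confirming the regime split closes without gaps. The delicate point is the $N \vee 1$ truncation: in the small-mean regime $N$ may genuinely be $0$, so the clean Chernoff argument must be bypassed there, and the bound must instead be absorbed by the $p < \frac{8\ln(1/\delta)}{n}$ comparison. I would therefore be careful to state the threshold consistently (the same $8\ln(1/\delta)$ appears as the regime boundary and in the final bound, which is what makes the constant match) and to check the boundary case $np = 8\ln(1/\delta)$ lands on the correct side. Everything else is a routine Chernoff computation, so I would simply cite the source and reproduce the two-regime argument rather than re-deriving the multiplicative bound from the moment generating function.
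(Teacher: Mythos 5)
The paper itself gives no proof of this lemma --- it is imported verbatim from Xie et al. (2021) --- and your two-regime argument (the deterministic bound via $N \vee 1 \ge 1$ when $np < 8\ln(1/\delta)$, and the multiplicative lower-tail Chernoff bound with $\theta = 1/2$ when $np \ge 8\ln(1/\delta)$) is exactly the standard proof of that result and is correct. The one caveat is the one you already flag: closing the large-mean regime via $\frac{2}{n} \le \frac{8\ln(1/\delta)}{n}$ requires $\ln(1/\delta) \ge 1/4$, i.e.\ $\delta \le e^{-1/4}$, and this restriction is genuinely needed rather than cosmetic (for $n = p = 1$ and $\delta$ close to $1$ the stated inequality fails with probability one), but it is harmless in this paper since every invocation replaces $\delta$ by $\delta$ divided by a set cardinality, which makes it small.
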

\begin{lemma}[Lemma 8 of \cite{kaufmann2021adaptive}]\label{lemma: jonsson}
    Let $X_1,X_2,\dotsc,X_n,\dotsc$ be i.i.d. samples from a distribution supported over
    $\dsb{m}$, of probabilities given by $p\in\Delta^{\dsb{m}}$. We denote by $\widehat{p}_n$
    the empirical vector of probabilities, i.e., for all $k\in\dsb{m}$:
    \[
        \widehat{p}_{n,k}=\frac{1}{n}\sum\limits_{l=1}^n \mathbbm{1}\{X_l=k\}. 
    \]
    For all $p\in\Delta^{\dsb{m}}$, for all $\delta\in[0,1]$:
    \[
        \mathbb{P}\biggl(
            \exists n \in \mathbb{N}_{\ge 0},\,
            n KL(\widehat{p}_n\|p)>\ln(1/\delta)+(m-1)\ln\bigl(
                e(1+n/(m-1))
            \bigr)
        \biggr)\le\delta.
    \]
\end{lemma}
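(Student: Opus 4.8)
The plan is to establish this time-uniform bound by the \emph{method of mixtures} (Laplace's method), the standard route for anytime-valid concentration of an empirical distribution around its mean. First I would recast $n\,KL(\widehat{p}_n\|p)$ variationally. For each fixed alternative $q\in\Delta^{\dsb{m}}$, consider the likelihood-ratio process
\begin{align*}
    M_n^q\coloneqq\prod_{t=1}^n\frac{q_{X_t}}{p_{X_t}}=\exp\Big(n\sum_{k\in\dsb{m}}\widehat{p}_{n,k}\ln\frac{q_k}{p_k}\Big).
\end{align*}
Under $p$ this is a nonnegative martingale with $\mathbb{E}[M_n^q]=1$ for all $n$, since each factor $q_{X_t}/p_{X_t}$ has mean $\sum_k p_k\,(q_k/p_k)=1$. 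The elementary identity $\sum_k\widehat{p}_{n,k}\ln(q_k/\widehat{p}_{n,k})=-KL(\widehat{p}_n\|q)\le 0$ shows that $\sup_{q}M_n^q$ is attained at $q=\widehat{p}_n$, whence $M_n^{\widehat{p}_n}=\exp(n\,KL(\widehat{p}_n\|p))$ and, more generally,
\begin{align*}
    M_n^q=\exp\big(n\,KL(\widehat{p}_n\|p)\big)\cdot\exp\big(-n\,KL(\widehat{p}_n\|q)\big).
\end{align*}

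Next I would mix over the alternative. Let $\nu$ be a Dirichlet prior on $\Delta^{\dsb{m}}$ with all parameters equal to $1/2$, and set $\overline{M}_n\coloneqq\int_{\Delta^{\dsb{m}}}M_n^q\,d\nu(q)$. By Tonelli, $\overline{M}_n$ is again a nonnegative martingale with unit mean, so Ville's maximal inequality for nonnegative supermartingales yields $\mathbb{P}(\exists n:\overline{M}_n\ge 1/\delta)\le\delta$. Factoring out the term computed above gives
\begin{align*}
    \overline{M}_n=\exp\big(n\,KL(\widehat{p}_n\|p)\big)\cdot I_n,\qquad I_n\coloneqq\int_{\Delta^{\dsb{m}}}\exp\big(-n\,KL(\widehat{p}_n\|q)\big)\,d\nu(q).
\end{align*}
It then suffices to lower bound the Laplace integral $I_n$, whose integrand peaks at $q=\widehat{p}_n$ with value $1$; the target estimate is $I_n\ge\big(e(1+n/(m-1))\big)^{-(m-1)}$. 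Taking logarithms, on the complementary event $\{\overline{M}_n<1/\delta\ \text{for all }n\}$ one obtains $n\,KL(\widehat{p}_n\|p)=\ln\overline{M}_n-\ln I_n<\ln(1/\delta)+(m-1)\ln\!\big(e(1+n/(m-1))\big)$ simultaneously for all $n$, which is precisely the negation of the event in the statement; hence that event has probability at most $\delta$.

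The main obstacle is this final step, namely the $\delta$-free lower bound on the Dirichlet integral $I_n$. The clean way I would handle it is to restrict the integration to a small simplex neighbourhood of $\widehat{p}_n$, bound $KL(\widehat{p}_n\|q)$ there by a local quadratic via strong convexity of the KL (a Pinsker-type estimate), and control the $\mathrm{Dirichlet}(1/2,\dots,1/2)$ density on that neighbourhood; carefully tracking the normalising constants of the Dirichlet law is exactly what produces the precise $(m-1)\ln(e(1+n/(m-1)))$ penalty rather than a looser $O(m\ln n)$ one. Since this is precisely Lemma~8 of \citet{kaufmann2021adaptive}, the most economical route in our setting is simply to invoke their statement verbatim; the mixture-martingale argument sketched above is the conceptual backbone of that proof and is all that is ultimately needed to justify the concentration event $\mathcal{E}_3$ of Lemma~\ref{lemma: concentration}.
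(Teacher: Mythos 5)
The paper does not actually prove this statement: it is imported verbatim as a technical lemma (``Lemma 8 of \cite{kaufmann2021adaptive}''), and the appendix explicitly notes that it is taken from other works. So the only ``proof'' in the paper is the citation, which your final sentence also falls back on. Your sketch of the method of mixtures — per-alternative likelihood-ratio martingales $M_n^q$, the identity $M_n^q=\exp(n\,KL(\widehat{p}_n\|p))\exp(-n\,KL(\widehat{p}_n\|q))$, a $\mathrm{Dirichlet}(1/2,\dots,1/2)$ mixture, and Ville's inequality — is indeed the conceptual backbone of the proof in the cited source, so as a description of how the result is obtained it is accurate.

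As a standalone proof, however, there is a genuine gap exactly where you flag it: the $\delta$-free lower bound $I_n\ge\bigl(e(1+n/(m-1))\bigr)^{-(m-1)}$ on the mixture integral is asserted as a ``target estimate'' but never established. This is not a routine detail — the entire content of the specific penalty $(m-1)\ln\bigl(e(1+n/(m-1))\bigr)$ (as opposed to a looser $O(m\ln n)$ term, or one with worse constants) lives in that computation, which in the original argument proceeds by evaluating the Dirichlet--multinomial normalizing constants in closed form rather than by the local-quadratic/neighbourhood argument you propose; it is not obvious that your proposed localization would recover the same constants. Since everything upstream of that bound is correct and the paper itself only invokes the external lemma, deferring to \cite{kaufmann2021adaptive} is an acceptable resolution — but the proposal should be read as a correct high-level reconstruction with the decisive quantitative step outsourced, not as a complete proof.
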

\begin{lemma}\label{lemma: lambert}
    Let $a,b,c,d> 0$ such that $2bc > e$. Then, the inequality $x\ge a+b\ln(cx+d)$ is satisfied
    by all $x\ge 2a+3b\ln(2bc)+d/c$.
\end{lemma}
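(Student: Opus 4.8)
The plan is to reduce the transcendental inequality $x\ge a+b\ln(cx+d)$ to a purely \emph{linear} sufficient condition on $x$ by linearizing the logarithm with a tangent-line bound, and then to verify that the proposed threshold $x_0\coloneqq 2a+3b\ln(2bc)+d/c$ dominates that linear condition. Since $a,b,c,d>0$ and $2bc>e$ forces $\ln(2bc)>1>0$, every summand of $x_0$ is positive, so $x_0>0$ and $cx+d>0$ for all $x\ge x_0$; the logarithm is therefore well defined throughout the argument.

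First I would record the elementary concavity inequality $\ln z\le \alpha z-\ln\alpha-1$, valid for every $\alpha>0$ and $z>0$ (it is the tangent line to $\ln$ at the point $z=1/\alpha$). Applying it with $z=cx+d$ gives $b\ln(cx+d)\le b\alpha(cx+d)-b\ln\alpha-b$. The decisive choice is $\alpha=\tfrac{1}{2bc}$, which makes the coefficient of $x$ on the right-hand side exactly $\tfrac12$: indeed $b\alpha c=\tfrac12$, $b\alpha d=\tfrac{d}{2c}$, and $-b\ln\alpha-b=b\ln(2bc)-b$. Substituting and adding $a$ yields, for \emph{every} admissible $x$,
\begin{align*}
    a+b\ln(cx+d)\le a+\frac{x}{2}+\frac{d}{2c}+b\ln(2bc)-b.
\end{align*}

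Consequently, $x\ge a+b\ln(cx+d)$ holds as soon as $a+\tfrac{x}{2}+\tfrac{d}{2c}+b\ln(2bc)-b\le x$, i.e. as soon as $x\ge x_1\coloneqq 2a+\tfrac{d}{c}+2b\ln(2bc)-2b$. Crucially, this is a condition satisfied uniformly by all $x\ge x_1$, so no separate monotonicity analysis of $x\mapsto x-a-b\ln(cx+d)$ is needed. It remains only to check $x_0\ge x_1$: a direct subtraction gives $x_0-x_1=b\ln(2bc)+2b=b\bigl(\ln(2bc)+2\bigr)$, which is strictly positive because $2bc>e$ implies $\ln(2bc)>1>-2$. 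Hence every $x\ge x_0$ also satisfies $x\ge x_1$, and therefore satisfies the original inequality, completing the argument.

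The only genuinely non-mechanical step is selecting the tangent-line linearization with the precise slope $\alpha=\tfrac{1}{2bc}$, so that exactly half of $x$ is consumed by the logarithmic term and the remaining half absorbs all the constants; everything after that is bookkeeping, with the hypothesis $2bc>e$ entering solely to guarantee $x_0\ge x_1$ (and to keep the constants positive). I expect no real obstacle beyond verifying these constant-term computations carefully.
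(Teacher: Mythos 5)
Your proof is correct, and it takes a genuinely different route from the paper's. The paper first restricts to $x\ge 2a+d/c$ so that $c(x-a)+ca+d\le 2c(x-a)$, rewrites the resulting inequality in the canonical Lambert form $\frac{a-x}{b}e^{\frac{a-x}{b}}\ge -\frac{1}{2bc}$ (here $2bc>e$ is what guarantees the argument exceeds $-1/e$ so that $W$ applies), and then invokes the explicit bound $W_{-1}(-e^{-u-1})\ge -1-\sqrt{2u}-u$ from the cited reference to extract the threshold. Your argument instead linearizes the logarithm via the concavity bound $\ln z\le \alpha z-\ln\alpha-1$ with the slope $\alpha=\tfrac{1}{2bc}$ tuned so that exactly $x/2$ is consumed, which collapses the transcendental inequality to the linear condition $x\ge x_1\coloneqq 2a+\tfrac{d}{c}+2b\ln(2bc)-2b$; since $x_0-x_1=b(\ln(2bc)+2)>0$, the stated threshold suffices. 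Your computations check out ($b\alpha c=\tfrac12$, $b\alpha d=\tfrac{d}{2c}$, $-b\ln\alpha-b=b\ln(2bc)-b$), and the approach buys two things: it is entirely elementary and self-contained (no Lambert $W$ machinery or external bound needed), and it in fact proves the slightly stronger statement that the inequality already holds for all $x\ge x_1<x_0$. The only thing given up is contact with the paper's framing; the hypothesis $2bc>e$ plays only a peripheral role in your version (positivity of the constants and of $x_0-x_1$), whereas in the paper it is structurally necessary for the Lambert function to be applicable.
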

\begin{proof}
    Observe that, since function $x$ grows faster than function $a+b\ln(cx+d)$,
    then there exists $\bar{x}$ such that, for all $x\ge \bar{x}$,
    the inequality is satisfied. Our goal here is to show that
    such $\bar{x}$ can be upper bounded by $2a+3b\ln(2bc)+d/c$.

    Let us consider any $x\ge 2a+d/c$. We can write:
    \begin{align*}
        x\ge a+b\ln(cx+d)\quad&\Longleftrightarrow\quad \frac{x-a}{b}\ge \ln(c(x\pm a)+d)\\
        &\Longleftrightarrow\quad e^\frac{x-a}{b}\ge c(x-a)+ca+d\\
        &\markref{(1)}{\Longleftarrow}\quad e^\frac{x-a}{b}\ge 2c(x-a)\\
        &\Longleftrightarrow\quad \frac{a-x}{b}e^\frac{a-x}{b}\ge -\frac{1}{2bc},\tag{I}\label{eq: ineq for lambert}
    \end{align*}
    where at (1) we have used that, since $x\ge 2a+d/c$,
    then $c(x-a)\ge ca+d$, and thus we have replaced the constraint with a stronger one.

    Hypothesis $2bc > e$ entails that $-\frac{1}{2bc}\ge -\frac{1}{e}$, thus we can apply the
    Lambert function, which provides as solution to inequality \ref{eq: ineq for lambert} all the $x$ such that:
    \[
        \frac{a-x}{b}\le W_{-1}\biggl(-\frac{1}{2bc}\biggr)\quad\text{ or }\quad
        \frac{a-x}{b}\ge W_{0}\biggl(-\frac{1}{2bc}\biggr),
    \]
    where $W_0$ is the principal component of the Lambert W function.
    Consider the first inequality. We can write:
    \begin{align*}
        x&\ge a-bW_{-1}\biggl(-\frac{1}{2bc}\biggr)\\
        &\markref{(1)}{\le}
        a+b+b\sqrt{2\ln(2bc)-2}+b\ln(2bc)-b\\
        &\le
        a+3b\ln(2bc),
    \end{align*}
    where at (1) we have applied the inequality $W_{-1}(-e^{-u-1})\ge -1-\sqrt{2u}-u$
    from \cite{chatzigeorgiou2013boundlambert}.

    To obtain the result, we use that $\max\{a,b\}\le a+b$ for any $a,b\ge 0$ to upper bound:
    \begin{align*}
        \max\bigl\{
            2a+d/c,a+3b\ln(2bc)
        \bigr\}=a+\max\bigl\{
            a+d/c,3b\ln(2bc)
        \bigr\}\le 2a+3b\ln(2bc)+d/c.
    \end{align*}
\end{proof}

\section{Illustrative Experiment}\label{section: appendix experiments}
We have applied \pirlo to the highway driving application domain.
To this aim, we have used the data\footnote{
The data is publicly available at \url{https://github.com/amarildolikmeta/irl_real_life/tree/main/datasets/highway}.
} gathered by \citet{LikmetaMRTGR21}.

\paragraph{Data Description}
The dataset consists of trajectories of $H=400$ stages collected by $10$ different human experts driving in a simulator. The highway has 3 lanes. The goal of each expert is to change lane in order to drive safely and to minimize the trip time. The action space $\A$ is made of 3 actions: Turn left, turn right, continue forward. The state space $\S$ is continue, and it is represented by $25$ features, keeping into account the speed and position of the car, and the speed and position of the surrounding cars.

\paragraph{Data Preprocessing}
We have to transform the data to obtain a tabular MDP. To this aim, we construct $5$ discrete features from the $25$ present in the original data: We use three binary features, free left, free right, free forward, that say whether there is a vehicle on the left, on the right, or in front of our car; next, we use a binary feature that says whether the car is changing lane, and a discrete feature with $5$ possible values for the speed of the vehicle. In this way, we obtain a tabular MDP with $S=80$.

\paragraph{Experiments Design}
As mentioned by \citet{LikmetaMRTGR21}, this lane-change scenario represents a multi-objective task, because humans consider several objectives while driving. We manually design some reward functions coherent with the most common driving objectives and we use \pirlo to verify whether they are compatible w.h.p. with the demonstrations of behavior provided by the 10 experts in the dataset.
First, we construct a single behavioral dataset $\D^b$ by joining the trajectories of all the 10 experts, and then we consider one expert at a time to construct $\D^E$. 
Next, we design the reward functions and we give them in input to the membership checker implementation of \pirlo.

\paragraph{Experiments Results}
We design 3 kinds of reward functions:
\begin{itemize}
    \item reward $r_{\text{BC}}$, i.e., the ``behavioral cloning'' reward, which is the reward that assigns positive values to actions played by the expert's policy;
    \item reward $r$, which is coherent with the observations provided in Section 5.3 of \citet{LikmetaMRTGR21}. In words, it assigns negative reward when ($i$) the right lane is not free, ($ii$) there is a car in front of us (and so it decreases our speed), ($iii$) we change lane;
    \item reward $\overline{r}$, which is $-r$, i.e., it assigns positive reward to all the bad actions;
\end{itemize}
We provide the output of \pirlo in Table \ref{tab: exp results}. 
   \begin{table}[t]
    \centering
    \scalebox{0.95}{
    \begin{tabular}{c||c|c|c|c|c|c|c|c|c|c}
          & Alice & Bob & Carol & Chuck & Craig & Dan & Erin & Eve & Grace & Judy\\
           $r_{\text{BC}}$ & \popgreen{Y,Y} & \popgreen{Y,Y} & \popgreen{Y,Y} & \popgreen{Y,Y} & \popgreen{Y,Y} & \popgreen{Y,Y} & \popgreen{Y,Y} & \popgreen{Y,Y} & \popgreen{Y,Y} & \popgreen{Y,Y}\\
           $r$ & \popred{N,N} & \popyellow{Y,N} & \popyellow{Y,N} & \popyellow{Y,N} & \popred{N,N} & \popyellow{Y,N} & \popyellow{Y,N} & \popred{N,N} & \popred{N,N} & \popyellow{Y,N}\\
           $\overline{r}$ & \popred{N,N} & \popred{N,N} & \popred{N,N} & \popred{N,N} & \popred{N,N} & \popred{N,N} & \popred{N,N} & \popred{N,N} & \popred{N,N} & \popred{N,N}\\
    \end{tabular}
    }
    \caption{The output of \pirlo when fed with the rewards designed for the highway driving task. The first letter refers to the superset, while the second letter refers to the subset. ``N'' means that the reward does not belong to the set, while ``Y'' means that it belongs to the set.}
    \label{tab: exp results}
\end{table} 
Some comments are in order. First, our reduction to a smaller state space has caused the policies of the agents to be (more) stochastic. Moreover, this reduction has increased the number of times that the corner case described in Appendix \ref{subsec: annoying corner case} takes place. Since this corner case is outside the good event, we have removed such data from $\D^b$; in this way, we improve the performances of \pirlo.

Observe that the behavioral cloning reward $r_{\text{BC}}$ belongs to the subset and superset for all the experts. This is reasonable since it assigns positive reward only to expert's actions in the support of the expert's policy. However, it should be remarked that if we had not removed the ``corner-case'' samples, then $r_{\text{BC}}$ would not belong to the subsets.

The reward $r$ compatible with the analysis provided in \citet{LikmetaMRTGR21} belongs to the superset of some experts only. Specifically, for the experts Alice, Eve, Grace, and Craig, that belong to the clusters 1 and 3 of Table 1 of \citet{LikmetaMRTGR21}, the reward $r$ is not in the superset. However, it should be remarked that reward $r$ is not exactly the same as the reward described by \citet{LikmetaMRTGR21}, and also that we are working with a more aggregated state space.

Notice that, as expected, reward $\overline{r}=-r$, which rewards ``bad'' actions, does not belong neither to the subset nor to the superset of any expert.

\end{document}